\newtheorem{assump}{Assumption}
\crefname{assump}{assumption}{assumptions}
\theoremstyle{plain}
\newtheorem{theorem}{Theorem}[section]
\newtheorem{lemma}[theorem]{Lemma}
\newtheorem*{lemma-non}{Lemma}
\newtheorem*{theorem-non}{Theorem}
\theoremstyle{definition}
\theoremstyle{remark}
\DeclareMathOperator*{\argmin}{arg\,min}
\DeclareMathOperator{\msum}{\medmath\sum}
\DeclareMathOperator{\mprod}{\medmath\prod}
\DeclareMathOperator{\mint}{\medmath\int}
\newcommand{\argdot}{{\,\vcenter{\hbox{\tiny$\bullet$}}\,}}
\newcommand{\tagaligneq}{\refstepcounter{equation}\tag{\theequation}}
\newtheorem*{remark*}{Remark}
\newenvironment{proplist}{\begin{enumerate}[
    leftmargin=2.5em,
    labelwidth=0em,
    label=(\roman{enumi}),
    topsep=.1em,
    partopsep=0em,
    itemsep=-.3em
    ]}
{\end{enumerate}}
\def\P{{\mathbb P}}
\def\E{{\mathbb E}}
\def\Cov{\text{\rm Cov}}
\def\Tr{\text{\rm Tr}}
\def\E{{\mathbb E}}
\def\N{\mathbbm{N}}
\def\P{{\mathbbm P}}
\def\R{\mathbbm{R}}
\def\cA{\mathcal{A}}
\def\cD{\mathcal{D}}
\def\cF{\mathcal{F}}
\def\cP{\mathcal{P}}
\def\cX{\mathcal{X}}
\newcommand{\msup}{\sup\nolimits}
\newcommand{\minf}{\inf\nolimits}
\newcommand{\mmax}{\max\nolimits}
\newcommand{\mmin}{\min\nolimits}
\newcommand{\ind}{\mathbb{I}}
\newcommand{\mean}{\mathbb{E}}
\title{Near-Optimality of Contrastive Divergence Algorithms}
\author{%
  Pierre Glaser \qquad Kevin Han Huang \qquad Arthur Gretton \\
Gatsby Computational Neuroscience Unit, University College London\\
  \texttt{pierreglaser@gmail.com, han.huang.20@ucl.ac.uk, arthur.gretton@gmail.com} \\
}
\begin{document}

\maketitle

\begin{abstract}
We perform a non-asymptotic analysis of the contrastive divergence (CD) algorithm, a training method for unnormalized models. While prior work has established that (for exponential family distributions) the CD iterates asymptotically converge at an $O(n^{-1 / 3})$ rate to the true parameter of the data distribution, we show, under some regularity assumptions, that CD can achieve the parametric rate $O(n^{-1 / 2})$. Our analysis provides results for various data batching schemes, including the fully online and minibatch ones. We additionally show that CD can be near-optimal, in the sense that its asymptotic variance is close to the Cramér-Rao lower bound.
\end{abstract}

\section{Introduction} 
Describing data using probability distributions is a central task in multiple scientific and industrial disciplines \cite{cranmer_frontier_2020,radford2021learning,radford2019language}.
Since the true distribution of the data is generally unknown, such a task requires finding an estimator  of the true distribution
among a model class that best describes the available data.  
An estimator can be characterized at multiple levels of granularity:
at the highest level lies \emph{consistency} \cite{van2000asymptotic}, a property which states that as the number of available data
points increases, a given estimator will converge
to the one best describing the data distribution. 
At a lower level, a consistent estimator can be further characterized by its convergence rate, a quantity upper--bounding its distance
to the true distribution as a function of the number of samples. A convergence rate can be either \emph{asymptotic}, e.g. hold only
in the limit of an infinite sample size, or \emph{non-asymptotic}, in which case the rate also holds for finite sample sizes. 
In their simplest form, convergence rates are provided in big--$ O $ notation, 
discarding finer grained information such as asymptotically dominated quantities 
as well as multiplicative constants. These constants play a role in the so--called \emph{asymptotic variance} of the estimator,
which is a precise descriptor of an estimator's statistical efficiency.
Convergence rates and asymptotic variances have been the subject of extensive research in the statistical literature;
in particular, well--known lower bounds exists regarding both the best possible (asymptotic) convergence rate of an estimator
and its best possible asymptotic variance. These results set a clear frame of reference to interpret individual convergence rates,
and are routinely present in the analysis of modern statistical algorithms such as noise-contrastive estimation
\cite{lee2023pitfalls,gutmann2010noise} or score matching \cite{hyvarinen2005estimation,koehler2022statistical,pabbaraju2024provable}.

In this work, we focus on cases where (1) the true data distribution admits a density with respect to some  known base measure,
and (2) the model class is parametrized by a finite-dimensional parameter. In this setting, provided that the true distribution belongs to the model class,
a celebrated result in statistical estimation states that the model maximizing the average log-likelihood
both achieves the best possible asymptotic convergence rate (called the \emph{parametric rate}) and the best possible asymptotic variance,
called the Cramér-Rao bound (see, e.g. \cite{casella2021statistical}). While this result shows that Maximum Likelihood Estimators (MLE) are asymptotically optimal,
fitting them is complicated by computational hurdles when using models with intractable normalizing constants. 
Such \emph{unnormalized models} are common in the Machine Learning literature due to their high flexibility \cite{le2008representational,du_implicit_2020};
their weakness however lies in the fact that expectations under 
these models have no unbiased approximation.
For this reason,
popular approximation algorithms such as unbiased gradient-based stochastic optimization of the empirical log-likelihood
cannot \emph{a priori} be used, as the gradient of the normalizing constant is given by an expectation under the model distribution.

The Contrastive Divergence (CD) algorithm \cite{hinton2002training} is a popular approach that circumvents this issue by using a 
Markov Chain Monte Carlo (MCMC) algorithm to approximate the gradient of the log-likelihood. Unnormalized models trained with Contrastive Divergence have been shown
to reach competitive performance in high-dimensional tasks such as image \cite{nijkamp2020anatomy, nijkamp2019learning, du_improved_2021}, text \cite{qin2022cold},
and protein modeling \cite{kong2023molecule,varnai2013efficient}, or neuroscience \cite{glaser2022maximum}. 
A consistency analysis of the Contrastive Divergence algorithm is delicate, however:
indeed, the \emph{optimization error} e.g. the difference between the estimate returned by CD and the MLE, is likely to be non-negligible
as compared with
\emph{statistical error} -- the distance between the MLE and the true distribution -- and thus cannot be discarded,
as often done when analyzing estimators that minimize tractable objectives \cite{koehler2022statistical, lee2023pitfalls}.
Recent work \cite{jiang2018convergence} elegantly established asymptotic $ O(n^{-1 / 3}) $--consistency of the CD estimator
for unnormalized exponential families when using only a \emph{finite} number of MCMC steps.
Key to their argument is the fact that the bias of the CD gradient estimate decreases as iterates approach the data distribution.
However, as noted by the authors, their work left open the question of whether and under what conditions CD might achieve
$ O(n^{-1 / 2}) $--consistency.

\vspace{-0.5em}

\paragraph{Contributions}

In this work, 
we answer this question by providing a non-asymptotic analysis of the 
CD  
algorithm for unnormalized
exponential families. While existing convergence bounds \cite{jiang2018convergence} were derived for the ``full batch'' setting, where
the CD gradient is estimated using the full dataset at each iteration, our analysis covers both the online setting
(where data points are processed one at a time without replacement), and the offline setting with multiple data reuse strategies
(including full batch).

In the online case (Section \ref{sec:online_cd}), we show, under a restricted set of assumptions compared to 
Jiang et al.~\cite{jiang2018convergence},
that the CD iterates can converge to the true distribution at the parametric $ O(n^{-1 / 2}) $ rate.
Our analysis reveals that CD contains two sources of approximation: a bias term, and a variance term. These sources
are almost independent of each other, in the sense that decreasing the bias by increasing the number of MCMC steps will not decrease the variance.
The impact of these two sources of approximation transparently propagates in our resulting bounds: in particular, as the bias
of the CD algorithm goes to 0, our bounds recover well-known results in online stochastic optimization \cite{moulines2011non}. Finally, we study the
asymptotic variance of an estimator obtained by averaging the CD iterates, a classic acceleration technique in stochastic optimization \cite{polyak1992acceleration}.
We show that provided that the number of steps $ m $ is sufficiently large, the \emph{asymptotic} variance of this estimator matches 
(up to a factor 4) the Cramér-Rao bound.

Next, we study the offline setting (Section \ref{sec:offline:SGD}),
where the CD gradient is estimated by reusing (potentially random) subsets of a finite dataset.
We show that a similar result to the online setup holds, up to an additional correlation term that arises from data reuse, and present several approaches to control this term. 
We improve over the results of \cite{jiang2018convergence} by showing a non-asymptotic and 
near-parametric rate at $ O((\log n)^{1/2} n^{-1/2})$ under their conditions, and also illustrate how different rates can be obtained under a variety of conditions.
Our results also show an interesting tradeoff between the effect of initialization and the statistical error as a function of batch size.

In summary, we establish the near--optimality of a variety of Contrastive Divergence algorithms for unnormalized exponential families in the so called ``long-run'' regime,
where the number of MCMC steps is high enough to ensure that the CD gradient bias is sufficiently offset by the convexity of the negative log-likelihood.

\section{Contrastive Divergence in Unnormalized Exponential Families}
\vspace{-0.5em}
{\bfseries Unnormalized Exponential Families}
Exponential families (EF) \cite{brown1986fundamentals,wainwright2008graphical} form a well-studied class of probability distributions, given by
\begin{equation} \label{eq:expfam}
\begin{aligned}
	\textstyle p_{\psi}(\mathrm{d}x) \coloneqq  e^{\psi^{\top}\phi(x) - \log Z(\psi)}c(\mathrm{d}x), \quad Z(\psi) \coloneqq \int_{ \mathcal  X }^{  } e^{\psi^{\top}\phi(x)}c(\mathrm{d}x).
\end{aligned}
\end{equation}
Here, $ \mathcal X \ni x $ is the \emph{data} or \emph{sample space}, which we set to be a subset of $ \mathbb{R}^d $ for some $ d \in \mathbb{N}^* $,
although our results are readily extendable to more general measurable spaces.
$ c $ is a measure on $ \mathcal  X $ called the \emph{base} or \emph{carrier} measure. When $ \mathcal X \subseteq \mathbb{R}^d $,
$ c $ is often set to be the corresponding Lebesgue measure. $ \psi \in \Psi \subseteq \mathbb{R}^p $ is a finite-dimensional parameter called the \emph{natural parameter},
and $ \phi: \mathbb{R}^d  \longmapsto \mathbb{R}^p $ is a function called the \emph{sufficient statistics}, which, alongside with the base measure, fully describes an 
exponential family. Finally, $ \log Z(\psi) $, the \emph{log--normalizing} (or \emph{cumulant}) function, is a quantity ensuring that $ p_{\psi} $ integrates to $ 1 $ over $ \mathcal  X $. Crucially, we will not assume that $ \log Z(\psi) $ admits a closed form  expression for all $ \psi $. The latter fact provides the practitioner with a great deal of
flexibility in designing the model class: indeed, the only requirement that should be satisfied prior to performing statistical estimation is to have $ Z(\psi)< +\infty $
for all $ \psi $, something that can be readily verified and is often the case in practice. The drawback of unnormalized EFs is the fact that sampling
(and thus approximating expectations under the model) cannot usually be performed in an unbiased manner. Instead, inference in unnormalized EFs is often performed using tools from
the Bayesian Inference literature, such as MCMC \cite{geyer2011introduction}.
Unnormalized EFs belong to the larger class of
\emph{unnormalized models} \cite{lecun2006tutorial,song2021train,hyvarinen2005estimation,gutmann2010noise}, of the form $ e^{-E_{ \psi}(x) - \log Z(\psi)} c(\text{d}x),\,\,Z(\psi)= \smallint e^{-E_{ \psi}(x)} c(\text{d}x) $,
for some parametrized function $E_{ \psi}: \mathbb{R}^d  \longmapsto \mathbb{R}$ referred to as the \emph{energy}.
Unnormalized models thus take the flexibility of unnormalized EFs one step further by allowing the (negative) unnormalized log--density to be an arbitrary function $ E_{ \psi} $
of $ x $ and $ \psi $, instead of requiring a linear dependence on $ \psi $ as in Equation \ref{eq:expfam}.
We focus in this work on unnormalized EFs due to the multiple computational benefits they provide, as explained in the next section,
but we believe that extending our analysis to more general unnormalized models is an interesting avenue for future work.

{\bfseries Statistical Estimation in Unnormalized Exponential Families using Contrastive Divergence}
We now review the Contrastive Divergence algorithm, an algorithm used to fit unnormalized models, and our main object of study in this work.
The general setting is the following: we assume access to $ n $ i.i.d.~samples  $ (X_1, \dots, X_n)$ drawn from some unknown distribution
$ {p}^{\star}$, which we assume belongs to $ \mathcal  P_{ \psi} $, e.g. $ {p}^{\star} = p_{\psi^{\star}} $ for some $ \psi^{\star} \in \Psi $.
Given these samples, we aim to perform \emph{statistical estimation}, e.g. find a parameter $ \psi_n $ within $ \Psi $ that should
approach $ {\psi}^{\star} $ as $ n $ grows.

The starting point of the Contrastive Divergence algorithm is the unfortunate realization that Maximum Likelihood Estimation, which corresponds to minimizing the cross-entropy 
$ \mathcal L(\psi) \coloneqq -\mathbb{ E }_{ p_n} \log \mathrm{d} p_{ \psi} / \mathrm{d} c$ between the model $ p_{\psi} $ and the empirical data distribution
$ p_n \coloneqq 1 / n \sum_{ i=1 }^{ n } \delta_{X_i} $, cannot be performed using exact (possibly stochastic) gradient-based optimization,
as the gradient $ \nabla_{ \psi } \mathcal  L(\psi) $ of $ \mathcal  L $ with respect to the parameter $ \psi $ contains an expectation under the model distribution $ p_{\psi} $.
Indeed, the cross entropy and its gradient are given by
\begin{equation} \label{eq:nll-and-gradient}
\begin{aligned}
\begin{cases}
	\textstyle \mathcal  L(\psi) &= -\frac{1}{n}\sum_{ i=1 }^{ n } \phi(X_i)^{\top}\psi + \log Z(\psi) \\
	\textstyle \nabla_{ \psi } \mathcal  L(\psi) &= -\frac{1}{n}\sum_{ i=1 }^{ n }\phi(X_i) + \mathbb{ E } \left \lbrack \phi(X^{\psi}) \right \rbrack, \quad X^{\psi} \sim p_{\psi}\,.
\end{cases} 
\end{aligned}
\end{equation}
The second line follows from the well known identity $ \nabla_{ \psi } \log Z(\psi) \coloneqq \mathbb{ E } \left \lbrack \phi(\cramped{X^{\psi}}) \right \rbrack $;
we refer to \citep[Proposition~3.1]{wainwright2008graphical} for a proof. The Contrastive Divergence algorithm circumvents this
issue by running approximate stochastic gradient descent (SGD) on $ \mathcal  L $, where the intractable expectation in $ \nabla_{ \psi } \log Z $ is 
estimated using an MCMC algorithm initialized at the empirical data distribution.
In more details, given a number of epochs $ T $, a sequence of data batches 
$ B_{t, j} $ of size $ B $ 
(e.g. $ \textstyle B_{t, j} \in [\![1, n]\!]^{B}, 1 \leq t \leq T, 1 \leq j \leq N \lceil n / B \rceil $),
and a family of \emph{Markov kernels} $ \{  k_{\psi}, \psi \in \Psi \} $ each with invariant distribution $ p_{\psi} $,
at the $ j^{th} $ minibatch of epoch $ t $, 
$ \nabla_{ \psi } \log Z(\psi_{t, _{j-1}})$ is approximated by $ \frac{1}{B} \sum_{ i \in B_{t, j} } \phi(\tilde{X}^{m}_i) $, 
where $ \tilde{X}_{i}^{m} $ is produced by running the recursion $ \tilde{X}_i^{k} \sim k_{\psi_t}(\tilde{X}_i^{k-1}, \cdot),\,\, \tilde{X}_i^{0} = X_i $ up to $ k = m $.
Throughout the paper, we will refer to the conditional distribution of $ \tilde{X}^{m}_i $ given $ X_{i}$ as 
$ k^{m}_{\psi}(X_{i}, \cdot) $.
The resulting gradient estimate arising from combining this approximation with the other (tractable) sum over the data samples 
present in $\nabla_{ \psi } \mathcal  L(\psi)$, which we refer to as the \emph{CD gradient} and denote as $ h_t $, 
is thus 
\begin{equation} \label{eq:generic_cd_gradient}
\begin{aligned}
	h_{t, j} \coloneqq \mfrac{1}{B}\msum_{ i \in B_{t, j} } \phi(X_i) - \mfrac{1}{B}\msum_{ i \in B_{t, j} }\phi(\tilde{X}^{m}_i) = \mfrac{1}{B} \msum_{ i \in B_{t,j} } \left ( \phi(X_i) - \phi(\tilde{X}^{m}_i) \right ).
\end{aligned}
\end{equation}
Key to the behavior and analysis of the CD algorithm is the strategy employed to generate minibatches $ B_{t, j} $.
The case where $ T = 1 $, $ B = 1 $, and  $ B_{1, j} = \{ j \} $ 
will be referred to as \emph{online} CD, while the variant where $ T > 1 $, and each batch $ B_{t, j} $ draws $ B $
indices (with or without replacement) from $ [\![1, n]\!] $ will be referred to as \emph{offline} CD.
In online CD, each data point is present in one and one batch only, while in offline CD, data points are reused across batches.
From a statistical perspective, we will see that online CD can be analyzed in a remarkably simple way, while offline
CD introduces additional correlations that require care to be controlled.
Both settings come with their advantages and drawbacks, as we will see in the next section.
The CD algorithms we study will employ decreasing step size schedules $ (\eta_t)_{t \geq  0} $ of the form $ \eta_t = C t^{-\beta} $,
where $ C > 0 $ is the initial leaning rate and $ \beta \in [0, 1]$.
We lay out online CD and offline CD in Algorithms \ref{alg:online_cd} and \ref{alg:offline_cd}.
Note that our algorithms include a projection step on the parameter space $ \Psi $ to account for the case where
$ \Psi $ is compact. In the case $ \Psi = \mathbb{R}^p $, this step can be omitted.
Next we depart from the setting of \cite{jiang2018convergence} and start by analyzing online CD.

\vspace{-1em}

\begin{minipage}{0.46\textwidth}
\begin{algorithm}[H]
   \caption{Online CD}
   \label{alg:online_cd}
\begin{algorithmic}
   \STATE {\bfseries Input:} $(X_1, \dots, X_n) \stackrel{\text{i.i.d.}}{\sim} p_{ {\psi}^{\star}}$
   \STATE \textbf{Parameters:}  
   Model class $ \{ p_{\psi}, \psi \in \Psi \} $,
   Markov kernels $ \{ k_{ \psi}, \psi \in \Psi \} $,
   number of MCMC steps $ m $,
   learning rate schedule $ \eta_t \coloneqq C t^{-\beta} $, $ \beta \in \lbrack 0, 1 \rbrack  $,
   $ C > 0 $, initial parameter $ \psi_0 $
   \FOR{$ t = 1, \dots, n $}  
	\STATE \texttt{//Approx.~sample from} $ p_{\psi_{t-1}} $
	\STATE $ \tilde{X}^m_t \sim k_{\psi_{t-1}}^m(X_t, \cdot) $
	\STATE $ h_t \coloneqq \phi(X_t) - \phi(\tilde{X}^m_t) $
	\STATE $ \psi_{t} \leftarrow \psi_{t-1} - \eta_t h_t $
	\STATE $ \psi_{t} \leftarrow \text{Proj}_{\Psi}(\psi_{t}) $
   \ENDFOR
   \RETURN $ \psi_n $
\end{algorithmic}
\end{algorithm}
\end{minipage}
\hfill
\begin{minipage}{0.46\textwidth}
\begin{algorithm}[H]
   \caption{Offline CD}
   \label{alg:offline_cd}
\begin{algorithmic}
   \STATE {\bfseries Input:} $(X_1, \dots, X_n) \stackrel{\text{i.i.d.}}{\sim} p_{ {\psi}^{\star}}$
   \STATE \textbf{Parameters:}  
   Same as Algorithm \ref{alg:online_cd},
   plus number of epochs $ T $, batch size $ B $, batching schedule $ B_{t, j} $, initial parameter $\psi_{1, 0}$
   \FOR{$ t = 1, \dots, T $}
	\FOR{$ j = 1, \dots \lceil n / B \rceil $}
	\STATE $\lbrack \tilde{X}^{m}_{t,i,j}\sim k^{m}_{\psi_{t-1}}(X_i, \cdot)\hspace{.3em}\texttt{for\hspace{.3em}i\hspace{.3em}in}\hspace{.3em}B_{t,j}   \rbrack  $ 
	\vspace{-.8em}
	\STATE $ \textstyle h_{t, j} \coloneqq \frac{1}{B}\sum_{i \in B_{t, j}} (\phi(X_i) - \phi(\tilde{X}^{m}_{t,i,j})) $
	\vspace{-.9em}
	\STATE $ \psi_{t, j} \leftarrow \psi_{t, j-1} - \eta_t h_{t, j} $
	\STATE $ \psi_{t, j} \leftarrow \text{Proj}_{\Psi}(\psi_{t, j}) $
   \ENDFOR
   \STATE $ \psi_{t+1, 0} \leftarrow \psi_{t, \lceil n / B \rceil} $
   \ENDFOR
   \RETURN $ \psi_{T, \lceil n / B \rceil} $
\end{algorithmic}
\end{algorithm}
\end{minipage}

\section{Non-asymptotic analysis of Online CD}\label{sec:online_cd}

\vspace{-0.7em}
\subsection{Preliminaries and Assumptions} \label{sec:preliminaries}

\vspace{-0.5em}
Recall that the chi-squared divergence between two probability measures $ p $ and $ q $ is defined as
$\chi^2( p, q) \coloneqq  \int ( \frac{ \mathrm{d}p }{ \mathrm{d}q }(x) - 1)^2 q(\mathrm{d}x) $ if 
$p \ll q$, and $+ \infty$ otherwise.
Here, $ p \ll q $ denotes that $ p $ is absolutely continuous with respect to $ q $ and $ \mathrm{d}p / \mathrm{d}q $
is the Radon-Nikodym derivative \cite{doob2012measure} of $ p $ with respect to $ q $.
Let $ L^{2}(p_{\psi})$ be the space of square-integrable functions with respect to $ p_{\psi} $.
For a function $ f \in L^{2}(p_\psi) $, we define
\vspace{-0.5em}
\begin{equation} \label{eq:alpha_f_psi}
\begin{aligned}
\alpha(f, \psi) =  
\mfrac{  
	\big(  \int \big(  \int \left (f- \mathbb{ E } \left \lbrack f(\cramped{X^{\psi}}) \right \rbrack \right )(y)  \, k_{\psi}(x, \mathrm{d}y) \big)^{2} p_{\psi}(\text{d}x) \big)^{1 / 2}
}{  
\big(  \int  \left (f - \mathbb{ E } \left \lbrack f(\cramped{X^{\psi}}) \right \rbrack  \right )(x)^{2} p_{\psi}(\text{d}x) \big)^{1 / 2}
}
\end{aligned}
\end{equation}
which is a measure of how quick a Markov chain with kernel $k_\psi$
mixes, relative to the function $f$ \cite{levin2017markov}.
With these definitions in hand, we now state the assumptions required by our analysis of online CD. These assumptions
form a strict subset of the assumptions considered in prior work \cite{jiang2018convergence},
which required additional regularity and tail conditions on the Markov kernels $ k_{\psi} $.
\begin{assump}\label{asst:SGD-A1}
	$ \mathcal  P_{\psi}  $ is 
	a subset of 
	a \emph{regular} and \emph{minimal}
	\cite[Section 3.2]{wainwright2008graphical} exponential family with natural parameter domain $ \mathcal  D \subseteq \mathbb{R}^p $,
 $ \Psi $ is a \emph{convex and compact} subset of $ \mathcal  D$,
 and ${\psi}^{\star}$ lies in the interior of $\Psi$.
\end{assump}
\begin{assump}\label{asst:SGD-A4}
	There exists a constant $ C_{\chi}  > 0 $ such that $ \chi^2( p_{ {\psi}^{\star}}, p_{\psi}) \leq C_{\chi}^2 \| \psi - {\psi}^{\star} \|^2 $
\end{assump}
\begin{assump}\label{asst:SGD-A5}
	$ \alpha \coloneqq \sup \{  \alpha(f, \psi),\, f \in \{ \phi_i \}_{i=1}^{p} \cup \{ \phi_i \phi_j \}_{i,j=1}^{p},\, \psi \in \Psi \} < 1$, where
	$ \phi_i $  is the $ i $-th component of the function $ \phi $, and $ \phi_i^2 $ is the $ i $-th component of the function $ x  \longmapsto \phi(x)^2 $.
\end{assump}

\noindent 
A well known property of EFs \citep[Proposition~3.1]{wainwright2008graphical} is that their negative cross-entropy (against any other measure) is $ C^{\infty} $, convex,
and strictly so if the exponential family is minimal (meaning that the set of sufficient statistic functions $ \phi_i $ are not linearly dependent). Leaving aside the issue of
intractable expectations, this convexity suggests that $ \mathcal L $ can be efficiently minimized using stochastic approximation algorithms \cite{nesterov2013introductory, moulines2011non}.
The compactness of $ \Psi $ provided by Assumption \ref{asst:SGD-A1} thus ensures, by the extreme value theorem \cite{harris1987shorter}, the existence of finite positive constants 
$ \mu$ and $ L $ defined as:

\begin{equation} \label{eq:mu_L}
\begin{aligned}
\mu \coloneqq \mmin_{ \psi \in \Psi }  \lambda_{\min_{  }} \left ( \nabla_{ \psi }^{2 } \mathcal  L(\psi) \right ), \quad L \coloneqq \mmax_{ \psi \in \Psi }  \lambda_{\max_{  }} \left ( \nabla_{ \psi }^{2 } \mathcal  L(\psi) \right ),
\end{aligned}
\end{equation}
where $ \nabla_{ \psi }^{2} \mathcal  L $ is the Hessian of $ \mathcal  L $ with respect to $ \psi $.
$ \mu $ (called the \emph{strong convexity} constant) and $ L $ (a bound controlling the smoothness of the problem) play a critical role in the analysis of convex optimization
algorithms \cite{nesterov2013introductory}. 
While it is possible to obtain convergence rates in non-smooth or non-strongly-convex settings, our analysis follows the spirit of \cite{jiang2018convergence} by leveraging the strong
convexity of the problem to compensate for the bias introduced by using CD gradients instead of unbiased stochastic gradients.

Assumption \ref{asst:SGD-A4} allows link variations in distribution space to variations in parameter space, and will be instrumental
to control the bias of the CD gradient. Note that since $ \chi^2(p_{ {\psi}^{\star}}, p_{\psi}) = e^{ \log Z(2\psi - {\psi}^{\star}) - (2 \log Z(\psi) - \log Z({\psi}^{\star})) } - 1 $
provided that  $ 2  \psi - {\psi}^{\star} \in \mathcal D$ (see \cite[Lemma 1]{nielsen2013chi}), we expect Assumption \ref{asst:SGD-A4} to hold in many cases of interests.
On the other hand, the possible exponential scaling of $ C_{\chi} $ w.r.t $ \log Z $ suggests that this constant may be large in some instances.

Assumption \ref{asst:SGD-A5}  is a \emph{restricted} spectral gap condition: it guarantees that the time required by
the MCMC algorithm to estimate expectations of $ \phi $ and $ \phi^2 $ under $ p_{\psi} $ will be uniformly bounded.
This assumption is weaker than the (unrestricted) uniform spectral gap condition of \cite{jiang2018convergence},
which requires that $ \alpha $ controls the convergence rate of \emph{all} functions in $ L^2(p_\psi) $.
Note that standard results in stochastic analysis \cite{bakry2014analysis} guarantee that $ \alpha \leq 1 $: thus,
it only remains to ensure that $ \alpha $ is strictly less than $ 1 $.
Spectral gaps are strongly dependent on two properties of distribution: their tail behavior and their multimodality.
While multimodality poses the risk of pushing the constant $ \alpha $ close to $ 1 $, very heavy tails distributions
may not verify the spectral gap condition at all.

\subsection{Results}

\vspace{-0.5em}

\subsubsection{Parametric convergence of online CD}\label{sec:online-cd-parametric}

\vspace{-0.5em}
In this section, we show that under the assumptions stated in Section \ref{sec:preliminaries}, the iterates $ \psi_t $
produced by the online CD algorithm described in Algorithm \ref{alg:online_cd} will converge to the true parameter $ \psi^{\star} $
at the parametric rate $ O(n^{-1/2}) $. To do so, we follow a well known paradigm in convex optimization \cite{moulines2011non}
by deriving a recursion on the quantity $ \delta_{t} \coloneqq \mathbb{ E } \left \| \psi_t - {\psi}^{\star} \right \|^2$, which will allow, after unrolling, to obtain convergence rates for the iterates $ \psi_t $.
We aim to characterize precisely the impact of performing CD as opposed to performing online SGD on $ \mathcal  L $,
which would consist of replacing the CD gradient $ h_t $ of Algorithm \ref{alg:online_cd} by the unbiased (stochastic) gradient, given by:
\begin{equation} \label{eq:online-mle-gradient}
\begin{aligned}
g_t(\psi) \coloneqq - \phi(X_t) + \nabla_{ \psi } \log Z(\psi)
\end{aligned}
\end{equation}
which satisfies $\E\, g_t(\psi) = \nabla_{\psi} \mathcal L(\psi)$.
The only stochasticity in $ g_t $ comes from the sampling of a single data point $ X_t $ from the true distribution, which is unavoidable
in the online setting, and we have $\E \| g_t(\psi^\star) \|^2 =  \textrm{Tr}(\Cov \left \lbrack \phi(\cramped{X_1}) \right \rbrack) \eqqcolon \sigma_\star^2 $.
$\sigma_\star^2$ plays a key role in the analysis of Stochastic Gradient Descent \cite{moulines2011non}.
We expect that replacing $ g_t $ by $ h_t $ will introduce two sources of approximation: a bias term
coming from using a finite number of MCMC steps $ m $, and an \emph{additional} variance term, coming from using a single sample
$ \tilde{X}^{m}_t $ to estimate $ \nabla_{ \psi } \log Z(\psi_t) $. With that in mind, we derive a recursion on $ \delta_{t} $ in the
following lemma.

\begin{lemma}\label{lem:online-cd-recursion}
  Let $(\psi_t)_{0 \leq t \leq n}$ be the iterates from
  Algorithm \ref{alg:online_cd}. Denote $ \delta_{t} = \mathbb{ E } \| \psi_{t} - {\psi}^{\star} \|^2 $,
  $ \sigma_{\star} = (\mathbb{ E } \| \phi(X_1) - \mathbb{ E }  \left \lbrack \phi(X_1) \right \rbrack  \|^2)^{1 / 2} $, and
  $ \sigma_{t} = (\mathbb{ E } \| \phi(X^{\psi_t}) - \mathbb{ E }  \left \lbrack \phi(X^{\psi_t}) \right \rbrack  \|^2)^{1 / 2} $.
  Then, under 
  \ref{asst:SGD-A1}, \ref{asst:SGD-A4} and \ref{asst:SGD-A5}, for all $  t \geq  1 $, 
  \begin{equation} \label{eq:lem:online-cd-recursion}
  \begin{aligned}
	  \delta_{t} \leq (1 - 2 \eta_t \tilde{\mu}_{m, t-1} + 2 \eta_t^2 L^2) \delta_{t-1} + 2 \eta_t^2 \tilde{\sigma}_{m, t-1}^2  +  
	  4 \alpha^{m / 2} \eta_t^2  \left \| \log Z \right \|_{3, \infty } C_{ \chi} \delta_{t-1}^{1 / 2}
  \end{aligned}
  \end{equation}
  where $ \left \| \log Z \right \|_{3, \infty} $  is a
  constant,
  $ \tilde{\mu}_{m, t} \coloneqq \mu - \alpha^{m} \sigma_{t} C_{\chi} $,
  and $ \tilde{\sigma}_{m, t} \coloneqq ( \sigma_{\star}^2 + \sigma_t^2 + 2\sigma^2_{t} \alpha^{2m})^{1 / 2}$.
\end{lemma}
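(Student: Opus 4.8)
The plan is to treat the CD gradient $h_t$ as a \emph{biased} stochastic gradient of the strongly convex objective $\mathcal L$ and run the classical one-step contraction argument for projected SGD, tracking separately the bias and the second moment. First I would use $\psi^\star\in\Psi$ and the $1$-Lipschitzness of the Euclidean projection onto the convex set $\Psi$ to get, deterministically,
\[
\norm{\psi_t-\psi^\star}^2 \le \norm{\psi_{t-1}-\psi^\star}^2 - 2\eta_t\langle h_t,\psi_{t-1}-\psi^\star\rangle + \eta_t^2\norm{h_t}^2 .
\]
Let $\mathcal{F}_{t-1}$ be the $\sigma$-field generated by $X_1,\dots,X_{t-1}$ and the MCMC draws through step $t-1$, so that $\psi_{t-1}$ is $\mathcal{F}_{t-1}$-measurable while $(X_t,\tilde X^m_t)$ is fresh. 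Taking $\E[\cdot\mid\mathcal{F}_{t-1}]$ and then $\E[\cdot]$, it remains to (i) lower bound $\E[\langle h_t,\psi_{t-1}-\psi^\star\rangle\mid\mathcal{F}_{t-1}]$ and (ii) upper bound $\E[\norm{h_t}^2\mid\mathcal{F}_{t-1}]$.

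For (i), write $(k^m_\psi f)(x)=\int f(y)\,k^m_\psi(x,\mathrm dy)$ and use that $p_{\psi_{t-1}}$ is invariant for $k_{\psi_{t-1}}$, so the $m$-step chain started from $p_{\psi^\star}$ has law $p_{\psi_{t-1}}+(p_{\psi^\star}-p_{\psi_{t-1}})k^m_{\psi_{t-1}}$, whence $\E[h_t\mid\mathcal{F}_{t-1}]=\nabla\mathcal L(\psi_{t-1})+b_{t-1}$ with $b_{t-1,i}=\int(k^m_{\psi_{t-1}}\bar\phi_i)\,\mathrm d(p_{\psi^\star}-p_{\psi_{t-1}})$ and $\bar\phi_i=\phi_i-\E_{p_{\psi_{t-1}}}[\phi_i]$. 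Since $k^m_{\psi_{t-1}}\bar\phi_i$ is centered under $p_{\psi_{t-1}}$, Cauchy--Schwarz in $L^2(p_{\psi_{t-1}})$ gives $\lvert b_{t-1,i}\rvert\le\norm{k^m_{\psi_{t-1}}\bar\phi_i}_{L^2(p_{\psi_{t-1}})}\,\chi^2(p_{\psi^\star},p_{\psi_{t-1}})^{1/2}$; bounding the first factor by $\alpha^m\norm{\bar\phi_i}_{L^2(p_{\psi_{t-1}})}$ (iterating the restricted spectral gap of Assumption~\ref{asst:SGD-A5}) and the second by $C_\chi\norm{\psi_{t-1}-\psi^\star}$ (Assumption~\ref{asst:SGD-A4}), then summing over $i$ using $\sum_i\norm{\bar\phi_i}_{L^2(p_{\psi_{t-1}})}^2=\sigma_{t-1}^2$, yields $\norm{b_{t-1}}\le\alpha^m\sigma_{t-1}C_\chi\norm{\psi_{t-1}-\psi^\star}$. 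Combining with strong convexity, $\langle\nabla\mathcal L(\psi_{t-1}),\psi_{t-1}-\psi^\star\rangle=\langle\nabla\mathcal L(\psi_{t-1})-\nabla\mathcal L(\psi^\star),\psi_{t-1}-\psi^\star\rangle\ge\mu\norm{\psi_{t-1}-\psi^\star}^2$, and Cauchy--Schwarz on $\langle b_{t-1},\psi_{t-1}-\psi^\star\rangle$, gives $\E[\langle h_t,\psi_{t-1}-\psi^\star\rangle\mid\mathcal{F}_{t-1}]\ge\tilde\mu_{m,t-1}\norm{\psi_{t-1}-\psi^\star}^2$, which produces the factor $1-2\eta_t\tilde\mu_{m,t-1}$.

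For (ii), split $\E[\norm{h_t}^2\mid\mathcal{F}_{t-1}]=\norm{\E[h_t\mid\mathcal{F}_{t-1}]}^2+\E[\norm{h_t-\E[h_t\mid\mathcal{F}_{t-1}]}^2\mid\mathcal{F}_{t-1}]$. The mean-square term is handled by $L$-smoothness, $\norm{\nabla\mathcal L(\psi_{t-1})}=\norm{\nabla\mathcal L(\psi_{t-1})-\nabla\mathcal L(\psi^\star)}\le L\norm{\psi_{t-1}-\psi^\star}$, together with the bias bound of step (i), giving the $2\eta_t^2L^2\delta_{t-1}$ contribution (the bias-squared and cross pieces are higher order in $\alpha^m$ and absorbable into the remaining two terms). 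For the conditional variance, expand $h_t=\phi(\tilde X^m_t)-\phi(X_t)$ into $\E[\norm{\phi(\tilde X^m_t)-\E[\phi(\tilde X^m_t)\mid\mathcal F_{t-1}]}^2\mid\mathcal{F}_{t-1}]+\sigma_\star^2-2\,\E[\langle\phi(\tilde X^m_t)-\E[\phi(\tilde X^m_t)\mid\mathcal F_{t-1}],\phi(X_t)-\E\phi(X_t)\rangle\mid\mathcal F_{t-1}]$. Since the law of $\tilde X^m_t$ is again $p_{\psi_{t-1}}+(p_{\psi^\star}-p_{\psi_{t-1}})k^m_{\psi_{t-1}}$, centering at $\E_{p_{\psi_{t-1}}}[\phi]$ and rerunning the $\chi^2$/spectral-gap argument of step (i) \emph{applied to the squared statistics} $\phi_i^2$ bounds the first term by $\sigma_{t-1}^2$ plus a correction of order $\alpha^{m/2}\norm{\log Z}_{3,\infty}C_\chi\norm{\psi_{t-1}-\psi^\star}$; this is exactly the step requiring Assumption~\ref{asst:SGD-A5} over the products $\{\phi_i\phi_j\}$, and it is where the constant $\norm{\log Z}_{3,\infty}$ enters (a bound on third derivatives of $\log Z$, equivalently on the centered second moments $\phi_i^2-\E_{p_\psi}[\phi_i^2]$ in $L^2(p_\psi)$). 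Bounding the cross-correlation term via the mixing of $k_{\psi_{t-1}}$ contributes the remaining $2\sigma_{t-1}^2\alpha^{2m}$ inside $\tilde\sigma_{m,t-1}^2$. Collecting, $\E[\norm{h_t}^2\mid\mathcal{F}_{t-1}]\le 2L^2\norm{\psi_{t-1}-\psi^\star}^2+2\tilde\sigma_{m,t-1}^2+4\alpha^{m/2}\norm{\log Z}_{3,\infty}C_\chi\norm{\psi_{t-1}-\psi^\star}$.

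Finally I would plug steps (i) and (ii) into the one-step inequality, take total expectation, use $\E\norm{\psi_{t-1}-\psi^\star}^2=\delta_{t-1}$ and Jensen ($\E\norm{\psi_{t-1}-\psi^\star}\le\delta_{t-1}^{1/2}$) on the last term, to obtain \eqref{eq:lem:online-cd-recursion}. The main obstacle is the variance estimate in step (ii): because $\tilde X^m_t$ is distributed as an interpolant between $p_{\psi^\star}$ and $p_{\psi_{t-1}}$ rather than as either endpoint, its first and second moments must be compared to the stationary ones through a joint use of the restricted spectral gap (now on the squared statistics, which is precisely why Assumption~\ref{asst:SGD-A5} quantifies over $\{\phi_i\phi_j\}$) and the $\chi^2$ bound of Assumption~\ref{asst:SGD-A4}. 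Bookkeeping which power of $\alpha^m$ multiplies $\delta_{t-1}$, $\delta_{t-1}^{1/2}$ and the constant terms, and cleanly isolating the benign factor $\norm{\log Z}_{3,\infty}$, is the part that has to be carried out with care; everything else is the standard strongly-convex projected-SGD recursion.
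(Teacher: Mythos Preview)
Your step (i) is correct and matches the paper exactly. The gap is in step (ii). By splitting $\E[\|h_t\|^2\mid\mathcal F_{t-1}]$ into mean-squared plus conditional variance, you are forced to control the cross-covariance $\text{Cov}\big(\phi(\tilde X^m_t),\phi(X_t)\,\big|\,\mathcal F_{t-1}\big)$, and since the chain is \emph{initialized at} $X_t$, this is not small of order $\alpha^{2m}$. Conditioning first on $X_t$ gives $\text{Cov}\big(\phi(\tilde X^m_t),\phi(X_t)\big)=\text{Cov}_{p^\star}\big(P^m_{\psi_{t-1}}\bar\phi,\phi\big)$ with $\bar\phi=\phi-\E_{p_{\psi_{t-1}}}\phi$; the spectral gap contracts $\|P^m\bar\phi\|_{L^2(p_{\psi_{t-1}})}$ by $\alpha^m$ only, and relating this to the $L^2(p^\star)$ inner product via $\chi^2$ introduces an additional error in $\|\psi_{t-1}-\psi^\star\|^{1/2}$, not $\|\psi_{t-1}-\psi^\star\|$. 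So neither the power $\alpha^{2m}$ nor the form of the residual matches what you claim. A related slip: the mean-squared piece $\|\nabla\mathcal L(\psi_{t-1})+b_{t-1}\|^2$ yields $(L+\alpha^m\sigma_{t-1}C_\chi)^2\|\psi_{t-1}-\psi^\star\|^2$, and the cross and bias-squared parts multiply $\delta_{t-1}$, not $\delta_{t-1}^{1/2}$ or a constant, so they are not ``absorbable into the remaining two terms'' as you write.

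The paper sidesteps the cross-covariance entirely by decomposing $h_t$ into four pieces rather than two:
\[
h_t=\underbrace{\phi(X_t)-\E\phi(X_1)}_{\Delta_{1,1}}+\underbrace{\E\phi(X_1)-\E\phi(X^{\psi_{t-1}})}_{\Delta_{1,2}}+\underbrace{P^m_{\psi_{t-1}}\phi(X_t)-\phi(\tilde X^m_t)}_{\Delta_2}+\underbrace{\E\phi(X^{\psi_{t-1}})-P^m_{\psi_{t-1}}\phi(X_t)}_{\Delta_3}.
\]
The point is that $\Delta_2$ is mean-zero \emph{given $X_t$}, so every cross term involving $\Delta_2$ vanishes; this is what separates the MCMC noise from the data noise and removes the problematic covariance. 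The work then reduces to bounding $\E\|\Delta_2\|^2$ and $\E\|\Delta_3\|^2$, which the paper does via two auxiliary lemmas comparing $\E_{p^\star}[P^m_\psi\|\phi\|^2]$ to $\E_{p_\psi}[\|\phi\|^2]$ and $\E_{p^\star}[\|P^m_\psi\phi\|^2]$ to $\|\E_{p_\psi}\phi\|^2$; this is precisely where the exponent $\alpha^{m/2}$ and the constant $\|\log Z\|_{3,\infty}$ appear, and why \ref{asst:SGD-A5} is stated on the products $\phi_i\phi_j$. The surviving cross terms $\langle\Delta_{1,1},\Delta_3\rangle$ and $\langle\Delta_{1,2},\Delta_3\rangle$ are then absorbed by Young's inequality, which is what produces the exact coefficients $2L^2$, $2\sigma_\star^2+2\sigma_{t-1}^2$ and $4\sigma_{t-1}^2\alpha^{2m}$ in the statement.
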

Lemma \ref{lem:online-cd-recursion} is proved in Appendix \ref{app:sgd-style-recursion},
which details the form of $ \left \| \log Z \right \|_{3, \infty} $, a constant that we expect to scale roughly  as $ d L $.
Loosely speaking, this recursion suggests that as the learning rate $ \eta_t $ goes to $ 0 $,
the two terms scaling in $ \eta_t^2 $ will be negligible, in which case we will have: $ \delta_{t} \leq (1 - 2\eta_t \tilde{\mu}_{m, t-1})\delta_{t-1} < \delta_{t-1} $,
yielding convergence of $ \delta_t $ to $ 0 $.
We make these arguments formal in the next theorem.
The reader familiar with the convex optimization literature will note the similarities between this recursion
and the one derived in \cite{moulines2011non}, which would apply as is to online SGD on $ \mathcal  L $ using $ g_t $.
The difference between the two recursions is that the roles of the strong convexity constant $\mu$ and the noise $\sigma_\star$ are now played respectively by

\begin{equation*}
\begin{aligned}
\tilde{\mu}_{m, t-1} 
\;=&\; \mu - \alpha^{m} \sigma_{t-1} C_{\chi} 
&\text{ and }&&
\tilde{\sigma}_{m, t-1}^2 \;=&\; \sigma_{\star}^2 + \sigma_t^2 + 2\sigma^2_{t} \alpha^{2m}
\end{aligned}
\end{equation*}

These two modifications respectively characterize the impact of the bias and the additional variance
introduced by the CD gradient. The last term in Equation \ref{eq:lem:online-cd-recursion}, scaling in $ \alpha^{m / 2} \eta_t^2 \sqrt { \delta_t} $, is a residual higher order mixed term coming from relating the variance of
the Markov chain sample $ \tilde{x}^{m}_t $ to $ \sigma_t^2 $. This term can be easily controlled as done next, and disappears as $ m  \to \infty $.
Investigating the impact of $ m $ in the recursion, we notice that as $ m  \to \infty $, $ \tilde{\mu}_{m, t}   \to \mu $.
As we will see later, this ensures that CD will converge for a sufficiently high $ m $. On the other hand,  in that same regime,
$ \tilde{\sigma}_{m, t}$ \emph{does not} converge to $ \sigma_{ \star} $, but rather to 
$ (\sigma_{ \star}^2 + \sigma_t^2)^{1 / 2} $, showing the
irreducible impact of the variance term. While we precisely investigate the impact of the residual variance term in the next section,
we now unify $ \sigma_{\star} $ and $ \sigma_{t} $ by introducing
\begin{equation} \label{eq:sigmq}
\begin{aligned}
\sigma \coloneqq \msup_{ \psi \in \Psi} (\mathbb{ E } \left \lbrack \| \phi(X^{\psi}) - \mathbb{ E } \left \lbrack \phi(X^{\psi}) \right \rbrack \|^2 \right \rbrack)^{1 / 2}.
\end{aligned}
\end{equation}
$ \sigma $ is an upper bound on the noise induced \emph{both} by the CD gradient and by the online setup, and was used in prior work \cite{jiang2018convergence}.
Note that by the properties of $ \log Z $, $ \sigma^2 $ also equals $ \sup_{ \psi \in \Psi } \mathrm{tr} (\nabla_{ \psi }^2 \mathcal  L(\psi)) $, where $ \mathrm{tr}(A) $ is the trace of
$ A \in \mathbb{R}^{p \times p}$, and thus finite by the extreme value theorem.
The following theorem is obtained by invoking standard unrolling arguments in the convex optimization literature.
In the next result, we use the function $ \varphi_{\gamma}(t) $, defined as $ \varphi_{\gamma}(t) = \frac{ t^{\gamma} - 1 }{ \gamma }$ if $ \gamma \neq 0 $, and $ \log t $ if $ \gamma = 0 $.
\begin{theorem}\label{thm:cd-sgd}
	Fix $ n \geq  1 $. Let $ (\psi_t)_{0 \leq t \leq n}  $ be the iterates produced by Algorithm \ref{alg:online_cd}, and define
	$ \delta_{t} \coloneqq \mathbb{ E } \left \| \psi_{t} - {\psi}^{\star} \right \|^2 $. Moreover,
	assume that $m > \frac{\log (\sigma C_\chi / \mu )}{\log |\alpha|}$, i.e.~$\tilde \mu_m \coloneqq \mu - \alpha^{m} \sigma C_\chi > 0$.
	Then under Assumptions \ref{asst:SGD-A1}, \ref{asst:SGD-A4} and \ref{asst:SGD-A5},
	for $ \eta_t = C t^{-\beta} $ with $ C > 0$, we have:
	  \begin{equation*} 
	  \begin{aligned}
	  \delta_n \leq
	  \begin{cases}
		  2 \exp \left(4 \tilde{L}_m C^2 \varphi_{1-2 \beta}(n)\right) \exp \left(-\frac{ \tilde{\mu}_m C}{4} n^{1-\beta}\right)\left(\delta_0+\frac{\tilde{\sigma}_m^2}{\tilde{L}_m^2}\right)+\frac{4 C \tilde{\sigma}_m^2}{\tilde{\mu}_m n^\beta}, & \text { if } 0 \leq  \beta<1 \\ 
	  	\frac{\exp \left(2 \tilde{L}_m^2 C^2\right)}{n^{\tilde{\mu} C}}\left(\delta_0+\frac{ \tilde{\sigma}_m^2}{\tilde{L}_m^2}\right)+2 \tilde{\sigma}_m^2 C^2 \frac{\varphi_{\tilde{\mu}_m C / 2-1}(n)}{n^{\tilde{\mu}_m C / 2}}, & \text { if } \beta=1\;,
	  \end{cases}
	  \end{aligned}
	  \end{equation*}
	where 
 $\tilde \sigma_m  = \sigma^2(2 + 2 \alpha^{2m})  + \alpha^{m / 2} \left \| \log Z \right \|_{3, \infty }^{2} C_{ \chi}^{2}$ and $ \tilde{L}_m = (L^2 + \alpha^{m / 2})^{1 / 2}$
 .
 Consequently, if $ \eta_n = \frac{C}{n} $ with an initial learning rate $C > 2 \tilde \mu_m^{-1}$,
 we have
 $ \sqrt{\delta_n}
 \leq 
 2 \tilde{\sigma}_m C \sqrt \frac{ \tilde{\mu}_m C }{ \tilde{\mu}_m C - 2 } \frac{1}{\sqrt {n} } + o\big(\frac{1}{\sqrt{n}}\big) \;.
 $
\end{theorem}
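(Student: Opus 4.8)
The plan is to derive the bound in two stages and then read off the corollary: first massage the recursion of Lemma~\ref{lem:online-cd-recursion} into a clean strongly-convex-SGD recursion with the effective constants $\tilde\mu_m,\tilde L_m,\tilde\sigma_m$, then invoke the standard unrolling argument of \cite{moulines2011non}, and finally specialise to $\beta=1$. For \textbf{Step 1 (cleaning the recursion)}, note that $\sigma_\star\le\sigma$ and $\sigma_t\le\sigma$ uniformly in $t$ (by the definition of $\sigma$ and the identity $\sigma^2=\sup_\psi\mathrm{tr}\,\nabla_\psi^2\mathcal L(\psi)$), so $\tilde\mu_{m,t-1}\ge\tilde\mu_m>0$ and $\tilde\sigma_{m,t-1}^2\le\sigma^2(2+2\alpha^{2m})$; since $\delta_{t-1}\ge0$ these substitutions only enlarge the right-hand side of \eqref{eq:lem:online-cd-recursion}. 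For the residual mixed term apply $4ab\le2a^2+2b^2$ with $a=\alpha^{m/4}\eta_t\delta_{t-1}^{1/2}$ and $b=\alpha^{m/4}\eta_t\|\log Z\|_{3,\infty}C_\chi$, which replaces $4\alpha^{m/2}\eta_t^2\|\log Z\|_{3,\infty}C_\chi\delta_{t-1}^{1/2}$ by $2\alpha^{m/2}\eta_t^2\delta_{t-1}+2\alpha^{m/2}\eta_t^2\|\log Z\|_{3,\infty}^2C_\chi^2$; the first piece is absorbed into the contraction coefficient ($L^2\mapsto\tilde L_m^2=L^2+\alpha^{m/2}$) and the second into the additive term ($\sigma^2(2+2\alpha^{2m})\mapsto\tilde\sigma_m^2$), yielding for every $t\ge1$
\begin{equation*}
\delta_t\le(1-2\eta_t\tilde\mu_m+2\eta_t^2\tilde L_m^2)\,\delta_{t-1}+2\eta_t^2\tilde\sigma_m^2 .
\end{equation*}

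For \textbf{Step 2 (unrolling)}, write $\rho_k\coloneqq1-2\eta_k\tilde\mu_m+2\eta_k^2\tilde L_m^2$, so that iteration gives $\delta_n\le\big(\prod_{k=1}^n\rho_k\big)\delta_0+2\tilde\sigma_m^2\sum_{k=1}^n\eta_k^2\prod_{j=k+1}^n\rho_j$. Because $\Psi$ is compact (Assumption~\ref{asst:SGD-A1}), $\delta_t\le\mathrm{diam}(\Psi)^2$ for all $t$, so it suffices to run this estimate from the first index beyond which $\eta_t$ is small enough that $\rho_t\in(0,1)$, the finitely many earlier terms only affecting constants. On that range, $1+x\le e^x$ gives $\prod_{j=k+1}^n\rho_j\le\exp\!\big(-2\tilde\mu_m\sum_{j=k+1}^n\eta_j+2\tilde L_m^2\sum_{j=k+1}^n\eta_j^2\big)$, and with $\eta_j=Cj^{-\beta}$ we compare the partial sums with integrals: $\sum_{j=k+1}^n j^{-\beta}$ is controlled below and above by $\varphi_{1-\beta}$ and $\sum_{j=k+1}^n j^{-2\beta}$ above by $\varphi_{1-2\beta}$, the $\varphi_\gamma$-notation absorbing the $\gamma=0$ knife-edge. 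Feeding this into the initial-condition term gives the $\exp(4\tilde L_mC^2\varphi_{1-2\beta}(n))\exp(-\tfrac{\tilde\mu_mC}{4}n^{1-\beta})$ prefactor, and a standard bound on $\sum_k\eta_k^2\prod_{j>k}\rho_j$ (splitting the summation range, or comparing with a telescoping majorant) produces the $O(n^{-\beta})$ statistical term $\tfrac{4C\tilde\sigma_m^2}{\tilde\mu_m n^\beta}$. The case $\beta=1$ is treated separately because $\sum j^{-1}\sim\log n$: the geometric decay of the initial error becomes the polynomial factor $n^{-\tilde\mu_mC}$ and the noise term takes the shape $\varphi_{\tilde\mu_mC/2-1}(n)/n^{\tilde\mu_mC/2}$.

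For \textbf{Step 3 (the corollary)}, take $\eta_n=C/n$ with $\tilde\mu_mC>2$. The first term of the $\beta=1$ bound is $O(n^{-\tilde\mu_mC})=o(n^{-1})$ since $\tilde\mu_mC>2>1$. For the second, $\varphi_\gamma(n)=(n^\gamma-1)/\gamma$ with $\gamma=\tilde\mu_mC/2-1>0$ gives $\varphi_{\tilde\mu_mC/2-1}(n)/n^{\tilde\mu_mC/2}=(\tilde\mu_mC/2-1)^{-1}(n^{-1}-n^{-\tilde\mu_mC/2})$, so this term equals $\tfrac{4\tilde\sigma_m^2C^2}{\tilde\mu_mC-2}\,n^{-1}+o(n^{-1})$ up to the constants carried along from the unrolling estimate; collecting the leading $n^{-1}$ contribution and taking square roots gives $\sqrt{\delta_n}\le2\tilde\sigma_mC\sqrt{\tfrac{\tilde\mu_mC}{\tilde\mu_mC-2}}\,n^{-1/2}+o(n^{-1/2})$.

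The main obstacle is Step~2: obtaining the noise-sum bound $\sum_k\eta_k^2\prod_{j>k}\rho_j\lesssim\tfrac{C}{\tilde\mu_m}n^{-\beta}$ uniformly over $\beta\in[0,1)$ requires carefully balancing exponentially decaying factors against polynomially decaying step sizes, and the $\beta=1$ endpoint needs genuinely different bookkeeping; the fact that the $\rho_k$ are not automatically in $(0,1)$ is the reason compactness of $\Psi$ is used to truncate the recursion at a fixed index.
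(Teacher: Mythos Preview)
Your approach is essentially the paper's: clean the recursion of Lemma~\ref{lem:online-cd-recursion} via $4ab\le 2a^2+2b^2$ to match the form $\delta_t\le(1-2\tilde\mu_m\eta_t+2\tilde L_m^2\eta_t^2)\delta_{t-1}+2\tilde\sigma_m^2\eta_t^2$, then invoke \cite[Theorem~1]{moulines2011non} verbatim with the substitutions $\mu\leftarrow\tilde\mu_m$, $L\leftarrow\tilde L_m$, $\sigma\leftarrow\tilde\sigma_m$ (the only check needed is $\tilde\mu_m\le\tilde L_m$, which follows from $\mu\le L$).

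One point worth flagging: your ``main obstacle'' and the compactness-based truncation are unnecessary. The Moulines--Bach unrolling does \emph{not} require $\rho_k\in(0,1)$ for all $k$; the factor $\exp(4\tilde L_m C^2\varphi_{1-2\beta}(n))$ in the stated bound is precisely the accumulated growth from the early indices where $2\tilde L_m^2\eta_k^2$ dominates $2\tilde\mu_m\eta_k$. The paper simply cites their result directly, without any truncation argument, and the explicit bound holds for all $n\ge 1$ as written. Your route via truncation would work too, but it would alter the constants and is not how the displayed bound is obtained.
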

Theorem \ref{thm:cd-sgd} is proved in Appendix \ref{app:online-cd-convergence}. It shows that the iterates produced by online CD will converge to the true parameter
$ \psi^{\star} $ at the rate $ O(n^{-1/2}) $ provided that the number of steps $ m $ is sufficiently large, improving over the asymptotic $ O(n^{-1 / 3}) $ rate of \cite{jiang2018convergence},
while imposing slightly weaker conditions on the number of steps $ m $ (see \cite[Theorem 2.1]{jiang2018convergence}).
This proves that online CD can be asymptotically competitive with other methods for training unnormalized models, such as Noise Contrastive Estimation \cite{gutmann2010noise}, or
Score Matching \cite{hyvarinen2005estimation}. However, the asymptotic variance of $ \psi_t $ (e.g. the multiplicative factor in front of the $ O(n^{-1/2}) $ term)
is likely to be suboptimal, e.g. much larger than the Cr\'amer-Rao bound, given by the trace of the inverse of the Fisher information matrix \cite{wainwright2008graphical}.
Given the statistical optimality of MLE, and the fact that CD in an approximate MLE method, this motivates the further goal or obtaining a CD estimator with near-optimal statistical properties.
In the next section, we achieve this goal by showing that averaging the iterates $ \psi_t $ will produce a near statistically-optimal estimator, in a sense that we will make precise.

\subsubsection{Towards statistical optimality with averaging}\label{sec:online-cd-averaging}
Polyak-Ruppert averaging \cite{polyak1992acceleration} is a simple yet surprisingly effective way to
construct an asymptotically optimal estimator $ \bar{ \psi }_n \coloneqq \frac{1}{n} \sum_{ t=1 }^{ n } \psi_i $
from a sequence of iterates
$ (\psi_t)_{0 \leq t \leq n} $ obtained by running a standard online SGD algorithm \cite{moulines2011non}.
As shown in \cite{moulines2011non}, when the objective is the cross-entropy of a model, and assuming the unbiased stochastic gradients are available,
averaging yields an estimator $ \overline{ \psi } $ with the asymptotic variance
$ \mathrm{tr}(\mathcal  I( {\psi}^{\star})^{-1}) / n $, where 
$ \mathcal  I(\psi) \coloneqq \mathrm{Cov} \left \lbrack \phi(X_1) \right \rbrack $ is the Fisher information matrix of the
data distribution $ p_{\psi^{\star}} $. $ \mathcal  I( {\psi}^{\star})^{-1} $ being the Cram\'er-Rao \emph{lower
bound} on asymptotic variances of statistical estimators \cite{casella2021statistical},
this estimator $ \bar{ \psi }_n $ is asymptotically optimal.
The following theorem shows conditions under which averaging CD iterates can give rise to a near-optimal estimator.
\begin{theorem}[Contrastive Divergence with Polyak-Ruppert averaging]\label{thm:online-cd-averaging}
	Let $ (\psi_t)_{t \geq  0} $ the sequence of iterates obtained by running the CD algorithm
	with a learning rate $ \eta_t = C t^{-\beta} $ for $ \beta \in (\frac{1}{2}, 1) $.
	Define $ \bar{ \psi }_n \coloneqq \frac{1}{n}\sum_{ i=1 }^{ n } \psi_i $. Then, under the same assumptions as 
	Theorem \ref{thm:cd-sgd}, and assuming additionally that 
	$ m \coloneqq m(n) > \frac{(1 - \beta) \log n }{2|\log \alpha|}$, we have, for all $ n \geq  1$, \vspace{-.5em}
	\begin{equation*} 
	\begin{aligned}
	  \big (\mathbb{ E } \left \| \overline{\psi}_n - {\psi}^{\star} \right \|^{2} \big )^{1 / 2} \;\leq\; 2 \sqrt{ \mfrac{ \mathrm{tr}(\mathcal  I( {\psi}^{\star})^{-1}) }{  n} } + o(n^{-1 / 2})
	\end{aligned}
	\end{equation*} \vspace{-.5em}
	Consequently, we have that $ \limsup_{n  \to \infty} n \mathbb{ E }(  \left \| \overline{ \psi }_n - {\psi}^{\star} \right \|^2) \leq 4 \times \mathrm{tr}(\mathcal  I( {\psi}^{\star})^{-1})  $.
\end{theorem}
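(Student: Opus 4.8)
The plan is to derive a recursion for $\bar\psi_n - \psi^\star$ mimicking the classical Polyak-Ruppert analysis of \cite{moulines2011non}, but carefully tracking the CD-specific bias term. The starting point is the iterate update $\psi_t = \psi_{t-1} - \eta_t h_t$ (before projection; I would argue, as is standard, that projection onto the convex set $\Psi$ only helps since $\psi^\star$ is interior). Write the CD gradient as $h_t = g_t(\psi_{t-1}) + b_t + \xi_t$, where $g_t(\psi) = -\phi(X_t) + \nabla_\psi \log Z(\psi)$ is the unbiased MLE gradient, $b_t \coloneqq \mathbb{E}[h_t \mid \psi_{t-1}] - g_t(\psi_{t-1}) = \nabla_\psi\log Z(\psi_{t-1}) - \mathbb{E}[\phi(\tilde X_t^m)\mid\psi_{t-1}]$ is the CD bias, and $\xi_t$ is a martingale-difference noise term. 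The bias satisfies $\|b_t\| \le \alpha^m \|\log Z\|_{3,\infty}' \chi(p_{\psi_{t-1}}, \cdots)$-type bound $\lesssim \alpha^m C_\chi \|\psi_{t-1}-\psi^\star\|$, exactly as extracted in the proof of Lemma \ref{lem:online-cd-recursion}.

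Next I would Taylor-expand $\nabla_\psi \mathcal L$ around $\psi^\star$: since $\nabla_\psi\mathcal L(\psi^\star)$ is the empirical-mean gradient (which in the online setting at step $t$ is $g_t(\psi^\star)$ in expectation) and $\nabla_\psi^2 \mathcal L(\psi^\star) = \mathcal I(\psi^\star)$, the standard manipulation gives
\begin{equation*}
\mathcal I(\psi^\star)\,\frac{1}{n}\sum_{t=1}^n (\psi_{t-1}-\psi^\star) \;=\; -\frac{1}{n}\sum_{t=1}^n g_t(\psi^\star) \;-\; \frac{1}{n}\sum_{t=1}^n b_t \;-\; \frac{1}{n}\sum_{t=1}^n \xi_t \;+\; \frac{1}{n}\sum_{t=1}^n \frac{\psi_{t-1}-\psi_t}{\eta_t} \;+\; (\text{Hessian remainder}).
\end{equation*}
The telescoping term $\frac{1}{n}\sum \frac{\psi_{t-1}-\psi_t}{\eta_t}$ is handled by Abel summation and is $o(n^{-1/2})$ in $L^2$ because $\beta<1$ makes $\eta_t^{-1}$ slowly varying and $\delta_t \to 0$ by Theorem \ref{thm:cd-sgd}; the Hessian remainder is $O(\frac1n\sum \delta_{t-1})$ which is $o(n^{-1/2})$ when $\beta>\tfrac12$. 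The dominant term is the CLT term $-\frac1n\sum_{t=1}^n g_t(\psi^\star)$, whose $L^2$ norm after premultiplying by $\mathcal I(\psi^\star)^{-1}$ is exactly $\sqrt{\mathrm{tr}(\mathcal I(\psi^\star)^{-1})/n}$ up to $o(n^{-1/2})$, since $\mathrm{Cov}(g_t(\psi^\star)) = \mathcal I(\psi^\star)$ and the $g_t$ are i.i.d.

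The main obstacle is controlling the accumulated bias $\frac1n\sum_{t=1}^n b_t$, which must be shown to be $o(n^{-1/2})$ in $L^2$. Using $\|b_t\| \lesssim \alpha^m C_\chi \sqrt{\delta_{t-1}}$ and the hypothesis $m = m(n) > \frac{(1-\beta)\log n}{2|\log\alpha|}$, we get $\alpha^m \le n^{-(1-\beta)/2}$, so $\|\mathbb{E} b_t\| \lesssim n^{-(1-\beta)/2}\sqrt{\delta_{t-1}}$; combined with $\sqrt{\delta_{t-1}} = O(t^{-\beta/2})$ from Theorem \ref{thm:cd-sgd} (taking the $\eta_t = Ct^{-\beta}$, $\beta\in(\tfrac12,1)$ branch), the average is $O\big(n^{-(1-\beta)/2}\cdot n^{-\beta/2}\big) = O(n^{-1/2})$ — so one has to be a little more careful to get strictly $o(n^{-1/2})$, e.g. by noting $m(n)$ is strictly larger than the threshold so $\alpha^m = o(n^{-(1-\beta)/2})$, or by a slightly sharper bound on $\delta_t$. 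The factor $2$ (and hence the factor $4$ in the $\limsup$) then comes from combining the dominant CLT contribution $\sqrt{\mathrm{tr}(\mathcal I^{-1})/n}$ with the remaining lower-order contributions via the triangle inequality in $L^2$ and absorbing constants; squaring the resulting $\big(2\sqrt{\mathrm{tr}(\mathcal I(\psi^\star)^{-1})/n} + o(n^{-1/2})\big)$ bound yields $\limsup_n n\,\mathbb{E}\|\bar\psi_n - \psi^\star\|^2 \le 4\,\mathrm{tr}(\mathcal I(\psi^\star)^{-1})$.
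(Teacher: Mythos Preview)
Your overall decomposition is essentially the same as the paper's, but there is a genuine gap in how you account for the leading constant. You write the CD gradient as $h_t = g_t(\psi_{t-1}) + b_t + \xi_t$ and then identify the dominant term as $\tfrac{1}{n}\sum_t g_t(\psi^\star)$ alone, claiming the factor~$2$ arises from ``combining the dominant CLT contribution with the remaining lower-order contributions via the triangle inequality and absorbing constants.'' This is not correct: if the remaining terms were truly $o(n^{-1/2})$, the triangle inequality would give $\sqrt{\mathrm{tr}(\mathcal I(\psi^\star)^{-1})/n}+o(n^{-1/2})$, not $2\sqrt{\mathrm{tr}(\mathcal I(\psi^\star)^{-1})/n}$. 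The missing observation is that the MCMC-noise martingale $\tfrac{1}{n}\sum_t \xi_t$ is itself a \emph{second} leading-order term. Conditionally on $\psi_{t-1}$ and $X_t$, $\xi_t$ has covariance $\mathrm{Cov}[\phi(\tilde X_t^m)\mid\psi_{t-1},X_t]$, which converges to $\mathcal I(\psi^\star)$ as $\psi_{t-1}\to\psi^\star$; hence $\mathcal I(\psi^\star)^{-1}\cdot\tfrac{1}{n}\sum_t\xi_t$ contributes another $\sqrt{\mathrm{tr}(\mathcal I(\psi^\star)^{-1})/n}$ to leading order, and Minkowski then yields the factor~$2$. The paper makes exactly this point in its Lemma~\ref{lem:markov-kernel-variance-averaged-cd}, splitting $\bar h(\psi_{i-1})-h_i(\psi_{i-1})$ into a data-noise piece $\Delta_{1,i}$ and an MCMC-noise piece $\Delta_{2,i}$, each worth $\sqrt{\mathrm{tr}(\mathcal I^{-1})/n}$.

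Two further technical points also need more than a sentence. First, ``projection only helps'' is not enough: when you rewrite $h_t=(\psi_{t-1}-\psi_t)/\eta_t$, the projection introduces a correction term in the telescoping sum, and the paper needs a separate argument (via fourth-moment control of the iterates and a tail bound on the event $\psi_{t-1}-\eta_t h_t\notin\Psi$, cf.\ Lemma~\ref{lem:h_n_psi-convergence}) to show this correction is $o(n^{-1/2})$. Second, the Hessian remainder is a \emph{random} term of size $O(\|\psi_{t-1}-\psi^\star\|^2)$; bounding its contribution in $L^2$ requires $\mathbb E\|\psi_{t-1}-\psi^\star\|^4$, not $\delta_{t-1}=\mathbb E\|\psi_{t-1}-\psi^\star\|^2$, so the paper establishes a separate quartic-moment recursion (Lemma~\ref{lem:quartic-convergence}). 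Your bound $O(\tfrac1n\sum_t\delta_{t-1})$ would only control the $L^1$ norm of that piece.
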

Theorem \ref{thm:online-cd-averaging}, alongside with a statement
which includes the asymptotic order of the residual term,
is proved in Appendix \ref{app:proof-online-cd-averaging}.
It shows that at the cost of an increase in \emph{computational} complexity
of the entire algorithm from $ O(n) $ to $ O(n \log n) $, $ \bar{ \psi }_n $
will be a near-optimal statistical estimator of $ \psi^{\star} $.
While this increase in complexity emerges from the bias of CD, the additional
variance of CD results in an asymptotic variance inflated by a factor of $ 4 $
compared to the Cram\'er-Rao bound.

\vspace{-0.5em}

\section{Non-asymptotic analysis of offline CD} \label{sec:offline:SGD}

\vspace{-0.5em}
In practice, CD gradient approximation schemes are commonly used within an offline stochastic gradient descent (SGD) algorithm, where one is given the full size-$n$ dataset upfront and each update uses some stochastic subset of the data. We study CD under offline SGD with replacement (SGDw), i.e.~\cref{alg:offline_cd} with batches $B_{t,j}$ being i.i.d.~uniform draws of size-$B$ subsets of $[n]$, and include SGD without replacement in \cref{appendix:SGDo:results}. To do so, we follow the setting of prior work on offline CD \cite{jiang2018convergence}, which established its asymptotic $ O(n^{-\frac{1}{3}}) $ consistency. We show that by slightly strengthening a moment assumption used in \citep{jiang2018convergence}, the offline CD iterates converge to the true parameter at a near-parametric $ O((\log n)^{\frac{1}{2}} n^{-\frac{1}{2}}) $ rate. Our proof proceeds by controlling a ``tail probability'' term specific to the offline setting which characterizes the strength of the correlations between the offline CD iterates and the training data. While, as we show, the assumptions of \cite{jiang2018convergence} provide a tail control sufficient to obtain a near-parametric rate, other strategies are possible to obtain convergence guarantees. In particular, we show that non-asymptotic convergence can be obtained by either (1) relaxing assumptions on the Markov kernel required by prior work, or (2) making a specific mixing assumption the Markov chain.

\subsection[Background: Asymptotic  consistency of offline CD]{Background: Asymptotic consistency of offline CD in subexponential settings}

Prior work \cite{jiang2018convergence} has established \emph{asymptotic} $ O(n^{-\frac{1}{3}}) $ consistency of the (averaged) offline CD iterates in the full-batch case. We summarize their results and assumptions below.
In the following, we write $K^m_\psi(x) \sim k^m_\psi(x, \argdot)$ to keep the dependence of the MCMC sample on $ \psi $ and $ x  $explicit.
\begin{assump}\label{asst:Markov:noise} There exists $\nu \geq 2$ s.t.~for all $m \in \N$, there is $\kappa_{\nu;m} < \infty$ s.t.
\begin{align*}
    \msup_{x \in \cX} \, \msup_{\psi \in \Psi} \, 
    \big( \mean \big\| \phi(K^m_\psi(x)) - \mean[\phi(K^m_\psi(x)) ] \big\|^{\nu} \big)^{1/\nu}
    \;\leq\; 
    \kappa_{\nu;m}
    \;.
\end{align*}
\end{assump}

\begin{assump} \label{asst:Jiang} There exists some $C_m > 0$ such that, for all $\psi_1, \psi_2 \in \Psi$, $\sup_{x \in \cX} \| \mean[ \phi(K^m_{\psi_1}(x))] - \mean[ \phi(K^m_{\psi_2}(x))] \| \leq C_m \|\psi_1 - \psi_2 \|$.
\end{assump}

\begin{assump} \label{asst:subexp} There exist some $\sigma_m, \zeta_m > 0$ such that, for any $z \in \R^p$ with $\|z \| \leq \zeta_m$, $\mean[ e^{z^\top(\phi(K^m_{\psi^*}(X_1)) - \mean[\phi(K^m_{\psi^*}(X_1))] ) } ]\leq e^{\sigma_m^2 \| z \|^2 /2}$.
\end{assump}

\begin{theorem}[Theorem 2.1 of \citep{jiang2018convergence}]\label{thm:offline_cd_jiang}
    Assume assumptions \ref{asst:SGD-A1},\ref{asst:SGD-A4}, \ref{asst:SGD-A5}, \ref{asst:Markov:noise} (for $ \nu=2 $), \ref{asst:Jiang} and \ref{asst:subexp}. Let $ \psi_{t, 1} $ be the $t$-th iterate of offline CD with full-batch gradient descent and constant stepsize $ \eta_t = C $, i.e the iterates produced by Algorithm \ref{alg:offline_cd} using $ B_{t, 1} = [\![1, n]\!] $. Then for any learning rate $ C $ and number of Markov kernel steps $ m $ satisfying $ \mu - \alpha^m \sigma C_{\chi}  - \frac{C}{2} (L + \alpha^m \sigma C_{\chi} )^2 > 0 $, we have, for some $A_m > 0$,
    \begin{equation*}
    \begin{aligned}
        \lim_{ n  \to \infty } \mathbb{ P } \left (  \limsup_{T  \to \infty}  \left \|  \frac{1}{T}\sum\limits_{ t=1 }^{ T } \psi^{\rm SGDw}_{t,1}   - {\psi}^{\star} \right \| > A_{m} n^{-\frac{1}{3}}\right ) = 0
    \end{aligned}
    \end{equation*}
\end{theorem}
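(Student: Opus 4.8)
The plan is to prove \cref{thm:offline_cd_jiang} by conditioning on the data $\cX_n\coloneqq(X_1,\dots,X_n)$ and splitting the double limit into an \emph{optimization} statement (the inner $\limsup_{T\to\infty}$, over the Markov-chain randomness only) and a \emph{statistical} statement (the outer $n\to\infty$). Introduce the conditional expected full-batch CD gradient field
\begin{equation*}
\bar h_{n,m}(\psi) \coloneqq \E\big[ h_{t,1}\mid\cX_n,\ \psi_{t-1}=\psi\big] = \tfrac1n\textstyle\sum_{i=1}^n\phi(X_i)-\tfrac1n\sum_{i=1}^n\E\big[\phi(K^m_\psi(X_i))\big] = \nabla\cL_n(\psi)+\mathrm{bias}_{n,m}(\psi),
\end{equation*}
where $\cL_n(\psi)\coloneqq-\tfrac1n\sum_i\phi(X_i)^\top\psi+\log Z(\psi)$ is the empirical negative log-likelihood ($\mu$-strongly convex and $L$-smooth on $\Psi$ by \cref{asst:SGD-A1}) and $\mathrm{bias}_{n,m}(\psi)=\E_{p_\psi}[\phi]-\tfrac1n\sum_i\E[\phi(K^m_\psi(X_i))]$. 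Exactly as in the derivation of \cref{lem:online-cd-recursion}, \cref{asst:SGD-A4} and \cref{asst:SGD-A5} make $\bar h_{n,m}$ --- up to an empirical-process term of size $O(n^{-1/2})$ in probability, from replacing $p_n$ by $p_{\psi^\star}$ inside $\mathrm{bias}_{n,m}$ --- a $\tilde\mu_m$-strongly-monotone, $\tilde L_m$-Lipschitz perturbation of $\nabla\cL_n$ under the stated condition $\mu-\alpha^m\sigma C_\chi-\tfrac C2(L+\alpha^m\sigma C_\chi)^2>0$; in particular it has a (with probability $\to1$, unique) root $\psi^\dagger_{n,m}\in\Psi$, a deterministic function of $\cX_n$.

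\textbf{Optimization part.} Conditionally on $\cX_n$, the full-batch iterates $(\psi_{t,1})_{t\ge1}$ of \cref{alg:offline_cd} form a projected Robbins--Monro recursion with constant step $C$, operator $\bar h_{n,m}$, and martingale-difference noise $\xi_t\coloneqq h_{t,1}-\bar h_{n,m}(\psi_{t-1})$ which, because the full-batch gradient averages $n$ \emph{independent} Markov chains, satisfies $\E[\|\xi_t\|^2\mid\cX_n,\cF_{t-1}]\le\kappa_{2;m}^2/n$ by \cref{asst:Markov:noise} with $\nu=2$. A constant-step specialization of the recursion in \cref{lem:online-cd-recursion} shows the iterates enter and remain in an $O(C\kappa_{2;m}^2/(\tilde\mu_m n))$-ball (in mean square) around $\psi^\dagger_{n,m}$; to pass to the ergodic average I would telescope $\psi_{T,1}-\psi_{0,1}=-C\sum_{t=1}^T h_{t,1}+\sum_{t=1}^T r_t$ (with $r_t$ the projection residual), use boundedness of $\Psi$ and $\psi^\star\in\mathrm{int}\,\Psi$ to get $\tfrac1T\sum_t h_{t,1}\to0$ and $\tfrac1T\sum_t r_t\to0$ a.s., apply a martingale strong law to get $\tfrac1T\sum_t\xi_t\to0$, hence $\tfrac1T\sum_t\bar h_{n,m}(\psi_{t-1})\to0$, and finally invoke strong monotonicity and continuity of $\bar h_{n,m}$ to conclude that, conditionally on $\cX_n$, $\limsup_{T\to\infty}\|\tfrac1T\sum_{t=1}^T\psi_{t,1}-\psi^\dagger_{n,m}\|\le c(\tilde\mu_m,\tilde L_m)\,\kappa_{2;m}\,n^{-1/2}$.

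\textbf{Statistical part.} It remains to bound $\|\psi^\dagger_{n,m}-\psi^\star\|$, using $0=\bar h_{n,m}(\psi^\dagger_{n,m})$ and the split
\begin{equation*}
0 = \underbrace{\big(\tfrac1n\textstyle\sum_i\phi(X_i)-\E_{p_{\psi^\star}}[\phi]\big)}_{\mathrm{(I)}} + \underbrace{\big(\E_{p_{\psi^\star}}[\phi]-\E_{p_{\psi^\dagger_{n,m}}}[\phi]\big)}_{\mathrm{(II)}} + \underbrace{\mathrm{bias}_{n,m}(\psi^\dagger_{n,m})}_{\mathrm{(III)}}\,.
\end{equation*}
Term (I) is $O(n^{-1/2})$ in probability by the CLT (the $\nu=2$ moment of \cref{asst:Markov:noise} suffices); term (II) equals $-\nabla^2\cL(\tilde\psi)(\psi^\dagger_{n,m}-\psi^\star)$ for some $\tilde\psi$ on the segment, with $\lambda_{\min}(\nabla^2\cL(\tilde\psi))\ge\mu$; and for term (III) one writes $\tfrac1n\sum_i\E[\phi(K^m_\psi(X_i))]=\int\E[\phi(K^m_\psi(x))]\,p_n(\mathrm dx)$, replaces $p_n$ by $p_{\psi^\star}$ at cost $O(n^{-1/2})$ in probability (using the uniform-in-$\psi$ control of $x\mapsto\E[\phi(K^m_\psi(x))]$ afforded by \cref{asst:Markov:noise,asst:Jiang}), and bounds the remaining $m$-step MCMC error by $\alpha^m\|\phi-\E_{p_\psi}[\phi]\|_{L^2(p_\psi)}\chi^2(p_{\psi^\star},p_\psi)^{1/2}\le\alpha^m\sigma C_\chi\|\psi-\psi^\star\|$ via \cref{asst:SGD-A5,asst:SGD-A4}. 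Rearranging gives $\|\psi^\dagger_{n,m}-\psi^\star\|\le(\mu-\alpha^m\sigma C_\chi)^{-1}O(n^{-1/2})$ in probability, so $\limsup_{T\to\infty}\|\tfrac1T\sum_t\psi_{t,1}-\psi^\star\|=O(n^{-1/2})$ in probability; for $A_m$ large and since $n^{-1/2}=o(n^{-1/3})$ the claim follows. (The argument in fact delivers the near-parametric $O((\log n)^{1/2}n^{-1/2})$ rate that is this paper's main result, the extra $\sqrt{\log n}$ coming from making the step-(III) empirical-process control uniform over $\psi$ and over the growing number of MCMC steps $m=m(n)$; the $n^{-1/3}$ of \cref{thm:offline_cd_jiang} is the original, coarser bound.)

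\textbf{Main obstacle.} The delicate part is the optimization step: with a \emph{constant} step size the iterates do not converge but form a recurrent chain with $\Theta(\sqrt{C/n})$ fluctuations, so one must show the \emph{ergodic average} converges almost surely to the \emph{exact} root $\psi^\dagger_{n,m}$ of a \emph{nonlinear} strongly monotone operator --- which, since $\psi\mapsto\tfrac1n\sum_i\E[\phi(K^m_\psi(X_i))]$ is not in general a gradient, cannot be handled by convexity alone --- while also controlling the projection residuals $r_t$ and the nonlinearity gap $\tfrac1T\sum_t\bar h_{n,m}(\psi_{t-1})-\bar h_{n,m}(\tfrac1T\sum_t\psi_{t-1})$, and doing so uniformly enough that the outer limit over $\cX_n$ goes through. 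By contrast, the statistical part is a standard $M$-estimation (implicit-function-theorem) argument bolted onto the MCMC-bias estimates already developed for \cref{lem:online-cd-recursion}; it is also where the data--iterate correlations specific to the offline setting get absorbed, namely into the uniform-in-$\psi$ empirical-process control of step (III).
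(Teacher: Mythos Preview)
This theorem is not proved in the paper: it is quoted verbatim as Theorem 2.1 of Jiang et al.\ and serves only as background for the paper's own improvement (\cref{lem:tail:prob:subexp}, \cref{thm:SGDw:full}). There is therefore no ``paper's own proof'' to compare against, and your proposal should be read as an independent attempt.

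That said, your route is genuinely different from the paper's methodology for its own offline results. The paper never conditions on the data or introduces a conditional root $\psi^\dagger_{n,m}$; instead it works throughout with the unconditional $L_2$ error $\delta^{\rm SGDw}_{t,1}=\E\|\psi^{\rm SGDw}_{t,1}-\psi^\star\|^2$, sandwiches each CD step between a chain of auxiliary updates (SGDw $\to$ full-batch $\to$ infinite-$m$ $\to$ population; \cref{lem:SGDw:GD}--\cref{lem:GD:pop:opt:error}), and pushes all the data--iterate dependence into a single tail term $\varepsilon^{\rm SGDw}_{n,m,T;\nu}(\epsilon)$ that is later bounded by subexponential concentration plus covering (\cref{lem:varepsilon:subexp}). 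Your two-stage split (optimization $\to$ statistics) is conceptually cleaner --- conditioning kills the data-reuse correlations at once --- and, as you observe, would deliver $O(n^{-1/2})$, which is stronger than the $n^{-1/3}$ being cited.

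The obstacle you flag is the real one, and your sketch does not close it. From $\tfrac1T\sum_t\bar h_{n,m}(\psi_{t-1})\to0$ one cannot pass to $\tfrac1T\sum_t\psi_{t-1}\to\psi^\dagger_{n,m}$ by ``strong monotonicity and continuity'' alone, because $\bar h_{n,m}$ is not affine. The fix is to work instead with $\tfrac1T\sum_t\langle\bar h_{n,m}(\psi_{t-1}),\psi_{t-1}-\psi^\dagger_{n,m}\rangle$: the $\xi_t$ contribution is a bounded martingale-difference sum and vanishes a.s.; the $h_{t,1}$ contribution rewrites via the three-point identity as a telescoping term plus $\tfrac{1}{2CT}\sum_t\|\psi_t-\psi_{t-1}\|^2\le\tfrac{C}{2T}\sum_t\|h_{t,1}\|^2$, and bounding $\|h_{t,1}\|^2\le 2\tilde L_m^2\|\psi_{t-1}-\psi^\dagger_{n,m}\|^2+2\|\xi_t\|^2$ then yields $\limsup_T\tfrac1T\sum_t\|\psi_{t-1}-\psi^\dagger_{n,m}\|^2\le C\kappa_{2;m}^2/\big((\tilde\mu_m-C\tilde L_m^2)n\big)$ a.s., after which Jensen gives the Ces\`aro bound you want. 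This also explains why the step-size condition in the statement has the form $\tilde\mu_m-\tfrac C2\tilde L_m^2>0$. Your statistical part~(III) needs \ref{asst:Jiang} and \ref{asst:subexp} to make the empirical-process control uniform over $\psi\in\Psi$ (not merely an $O(n^{-1/2})$ at a single $\psi$), which is exactly where the covering/$\sqrt{\log n}$ enters in the paper's analysis and where the original $n^{-1/3}$ presumably arose from a coarser balancing; you should make that dependence explicit rather than fold it into ``$O(n^{-1/2})$ in probability.''
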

This result shows convergence of the \emph{averaged} full-batch CD iterates to the true parameter in the large $ n $ and $ T $ limit. As discussed, this result is asymptotic both in $ n $ and $ T $: the probability of the error exceeding $ A_{m} n^{-\frac{1}{3}} $ goes to 0 as $ n \to \infty $ and $ T \to \infty $, but at an unknown rate. Moreover, the $ O(n^{-\frac{1}{3}}) $ does not match the optimal $ O(n^{-\frac{1}{2}}) $ rate.

\vspace{-0.5em}
\subsection[Nonasymptotic consistency of offline CD]{Sharpening offline CD bounds in subexponential settings} 

\vspace{-0.5em}
\subsubsection[Non-asymptotic  near-optimal consistency]{Non-asymptotic $ \tilde{O}(n^{-1/2}) $-consistency}
As a first result, we show that under the assumptions of \cite{jiang2018convergence} (except for a slightly stronger $\nu >2$ moment assumption in \ref{asst:Markov:noise}), $\psi^{\rm SGDw}_{T,N}$ in fact achieves a near-parametric rate.
The most general version of our result holds for any learning rate schedule of the form $ Ct^{-\beta}, \beta \in \left \lbrack 0, 1 \right \rbrack$, and for offline SGD with arbitrary batch sizes $ B $, with data drawn either with or without replacement across batches. For simplicity, we first present our result assuming full batch ($ B=n, N=1, \psi^{\rm SGDw}_{t,j}=\psi^{\rm SGDw}_{t,1} $ for $t \ge 1$) SGD with constant step sizes $ \eta_t = C $, which is the setting of \cite{jiang2018convergence}. Analogue bounds holding for the other mentioned batching and step sizes schedules can be found in Appendix \ref{sec:appendix:offline:SGD}.

\begin{theorem} \label{lem:tail:prob:subexp} 
    Assume the setup of Theorem \ref{thm:offline_cd_jiang}, except that Assumption \ref{asst:Markov:noise} holds for some $ \nu > 2 $, and that
    $\tilde \mu_m \;=\; \mu- \alpha^m \sigma C_\chi \;>\; 4 C L^2\;$.
    Let $\delta^{\rm SGDw}_{t, j} \coloneqq \mean \| \psi^{\rm SGDw}_{t,j}  - \psi^* \|^2$. Then, we have:
    \begin{equation} \label{eq:tail:prob:subexp:nonasymptotic:upperbound}
    \begin{aligned}
       \sqrt{\delta^{\rm SGDw}_{T, 1}} \leq E_1^{T,1} \sqrt{\delta^{\rm SGDw}_{0, 0}} \,+\,
        C'(p,\nu,m,\Psi) \Big( \mfrac{\sqrt{\log n}}{\sqrt{n}} + \mfrac{1}{\sqrt{n}} \Big)
        \Big( \mfrac{e^{\frac{\tilde \mu_m C}{2}}}{\tilde \mu_m C} + \mfrac{E_2^{T,1}}{L^2C^2}  \,  \Big)
    \end{aligned}
    \end{equation}
    where $E_1^{T,1}$, $E_2^{T,1}$ are functions decreasing exponentially  in $ T  $, and $ C'(p,\nu,m,\Psi)$
    is a constant in $ n, T $. Consequently, 
    \begin{align*}
        \lim_{T \rightarrow \infty} \sqrt{\delta^{\rm SGDw}_{T,1}} 
        \leq
        \mfrac{e^{\frac{\tilde \mu_m C}{2}}}{\tilde \mu_m C} C'(p,\nu,m,\Psi,\beta) \Big( \mfrac{\sqrt{\log n}}{\sqrt{n}} + \mfrac{1}{\sqrt{n}} \Big)\;.
    \end{align*}
\end{theorem}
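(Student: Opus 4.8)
The plan is to mirror the SGD-style recursion strategy used for online CD (Lemma \ref{lem:online-cd-recursion} and Theorem \ref{thm:cd-sgd}), but to carefully isolate the extra term that arises because the data are reused across epochs. Write the offline CD update as $\psi^{\rm SGDw}_{t,1} = \mathrm{Proj}_\Psi(\psi^{\rm SGDw}_{t-1,1} - C h_{t,1})$ where $h_{t,1} = -\tfrac1n\sum_i(\phi(\tilde X^m_{t,i,1}) - \phi(X_i))$ in the full-batch case. First I would decompose $h_{t,1} = \bar g_n(\psi_{t-1,1}) + b_m(\psi_{t-1,1}) + \xi_{t}$, where $\bar g_n$ is the empirical MLE gradient (whose expectation is $\nabla \mathcal L$), $b_m$ is the deterministic CD bias controlled (via \ref{asst:SGD-A4} and \ref{asst:SGD-A5}) by $\alpha^m \sigma C_\chi \|\psi - \psi^\star\|$ as in Lemma \ref{lem:online-cd-recursion}, and $\xi_t$ is the zero-mean (conditionally on the data and the past iterate) MCMC noise, with second moment bounded via \ref{asst:Markov:noise}. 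Using $\mu$-strong convexity and $L$-smoothness of $\mathcal L$, the nonexpansiveness of the projection, and the condition $\tilde\mu_m = \mu - \alpha^m\sigma C_\chi > 4CL^2$, I would obtain a recursion $\delta_{t,1} \le (1 - \tilde\mu_m C + 2C^2 L^2)\delta_{t-1,1} + (\text{variance}) + (\text{correlation term})$.

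The genuinely new ingredient is the correlation term: unlike online CD, $\psi^{\rm SGDw}_{t-1,1}$ depends on the full dataset, so $\E[\langle \psi_{t-1,1} - \psi^\star, \,\bar g_n(\psi_{t-1,1}) - \nabla\mathcal L(\psi_{t-1,1})\rangle]$ does not vanish. The idea is to bound this by the "tail probability" quantity referenced in the text: split on the event that $\psi_{t-1,1}$ is within a ball around $\psi^\star$ on which the empirical gradient deviation $\sup_\psi \|\bar g_n(\psi) - \nabla\mathcal L(\psi)\|$ is small (this is an empirical-process / uniform LLN estimate, where the $\nu>2$ moment bound in \ref{asst:Markov:noise} together with \ref{asst:Jiang} and the subexponentiality \ref{asst:subexp} give a concentration rate of order $\sqrt{\log n / n}$), versus the complementary low-probability event, on which the iterates are controlled crudely using compactness of $\Psi$. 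This is where the $\sqrt{\log n}$ factor and the constant $C'(p,\nu,m,\Psi)$ enter, and why a strictly-stronger-than-$L^2$ moment is needed. I expect this uniform-deviation/tail-probability step to be the main obstacle, since it must be made uniform over $\Psi$ and compatible with the recursion.

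Once the one-step recursion $\sqrt{\delta_{t,1}} \le \rho \sqrt{\delta_{t-1,1}} + v$ holds with contraction factor $\rho = \sqrt{1 - \tilde\mu_m C + 2C^2L^2} < 1$ and $v$ of order $C'(p,\nu,m,\Psi)(\sqrt{\log n}+1)/\sqrt n$ times the stated bracket, I would unroll it over $t = 1,\dots,T$: this yields $\sqrt{\delta_{T,1}} \le \rho^T\sqrt{\delta_{0,0}} + v\sum_{t=0}^{T-1}\rho^t \le \rho^T \sqrt{\delta_{0,0}} + v/(1-\rho)$. Identifying $E_1^{T,1} = \rho^T$ and $E_2^{T,1}$ as the appropriate geometric remainder (both exponentially decreasing in $T$), and bookkeeping the constants so that $1/(1-\rho)$ produces the factor $e^{\tilde\mu_m C/2}/(\tilde\mu_m C) + E_2^{T,1}/(L^2 C^2)$ (using $1-\rho \gtrsim \tilde\mu_m C$ from $\tilde\mu_m > 4CL^2$), gives \eqref{eq:tail:prob:subexp:nonasymptotic:upperbound}. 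The final display is then immediate by letting $T\to\infty$, which kills $E_1^{T,1}\sqrt{\delta_{0,0}}$ and $E_2^{T,1}$ and leaves only the $e^{\tilde\mu_m C/2}/(\tilde\mu_m C)$ contribution. The remaining routine work is: verifying the empirical-gradient concentration rate under the strengthened moment assumption, and checking that all constants are indeed independent of $n$ and $T$.
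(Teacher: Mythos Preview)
Your high-level plan---derive a one-step $L_2$ recursion, isolate a data-reuse correlation term, control it by a uniform-in-$\psi$ tail bound obtained via covering plus subexponentiality, then unroll---matches the paper's three-step argument (Theorem~\ref{thm:SGDw} for the recursion/unroll, Lemma~\ref{lem:varepsilon:subexp} for the tail control). The unrolling and the identification of $E_1^{T,1},E_2^{T,1}$ as geometric factors are essentially correct.

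There are, however, two genuine issues in your decomposition. First, you have the correlation sitting in the wrong term. With $\bar g_n$ the empirical MLE gradient, $\bar g_n(\psi)-\nabla\mathcal L(\psi)=-\tfrac1n\sum_i\phi(X_i)+\mean[\phi(X_1)]$ is \emph{independent of $\psi$}, so the cross term $\mean\langle\psi_{t-1,1}-\psi^\star,\bar g_n(\psi_{t-1,1})-\nabla\mathcal L(\psi_{t-1,1})\rangle$ is bounded directly by Cauchy--Schwarz and contributes only $O(\sqrt{\delta_{t-1,1}}/\sqrt n)$; no covering or subexponentiality is needed here. The actual correlation difficulty lives in your ``bias'' term $b_m(\psi)=\tfrac1n\sum_i\mean[\phi(K^m_\psi(X_i))\mid X_i]-\mean[\phi(X^\psi)]$: the spectral-gap bound $\alpha^m\sigma C_\chi\|\psi-\psi^\star\|$ from Lemma~\ref{lem:online-cd-recursion} only applies after marginalizing over fresh $X_i$, which fails precisely because $\psi_{t-1,1}$ depends on all of $X_1,\dots,X_n$. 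In the paper this is the quantity $\Delta(\psi^{\rm SGDw}_{t,1})$ of Step~1, and it is the one requiring the uniform tail control.

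Second, the mechanism by which $\nu>2$ enters is not the concentration rate itself (that comes from subexponentiality \ref{asst:subexp} plus covering, as in \citep[Lemma 3.1]{jiang2018convergence}) but the conversion of a tail-probability bound into an $L_2$ bound. The paper uses the H\"older split $\mean[\Delta^2]\le \epsilon^2+(\mean[\Delta^\nu])^{2/\nu}\,\P(\Delta>\epsilon)^{(\nu-2)/\nu}$; the $\nu$-th moment is bounded by $\kappa_{\nu;m}$ via \ref{asst:Markov:noise:X1}, and only with the exponent $(\nu-2)/\nu>0$ does the tail probability actually contribute decay. Your proposal to handle the bad event ``crudely using compactness of $\Psi$'' bounds $\|\psi_{t-1,1}-\psi^\star\|$ but not $\Delta$, which involves $\phi(K^m_\psi(X_i))$ and is unbounded in general; you would still need a H\"older step with a $\nu>2$ moment to close the argument.
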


The precise values of all the constants can be found in \cref{thm:SGDw:full} (for $E_1^{T,1}$, $E_2^{T,1}$) and \cref{lem:tail:prob:subexp:full}
(for $ C'(p,\nu,m,\Psi,\beta) $), including their expressions for $ N>1 $ and  $ \beta \in [0, 1] $.
We comment on the main differences between our result and the one of \cite{jiang2018convergence}.
First our bound holds for \emph{any} epoch $ T $ and number of samples $ n $. 
Second, fixing $ n $ but taking $ T  \to \infty  $, the final bound matches the parametric $ O(\sqrt {n} ) $ up to a $ \sqrt{\log(n)} $ factor, a significant improvement over the $ O(n^{-\frac{1}{3}}) $ rate of \cite{jiang2018convergence}. Finally, we control an $L_2$ error, which is a stronger control than a high probability bound by Markov's inequality; we hypothesize this is the reason why a slightly stronger moment assumption is required for our setup, compared to the one used for the high probability bound in \cite{jiang2018convergence}.

Inspecting Equation \ref{eq:tail:prob:subexp:nonasymptotic:upperbound}, we notice the presence of two \emph{transient} terms, and a \emph{stationary term}, reminiscent of the structure of upper bound of Theorem \ref{thm:cd-sgd}. The transient terms (i.e. the ones containing $ E_1^{T, 1} $ and $ E_2^{T, 1} $) vanish exponentially fast in the total number of CD updates $ T $. However, unlike in online CD where the number of updates and the number of samples are tied (e.g. $ T=n $), these two values are now \emph{decoupled}, and these terms can be made arbitrarily small by increasing the number of gradient steps $ T $ without having to collect more samples $ n $.
The stationary term, which is the only one remaining in the limit of $ T \to \infty$, decreases with $ n $ at a rate that is independent of hyperparameters like the step size $ C $ or the learning rate schedule $ \beta $ (see \cref{lem:tail:prob:subexp:full}). In that sense, offline CD compares favorably to online CD, whose rate is sensitive to $ \beta $ and $ C $, and averaged online CD, whose bound contains higher-order (in $ n $) terms which can be large in the moderate $ n $ regime. On the other hand, the stationary term in offline CD is asymptotically suboptimal: its rate is larger (while only up to a log factor) than the best-case $ O(\sqrt{n}) $ one achieved by online CD algorithms, and the leading constant does not match the optimal one.

\subsubsection[Proof of main theorem]{Proof of \cref{lem:tail:prob:subexp}}
The high-level proof of \cref{lem:tail:prob:subexp} follows a similar strategy as the online one: first, derive a recursion for the quantity $\delta^{\rm SGDw}_{t,1} \coloneqq \mean \| \psi^{\rm SGDw}_{t,1}  - \psi^* \|^2$, 
then unroll it explicitly to obtain a final bound on $\delta^{\rm SGDw}_{T, 1}$. The main difference to online CD is the presence of an additional offline-specific correlation between the iterates and the data. We thus break down the proof into three steps: (1) deriving a controllable, uniform-in-time upper bound of the data-iterate correlations, (2) deriving and unrolling a recursion on $\delta^{\rm SGDw}_{t,1}$ containing this new term, and (3) controlling that term to obtain a final bound on $\delta^{\rm SGDw}_{T, 1}$.
\paragraph{Step 1:characterizing the data-iterate correlations in offline CD}
In offline CD, at each epoch $ t \geq  1 $, the iterate $ \psi^{\rm SGDw}_{t-1,1} $ and the data samples $ X_{i} $ are correlated: this is because these samples may have been used in previous epochs $ t' < t-1 $ to obtain the $ \psi^{\rm SGDw}_{t',1} $, which themselves influenced $ \psi_{t-1,1} $. With such correlations, we now have $ \mathbb{ P }\lbrack X_i \in \bullet | \psi^{\rm SGDw}_{t-1, 1}\rbrack \ne \mathbb{ P }\lbrack X_i \in \bullet \rbrack  $, preventing us from obtaining an unrollable recursion on $ \delta^{\rm SGDw}_{t,1}  $ by first marginalizing $ X_i $ out to obtain an upper bound of $ \mathbb{ E } \left \lbrack  \left \| \psi^{\rm SGDw}_{t,1}  - \psi^* \right \|\vphantom{a}^2 |  \psi^{\rm SGDw}_{t-1,1} \right \rbrack  $ that only depends on $  \left \| \psi^{\rm SGDw}_{t-1,1} - {\psi}^{\star} \right \| $, and then marginalizing over $ \psi^{\rm SGDw}_{t-1,1} $ to obtain a recursion as in Lemma \ref{lem:online-cd-recursion}. As this problem would not have occurred had we used ``fresh samples'' (e.g. i.i.d copies of $ X_i$ not present in the training data) to perform our update, the core of the proof lies in controlling the following quantity:
\begin{align*}
        \Delta(\psi^{\rm SGDw}_{t,1})
        \coloneqq&
             \Big\| \,\mfrac{1}{n} \msum_{i \leq n} 
            \big( 
                \mean\big[ 
                    \phi\big( K^m_{i;\psi^{\rm SGDw}_{t,1}}(X_i) \big)
                \big| \psi^{\rm SGDw}_{t,1}, X_i \big]
                - 
                \mean\big[ \phi\big( K^m_{i;\psi^{\rm SGDw}_{t,1}}(X'_1) \big) \,\big|\, \psi^{\rm SGDw}_{t,1} \big] 
            \big) \Big\|
\end{align*}
where $X'_1$ is an i.i.d.~copy of $X_1$. 
$ \Delta(\psi^{\rm SGDw}_{t,1}) $ is the expected (over the data and iterates) error between a quantity that allows to obtain a recursion (the rightmost term) and the one actually used by offline CD (the leftmost term).
To control it, we upper-bound it using a tail decomposition:
\begin{equation} \label{eq:varepsilon:new}
\begin{aligned}
    \hspace{-0.5em} \mean[\Delta(\psi^{\rm SGDw}_{t,1})^2] 
                                         & \leq \epsilon^2 \! + \! (\sup_{t}\mean[\Delta(\psi^{\rm SGDw}_{t,1})^\nu])^{2/\nu} \sup_{t} \P( \Delta(\psi^{\rm SGDw}_{t,1}) > \epsilon)^{\frac{\nu-2}{\nu}} \coloneqq \varepsilon^{\rm SGDw}_{n,m,T;\nu}(\epsilon)^2
\end{aligned}
\end{equation}
We invoke an additional assumption to ensure that $(\mean[\Delta(\psi^{\rm SGDw}_{t,1})^\nu])^{2/\nu}$ is finite; in the results of \cite{jiang2018convergence}, this is automatically implied by assumptions \ref{asst:Markov:noise} and \ref{asst:subexp}. For simplicity we assume the same bounding constant $\kappa_{\nu;m}$.
\begin{assump}\label{asst:Markov:noise:X1} There exists $\nu \geq 2$ s.t.~for all $m \in \N$, $\kappa_{\nu;m}$ from \ref{asst:Markov:noise} moreover verifies
    \vspace{-0.5em}
\begin{align*}
    \msup_{\psi \in \Psi} \, 
    \big( \mean \big\| \phi(K^m_\psi(X_1)) - \mean[\phi(K^m_\psi(X_1)) ] \big\|^{\nu} \big)^{1/\nu}
    \;\leq\; 
    \kappa_{\nu;m}
    \;.
\end{align*}
\end{assump}

\vspace{-0.3em}
Note the similarity of this assumption with \cref{asst:Markov:noise}: the only difference is that $ X_1 $ is now a random training point instead of an deterministic (arbitrary) one.
Ensuring \cref{asst:Markov:noise:X1} in addition to \cref{asst:Markov:noise} thus requires controlling a $ \nu $-th order moment, instead of all moments as implied by \cref{asst:subexp}.
\paragraph{Step 2: Deriving and unrolling the recursion on $ \delta^{\rm SGDw}_{t,1} $}
The right-hand side  of \cref{eq:varepsilon:new} does not depend on $t$, allowing for the derivation of an ``unrollable'' recursion on $ \delta^{\rm SGDw}_{t,1} $ and its subsequent unrolling, which is performed in the following theorem.
For simplicity, we again assume $ \beta=0 $ and $ N=1 $ and defer the general case to \cref{thm:SGDw:full} in appendix.
\begin{theorem}[Convergence up to a tail control] \label{thm:SGDw} Assume \ref{asst:SGD-A1}, \ref{asst:SGD-A4}, \ref{asst:SGD-A5}, \ref{asst:Markov:noise} and \ref{asst:Markov:noise:X1}. Let $\eta_t = C $ for some $C > 0$, and assume that $\tilde \mu_m \;=\; \mu- \alpha^m \sigma C_\chi \;>\; 4 C L^2\;$. Then for any $\epsilon > 0$,
\begin{align*}
    \sqrt{\delta^{\rm SGDw}_{T, 1}}
    \hspace{-.1em}
    \leq
    \hspace{-.1em}
        E_1^{T,1}
        \sqrt{\delta^{\rm SGDw}_{0, 0}}
        \,+\,
        C \left (\varepsilon^{\rm SGDw}_{n,m,T;\nu}(\epsilon) + \mfrac{5 \sigma + 5\kappa_{\nu, m}}{\sqrt{n}} \right )
        \Big( 
            \mfrac{e^{\frac{\tilde \mu_m C}{2} }}{\tilde \mu_m C} 
            +
            \mfrac{E_2^{T,1}}{L^2C^2}  \,  
        \Big)
\end{align*}
where $ \varepsilon^{\rm SGDw}_{n,m,T;\nu}(\epsilon) $ is defined in \cref{eq:varepsilon:new}.
\end{theorem}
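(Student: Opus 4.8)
The plan is to follow the three-step programme announced just before the statement, reducing \cref{thm:SGDw} to \cref{eq:varepsilon:new} (Step~1, already in place) together with a one-step recursion for $\delta^{\rm SGDw}_{t,1}=\mean\|\psi^{\rm SGDw}_{t,1}-\psi^\star\|^2$ and its explicit unrolling over $t=1,\dots,T$. Let $\mathcal F_{t-1}$ be the $\sigma$-field generated by the dataset $X_{1:n}$ and all MCMC randomness used in epochs $1,\dots,t-1$, so that $\psi^{\rm SGDw}_{t-1,1}$ is $\mathcal F_{t-1}$-measurable while the $n$ MCMC chains run at epoch $t$ are independent of $\mathcal F_{t-1}$ and of each other. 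Using non-expansiveness of the Euclidean projection onto the convex set $\Psi$ (Assumption~\ref{asst:SGD-A1}) and expanding the square,
\begin{equation*}
  \delta^{\rm SGDw}_{t,1}\;\le\;\delta^{\rm SGDw}_{t-1,1}\;-\;2\eta_t\,\mean\langle h_{t,1},\,\psi^{\rm SGDw}_{t-1,1}-\psi^\star\rangle\;+\;\eta_t^2\,\mean\|h_{t,1}\|^2.
\end{equation*}
Writing $\psi=\psi^{\rm SGDw}_{t-1,1}$, the key identity is the decomposition of the conditional mean of the update direction: inserting and subtracting the ``fresh-sample'' proxy $\mean[\phi(K^m_{\psi}(X'_1))\mid\psi]$ (for $X'_1$ an i.i.d.\ copy of $X_1$) gives
\begin{equation*}
  \mean[h_{t,1}\mid\mathcal F_{t-1}]\;=\;-\nabla_\psi\mathcal{L}(\psi)\;-\;b^{\rm bias}_{t-1}\;-\;v^{\Delta}_{t-1},
\end{equation*}
where $b^{\rm bias}_{t-1}=\mean[\phi(X^{\psi})]-\mean[\phi(K^m_{\psi}(X'_1))\mid\psi]$ is the pure MCMC mixing bias (identical in nature to the one controlled inside the proof of \cref{lem:online-cd-recursion}) and $v^{\Delta}_{t-1}$ is exactly the vector whose norm is $\Delta(\psi^{\rm SGDw}_{t-1,1})$.

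The next step bounds each piece of the cross term and of $\mean\|h_{t,1}\|^2$. For the gradient part, split $\nabla_\psi\mathcal{L}(\psi)=(\nabla_\psi\mathcal{L}(\psi)-\nabla_\psi\mathcal{L}(\psi^\star))+\nabla_\psi\mathcal{L}(\psi^\star)$; strong convexity with the constant $\mu$ of \eqref{eq:mu_L} handles the first summand, and Cauchy--Schwarz (in $\mathbb{R}^p$ and then in $L^2$) together with $\mean\|\nabla_\psi\mathcal{L}(\psi^\star)\|^2=\sigma_\star^2/n\le\sigma^2/n$ handles the (correlated) data-sampling summand, at the cost of a $(\sigma/\sqrt n)\sqrt{\delta^{\rm SGDw}_{t-1,1}}$ term. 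The term $b^{\rm bias}_{t-1}$ is treated exactly as in \cref{lem:online-cd-recursion}: Assumption~\ref{asst:SGD-A5} supplies the factor $\alpha^m$ on the coordinates of $\phi$ and Assumption~\ref{asst:SGD-A4} converts the $\chi^2$-distance between $p_{\psi^\star}$ and $p_{\psi}$ into $\|\psi-\psi^\star\|$, so that $\mu$ is effectively replaced by $\tilde\mu_m=\mu-\alpha^m\sigma C_\chi$, up to the same higher-order $\alpha^{m/2}$ residual which we absorb into the $(m,\Psi)$-dependent constants. For $v^{\Delta}_{t-1}$, Cauchy--Schwarz gives $|\mean\langle v^{\Delta}_{t-1},\psi-\psi^\star\rangle|\le(\mean\Delta(\psi^{\rm SGDw}_{t-1,1})^2)^{1/2}\sqrt{\delta^{\rm SGDw}_{t-1,1}}\le\varepsilon^{\rm SGDw}_{n,m,T;\nu}(\epsilon)\,\sqrt{\delta^{\rm SGDw}_{t-1,1}}$ by \cref{eq:varepsilon:new}, whose right-hand side is finite thanks to Assumptions~\ref{asst:Markov:noise} and~\ref{asst:Markov:noise:X1}. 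Finally, the fluctuation $h_{t,1}-\mean[h_{t,1}\mid\mathcal F_{t-1}]$ has conditional mean zero, so it disappears from the cross term; being an $1/n$-averaged sum of conditionally-independent-across-$i$ terms of conditional second moment at most $\kappa_{\nu,m}^2$ (Assumption~\ref{asst:Markov:noise}), it contributes only $\kappa_{\nu,m}^2/n$ to $\mean\|h_{t,1}\|^2$, the remainder of which is controlled by $L$-smoothness of $\mathcal{L}$ and the bounds just derived. Collecting terms yields a recursion of the shape $\delta^{\rm SGDw}_{t,1}\le(1-2\eta_t\tilde\mu_m+c\eta_t^2L^2)\delta^{\rm SGDw}_{t-1,1}+c'\eta_tN\sqrt{\delta^{\rm SGDw}_{t-1,1}}+c''\eta_t^2N^2$, with $N=\varepsilon^{\rm SGDw}_{n,m,T;\nu}(\epsilon)+(\sigma+\kappa_{\nu,m})/\sqrt n$ independent of $t$.

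To finish, a standard completion-of-squares (or AM--GM) step on the $\sqrt{\delta^{\rm SGDw}_{t-1,1}}$ term turns this into an affine recursion $\sqrt{\delta^{\rm SGDw}_{t,1}}\le a\,\sqrt{\delta^{\rm SGDw}_{t-1,1}}+b$, where for $\eta_t=C$ the hypothesis $\tilde\mu_m>4CL^2$ guarantees $a=(1-2C\tilde\mu_m+cC^2L^2)^{1/2}\in(0,1)$ with $a\le e^{-\tilde\mu_mC/2}$, and $b=c'''C\bigl(\varepsilon^{\rm SGDw}_{n,m,T;\nu}(\epsilon)+(\sigma+\kappa_{\nu,m})/\sqrt n\bigr)$; unrolling gives $\sqrt{\delta^{\rm SGDw}_{T,1}}\le a^T\sqrt{\delta^{\rm SGDw}_{0,0}}+b\sum_{k=0}^{T-1}a^k$, and bounding the geometric sum by $(1-a)^{-1}\le e^{\tilde\mu_mC/2}/(\tilde\mu_mC)$ (via $1-e^{-x}\ge xe^{-x}$), with the $cC^2L^2$ piece carried separately to produce the transient $E_2^{T,1}/(L^2C^2)$, yields precisely the claimed bound with $E_1^{T,1}=a^T$ (and hence exponentially decaying in $T$); the explicit constants and the versions for $N>1$ and general $\beta$ are the content of \cref{thm:SGDw:full}.

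The main obstacle is the data--iterate correlation that is absent in the online case: because $\psi^{\rm SGDw}_{t-1,1}$ depends on the very samples $X_i$ seeding the chains, one cannot marginalize $X_i$ out inside the conditional expectation to obtain an unrollable recursion. The fresh-sample insertion that isolates $\Delta(\psi^{\rm SGDw}_{t-1,1})$, and its uniform-in-$t$ control through \cref{eq:varepsilon:new}, together with the bookkeeping of which randomness is conditionally independent of which (so the zero-mean fluctuation genuinely cancels in the cross term and costs only $O(1/n)$ in $\mean\|h_{t,1}\|^2$), is where essentially all of the new work lies; the rest mirrors the online analysis of \cref{lem:online-cd-recursion} and \cref{thm:cd-sgd}.
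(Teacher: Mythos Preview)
Your proposal is correct and reaches the same one-step recursion on $\sqrt{\delta^{\rm SGDw}_{t,1}}$ (namely \cref{lem:SGDw:recursion}) that the paper establishes, but the route differs. You work directly with the expansion $\delta_{t,1}\le\delta_{t-1,1}-2\eta_t\mean\langle h_{t,1},\psi-\psi^\star\rangle+\eta_t^2\mean\|h_{t,1}\|^2$ and decompose $\mean[h_{t,1}\mid\mathcal F_{t-1}]$ into $\nabla_\psi\mathcal L(\psi)+b^{\rm bias}+v^\Delta$, mirroring the online argument of \cref{lem:online-cd-recursion}. The paper instead introduces a telescope of auxiliary one-step updates $\theta^{\rm SGDw}_{m,B}\to\theta^{\rm GD}_m\to\theta^{\rm GD}_\infty\to\theta^{\rm pop}\to\psi^\star$ (\cref{appendix:L2:approx}), bounds each consecutive $L_2$ gap by a dedicated lemma (\cref{lem:SGDw:GD,lem:GD:sample:trunc:error,lem:GD:sample:pop:error,lem:GD:pop:opt:error}), and then expands $\|\theta^{\rm SGDw}_{m,B}-\psi^\star\|^2$ with careful cross-term accounting to complete the square (\cref{lem:SGDw:recursion}). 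The content is the same: your $b^{\rm bias}$ corresponds to the $\theta^{\rm GD}_m\to\theta^{\rm GD}_\infty$ gap, your $\nabla_\psi\mathcal L(\psi^\star)$ term to the $\theta^{\rm GD}_\infty\to\theta^{\rm pop}$ gap, and your fluctuation to the Markov-chain variance piece. The telescoping organisation buys cleaner constant tracking and a more transparent extension to general batch size $B<n$ and $N>1$ (the $1/\sqrt B$ enters in the isolated $\theta^{\rm SGDw}_{m,B}\to\theta^{\rm GD}_m$ step); your direct decomposition is more compact and makes the parallel with the online analysis explicit. Your unrolling sketch is also correct, though the split into the two pieces $e^{\tilde\mu_m C/2}/(\tilde\mu_m C)$ and $E_2^{T,1}/(L^2C^2)$ in the paper comes from a threshold argument (\cref{lem:error:accumulate}) that separates the regime where $\frac{L^2C^2}{2}t^{-2\beta}$ dominates $\tilde\mu_m C t^{-\beta}$; your ``carried separately'' remark is right in spirit but needs that lemma to be made precise for $\beta>0$.
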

Note that in the general, non-full batch $ B \leq n $ case, $ \frac{ 5 \sigma + 5 \kappa_{\nu, m} }{ \sqrt {n}  } $ is replaced by $ \frac{ 5 \sigma + 5 \kappa_{\nu, m} }{ \sqrt {B}  } $ (see \cref{thm:SGDw:full}). Under our bounds, obtaining consistency thus requires setting $ B \equiv B(n)  \underset{n  \to \infty}{ \to} + \infty $.

\vspace{-0.5em}
\paragraph{Step 3: Controlling the tail probability term}
\cref{thm:SGDw} is just one step away from the final bound of \cref{lem:tail:prob:subexp}: it remains to control the tail term $\varepsilon^{\rm SGDw}_{n,m,T;\nu}(\epsilon)$.
Under the assumptions of \cite{jiang2018convergence}, minimizing $\varepsilon^{\rm SGDw}_{n,m,T;\nu}(\epsilon)$ over $ \epsilon $ yields the following result:

\begin{lemma} 
    \label{lem:varepsilon:subexp} 
    Assume the setup of \cref{lem:tail:prob:subexp}. Let $n \in \N$ be sufficiently large s.t.~$\frac{\log n}{n} < \frac{\sigma^2_m \zeta^2_m}{p + \nu-2}$. Denote $r_\Psi$ as the radius of the smallest sphere in $\R^p$ that contains $\Psi$, which is finite under \ref{asst:SGD-A1}.
    Then 
\begin{align*}
    \inf_{\epsilon > 0} & \varepsilon^{\rm SGDw}_{\nu;n,m,T}(\epsilon) \leq
    \\
    &\,
    \bigg( 
        \mfrac{3\sigma_m \sqrt{p ( (\nu-2)p +2 \nu)} }{\sqrt{\nu - 2}}
        +  
        \kappa_{\nu;m}
        2^{\frac{\nu-2}{2 \nu}}
        (r_\Psi)^{\frac{(\nu -2)p}{2 \nu}} 
        \Big( 1 + \mfrac{2 C_m (\nu-2)^{1/2}}{\sigma_m p^{1/2} ((\nu-2)p + 2\nu)^{1/2}} \Big)^{\frac{(\nu -2)p}{2 \nu}}
        \bigg) 
    \, \mfrac{\sqrt{\log n}}{\sqrt{n}}
    .
\end{align*}
\end{lemma}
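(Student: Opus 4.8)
Writing $\bar\phi^{m}_{\psi}(x):=\mean[\phi(K^m_\psi(x))\mid x]$ for the $m$-step Markov average of $\phi$ (a deterministic function of $x$ at fixed $\psi$), evaluating the conditional expectations inside $\Delta$ shows $\Delta(\psi)=\|W_\psi\|$ with $W_\psi:=\frac1n\sum_{i\le n}\big(\bar\phi^{m}_{\psi}(X_i)-\mean[\bar\phi^m_\psi(X'_1)]\big)$, which is centred in the data for each fixed $\psi$. Since the projection in \cref{alg:offline_cd} keeps every iterate in $\Psi$, we get $\Delta(\psi^{\rm SGDw}_{t,1})\le Z_n:=\sup_{\psi\in\Psi}\|W_\psi\|$, a bound that no longer depends on $t$; hence both suprema over $t$ in \cref{eq:varepsilon:new} reduce to $\mean[Z_n^\nu]$ and $\P(Z_n>\epsilon)$. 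This is the step that disposes of the data--iterate correlation, and from here the plan is a (delicate) empirical-process estimate for $Z_n$.

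\textbf{Splitting $W_\psi$ into an anchor and a Lipschitz fluctuation.} I would write $W_\psi = W_{\psi^\star}+(W_\psi-W_{\psi^\star})$. For the anchor, a conditional-Jensen contraction upgrades \cref{asst:subexp} (phrased for the MCMC output at $\psi^\star$) to sub-exponentiality of the i.i.d.\ summands $\bar\phi^m_{\psi^\star}(X_i)-\mean[\bar\phi^m_{\psi^\star}(X_1)]$ with the same $(\sigma_m,\zeta_m)$, and likewise \cref{asst:Markov:noise:X1} bounds their $\nu$-th moments by $\kappa_{\nu;m}$; a Bernstein bound for sub-exponential vectors (an $\epsilon$-net of $S^{p-1}$ to pass from coordinates to the Euclidean norm) then gives $\P(\|W_{\psi^\star}\|>s)\le e^{\,p-c\,ns^2/\sigma_m^2}$ over the sub-Gaussian range $s\lesssim\sigma_m^2\zeta_m$, plus a matching $L^\nu$ estimate. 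For the fluctuation, \cref{asst:Jiang} makes $\psi\mapsto W_\psi$ $\,2C_m$-Lipschitz, and at fixed $\psi$ the centred i.i.d.\ average $W_\psi-W_{\psi^\star}$ has summands of $\nu$-th moment $\le 2\kappa_{\nu;m}$ (again by Jensen and \cref{asst:Markov:noise:X1}); covering $\Psi$ --- which sits in a ball of radius $r_\Psi$ by \cref{asst:SGD-A1} --- with a $\delta$-net of cardinality $(1+2r_\Psi/\delta)^p$, a union bound with Marcinkiewicz--Zygmund per net point and the Lipschitz slack $2C_m\delta$ controls $\mean[\sup_\psi\|W_\psi-W_{\psi^\star}\|^\nu]$ and $\P(\sup_\psi\|W_\psi-W_{\psi^\star}\|>\epsilon)$ in terms of $\kappa_{\nu;m}$, $C_m$, $p$, and $(1+2r_\Psi/\delta)$.

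\textbf{Assembling and optimising.} Plugging $\mean[Z_n^\nu]$ and $\P(Z_n>\epsilon)$ into $\varepsilon^{\rm SGDw}(\epsilon)^2=\epsilon^2+(\mean Z_n^\nu)^{2/\nu}\,\P(Z_n>\epsilon)^{(\nu-2)/\nu}$, I would choose the net scale $\delta$ so that $2C_m\delta$ matches the $\sigma_m$-contribution --- the provenance of the bracketed factor $\big(1+2C_m(\nu-2)^{1/2}/(\sigma_m p^{1/2}((\nu-2)p+2\nu)^{1/2})\big)$ --- and then minimise over $\epsilon$. Because the anchor's tail is sub-Gaussian over the whole relevant range --- which is exactly what the hypothesis $\tfrac{\log n}{n}<\tfrac{\sigma_m^2\zeta_m^2}{p+\nu-2}$ guarantees for the optimising $\epsilon\asymp\sigma_m\sqrt{p((\nu-2)p+2\nu)/(\nu-2)}\,\sqrt{\log n/n}$ --- raising $\P(Z_n>\epsilon)$ to the power $(\nu-2)/\nu$ lets the exponential decay absorb the polynomially-many net points, so the net cardinality survives only as the factor $r_\Psi^{(\nu-2)p/(2\nu)}$ after the final square root; the residual of $(\mean Z_n^\nu)^{2/\nu}$ at that $\epsilon$ produces the $\kappa_{\nu;m}\,2^{(\nu-2)/(2\nu)}$ term. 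Adding the two contributions and taking square roots gives the stated bound, valid once $n$ is large enough that the chosen $\epsilon$ lies below $\sigma_m^2\zeta_m$ --- precisely the stated smallness of $\tfrac{\log n}{n}$.

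\textbf{Main obstacle.} The crux is the uniformisation over $\Psi$: the only genuinely sub-exponential control (\cref{asst:subexp}) is anchored at $\psi^\star$, while the only control uniform in $\psi$ (\cref{asst:Markov:noise:X1}) is a bare $\nu$-th moment, so one must route the uniformity through the $C_m$-Lipschitz estimate at exactly the right net scale --- making the exponential tail cancel the polynomial net growth after the $(\nu-2)/\nu$ exponentiation built into $\varepsilon^{\rm SGDw}$ --- while still tracking every $p$- and $\nu$-dependent constant to land the closed form. A secondary technical point is verifying the conditional-Jensen contractions that transfer \cref{asst:subexp,asst:Markov:noise:X1} (stated for $\phi(K^m_\psi(X_1))$) to the conditional-mean increments $\bar\phi^m_\psi(X_i)-\mean[\bar\phi^m_\psi(X'_1)]$ that actually appear in $\Delta$, and checking that the optimising $\epsilon$ indeed remains in the sub-Gaussian regime under the stated condition.
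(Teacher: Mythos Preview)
Your high-level plan—reduce to $Z_n=\sup_{\psi\in\Psi}\|W_\psi\|$, cover $\Psi$, optimise over the threshold—is aligned with the paper, which routes this step through Lemma~3.1 of \cite{jiang2018convergence}. But there is a genuine gap in your tail control for the fluctuation.

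You propose to control $\P\big(\sup_\psi\|W_\psi-W_{\psi^\star}\|>\epsilon\big)$ via Marcinkiewicz--Zygmund at each net point plus a union bound. MZ is an $L^\nu$ tool; converting it to a tail via Markov yields only polynomial decay, of order $N_\delta\,\kappa_{\nu;m}^\nu\,n^{-\nu/2}\epsilon^{-\nu}$. Since $\P(Z_n>\epsilon)\le\P(\text{anchor}>\epsilon/2)+\P(\text{fluctuation}>\epsilon/2)$, this polynomial fluctuation term dominates the anchor's exponential one, and raising the \emph{sum} to the power $(\nu-2)/\nu$ does not let the exponential ``absorb'' anything. You are left, up to constants, with $\epsilon^2+\kappa_{\nu;m}^2\,N_\delta^{(\nu-2)/\nu}\,n^{-(\nu-2)/2}\epsilon^{-(\nu-2)}$, whose minimiser is precisely the suboptimal rate of \cref{lem:tail:prob:subopt}, not $\sqrt{\log n/n}$.

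What the paper actually uses (via the covering-plus-Bernstein argument in Jiang's Lemma~3.1) is an \emph{exponential} tail simultaneously at every net point:
\[
\P\Big(\sup_{\psi\in\Psi}\|W_\psi\|>3C_m\delta\Big)\;\le\;2N_\delta\, p\,\exp\Big(-\frac{nC_m^2\delta^2}{2p\sigma_m^2}\Big)\,.
\]
The observation your sketch misses is that \ref{asst:Jiang} makes the summands of $W_{\psi}-W_{\psi^\star}$ \emph{deterministically bounded} (by $2C_m\|\psi-\psi^\star\|$), so adding them to the sub-exponential anchor keeps the per-point law sub-exponential; Hoeffding/Bernstein—not MZ—is the correct instrument here. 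Once the tail is uniformly exponential, taking $\epsilon=3C_m\delta$ with $\delta=\tfrac{\sigma_m}{C_m}\sqrt{p\,\tfrac{(\nu-2)p+2\nu}{\nu-2}\,\tfrac{\log n}{n}}$ (exactly the order you identified) makes $N_\delta^{(\nu-2)/\nu}$ get absorbed by the exponential factor, leaving only the $r_\Psi^{(\nu-2)p/(2\nu)}$ prefactor and the $\sqrt{\log n/n}$ rate.

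A minor simplification you are overcomplicating: the paper does not separately bound $\mean[Z_n^\nu]$. By Jensen on the empirical average together with \ref{asst:Markov:noise:X1}, the moment factor in \eqref{eq:varepsilon:new} is already replaced by the constant $\kappa_{\nu;m}^2$ in the very definition of $\varepsilon^{\rm SGDw}_{n,m,T;\nu}$, so only the tail $\vartheta^{\rm SGDw}_{n,m,T}(\epsilon)$ needs estimation.
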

To obtain this result, we control the moment term $ (\sup_{t}\mean[ \Delta(\psi^{\rm SGDw}_{t,1})^\nu]) $ using \ref{asst:Markov:noise:X1},
and we control the tail probability term $\sup_{t}\P(\Delta(\psi^{\rm SGDw}_{t,1}) > \epsilon)$ as in \citep[Lemma 3.1]{jiang2018convergence} using an union bound, a covering argument and \ref{asst:subexp}.
\cref{lem:tail:prob:subexp} then follows by plugging \cref{lem:varepsilon:subexp} into \cref{thm:SGDw}.

\subsection{Consistency of offline CD: beyond subexponential tails.}
As discussed above, the general unrolling result of \cref{thm:SGDw} holds without the subexponentiality \cref{asst:subexp}; this assumption was only used in 
\cref{lem:varepsilon:subexp} to control 
$\varepsilon^{\rm SGDw}_{n,m,T;\nu}(\epsilon)$.
We now discuss two alternative ways to control this quantity without requiring subexponential tails.
The first generalizes the idea of Jiang et al. \cite{jiang2018convergence}, while the second exploits mixing of the Markov chain $K^m_\psi(x)$ as $m \rightarrow \infty$.
As before we only state partial results (full batch, $ \beta=0 $) and defer the full explicit bounds to \cref{appendix:tail:results}.

\paragraph{Control via Markov Inequality} 
Our first alternative uses Markov Inequality to yield the following.
\begin{theorem} \label{lem:tail:prob:subopt} Assume the setup of \cref{thm:SGDw} and additionally that \ref{asst:Jiang} holds. Then
\begin{align*}
    \inf_{\epsilon > 0} \varepsilon^{\rm SGDw}_{\nu;n,m,T}(\epsilon)
    \;\leq&\;
   \tilde C(p,\nu,m, \Psi)
    \,
     n^{  - \frac{(\nu-2)\nu}{2(\nu^2+(\nu-2)p)} }
    \;,\;
    \\
    \lim_{T \rightarrow \infty} \sqrt{\delta^{\rm SGDw}_{T,1}}
    \;\leq&\;
    \tilde C'(p,\nu,m,\Psi) \big(  n^{  - \frac{(\nu-2)\nu}{2(\nu^2+(\nu-2)p)} } + \mfrac{1}{\sqrt{n}} \big)
    ,
\end{align*}
where $\tilde C$ and $\tilde C'$ are functions whose explicit expressions are given in \cref{lem:tail:prob:subopt:full}  in the appendix.
\end{theorem}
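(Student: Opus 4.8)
The plan is to control $\inf_{\epsilon>0}\varepsilon^{\rm SGDw}_{\nu;n,m,T}(\epsilon)$ by estimating, separately and in closed form, the two factors appearing in the tail decomposition \cref{eq:varepsilon:new}, optimizing the resulting bound over $\epsilon$, and then feeding the minimizing $\epsilon$ into \cref{thm:SGDw}. Write $Z_i(\psi) \coloneqq \mean[\phi(K^m_\psi(X_i))\mid X_i] - \mean[\phi(K^m_\psi(X_1'))]$, so that for each fixed $\psi$ the $Z_i(\psi)$ are centered and i.i.d.\ over $i$, and $\Delta(\psi^{\rm SGDw}_{t,1}) = \big\|\tfrac1n\sum_{i\le n}Z_i(\psi^{\rm SGDw}_{t,1})\big\| \le \sup_{\psi\in\Psi}\big\|\tfrac1n\sum_{i\le n}Z_i(\psi)\big\|$. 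First I would bound the moment factor $\sup_t\mean[\Delta(\psi^{\rm SGDw}_{t,1})^\nu]$ by a constant free of $n$ and $T$: by the triangle inequality, Jensen's inequality for $x\mapsto x^\nu$, and the i.i.d.\ structure, $\mean[\Delta(\psi^{\rm SGDw}_{t,1})^\nu]\le\mean\big[\sup_\psi\|Z_1(\psi)\|^\nu\big]$; covering $\Psi$ by a $\rho$-net of cardinality $\lesssim(r_\Psi/\rho)^p$ (finite by \ref{asst:SGD-A1}), bounding the discretization error $\sup_\psi\min_j\|Z_1(\psi)-Z_1(\psi_j)\|\le 2C_m\rho$ via \ref{asst:Jiang}, and bounding $\mean\|Z_1(\psi_j)\|^\nu\le\kappa_{\nu;m}^\nu$ at each net point by a conditional Jensen step and \ref{asst:Markov:noise:X1}, I get $\sup_t\mean[\Delta(\psi^{\rm SGDw}_{t,1})^\nu]\le C(p,\nu,m,\Psi)$ after taking $\rho$ of constant order.

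Next I would bound the tail factor $\sup_t\P(\Delta(\psi^{\rm SGDw}_{t,1})>\epsilon)$. Since $\Delta(\psi^{\rm SGDw}_{t,1})\le\sup_\psi\|\tfrac1n\sum_iZ_i(\psi)\|$, whose right-hand side is $t$-independent, it suffices to bound $\P\big(\sup_\psi\|\tfrac1n\sum_iZ_i(\psi)\|>\epsilon\big)$. The key point is that the mesh must scale with $\epsilon$: taking a $\rho$-net with $\rho\asymp\epsilon/C_m$, the discretization error is $\le\epsilon/2$ deterministically by \ref{asst:Jiang}, so a union bound over the net (of size $\lesssim(C_mr_\Psi/\epsilon)^p$) together with Markov's inequality on the $\nu$-th moment of each $\tfrac1n\sum_iZ_i(\psi_j)$ gives the bound. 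For the latter I would use a Marcinkiewicz--Zygmund/Rosenthal inequality for centered i.i.d.\ $\R^p$-valued summands, $\mean\|\tfrac1n\sum_iZ_i(\psi_j)\|^\nu\lesssim_{p,\nu}n^{-\nu/2}\kappa_{\nu;m}^\nu$, which yields $\sup_t\P(\Delta(\psi^{\rm SGDw}_{t,1})>\epsilon)\le c(p,\nu,m,\Psi)\,\epsilon^{-(p+\nu)}n^{-\nu/2}$.

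Substituting both estimates into \cref{eq:varepsilon:new} gives
\[
\varepsilon^{\rm SGDw}_{\nu;n,m,T}(\epsilon)^2\;\le\;\epsilon^2 + C(p,\nu,m,\Psi)^{2/\nu}\big(c(p,\nu,m,\Psi)\,\epsilon^{-(p+\nu)}n^{-\nu/2}\big)^{(\nu-2)/\nu},
\]
and minimizing the right-hand side over $\epsilon$ is an elementary one-variable optimization. The $\epsilon^{-p}$ from the net, the $\epsilon^{-\nu}$ from Markov's inequality, and the $n^{-\nu/2}$ moment decay combine to produce precisely the exponent $\tfrac{(\nu-2)\nu}{2(\nu^2+(\nu-2)p)}$, giving the first claimed bound. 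For the second claim I would take the (near-)minimizing $\epsilon$ in \cref{thm:SGDw} (whose bound holds for every $\epsilon>0$) and let $T\to\infty$: the transient terms $E_1^{T,1}$ and $E_2^{T,1}$ vanish exponentially in $T$, leaving $\lim_{T\to\infty}\sqrt{\delta^{\rm SGDw}_{T,1}}\le C\tfrac{e^{\tilde\mu_mC/2}}{\tilde\mu_mC}\big(\inf_\epsilon\varepsilon^{\rm SGDw}_{\nu;n,m,T}(\epsilon)+\tfrac{5\sigma+5\kappa_{\nu,m}}{\sqrt n}\big)$, which is of the stated form $\tilde C'(p,\nu,m,\Psi)\big(n^{-\frac{(\nu-2)\nu}{2(\nu^2+(\nu-2)p)}}+n^{-1/2}\big)$.

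The main obstacle is the tail-factor step: with only a $\nu$-th moment available (rather than the exponential concentration exploited in \cite{jiang2018convergence}), the net mesh must be coupled to $\epsilon$, so the $\epsilon^{-p}$ union-bound factor cannot be avoided, and getting a clean polynomial-in-$\epsilon$ tail bound hinges on the $\R^p$-valued Rosenthal inequality delivering the correct $n^{-\nu/2}$ rate. The final (suboptimal) exponent is entirely dictated by how these $\epsilon$-powers balance the $n^{-\nu/2}$ decay in the last optimization; the remaining pieces — the Jensen steps, the Lipschitz discretization bound via \ref{asst:Jiang}, and the substitution into \cref{thm:SGDw} — are routine.
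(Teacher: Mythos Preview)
Your proposal is correct and follows the same route as the paper: a covering/union-bound argument on $\Psi$ combined with Markov's inequality and a Burkholder/Rosenthal moment bound to control the tail probability $\vartheta^{\rm SGDw}_{n,m,T}(\epsilon)\lesssim \epsilon^{-(p+\nu)}n^{-\nu/2}$, then the same one-variable optimization over $\epsilon$ and substitution into \cref{thm:SGDw}. The only difference is that your net argument for the moment factor $(\sup_t\mean[\Delta^\nu])^{2/\nu}$ is unnecessary overhead --- the paper bounds this directly by $\kappa_{\nu;m}^2$ via a single Jensen step on the empirical average together with \ref{asst:Markov:noise:X1} (this bound is already absorbed into the working definition of $\varepsilon^{\rm SGDw}_{n,m,T;\nu}$ used in the appendix), so no covering is needed there.
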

In the case $p=1$ and $\nu=3$, the sub-optimal error from \cref{lem:tail:prob:subopt} reads $O(n^{-3/20})$. \cref{lem:tail:prob:subexp,lem:tail:prob:subopt} reveal that, depending on the tail condition imposed on the noise introduced by the Markov kernel, the convergence rate of offline CD varies: A subexponential tail, as assumed in prior work, in fact leads to near-parametric rate. Meanwhile, consistency can be obtained without assuming subexponentiality, albeit at a sub-optimal rate. 

\textbf{Control via Markov chain mixing.}~Alternatively, notice that $\mean[\Delta(\psi^{\rm SGDw}_{t,1})^2]$ involves an average of~$\mean\big[ \phi\big(K^m_{\psi^{\rm SGDw}_{t,1}}(X_i)^2 \big) \big| X_i, \psi^{\rm SGDw}_{t,1} \big] - \mean\big[ \phi\big(K^m_{\psi^{\rm SGDw}_{t,1}}(X'_1) \big) \big| \psi^{\rm SGDw}_{t,1} \big]$. When $m \rightarrow \infty$, the effect of initialization vanishes, and one may expect the difference to converge to zero. We defer to \cref{lem:tail:prob:ergodicity} in the appendix to show that, under a $\phi$-discrepancy mixing condition (\cite{rabinovich2020function}) with a mixing coefficient $\tilde \alpha \in [0,1)$, 
\begin{align*}
    \inf_{\epsilon > 0} \varepsilon^{\rm SGDw}_{\nu;n,m,T}(\epsilon)
    \;=&\;
    O( \kappa_{\nu;m} \tilde \alpha^{\frac{(\nu - 2) m}{3\nu - 2}} )
    &\text{ and }&&
    \lim_{T \rightarrow \infty} \sqrt{\delta^{\rm SGDw}_{T,1}}
    \;=&\;
    O\Big( \kappa_{\nu;m} \tilde \alpha^{\frac{(\nu - 2) m}{3\nu - 2}} + \mfrac{\sigma+\kappa_{\nu;m} }{\sqrt{n}} \Big)
    \;.
\end{align*}
As $m \rightarrow \infty$, this recovers the parametric rate $O(n^{-1/2})$.

\vspace{.5em}

\begin{remark*}[Examples] In our main results (\cref{thm:cd-sgd,thm:online-cd-averaging,thm:SGDw}) and the tail condition for offline SGD (\cref{lem:tail:prob:subexp}), we employed a weaker set of assumptions than those in \cite{jiang2018convergence} (except for the mild $\nu > 2$ moment assumption in \eqref{asst:Markov:noise}). Consequently, our results apply to all three examples studied in \cite{jiang2018convergence}: A bivariate Gaussian model with unknown mean and random-scan Gibbs sampler, a fully visible Boltzmann machine with random-scan Gibbs sampler, and an exponential-family random graph model with a Metropolis-Hastings sampler.  
\end{remark*}

\vspace{-0.5em}
\section{Related Work}

\vspace{-0.5em}
Central to this paper is the prior work of Jiang et al.~\cite{jiang2018convergence}, which 
provided a rigorous theoretical foundation 
to analyze the convergence of full-batch CD, and which we refine. 
The study of optimization with biased gradient descent has attracted a lot of attention in recent years \cite{karimi2019non,sun2018markov, doan2022finite, atchade2017perturbed}. These works, while closely connected to ours,
analyze algorithms with different implementation choices than the CD algorithm: i.i.d.~noise setup \cite{karimi2019non}, or setup where a persistent Markov chain is maintained through the iterations \cite{karimi2019non,sun2018markov, doan2022finite, atchade2017perturbed}. The latter is akin to a variant of the CD algorithm, called the persistent CD \citep{tieleman2008training}. In contrast, our analysis focus on the CD algorithm that restarts a batch of Markov chains from the data distribution at every iteration. Finally, there is a rich body of work on convergence guarantees for offline multi-pass SGD \citep{elisseeff2005stability,hardt2016train,lin2017optimal,pillaud2018statistical,mucke2019beating,zou2022risk}. A notable difference of our analysis is that we are primarily concerned with statistical errors associated with convergence to the true parameter $\psi^*$ in number of samples $n$, and not the commonly studied convergence rate in number of epochs $T$. Consequently, most of our work for the offline setup goes into handling the correlations that accumulate by reusing data across epochs.

\vspace{-0.5em}
\section{Discussion}\label{sec:discussion}

\vspace{-0.5em}
In this work, we provide a non-asymptotic analysis of the Contrastive Divergence algorithms,
showing, in the online setting, their potential to converge at the parametric rate and
to have near-optimal asymptotic variance, and proving a near-parametric rates in the
offline setting, 
significantly extending prior results.
Our results apply to
unnormalized exponential families: despite their flexibility, these models only
cover log-densities with linear relationships on the model parameters. We believe
that extending our results to more general forms of unnormalized models is an important direction for
future work.
\section*{Acknowledgments and Disclosure of Funding}
All authors acknowledge support from the Gatsby Charitable Foundation.

\clearpage
\bibliographystyle{unsrt}
\bibliography{biblio}
\newpage

\appendix
{\bfseries \Large Supplementary Material for ``Near-Optimality of Contrastive Divergence Algorithms''}\\

The supplementary material provides the proofs of the main results of the paper:

Section \ref{sec:appendix:offline:SGD} states full explicit bounds for the offline CD algorithm. 

Section \ref{appendix:tools} collects a list of useful tools for our proofs. These include the properties of $\varphi_\gamma$ introduced before \cref{thm:cd-sgd} in the main text, as well as several contraction and integrability results.

Section \ref{sec:proofs-online-CD} provides the proofs for the online CD algorithm. 

Sections \ref{appendix:L2:approx},
\ref{appendix:proofs:offline} and \ref{appendix:proofs:SGDw:tail} contain the proofs about the offline CD algorithm and the tail control.

\section{Notations}\label{sec:appendix:notations}

Throughout the proofs, we will denote by $ \mathrm{P}^{m}_{\psi} $ the
following operator from $ L^{2}(p_\psi)$ to itself:
\begin{equation} \label{eq:Pm-def}
\begin{aligned}
  \mathrm{P}^{m}_{\psi}f(x) \coloneqq \mint_{  }^{  } k^{m}(x, x') f(x') p_{\psi}(x') \text{d}x'.
\end{aligned}
\end{equation}
Here, $ k^{m}(x, x') $ is the $ m $-iterated version of some Markov transition kernel $ k_{\psi} $, e.g.: 

\begin{equation} \label{eq:km-def}
\begin{aligned}
	k_{\psi}^m(x, x') \coloneqq \mint_{  }^{  } 
	k_{\psi}(x, x_1) \dots k_{\psi}(x_{m-2}, x_{m-1}) \dots {k_{\psi}(x_{m-1}, x')} \mathrm{d}x_1 \dots \mathrm{d}x_{m-1}.
\end{aligned}
\end{equation}

$ \mathrm{Proj}_{\Psi}: \mathbb{R}^p  \longmapsto \Psi $ denotes the projection operator onto the convex
set $ \Psi $, e.g. 
\begin{equation*} 
\begin{aligned}
\mathrm{Proj}_{\Psi}(\psi) \coloneqq \argmin_{ \psi' \in \Psi } \left \| \psi - \psi' \right \|.
\end{aligned}
\end{equation*}

We also frequently use the following function, used in standard convex optimization results \cite{moulines2011non}.
\begin{equation*} 
\begin{aligned}
\varphi_{\gamma}(t) = 
\begin{cases}
 \frac{ t^{\gamma} - 1 }{ \gamma } & \text{ if } \gamma \neq 0 \\
 \log t & \text{ if } \gamma = 0
\end{cases} 
\end{aligned}
\end{equation*}
which is defined on $ \mathbb{R}_{+} \setminus \{0 \} $.
\vspace{1em}

\noindent
Finally,   we recall the notation $K^m_\psi(x) \sim k^m_\psi(x, \argdot)$. We will note, in the proofs regarding online CD,
$ X^{\psi} $ a sample drawn independently of $X_1, \ldots, X_n$ and on a given $\psi \in \Psi$, as 
\begin{align*}
    X^{\psi}  \;{\sim}\; p_{\psi}\;.
\end{align*}
\paragraph{Notations for Offline CD} 
For offline CD, we will note by $ X^{\psi}_1, \ldots, X^{\psi}_n $ 
the i.i.d.~samples, drawn independently of $X_1, \ldots, X_n$ and on a given $\psi \in \Psi$, as 
\begin{align*}
    X^{\psi}_1, \ldots, X^{\psi}_n 
    \;\overset{\rm i.i.d.}{\sim}\;
    p_{\psi}\;.
\end{align*}
We also write, for $m \in \N \cup \{\infty\}$,
\begin{align*}
    K^m_{1;\psi}(x) \,,\, \ldots \,,\, K^m_{n;\psi}(x)
    \;\overset{\rm i.i.d.}{\sim}  k^m_\psi(x, \argdot)
\end{align*}

\section{Additional results for offline SGD}  \label{sec:appendix:offline:SGD}

In this section, we provide the full statements on error bounds for SGD with replacement (SGDw), SGD with reshuffling (SGDo) and tail moment bounds, which complement the results in \cref{sec:offline:SGD}. Proofs are deferred to \cref{appendix:proofs:offline}, which make use of $L_2$ approximation by auxiliary gradient updates derived in \cref{appendix:L2:approx}.

\paragraph{Notations}
Denote the SGDw iterates by $(\psi^{\rm SGDw}_{t,j})_{t \in \N, j \leq N}$ and let $X'_1$ be an i.i.d.~copy of $X_1$.
Throughout the remaining of the appendix, we define given $\epsilon > 0$ and $n \in \N$
\begin{align*}
    \vartheta^{\rm SGDw}_{n,m,T} (\epsilon) 
    \hspace{-.2em}
    &\coloneqq
    \hspace{-.2em}
    \sup_{\substack{t \in [T] \\ j \in [N]}} 
    \hspace{-.2em}
    \P\Bigg(  \mfrac{ \Big\|  \sum_{i=1}^n \Big( \mean\Big[ \phi\Big(K^m_{\psi^{\rm SGDw}_{t-1, j}}(X_i) \Big) \Big| X_i, \psi^{\rm SGDw}_{t-1,j} \Big]  
    - 
    \mean\Big[ \phi\Big(K^m_{\psi^{\rm SGDw}_{t-1,j}}(X'_1)\Big)  \Big| \psi^{\rm SGDw}_{t-1,j} 
    \Big] \Big) \Big\| }{n}
    \hspace{-.2em}
    > \epsilon
    \hspace{-.2em}
    \Bigg). \\
    &= \sup_{t, j} \P( \Delta(\psi^{\rm SGDw}_{t,j}) > \epsilon)
\end{align*}
and $\vartheta^{\rm SGDo}_{n,m,T} (\epsilon)$ analogously.
Using these notations, we can redefine the quantity  $\varepsilon^{\rm SGDw}_{n,m,T;\nu}(\epsilon)$ in the main as
$
    \varepsilon^{\rm SGDw}_{n,m,T;\nu}(\epsilon)
    \coloneqq
    \sqrt{\epsilon^2 
    + 
    \kappa_{\nu;m}^2
    \big(\vartheta^{\rm SGDw}_{n,m,T}(\epsilon)\big)^{\frac{\nu-2}{\nu}}}
$.

\subsection{An explicit finite-sample bound for SGDw}

In the result below, we write $\delta^{\rm SGDw}_{t, j} \coloneqq \mean \, \big\| \psi^{\rm SGDw}_{t,j}  - \psi^* \big\|^2$ and, for a fixed $\epsilon > 0$, the quantity 
\begin{align*}
    \sigma^{\rm SGDw}_{n,T} 
    \;=&\; 
    \varepsilon^{\rm SGDw}_{n,m,T;\nu}(\epsilon) + \mfrac{5 \sigma + 5\kappa_m}{\sqrt{B}}
    \;=\;
    \sqrt{\epsilon^2 
    + 
    \kappa_m^2
    \big(\vartheta^{\rm SGDw}_{n,m,T}(\epsilon)\big)^{\frac{\nu-2}{\nu}}}
    + \mfrac{5 \sigma + 5\kappa_m}{\sqrt{B}}
    \;.
\end{align*}

\begin{theorem}\label{thm:SGDw:full} 
Assume \ref{asst:SGD-A1} (where $\Psi$ may be non-compact), \ref{asst:SGD-A4}, \ref{asst:SGD-A5}, \ref{asst:Markov:noise} and \ref{asst:Markov:noise:X1}. Let $\eta_t = C t^{-\beta}$ for some $\beta \in [0,1]$ and $C > 0$, and assume that $m > \frac{\log (\sigma C_\chi / \mu )}{\log |\alpha|}$ s.t.~$\tilde \mu_m  = \mu - \alpha^m \sigma C_\chi > 0$ as in \cref{thm:cd-sgd}. Then for any $\epsilon > 0$, $\sqrt{\delta^{\rm SGDw}_{T, N}}$ is upper bounded by
\begin{align*}
    \begin{cases}
        E_1^{T,N}
        \sqrt{\delta^{\rm SGDw}_{0, 0}}
        \,+\,
        C \sigma^{\rm SGDw}_{n,T}
        \Big( 
            \mfrac{4 e^{\frac{\tilde \mu_m C  N}{(T+1)^{1/2}}}}{\tilde \mu_m C}
            +
            2 N ( 1 + \tilde \mu_m C)^{N-1} 
            \varphi_{\frac{1}{2}-L^2C^2 N}(T+1)
            \, E_2^{T,N}
        \Big)
        \\
        \hspace{34.4em}
        \text{ for } \beta = \mfrac{1}{2} 
        \;,
        \\[.8em]
        E_1^{T,N}
        \sqrt{\delta^{\rm SGDw}_{0, 0}}
        +
        C \sigma^{\rm SGDw}_{n,T}
        \Big(
        \mfrac{4}{\tilde \mu_m C}
        +
        \mfrac{3 N \big(1+\frac{L^2C^2}{2}\big)^{N-1} \, e^{2 L^2 C^2 N}  \, \log(T+1)}{(T+1)^{(\tilde \mu_m C N)/2}}
        \Big)
        \hspace{4em}
        \text{ for } \beta = 1
        \;,
        \\[.8em]
        E_1^{T,N}
        \sqrt{\delta^{\rm SGDw}_{0, 0}}
        \,+\,
        C \sigma^{\rm SGDw}_{n,T} 
        \Big( 
            \mfrac{2^{2\beta+1}}{\tilde \mu_m C} 
            e^{\frac{\tilde \mu_m C}{2(1-\beta)} \frac{N}{(T+1)^\beta}}
            +
            \mfrac{3^\beta (1+\tilde \mu_m C)^{N-1} (T+2)^{\beta} }{L^2C^2}  \,  E_2^{T,N}
        \Big)
        \hspace{.2em}
        \text{ otherwise}
        \;,
    \end{cases}
\end{align*}
where $E_1^{T,N}$ and $E_2^{T,N}$ are two decreasing functions in $T$ defined by
\begin{align*}
    E_1^{T,N}
    \;\coloneqq&\;
    \exp\Big( 
        1 
        - 
        N \tilde \mu_m C \varphi_{1-\beta}(T+1)
        +
        \mfrac{ N L^2 C^2}{2} \varphi_{1-2\beta}(T+1)
    \Big)
    \;,
    \\
    E_2^{T,N} 
    \;\coloneqq&\; 
    \exp\Big( 
        - \mfrac{N \tilde \mu_m C}{2} \varphi_{1-\beta}(T+1)
        + 2 N L^2 C^2 \varphi_{1-2\beta}(T+1)
     \Big)
     \;.
\end{align*}
\end{theorem}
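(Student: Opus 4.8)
The plan is to run, in full generality over epochs $t\le T$, minibatches $j\le N$, batch size $B$ and step-size exponent $\beta\in[0,1]$, the three-step scheme described for \cref{lem:tail:prob:subexp}: (i) derive a one-step recursion on $\delta^{\rm SGDw}_{t,j}$ that isolates the offline-specific data--iterate correlation $\Delta(\psi^{\rm SGDw}_{t,j-1})$, (ii) chain it within each epoch and unroll it across epochs, (iii) read off the transient factors $E_1^{T,N},E_2^{T,N}$ and the stationary level $\sigma^{\rm SGDw}_{n,T}$ from the resulting sums. The backbone is a minibatched analogue of \cref{lem:online-cd-recursion}; relative to the online analysis behind \cref{thm:cd-sgd}, the two new features are that each gradient is averaged over $B$ points (rescaling its variance by $1/B$) and that data are reused across epochs (creating the $\Delta$-term).

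\paragraph{Step 1: one-step recursion via a fresh-sample surrogate.} First I would fix $(t,j)$, condition on $\psi^{\rm SGDw}_{t,j-1}$ and all the data, and expand $\delta^{\rm SGDw}_{t,j}=\mean\|\mathrm{Proj}_\Psi(\psi^{\rm SGDw}_{t,j-1}-\eta_t h_{t,j})-\psi^\star\|^2$ using nonexpansiveness of $\mathrm{Proj}_\Psi$ and $\psi^\star\in\mathrm{int}(\Psi)$. The cross term contributes: a strongly convex contraction $-2\eta_t\mu\|\psi^{\rm SGDw}_{t,j-1}-\psi^\star\|^2$ from the population gradient $\nabla\mathcal L(\psi^{\rm SGDw}_{t,j-1})$ (strict convexity via minimality of the family, \cref{asst:SGD-A1}); the CD bias, which after subtracting and adding the fresh-sample surrogate $\mean[\phi(K^m_\psi(X'_1))\mid\psi]$ splits into a genuine $m$-step MCMC bias bounded by $\alpha^m\sigma C_\chi\|\psi^{\rm SGDw}_{t,j-1}-\psi^\star\|$ (via \cref{asst:SGD-A4} relating $\chi^2$-closeness to parameter distance and \cref{asst:SGD-A5} bounding the $m$-step mixing of $\phi$, exactly as the $\alpha^m\sigma C_\chi$ term of \cref{lem:online-cd-recursion}) and the genuinely offline remainder $\Delta(\psi^{\rm SGDw}_{t,j-1})$, which is nonzero precisely because $\psi^{\rm SGDw}_{t,j-1}$ and the training points depend on each other; and a minibatch-variance term of order $(\sigma+\kappa_{\nu;m})^2/B$, from \cref{asst:Markov:noise}, \cref{asst:Markov:noise:X1} and the $L_2$-approximation bounds of Appendix~\ref{appendix:L2:approx}, together with the smoothness bound $L$ on $\nabla^2\mathcal L$ for the remaining $\eta_t^2$ terms. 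Handling $\Delta$ via Cauchy--Schwarz and Young's inequality — this is where $\tilde\mu_m>4CL^2$ is used, so every per-step contraction factor stays below $1$ — and applying the tail decomposition \cref{eq:varepsilon:new} gives, schematically,
\[
  \sqrt{\delta^{\rm SGDw}_{t,j}}\ \le\ (1-\eta_t\tilde\mu_m+2\eta_t^2 L^2)^{1/2}\,\sqrt{\delta^{\rm SGDw}_{t,j-1}}\ +\ \eta_t\,\sigma^{\rm SGDw}_{n,T}\,,
\]
with $\sigma^{\rm SGDw}_{n,T}=\varepsilon^{\rm SGDw}_{n,m,T;\nu}(\epsilon)+(5\sigma+5\kappa_m)/\sqrt B$ and $\epsilon>0$ the free threshold from \cref{eq:varepsilon:new}; the precise constants are tuned in the appendix.

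\paragraph{Steps 2--3: chaining within epochs and unrolling.} Since $\eta_t$ is constant over the $N$ minibatches of epoch $t$, setting $a_t\coloneqq(1-\eta_t\tilde\mu_m+2\eta_t^2L^2)^{1/2}$ and iterating over $j=1,\dots,N$ gives $\sqrt{\delta^{\rm SGDw}_{t,N}}\le a_t^{N}\sqrt{\delta^{\rm SGDw}_{t-1,N}}+\eta_t\frac{1-a_t^{N}}{1-a_t}\sigma^{\rm SGDw}_{n,T}$; unrolling over $t=1,\dots,T$ leaves a transient part $\big(\prod_{t=1}^{T}a_t^{N}\big)\sqrt{\delta^{\rm SGDw}_{0,0}}$ and a stationary part $\sigma^{\rm SGDw}_{n,T}\sum_{s=1}^{T}\big(\prod_{t>s}a_t^{N}\big)\eta_s\frac{1-a_s^{N}}{1-a_s}$. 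For the transient part I would take logs, use $\log(1-x)\le-x$ together with the Riemann comparisons $\sum_{t\le T}t^{-\beta}=\varphi_{1-\beta}(T+1)+O(1)$, $\sum_{t\le T}t^{-2\beta}=\varphi_{1-2\beta}(T+1)+O(1)$ to recognize $\prod_t a_t^{N}$ as $E_1^{T,N}$ up to a constant. For the stationary part the key is the telescoping identity $\sum_s(\prod_{t>s}a_t)(1-a_s)=1-\prod_t a_t\le1$, which removes the naive dependence on $N$ and $T$ and yields the $1/(\tilde\mu_m C)$-scale factor, while the tail of the sum created by decreasing step sizes is controlled by keeping only half the contraction rate in the partial products (the standard device for $\sum_s\eta_s\prod_{t>s}(1-\eta_t\tilde\mu_m)$), which produces $E_2^{T,N}/(L^2C^2)$; all $\varphi_\gamma$-sums are then evaluated with the identities of Appendix~\ref{appendix:tools}. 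The split $\beta=\tfrac12$ / $\beta=1$ / otherwise is forced exactly as in \cref{thm:cd-sgd}, since $\varphi_{1-2\beta}$ is logarithmic at $\beta=\tfrac12$ and at $\beta=1$ the decay becomes $(T+1)^{-\tilde\mu_m CN/2}$; letting $T\to\infty$ kills $E_1^{T,N},E_2^{T,N}$ and leaves the $\sigma^{\rm SGDw}_{n,T}/(\tilde\mu_m C)$-type term.

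\paragraph{Main obstacle.} The delicate part is Step~1: arranging the decomposition of the cross term so that the \emph{only} residual not already present in the online analysis is precisely $\Delta(\psi^{\rm SGDw}_{t,j-1})$ — hence, after taking expectations and using \cref{eq:varepsilon:new}, precisely $\varepsilon^{\rm SGDw}_{n,m,T;\nu}(\epsilon)$ — rather than a term that re-introduces an intractable dependence between the iterate and the data points appearing in the tractable part of $h_{t,j}$; this means inserting the fresh-sample surrogate $\mean[\phi(K^m_\psi(X'_1))\mid\psi]$ at exactly the right place inside the conditional expectation and handling the with-replacement sampling of $B_{t,j}$ so that the variance genuinely scales as $1/B$. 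The second, bookkeeping-heavy obstacle is converting the nested sums $\sum_t\eta_t\prod_{t'>t}a_{t'}^{N}$ (and their $\eta_t^2$ counterparts) into the compact $E_1^{T,N},E_2^{T,N},\varphi_\gamma$ form with clean absolute constants, uniformly in $N$, $B$ and across the three step-size regimes — the hypothesis $\tilde\mu_m>4CL^2$ being exactly what keeps every $a_t$ bounded away from $1$ throughout.
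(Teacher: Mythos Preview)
Your three-step plan matches the paper's proof almost exactly: a one-step $\sqrt{\delta}$-recursion isolating the offline correlation term, chaining over $j$ and unrolling over $t$, and then evaluating the resulting sums with the $\varphi_\gamma$ machinery (the paper packages the last step as a separate lemma, \cref{lem:error:accumulate}, which does precisely the ``keep half the contraction rate'' split you describe). The main organizational difference is in Step~1: rather than inserting the fresh-sample surrogate directly into the expanded square as you propose, the paper introduces a chain of four auxiliary one-step updates $\theta^{\rm SGDw}_{m,B}\to\theta^{\rm GD}_m\to\theta^{\rm GD}_\infty\to\theta^{\rm pop}\to\psi^\star$ (\cref{appendix:L2:approx}) and bounds each link by a dedicated lemma; this makes the source of each term ($1/B$ minibatch variance, $\varepsilon^{\rm SGDw}$ correlation, $\alpha^m\sigma C_\chi$ MCMC bias, $1/n$ sample error, $\mu$-$L$ contraction) completely transparent and reusable, and in particular yields the clean linear-in-$\sqrt{\delta}$ recursion of \cref{lem:SGDw:recursion} by expanding the square and then recognizing the result as a perfect square. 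Your direct route would work but is messier to track.

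One correction: you write that $\tilde\mu_m>4CL^2$ is ``used'' in Step~1 to keep the per-step contraction factor below $1$. That condition is \emph{not} a hypothesis of \cref{thm:SGDw:full} --- only $\tilde\mu_m>0$ is assumed --- and it is not needed anywhere in the proof. The paper shows (end of \cref{lem:SGDw:recursion}) that the factor $1-\eta_t\tilde\mu_m+\tfrac{L^2}{2}\eta_t^2$ is always \emph{positive} simply because $L\ge\mu$, and the unrolling lemma only needs positivity, not that each factor is $<1$; the cutoff $t_0$ in \cref{lem:error:accumulate} handles the early steps where the factor may exceed $1$. The stronger condition $\tilde\mu_m>4CL^2$ is mentioned only after the theorem, as what makes $E_1^{T,N},E_2^{T,N}$ actually decay in the constant step-size regime $\beta=0$.
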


We emphasize that the full result above holds for any $\beta \in [0,1]$, which in particular includes the constant step size $\beta=0$ regime considered by \cite{jiang2018convergence}. When $\beta=0$, for $E^{T,N}_1$ and $E^{T,N}_2$ to decay to zero as $T \rightarrow \infty$, we additionally need the condition 
\begin{align*}
    \tilde \mu_m \;=\; \mu- \alpha^m \sigma C_\chi \;>\; 4 C L^2\;.
\end{align*}
This is almost identical to the condition used in \cite[Equation 2.5, Theorem 2.1]{jiang2018convergence}, except that $4L^2$ gets replaced by $\frac{1}{2}(L + \alpha^m \sigma C_\chi)^2$. Notably this says that an additional step size condition is needed for our results to hold in the constant step size regime, but not necessary for a decreasing step size.

\subsection{Results for SGDo}
\label{appendix:SGDo:results}

SGD with reshuffling (SGDo, also called SGD without replacement) is an optimization scheme that is also widely used in practice compared to SGDo and online SGD. In the context of CD, it corresponds to \cref{alg:offline_cd} with batches chosen as 
\begin{align*}
    (B_{t,1}, \ldots, B_{t,N}) 
    \;=&\; 
    \pi(\{1, \ldots, n\})\;,
\end{align*}
where $\pi$ is a uniform draw of the permutation group on $n$ elements. We denote the iterates of SGDo $(\psi^{\rm SGDo}_{t,j})_{t \in \N, j \in [N]}$. Analogously to $\vartheta^{\rm SGDw}_{n,m,T}$, we define, for $X'_1$ an i.i.d.~copy of $X_1$, $\epsilon > 0$ and $n \in \N$, the tail probability term
\begin{align*}
    \vartheta^{\rm SGDo}_{n,m,T} (\epsilon) 
    \hspace{-.2em}
    \coloneqq
    \hspace{-.2em}
    \sup_{\substack{t \in [T] \\ j \in [N]}} 
    \hspace{-.2em}
    \P\Bigg(  \mfrac{ \Big\|  \sum_{i=1}^n \Big( \mean\Big[ \phi\Big(K^m_{\psi^{\rm SGDo}_{t-1, j}}(X_i) \Big) \Big| X_i, \psi^{\rm SGDo}_{t-1,j} \Big]  
    - 
    \mean\Big[ \phi\Big(K^m_{\psi^{\rm SGDo}_{t-1,j}}(X'_1)\Big)  \Big| \psi^{\rm SGDo}_{t-1,j} 
    \Big] \Big) \Big\| }{n}
    \hspace{-.2em}
    > \epsilon
    \hspace{-.2em}
    \Bigg)\;.
\end{align*}
Also denote $\varepsilon^{\rm SGDo}_{n,m,T;\nu}(\epsilon) = \sqrt{\epsilon^2 
+ 
\kappa_m^2
\big(\vartheta^{\rm SGDo}_{n,m,T}(\epsilon)\big)^{\frac{\nu-2}{\nu}}}$ 
and 
$
\sigma^{\rm SGDw}_{n,T} 
=
\varepsilon^{\rm SGDo}_{n,m,T;\nu}(\epsilon) + \mfrac{5 \sigma + 5\kappa_m}{\sqrt{B}}
$.
The following result says that $\psi^{\rm SGDo}_{t,j}$ enjoys exactly the same convergence guarantee as $\psi^{\rm SGDo}_{t,j}$ in \cref{thm:SGDw}. The statement is identical to that of \cref{thm:SGDw:full} and is stated in full for completeness; see \cref{appendix:proof:SGDo} for the proof, which is a slight adaptation of the proof for \cref{thm:SGDw:full}. As before we write $\delta^{\rm SGDo}_{t, j} \coloneqq \mean \, \big\| \psi^{\rm SGDo}_{t,j}  - \psi^* \big\|^2$.

\begin{theorem}[Convergence of CD-SGDo]  \label{thm:SGDo} Assume \ref{asst:SGD-A1} (where $\Psi$ may be non-compact), \ref{asst:SGD-A4}, \ref{asst:SGD-A5}, \ref{asst:Markov:noise} and \ref{asst:Markov:noise:X1}. Let $\eta_t = C t^{-\beta}$ for some $\beta \in [0,1]$ and $C > 0$, and assume that $m > \frac{\log (\sigma C_\chi / \mu )}{\log |\alpha|}$ s.t.~$\tilde \mu_m  = \mu - \alpha^m \sigma C_\chi > 0$ as in \cref{thm:cd-sgd}. Then for any $\epsilon > 0$, $\sqrt{\delta^{\rm SGDo}_{T, N}}$ is upper bounded by
    \begin{align*}
        \begin{cases}
            E_1^{T,N}
            \sqrt{\delta^{\rm SGDo}_{0, 0}}
            \,+\,
            C \sigma^{\rm SGDo}_{n,T}
            \Big( 
                \mfrac{4e^{\frac{\tilde \mu_m C  N}{(T+1)^{1/2}}}}{\tilde \mu_m C} 
                +
                2 N ( 1 + \tilde \mu_m C)^{N-1} 
                \varphi_{\frac{1}{2}-L^2C^2N}(T+1)
                \, E_2^{T,N}
            \Big)
            \\
            \hspace{34.3em}
            \text{ for } \beta = \mfrac{1}{2} 
            \;,
            \\[.8em]
            E_1^{T,N}
            \sqrt{\delta^{\rm SGDo}_{0, 0}}
            +
            C \sigma^{\rm SGDo}_{n,T}
            \Big(
            \mfrac{4}{\tilde \mu_m C}
            +
            \mfrac{3 N \big(1+\frac{L^2C^2}{2}\big)^{N-1} \, e^{2 L^2 C^2 N}  \, \log(T+1)}{(T+1)^{(\tilde \mu_m C N)/2}}
            \Big)
            \hspace{4.1em}
            \text{ for } \beta = 1
            \;,
            \\[.8em]
            E_1^{T,N}
            \sqrt{\delta^{\rm SGDo}_{0, 0}}
            \,+\,
            C \sigma^{\rm SGDo}_{n,T} 
            \Big( 
                \mfrac{2^{2\beta+1}}{\tilde \mu_m C} 
                e^{\frac{\tilde \mu_m C}{2(1-\beta)} \frac{N}{(T+1)^\beta}}
                +
                \mfrac{3^\beta (1+\tilde \mu_m C)^{N-1} (T+2)^{\beta} }{L^2C^2}  \,  E_2^{T,N}
            \Big)
            \hspace{.4em}
            \text{ otherwise}
            \;,
        \end{cases}
    \end{align*}
    where $E_1^{T,N}$ and $E_2^{T,N}$ are two decreasing functions in $T$ defined by
    \begin{align*}
        E_1^{T,N}
        \;\coloneqq&\;
        \exp\Big( 
            1 
            - 
            N \tilde \mu_m C \varphi_{1-\beta}(T+1)
            +
            \mfrac{ N L^2 C^2}{2} \varphi_{1-2\beta}(T+1)
        \Big)
        \;,
        \\
        E_2^{T,N} 
        \;\coloneqq&\; 
        \exp\Big( 
            - \mfrac{N \tilde \mu_m C}{2} \varphi_{1-\beta}(T+1)
            + 2 N L^2 C^2 \varphi_{1-2\beta}(T+1)
         \Big)
         \;.
    \end{align*}
\end{theorem}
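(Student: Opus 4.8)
The plan is to mirror, step for step, the three-part proof of \cref{thm:SGDw:full} --- control of the data--iterate correlation, derivation of an unrollable recursion, and explicit unrolling --- adapting each ingredient to the reshuffling batching scheme and checking that the sampling-without-replacement structure never worsens the constants. In particular the global architecture is unchanged: one reduces to an idealized update driven by ``fresh'' i.i.d.\ samples via the $L_2$-approximation results of \cref{appendix:L2:approx}, and absorbs the mismatch into an offline-specific correlation term.

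First I would introduce, exactly as in the with-replacement case, the auxiliary update using fresh samples $X^\psi_i$; subtracting it from the CD-SGDo update produces the correlation term $\Delta(\psi^{\rm SGDo}_{t,j})$. Its squared second moment is bounded through the tail decomposition of \cref{eq:varepsilon:new} together with \cref{asst:Markov:noise,asst:Markov:noise:X1}, by $\varepsilon^{\rm SGDo}_{n,m,T;\nu}(\epsilon)^2$. Because this decomposition only uses the marginal law of $\psi^{\rm SGDo}_{t,j}$ and the moment/tail control on $\phi(K^m_\psi(\argdot))$, it is insensitive to how the batches were formed, so it transfers verbatim from the SGDw proof.

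Next I would derive the per-minibatch recursion on $\delta^{\rm SGDo}_{t,j} = \mean\|\psi^{\rm SGDo}_{t,j} - \psi^*\|^2$. The only genuinely new point relative to SGDw is that within epoch $t$ the batch $B_{t,j}$ is a block of a uniform random partition of $[n]$, hence depends on $B_{t,1},\dots,B_{t,j-1}$ and therefore on $\psi^{\rm SGDo}_{t,j-1}$. Conditioning on $\psi^{\rm SGDo}_{t,j-1}$ and on the indices already consumed in the epoch, $B_{t,j}$ is a uniform without-replacement draw from the remaining indices; I would check that (i) its conditional expectation reproduces $\nabla_\psi \mathcal L(\psi^{\rm SGDo}_{t,j-1})$ up to the MCMC bias controlled by $\alpha^m \sigma C_\chi$ (via \cref{asst:SGD-A4,asst:SGD-A5}) and up to $\Delta(\psi^{\rm SGDo}_{t,j-1})$, and (ii) its conditional second moment is no larger than in the i.i.d.\ case, since sampling without replacement does not increase variance. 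Combining with strong convexity and smoothness of $\mathcal L$ through the constants $\mu, L$ of \cref{eq:mu_L}, exactly as in \cref{thm:SGDw}, yields the same recursion, with contraction factor $1 - 2\eta_t \tilde\mu_m + 2\eta_t^2 L^2$, an $\eta_t^2$ variance term involving $\sigma$ and $\kappa_{\nu;m}$, and an $\eta_t\, \Delta \sqrt{\delta}$ cross term. I would then unroll: first over $j = 1,\dots,N$ within a fixed epoch (producing the $(1+\tilde\mu_m C)^{N-1}$-type prefactors and aggregating the per-epoch noise), and then over $t = 1,\dots,T$ with $\eta_t = Ct^{-\beta}$, using the $\varphi_\gamma$ calculus of \cref{appendix:tools} to assemble $E_1^{T,N}$, $E_2^{T,N}$, and the stationary $\sigma^{\rm SGDo}_{n,T}$ term; this bookkeeping is identical to that of \cref{thm:SGDw:full}, which is exactly why the final bound is literally the same with ``$\mathrm{SGDw}$'' replaced by ``$\mathrm{SGDo}$''.

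I expect the main obstacle to be the recursion step: justifying that the within-epoch dependence between $\psi^{\rm SGDo}_{t,j-1}$ and $B_{t,j}$ induced by reshuffling leaves both the conditional bias and the conditional second moment of the CD gradient controlled by the same quantities as in the with-replacement analysis. This cannot be handled by a plain independence argument and instead needs an exchangeability / without-replacement (Hoeffding-type) estimate, but the inequality points in the favorable direction, so no new terms enter the recursion and the rest of the proof is a routine adaptation.
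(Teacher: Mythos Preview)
Your plan is sound and would yield the stated bound, but the paper takes a shorter path that dissolves the within-epoch dependence you flag as the main obstacle. Rather than conditioning on the indices already consumed and invoking a without-replacement variance comparison, the paper conditions on the \emph{entire} permutation $S^o_t$ for epoch $t$ at once. Conditionally on $S^o_t$, the batch $S^o_{t,j}$ is a fixed subset of $[n]$, and the SGDo step from $\psi^{\rm SGDo}_{t,j-1}$ is literally the full-batch gradient update on the size-$B$ mini-dataset $\cD^o_{t,j} \coloneqq (X_i : i \in S^o_{t,j})$ --- equivalently, $\theta^{\rm GD}_m = \theta^{\rm SGDw}_{m,B}$ with the whole mini-dataset used as batch. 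Since these $B$ data points are still i.i.d.\ from $p_{\psi^*}$ (the permutation is independent of the data) and the proof of \cref{lem:SGDw:recursion} already allows $\theta^{\rm init}$ to be arbitrarily correlated with the data, the one-step recursion transfers verbatim: no exchangeability or Hoeffding-type estimate is needed. The unrolling over $j$ and $t$ and the $\varphi_\gamma$ bookkeeping via \cref{lem:error:accumulate} are then identical to the SGDw case, exactly as you anticipated.

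Your route would also work and has the merit of making explicit why reshuffling cannot worsen the per-step variance, but it adds a technical step the paper simply sidesteps by a cleverer choice of conditioning.
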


\vspace{.5em}

\begin{remark*} We also remark that existing works \cite{nagaraj2019sgd,rajput2020closing} show that the standard SGDo typically gives a faster convergence rate in $T$ than SGDw. An analogous result for the CD setup would involve additional technical hurdles of jointly controlling the correlations across minibatches and from reusing data samples, and we defer this to future work.
\end{remark*}

\subsection{Explicit tail control}
\label{appendix:tail:results}

We now provide the full explicit tail control bounds. All results in this section hold directly for $\varepsilon^{\rm SGDo}_{\nu;n,m,T}(\epsilon)$ and $\delta^{\rm SGDo}_{T,N}$, and we omit them here. In the result below, we denote $r_\Psi$ as the radius of the smallest sphere in $\R^p$ that contains $\Psi$, which is finite under \ref{asst:SGD-A1}.

\begin{lemma} 
    \label{lem:tail:prob:subexp:full} 
    Assume \ref{asst:Jiang} and \ref{asst:subexp}. Let $n \in \N$ be sufficiently large s.t.~$\frac{\log n}{n} < \frac{\sigma^2_m \zeta^2_m}{p + \nu-2}$. Then 
\begin{align*}
    \inf_{\epsilon > 0} & \varepsilon^{\rm SGDw}_{\nu;n,m,T}(\epsilon)
    \;\leq\;
    \\
    &\, 
    \bigg( 
        \mfrac{3\sigma_m \sqrt{p ( (\nu-2)p +2 \nu)} }{\sqrt{\nu - 2}}
        +  
        \kappa_{\nu;m}
        2^{\frac{\nu-2}{2 \nu}}
        (r_\Psi)^{\frac{(\nu -2)p}{2 \nu}} 
        \Big( 1 + \mfrac{2 C_m (\nu-2)^{1/2}}{\sigma_m p^{1/2} ((\nu-2)p + 2\nu)^{1/2}} \Big)^{\frac{(\nu -2)p}{2 \nu}}
        \bigg) 
    \, \mfrac{\sqrt{\log n}}{\sqrt{n}}
    \;.
\end{align*}
In particular, if we additionally assume the conditions of \cref{thm:SGDw:full}, we have
\begin{align*}
    \lim_{T \rightarrow \infty} \sqrt{\delta^{\rm SGDw}_{T,N}} 
    \;\leq\; 
    C'(p,\nu,m,\Psi,\beta) \Big( \mfrac{\sqrt{\log n}}{\sqrt{n}} + \mfrac{1}{\sqrt{B}} \Big)
\end{align*}
where
\begin{align*}
    &C'(p,\nu,m,\Psi,\beta)
    \;\coloneqq\; 
    \mfrac{8(1+ 5 \sigma + 5\kappa_m)}{\tilde \mu_m}
    \\
    &\;\times\;
    \bigg( 
        \mfrac{3\sigma_m \sqrt{p ( (\nu-2)p +2 \nu)} }{\sqrt{\nu - 2}}
        +  
        \kappa_{\nu;m}
        2^{\frac{\nu-2}{2 \nu}}
        (r_\Psi)^{\frac{(\nu -2)p}{2 \nu}} 
        \Big( 1 + \mfrac{2 C_m (\nu-2)^{1/2}}{\sigma_m p^{1/2} ((\nu-2)p + 2\nu)^{1/2}} \Big)^{\frac{(\nu -2)p}{2 \nu}}
        \bigg) 
    \;.
\end{align*}

\end{lemma}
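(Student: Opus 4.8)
Recall that by definition $\varepsilon^{\rm SGDw}_{\nu;n,m,T}(\epsilon) = \sqrt{\epsilon^2 + \kappa_{\nu;m}^2\,(\vartheta^{\rm SGDw}_{n,m,T}(\epsilon))^{(\nu-2)/\nu}}$ with $\vartheta^{\rm SGDw}_{n,m,T}(\epsilon) = \sup_{t \le T,\, j \le N} \P(\Delta(\psi^{\rm SGDw}_{t-1,j}) > \epsilon)$, so the lemma amounts to (i) a uniform-in-$(t,j,T)$ upper bound on $\vartheta^{\rm SGDw}_{n,m,T}(\epsilon)$, (ii) a one-dimensional optimization over $\epsilon$, and (iii) substitution of the result into \cref{thm:SGDw:full} and sending $T \to \infty$. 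I would carry these out in order.

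\textbf{Step 1: decoupling the tail from the iterates.} Introduce the centered map $g_\psi(x) \coloneqq \int \phi(y)\,k^m_\psi(x,\mathrm dy) - \mean_{X' \sim p_{\psi^\star}}\!\big[\int \phi(y)\,k^m_\psi(X',\mathrm dy)\big]$, for which $\mean[g_\psi(X_1)] = 0$ for every $\psi \in \Psi$ and, as a function of a \emph{fixed} $\psi$, $\Delta(\psi) = \big\|\tfrac1n \sum_{i \le n} g_\psi(X_i)\big\|$. Because every offline-CD iterate lies in $\Psi$ almost surely (the projection step of \cref{alg:offline_cd}), one has the pathwise domination $\Delta(\psi^{\rm SGDw}_{t-1,j}) \le \sup_{\psi \in \Psi} \big\|\tfrac1n\sum_{i\le n} g_\psi(X_i)\big\|$, hence
\begin{align*}
\vartheta^{\rm SGDw}_{n,m,T}(\epsilon) \;\le\; \P\Big( \sup_{\psi \in \Psi} \big\| \tfrac1n\textstyle\sum_{i\le n} g_\psi(X_i) \big\| > \epsilon \Big),
\end{align*}
a quantity free of $t, j, T$; this is what makes the $T \to \infty$ limit meaningful.

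\textbf{Step 2: uniform tail bound via Bernstein and covering.} This is the argument of \citep[Lemma 3.1]{jiang2018convergence}. Fix a $\delta$-net $\psi_1, \dots, \psi_K$ of $\Psi$ with $K \le (1 + 2 r_\Psi/\delta)^p$. For a net point, conditional Jensen turns the moment generating function control of $\phi(K^m_{\psi^\star}(X_1))$ from \ref{asst:subexp} into the same subexponential control for $g_{\psi^\star}(X_1)$, while \ref{asst:Jiang} gives $\|g_\psi(x) - g_{\psi^\star}(x)\| \le 2 C_m \|\psi - \psi^\star\|$ uniformly in $x$; thus each $g_{\psi_k}(X_1)$ is subexponential with parameters controlled by $\sigma_m, \zeta_m, C_m, r_\Psi$, and a vector Bernstein inequality bounds $\P(\|\tfrac1n\sum_i g_{\psi_k}(X_i)\| > \tfrac\epsilon2)$ by $\exp(-c\, n \epsilon^2 / \sigma_m^2)$ in the admissible range $\epsilon \lesssim \sigma_m^2 \zeta_m$. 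Using \ref{asst:Jiang} once more to transfer from the net to all of $\Psi$ with slack $\le 2 C_m \delta$, choosing $\delta \asymp \epsilon / C_m$, and a union bound over the net yield $\vartheta^{\rm SGDw}_{n,m,T}(\epsilon) \le (1 + c'\, C_m r_\Psi / \epsilon)^p \exp(-c\, n\epsilon^2 / \sigma_m^2)$ for $\epsilon$ in that range. The hypothesis $\tfrac{\log n}{n} < \tfrac{\sigma_m^2 \zeta_m^2}{p + \nu - 2}$ is exactly what guarantees that the near-optimal $\epsilon$ chosen next stays admissible.

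\textbf{Step 3: optimizing $\epsilon$ and concluding.} Raising the last display to the power $\tfrac{\nu-2}{2\nu}$, multiplying by $\kappa_{\nu;m}$, and using $\sqrt{a+b} \le \sqrt a + \sqrt b$ gives $\varepsilon^{\rm SGDw}_{\nu;n,m,T}(\epsilon) \le \epsilon + \kappa_{\nu;m}\, (1 + c' C_m r_\Psi/\epsilon)^{p(\nu-2)/(2\nu)}\, \exp(-c(\nu-2) n\epsilon^2/(2\nu\sigma_m^2))$. Setting $\epsilon^2 \asymp \tfrac{\sigma_m^2\, p\,((\nu-2)p + 2\nu)}{\nu-2}\cdot\tfrac{\log n}{n}$ balances the two contributions; tracking the constants (the exponent $\tfrac{(\nu-2)p}{2\nu}$ on $r_\Psi$ comes from $(r_\Psi/\delta)^p$ raised to the power $\tfrac{\nu-2}{2\nu}$, and the factor $\tfrac{2 C_m (\nu-2)^{1/2}}{\sigma_m p^{1/2}((\nu-2)p+2\nu)^{1/2}}$ is precisely $2 C_m$ divided by $\epsilon \sqrt{n/\log n}$ at this choice) produces the stated bound on $\inf_{\epsilon>0}\varepsilon^{\rm SGDw}_{\nu;n,m,T}(\epsilon)$. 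Since that bound is independent of $T$, I then plug it into \cref{thm:SGDw:full}: as $T \to \infty$ both $E_1^{T,N}$ and $E_2^{T,N}$ vanish (they decay exponentially in $T$ under the stated step-size conditions), leaving $\lim_T \sqrt{\delta^{\rm SGDw}_{T,N}} \le \tfrac{8}{\tilde\mu_m}\big(\inf_\epsilon \varepsilon^{\rm SGDw}_{\nu;n,m,T}(\epsilon) + \tfrac{5\sigma + 5\kappa_m}{\sqrt B}\big)$, the constant $8$ bounding $\tfrac{2^{2\beta+1}}{\tilde\mu_m C}\cdot C$ over $\beta \in [0,1]$. Substituting the first part and using $\tfrac{5\sigma+5\kappa_m}{\sqrt B} \le (1 + 5\sigma + 5\kappa_m)\tfrac{1}{\sqrt B}$ yields $C'(p,\nu,m,\Psi,\beta)\big(\tfrac{\sqrt{\log n}}{\sqrt n} + \tfrac1{\sqrt B}\big)$ with the displayed $C'$.

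\textbf{Main obstacle.} The conceptual content — pathwise domination by a $\psi$-uniform empirical process followed by a Bernstein-plus-covering bound — is routine. The work is the explicit constant bookkeeping of Steps 2--3: propagating the $C_m$-Lipschitz perturbation through the subexponential parameters $(\sigma_m,\zeta_m)$, jointly tuning the net granularity $\delta$ and the scalar $\epsilon$ so as to land the rate $\sqrt{\log n/n}$ with exactly the stated leading constant, and verifying that $\tfrac{\log n}{n} < \tfrac{\sigma_m^2\zeta_m^2}{p+\nu-2}$ keeps this $\epsilon$ within the regime where the Bernstein bound is valid.
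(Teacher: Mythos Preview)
Your proposal is correct and follows essentially the same route as the paper: decouple from the iterates by taking $\sup_{\psi\in\Psi}$, invoke the covering-plus-subexponential tail bound of \cite[Lemma 3.1]{jiang2018convergence} (using conditional Jensen on \ref{asst:subexp} and the Lipschitz property \ref{asst:Jiang}), choose $\epsilon^2 \asymp \frac{\sigma_m^2 p((\nu-2)p+2\nu)}{\nu-2}\cdot\frac{\log n}{n}$, and then substitute into \cref{thm:SGDw:full} with $T\to\infty$. The paper parameterizes the optimization by the net radius $\delta$ (setting $\epsilon = 3C_m\delta$) rather than $\epsilon$ directly, but this is only a change of variable and the constant bookkeeping you sketch matches theirs.
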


\begin{lemma} 
    \label{lem:tail:prob:subopt:full}
     Assume the conditions of \cref{thm:SGDw} and additionally that \ref{asst:Jiang} holds. Then
\begin{align*}
    \inf_{\epsilon > 0} \varepsilon^{\rm SGDw}_{\nu;n,m,T}(\epsilon)
    \;\leq&\;
    \big(
    3 C_m 
    + 
   \kappa_{\nu;m}^{\nu / 2}
    (r_\Psi)^{\frac{(\nu -2) p}{2 \nu}}
    C_m^{- \frac{\nu-2}{2}}
    3^{\frac{(\nu-2) p}{2 \nu}}
    \big) 
    \,
      n^{  - \frac{(\nu-2)\nu}{2(\nu^2+(\nu-2)p)} }
    \;,
    \\
    \lim_{T \rightarrow \infty} \big(\delta^{\rm SGDw}_{T,N}\big)^{1/2} 
    \;\leq&\; 
    \tilde C'(p,\nu,m,\Psi,\beta) \big( n^{  - \frac{(\nu-2)\nu}{2(\nu^2+(\nu-2)p)} } + B^{-1/2} \big)
    \;,
\end{align*}
where 
\begin{align*}
    \tilde C'(p,\nu,m,\Psi)
    \;\coloneqq\; 
    \mfrac{8(1+ 5 \sigma + 5\kappa_m)}{\tilde \mu_m}
    \big(
    3 C_m 
    + 
   \kappa_{\nu;m}^{\nu / 2}
    (r_\Psi)^{\frac{(\nu -2) p}{2 \nu}}
    C_m^{- \frac{\nu-2}{2}}
    3^{\frac{(\nu-2) p}{2 \nu}}
    \big) 
    \;.
\end{align*}
\end{lemma}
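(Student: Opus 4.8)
The plan is to plug into the general unrolling bound of \cref{thm:SGDw:full} an explicit, $T$-uniform bound on $\inf_{\epsilon>0}\varepsilon^{\rm SGDw}_{\nu;n,m,T}(\epsilon)$, obtained by replacing the subexponential concentration step of \cite{jiang2018convergence} with a bare Markov inequality at order $\nu$. Recalling $\varepsilon^{\rm SGDw}_{\nu;n,m,T}(\epsilon)^2=\epsilon^2+\kappa_{\nu;m}^2\big(\vartheta^{\rm SGDw}_{n,m,T}(\epsilon)\big)^{(\nu-2)/\nu}$ with $\vartheta^{\rm SGDw}_{n,m,T}(\epsilon)=\sup_{t,j}\P\big(\Delta(\psi^{\rm SGDw}_{t,j})>\epsilon\big)$, everything reduces to bounding this supremum of tail probabilities, optimizing over $\epsilon$, and then sending $T\to\infty$ in \cref{thm:SGDw:full}.

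The first step is to decouple $\Delta$ from the data-correlated iterate. Setting $f_\psi(x):=\mean[\phi(K^m_\psi(x))\,|\,x]-\mean[\phi(K^m_\psi(X'_1))]$, for each fixed $\psi\in\Psi$ the vectors $f_\psi(X_1),\dots,f_\psi(X_n)$ are i.i.d.\ and centered, and (since the projection keeps the iterates in $\Psi$) $\Delta(\psi^{\rm SGDw}_{t,j})\le\sup_{\psi\in\Psi}\big\|\tfrac1n\sum_{i\le n}f_\psi(X_i)\big\|$, a bound free of $t,j,T$. I would then run the standard covering argument: take a $\delta$-net $\{\psi_1,\dots,\psi_M\}$ of $\Psi$ with $M\le(3r_\Psi/\delta)^p$; applying \cref{asst:Jiang} to each of the two terms of $f_\psi$ gives the uniform Lipschitz estimate $\sup_x\|f_\psi(x)-f_{\psi'}(x)\|\le 2C_m\|\psi-\psi'\|$, hence $\sup_{\psi}\|\tfrac1n\sum_i f_\psi(X_i)\|\le\max_{k\le M}\|\tfrac1n\sum_i f_{\psi_k}(X_i)\|+2C_m\delta$, and a union bound yields $\P(\Delta>\epsilon)\le M\max_{k}\P\big(\|\tfrac1n\sum_i f_{\psi_k}(X_i)\|>\epsilon-2C_m\delta\big)$. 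For each net point, Markov's inequality at order $\nu$, combined with a Marcinkiewicz--Zygmund/Rosenthal moment bound for centered i.i.d.\ sums and with \cref{asst:Markov:noise:X1} (which through conditional Jensen gives $\mean\|f_\psi(X_1)\|^\nu\le\kappa_{\nu;m}^\nu$), produces $\P\big(\|\tfrac1n\sum_i f_{\psi_k}(X_i)\|>u\big)\le c_\nu\,\kappa_{\nu;m}^\nu\,n^{-\nu/2}u^{-\nu}$. This is precisely the point where \citep[Lemma~3.1]{jiang2018convergence} invoked the subexponential \cref{asst:subexp}; using only the moment bound \cref{asst:Markov:noise:X1} is what downgrades the rate from near-parametric to polynomial.

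Choosing $\delta=\epsilon/(4C_m)$ makes $\epsilon-2C_m\delta=\epsilon/2$ and collapses the above into $\vartheta^{\rm SGDw}_{n,m,T}(\epsilon)\le c'(p,\nu)\,(C_m r_\Psi)^{p}\,\kappa_{\nu;m}^{\nu}\,n^{-\nu/2}\,\epsilon^{-(p+\nu)}$, uniformly in $t,j,T$. Substituting into the definition of $\varepsilon^{\rm SGDw}_{\nu;n,m,T}$ gives an expression of the form $\epsilon^2+B\epsilon^{-a}$ with $a=(p+\nu)(\nu-2)/\nu$ and $B$ proportional to $\kappa_{\nu;m}^{\nu}\,n^{-(\nu-2)/2}$ times an explicit power of $C_m r_\Psi$; minimizing over $\epsilon>0$ (the optimum sits at $\epsilon^\star\propto B^{1/(a+2)}$, with $a+2=(\nu^2+(\nu-2)p)/\nu$) yields $\inf_{\epsilon>0}\varepsilon^{\rm SGDw}_{\nu;n,m,T}\lesssim B^{1/(a+2)}$, which carries exactly the exponent $n^{-(\nu-2)\nu/(2(\nu^2+(\nu-2)p))}$ and regroups into the stated combination of powers of $C_m$, $\kappa_{\nu;m}$, $r_\Psi$ and numerical factors. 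Plugging this into \cref{thm:SGDw:full} and letting $T\to\infty$ kills the transient factors $E_1^{T,N},E_2^{T,N}$, leaving the stationary term $C\big(\inf_{\epsilon}\varepsilon^{\rm SGDw}_{\nu;n,m,T}+\tfrac{5(\sigma+\kappa_m)}{\sqrt B}\big)\cdot\tfrac{4e^{\tilde\mu_m C/2}}{\tilde\mu_m C}$; bounding the remaining prefactors collects into $\tilde C'(p,\nu,m,\Psi)$ and gives the advertised $\big(n^{-(\nu-2)\nu/(2(\nu^2+(\nu-2)p))}+B^{-1/2}\big)$ form, with $B=n$ in the full-batch setting of \cref{thm:SGDw}.

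The main obstacle is bookkeeping, not ideas: carrying the absolute constants from the i.i.d.-sum moment inequality, the net cardinality $(3r_\Psi/\delta)^p$ and the Lipschitz factor $2$ cleanly through the $\epsilon$-optimization so that they regroup into the precise $\tilde C$, $\tilde C'$ claimed. It is worth flagging that $\nu>2$ is essential here: at $\nu=2$ the exponent $(\nu-2)\nu/(2(\nu^2+(\nu-2)p))$ degenerates to $0$ and no decay is obtained, so this route delivers consistency only when \cref{asst:Markov:noise:X1} holds for some $\nu$ strictly above $2$.
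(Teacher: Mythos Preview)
Your proposal is correct and follows essentially the same approach as the paper's proof: a covering of $\Psi$ combined with \ref{asst:Jiang} to reduce to finitely many net points, then a Markov inequality at order $\nu$ together with a moment inequality for centered i.i.d.\ sums (the paper cites Burkholder, you cite Marcinkiewicz--Zygmund/Rosenthal---same $n^{-\nu/2}$ rate), followed by an optimization over $\epsilon$ (equivalently $\delta$) that produces the exponent $n^{-(\nu-2)\nu/(2(\nu^2+(\nu-2)p))}$. The only cosmetic differences are that the paper parameterizes via $\epsilon=3C_m\delta$ and directly plugs in $\delta=n^{-(\nu-2)\nu/(2(\nu^2+(\nu-2)p))}$ rather than computing the minimizer, and uses the covering bound $N_\delta\le(r_\Psi(1+2/\delta))^p$ in place of your $(3r_\Psi/\delta)^p$; your own warning about constant bookkeeping is apt.
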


The next result considers a $\phi$-discrepancy mixing condition (\cite{rabinovich2020function}), which is a mixing assumption on $K^m_\psi$ but with respect to a specific test function $\phi$, and we impose it uniformly over $\psi \in \Psi$. We also recall that $X_1^\psi \sim p_\psi$.

\begin{lemma} 
    \label{lem:tail:prob:ergodicity} 
    Assume that there exist $\tilde \alpha \in [0,1)$ and $\tilde C_K > 0$ such that, for all $\psi \in \Psi$ and $x \in \cX$, $\| \mean[ \phi(K^m_\psi(x)) ] - \mean[\phi(X_1^\psi)] \| \leq \tilde C_K \tilde \alpha^m$. Then
\begin{align*}
    \minf_{\epsilon > 0} \, \varepsilon^{\rm SGDw}_{\nu;n,m,T}(\epsilon)
    \;\leq\;
    \big( 1
    +
    2^{\frac{\nu-2}{2\nu}}
    \kappa_{\nu;m}
    (\tilde C_K)^{\frac{\nu - 2 }{2\nu}}
    \big) \tilde \alpha^{\frac{(\nu - 2) m}{3\nu - 2}}
    \;.
\end{align*} 
In particular, if we additionally assume the conditions of \cref{thm:SGDw}, we have
\begin{align*}
    \lim_{T \rightarrow \infty} \sqrt{\delta^{\rm SGDw}_{T,N}} 
    \;\leq\; 
    \mfrac{8}{\mu - \alpha^m \sigma C_\chi} 
    \Big(  
    \big( 1
    +
    2^{\frac{\nu-2}{2\nu}}
    \kappa_{\nu;m}
    (\tilde C_K)^{\frac{\nu - 2 }{2\nu}}
    \big) \tilde \alpha^{\frac{(\nu - 2) m}{3\nu - 2}}
    + \mfrac{5 \sigma + 5 \kappa_m}{\sqrt{B}} \Big)
    \;.
\end{align*}
\end{lemma}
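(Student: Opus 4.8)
The plan is to control the tail quantity $\varepsilon^{\rm SGDw}_{\nu;n,m,T}(\epsilon)$ directly from the $\phi$-discrepancy mixing hypothesis, and then substitute the resulting bound into the unrolled recursion of \cref{thm:SGDw} (equivalently \cref{thm:SGDw:full} with $\beta=0$) and send $T \to \infty$. First I would revisit the quantity from Step~1 of \cref{sec:offline:SGD}, namely $\Delta(\psi) = \big\| \frac{1}{n} \sum_{i \le n}\big( \mean[\phi(K^m_{i;\psi}(X_i)) \mid \psi, X_i] - \mean[\phi(K^m_{i;\psi}(X'_1)) \mid \psi]\big)\big\|$. Writing $\mu_\psi \coloneqq \mean[\phi(X^\psi_1)]$ for the stationary mean, the mixing assumption says exactly that $\| \mean[\phi(K^m_\psi(x)) \mid x] - \mu_\psi\| \le \tilde C_K \tilde\alpha^m$ for \emph{every} $x \in \cX$ and $\psi \in \Psi$. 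Because this estimate is uniform in the MCMC starting point, it is unaffected by the offline-specific correlation between $X_i$ and the iterate, and it also passes through the expectation over the fresh copy $X'_1$; so by the triangle inequality anchored at $\mu_\psi$, each summand of $\Delta(\psi)$ has norm at most $2\tilde C_K\tilde\alpha^m$, giving the deterministic, uniform-in-$(t,j)$ bound $\Delta(\psi^{\rm SGDw}_{t,j}) \le 2\tilde C_K \tilde\alpha^m$ a.s.

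Next I would bound the tail probability by Markov's inequality: $\vartheta^{\rm SGDw}_{n,m,T}(\epsilon) = \sup_{t,j}\P(\Delta(\psi^{\rm SGDw}_{t,j}) > \epsilon) \le \mean[\Delta]/\epsilon \le 2\tilde C_K\tilde\alpha^m/\epsilon$. Plugging this into $\varepsilon^{\rm SGDw}_{\nu;n,m,T}(\epsilon) = \sqrt{\epsilon^2 + \kappa_{\nu;m}^2 (\vartheta^{\rm SGDw}_{n,m,T}(\epsilon))^{(\nu-2)/\nu}}$ and using $\sqrt{a+b}\le\sqrt a+\sqrt b$ gives $\varepsilon^{\rm SGDw}_{\nu;n,m,T}(\epsilon) \le \epsilon + \kappa_{\nu;m}\,(2\tilde C_K\tilde\alpha^m/\epsilon)^{(\nu-2)/(2\nu)}$. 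Optimising over $\epsilon$ amounts to equating the two powers of $\tilde\alpha^m$; the choice $\epsilon = \tilde\alpha^{(\nu-2)m/(3\nu-2)}$ does this, the exponent $3\nu-2 = 2\nu + (\nu-2)$ arising precisely from $\epsilon^2$ balanced against $\epsilon^{-(\nu-2)/\nu}$, and a direct computation shows the second term collapses to $2^{(\nu-2)/(2\nu)}\kappa_{\nu;m}(\tilde C_K)^{(\nu-2)/(2\nu)}\tilde\alpha^{(\nu-2)m/(3\nu-2)}$, yielding the claimed bound on $\inf_{\epsilon>0}\varepsilon^{\rm SGDw}_{\nu;n,m,T}(\epsilon)$.

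Finally, for the consistency statement, I would invoke \cref{thm:SGDw:full} with $\beta=0$ and general batch size $B$: as $T\to\infty$ the transient factors $E_1^{T,N}$ and $E_2^{T,N}$ vanish (using $\tilde\mu_m > 4CL^2$), and what remains is a $T$-independent prefactor, which one bounds by $8/\tilde\mu_m$, multiplying $\varepsilon^{\rm SGDw}_{\nu;n,m,T}(\epsilon^\star) + (5\sigma + 5\kappa_m)/\sqrt B$; substituting the optimal $\epsilon^\star$ from the previous step gives exactly the stated limit, and $m \to \infty$ recovering $O(n^{-1/2})$ is immediate since $\tilde\alpha^{(\nu-2)m/(3\nu-2)} \to 0$. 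The only step demanding any care is the first one: the key realization is that a mixing bound uniform over both the chain's initialization and the parameter makes the data--iterate correlation entirely inert, so that $\Delta$ is controlled \emph{deterministically} rather than only in conditional mean; after that, everything is the same ``moment-plus-tail, then optimise $\epsilon$'' bookkeeping used in \cref{lem:varepsilon:subexp,lem:tail:prob:subopt}.
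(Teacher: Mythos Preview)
Your proposal is correct and follows essentially the same approach as the paper: bound each summand of $\Delta$ by $2\tilde C_K\tilde\alpha^m$ via the triangle inequality anchored at the stationary mean $\mu_\psi$, turn this into $\vartheta^{\rm SGDw}_{n,m,T}(\epsilon)\le 2\tilde C_K\tilde\alpha^m/\epsilon$ via Markov, choose $\epsilon=\tilde\alpha^{(\nu-2)m/(3\nu-2)}$, and substitute into \cref{thm:SGDw:full}. The only cosmetic difference is ordering: the paper applies Markov and Jensen first to reduce to the expected norm of a single summand and then invokes the mixing bound, whereas you first observe the deterministic a.s.\ bound on $\Delta$ and then apply Markov; both yield the identical tail estimate.
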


\section{Auxiliary Tools} \label{appendix:tools}

\subsection{Properties of \texorpdfstring{$\varphi_\gamma$}{ϕᵧ}} \label{appendix:prop:varphi}

The following lemma collects some identities used in \cite{moulines2011non}.

\begin{lemma}\label{lemma:phi-properties}
$ \varphi_\gamma $ satisfies the following properties:
\begin{proplist}
	\item $ \varphi_\gamma $ is increasing on $ \mathbb{R}_{+} $ for all $ \gamma $\;;
	\item $ \varphi_{\gamma}(t) \leq \frac{t^{\gamma}}{\gamma} $ for $ \gamma > 0 $, and $ \varphi_{\gamma}(t)\leq -\frac{1}{\gamma} $ for $ \gamma < 0 $\;;
	\item $ \varphi_{1-\beta}(t) \geq  t^{1 - \beta} $ for $ \beta \in (0, 1] $\;;
	\item $\varphi_{\gamma}(t) - \varphi_{\gamma}(\frac{t}{2}) \geq  \frac{1}{2}x^{\gamma}$ for $ \gamma \in (0, 1]$\;.
\end{proplist}
\end{lemma}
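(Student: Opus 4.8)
The plan is to treat $\varphi_\gamma$ as a smooth one-variable function on $(0,\infty)$ and dispatch each of the four items by elementary calculus, handling the degenerate index $\gamma=0$ (where $\varphi_0=\log$) as a separate but equally trivial case.

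I would first establish (i) by differentiation: for $\gamma\neq0$ one computes $\varphi_\gamma'(t)=\tfrac{d}{dt}\big(\tfrac{t^\gamma-1}{\gamma}\big)=t^{\gamma-1}$, while for $\gamma=0$ one has $\varphi_0'(t)=\tfrac{d}{dt}\log t=t^{-1}=t^{\gamma-1}$ as well. Thus in all cases $\varphi_\gamma'(t)=t^{\gamma-1}>0$ on $(0,\infty)$, so $\varphi_\gamma$ is strictly increasing there; this also records the identity $\varphi_\gamma'=t^{\gamma-1}$ for reuse.

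Next, (ii) follows by reading off the sign of the constant term in the closed form. For $\gamma>0$, write $\varphi_\gamma(t)=\tfrac{t^\gamma}{\gamma}-\tfrac{1}{\gamma}$ and drop the nonpositive summand $-\tfrac{1}{\gamma}$. For $\gamma<0$, rewrite $\varphi_\gamma(t)=\tfrac{1-t^\gamma}{-\gamma}$ and use $t^\gamma>0$, so that $1-t^\gamma<1$ and hence $\varphi_\gamma(t)<\tfrac{1}{-\gamma}=-\tfrac{1}{\gamma}$. For (iii), with $\gamma=1-\beta\in[0,1)$, I would subtract $t^{1-\beta}$ from the closed form and reduce the claim to a scalar inequality, the slack being supplied by the factor $\tfrac{1}{1-\beta}\ge1$ (and the $\gamma\to0$ case being the corresponding statement about $\log t$).

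Finally, for (iv) fix $\gamma\in(0,1]$. The case $\gamma=1$ gives equality outright, since $\varphi_1(t)-\varphi_1(t/2)=(t-1)-(\tfrac{t}{2}-1)=\tfrac{t}{2}$. For $\gamma\in(0,1)$ one computes $\varphi_\gamma(t)-\varphi_\gamma(t/2)=\tfrac{t^\gamma(1-2^{-\gamma})}{\gamma}$, so the inequality is equivalent to the $t$-free statement $\tfrac{1-2^{-\gamma}}{\gamma}\ge\tfrac{1}{2}$, i.e.\ $2-\gamma\ge2^{1-\gamma}$; setting $u=1-\gamma\in(0,1)$ this reads $1+u\ge2^{u}$, which holds because $u\mapsto2^{u}$ is convex and agrees with the affine map $u\mapsto1+u$ at $u=0$ and $u=1$. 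None of the four steps is genuinely hard; the only one demanding more than a line of algebra is (iv), where one must recognize the scalar reduction as the standard convexity bound $2^{u}\le1+u$ on $[0,1]$ — that, together with keeping the $\gamma=0$ degeneration consistent across (i)–(iii), is the part I would be most careful about.
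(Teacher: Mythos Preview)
The paper does not actually prove this lemma; it merely records the four items as identities borrowed from \cite{moulines2011non}, so there is no in-paper argument to compare your approach against. Your handling of (i), (ii), and (iv) is correct and efficient --- in particular, the reduction of (iv) to $2-\gamma\ge 2^{1-\gamma}$ and then to the convexity bound $2^{u}\le 1+u$ on $[0,1]$ is exactly the right move.

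Your plan for (iii), however, has a genuine gap: the inequality $\varphi_{1-\beta}(t)\ge t^{1-\beta}$ is \emph{false} as written for small $t$. At $t=1$ the left side is $\varphi_{1-\beta}(1)=0$ while the right side is $1$. More generally, for $\beta\in(0,1)$ the inequality $\tfrac{t^{1-\beta}-1}{1-\beta}\ge t^{1-\beta}$ rearranges to $\beta\, t^{1-\beta}\ge 1$, i.e.\ $t\ge \beta^{-1/(1-\beta)}$; for $\beta=1$ it reads $\log t\ge 1$, i.e.\ $t\ge e$. So the difference you propose to study does \emph{not} reduce to a $t$-free scalar inequality as it did in (iv), and the phrase ``slack supplied by the factor $\tfrac{1}{1-\beta}\ge 1$'' is not an argument --- had you attempted to carry it out you would have hit these counterexamples. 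The statement in the paper presumably carries an implicit lower bound on $t$ (inherited from the way \cite{moulines2011non} uses it for iteration counts), and you should flag the missing hypothesis rather than wave at a proof that cannot exist.
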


\vspace{.5em}

The next lemma provides some additional results on $\varphi_\gamma$.

\begin{lemma}  \label{lem:varphi} $\varphi_\gamma$ satisfies the following properties:
\begin{proplist}
    \item $\varphi_\gamma$ is positive on $t > 0$ and increasing for every $\gamma \in \R$;
    \item For $1 \leq t_1 \leq t_2$ and $\beta \geq 0$, we have 
    \begin{align*}
            \varphi_{1-\beta}(t_2+1)
            -
            \varphi_{1-\beta}(t_1)
            \;\leq\; 
            \msum_{t=t_1}^{ t_2 } t^{-\beta} 
            \;\leq\; 
            2 \,
            \big(
            \varphi_{1-\beta}(t_2+1)
            -
            \varphi_{1-\beta}(t_1)
            \big)
            \;.
        \end{align*}
    If instead $\beta < 0$, we have 
    \begin{align*}
            \mfrac{1}{2}
            \big(
            \varphi_{1-\beta}(t_2+1)
            -
            \varphi_{1-\beta}(t_1)
            \big)
            \;\leq\; 
            \msum_{t=t_1}^{ t_2 } t^{-\beta} 
            \;\leq\; 
            \varphi_{1-\beta}(t_2+1)
            -
            \varphi_{1-\beta}(t_1)
            \;;
        \end{align*}
    \item For $1 \leq t_1 \leq t_2$ and $\gamma \neq 0$, we have 
    \begin{align*}
        t_1^{\gamma-1}
        \;\leq&\;
        \varphi_{\gamma}(t_2)
        -
        \varphi_{\gamma}(t_1) 
        \;\leq\;
        t_2^{\gamma-1}
        \quad
        &&\text{ if } \gamma \geq 1\;,
        \\
        t_2^{-(1-\gamma)}
        \;\leq&\;
        \varphi_{\gamma}(t_2)
        -
        \varphi_{\gamma}(t_1) 
        \;\leq\;
        t_1^{-(1-\gamma)}
        \quad
        &&\text{ if } \gamma \leq 1\;;
    \end{align*}
    \item Let $1 \leq t_1 < t_2$ and $\kappa, \beta \geq 0$. If $\kappa \neq 1$ and $a > 0$, we have
    \begin{align*}
        \msum_{t=t_1}^{t_2} (t+1)^{-\beta} \exp\big( a \, \varphi_{1-\kappa}(t-1) \big)
        \;\leq\;
        \mfrac{(t_2+1)^{\max\{\kappa-\beta,0\}}}{a} \,  \exp\big(  a \, \varphi_{1-\kappa}(t_2+1)  \big)\;,
    \end{align*}
    and if $\kappa \neq 1$ and $a < 0$, we have 
    \begin{align*}
        \msum_{t=t_1}^{t_2} (t+1)^{-\beta}  \exp\big( a \, \varphi_{1-\kappa}(t) \big)
        \;\leq\;
        \mfrac{(t_2+1)^{\max\{\kappa-\beta,0\}}}{(-a)} \,
        \exp\big(  a \, \varphi_{1-\kappa}(t_1)  \big)
        \;.
    \end{align*}
\end{proplist}    
\end{lemma}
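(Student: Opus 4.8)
To prove the lemma, my plan is to derive all four items from the single elementary fact that $\varphi_\gamma'(t) = t^{\gamma-1}$ holds for \emph{every} $\gamma \in \R$ --- for $\gamma = 0$ this reads $(\log t)' = t^{-1}$ --- which yields the fundamental-theorem-of-calculus representation $\varphi_\gamma(b) - \varphi_\gamma(a) = \int_a^b s^{\gamma - 1}\,ds$ for $0 < a \le b$. Item~(i) is then immediate: $\varphi_\gamma'(t) = t^{\gamma-1} > 0$ on $t > 0$ gives strict monotonicity, and since $\varphi_\gamma(1) = 0$ it follows that $\varphi_\gamma(t) > 0$ for $t > 1$.

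For item~(ii) I would split the integral over consecutive unit intervals, $\varphi_{1-\beta}(t_2+1) - \varphi_{1-\beta}(t_1) = \sum_{t=t_1}^{t_2}\int_t^{t+1} s^{-\beta}\,ds$, and compare termwise with $t^{-\beta}$. When $\beta \ge 0$ the integrand $s \mapsto s^{-\beta}$ is nonincreasing, so $\int_t^{t+1}s^{-\beta}\,ds \le t^{-\beta}$, and summing over $t$ yields the lower bound on the sum; for the upper bound I would use the matching estimate $t^{-\beta} \le 2\,(t+1)^{-\beta} \le 2\int_t^{t+1}s^{-\beta}\,ds$, valid for $t \ge 1$ since $(1+1/t)^\beta \le 1 + 1/t \le 2$. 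When $\beta < 0$ the integrand is nondecreasing, the two inequalities swap roles, and the factor $2$ instead comes from $(t+1)^{-\beta} \le 2\,t^{-\beta}$ on $t \ge 1$. Item~(iii) is proved the same way by sandwiching the integrand in $\varphi_\gamma(t_2) - \varphi_\gamma(t_1) = \int_{t_1}^{t_2}s^{\gamma-1}\,ds$ between its endpoint values, using that $s \mapsto s^{\gamma - 1}$ is nondecreasing for $\gamma \ge 1$ and nonincreasing for $\gamma \le 1$; this is invoked with $t_2 = t_1 + 1$, where the displayed bounds are exactly this two-sided comparison.

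The substantive item is (iv), and the key ingredient is the antiderivative identity $\frac{d}{ds}e^{a\varphi_{1-\kappa}(s)} = a\,s^{-\kappa}\,e^{a\varphi_{1-\kappa}(s)}$. For $a > 0$, the plan is to establish, for each $t$ in the summation range, the per-term bound
\[
(t+1)^{-\beta}\,e^{a\varphi_{1-\kappa}(t-1)} \;\le\; \frac{(t_2+1)^{\max\{\kappa-\beta,0\}}}{a}\Big(e^{a\varphi_{1-\kappa}(t+1)} - e^{a\varphi_{1-\kappa}(t)}\Big),
\]
whose sum over $t = t_1,\dots,t_2$ telescopes to $\tfrac{(t_2+1)^{\max\{\kappa-\beta,0\}}}{a}\big(e^{a\varphi_{1-\kappa}(t_2+1)} - e^{a\varphi_{1-\kappa}(t_1)}\big)$, which is at most the claimed bound because the subtracted term is positive. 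To get the per-term bound I would (a) lower-bound $e^{a\varphi_{1-\kappa}(t+1)} - e^{a\varphi_{1-\kappa}(t)} = \int_t^{t+1} a\,s^{-\kappa}e^{a\varphi_{1-\kappa}(s)}\,ds \ge a\,(t+1)^{-\kappa}e^{a\varphi_{1-\kappa}(t)}$, using $s^{-\kappa} \ge (t+1)^{-\kappa}$ (as $\kappa \ge 0$) and monotonicity of $e^{a\varphi_{1-\kappa}}$ on $[t,t+1]$, and (b) check $(t+1)^{-\beta}e^{a\varphi_{1-\kappa}(t-1)} \le (t_2+1)^{\max\{\kappa-\beta,0\}}(t+1)^{-\kappa}e^{a\varphi_{1-\kappa}(t)}$, which after bounding $e^{a\varphi_{1-\kappa}(t-1)} \le e^{a\varphi_{1-\kappa}(t)}$ reduces to $(t+1)^{\kappa-\beta} \le (t_2+1)^{\max\{\kappa-\beta,0\}}$ --- true when $\kappa - \beta \ge 0$ because $t \le t_2$, and when $\kappa - \beta < 0$ because $t + 1 \ge 1$. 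The case $a < 0$ is entirely analogous with the obvious reversals: one matches the summand $(t+1)^{-\beta}e^{a\varphi_{1-\kappa}(t)}$ against the decrement $e^{a\varphi_{1-\kappa}(t)} - e^{a\varphi_{1-\kappa}(t+1)}$, and since $e^{a\varphi_{1-\kappa}}$ is now nonincreasing the telescoping leaves the $t_1$-endpoint exponential. I expect the main obstacle to lie entirely in this bookkeeping --- the case split on the sign of $\kappa - \beta$, the benign edge term $\varphi_{1-\kappa}(t-1)$ at $t = t_1$ (finite, or, when $\kappa > 1$, making the corresponding summand vanish), and tracking which endpoint of $e^{a\varphi_{1-\kappa}}$ survives the telescoping --- the analytic content being no more than the antiderivative identity and monotonicity over unit intervals.
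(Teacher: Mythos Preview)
Your approach is essentially identical to the paper's. Both derive (i)--(iii) from the integral representation $\varphi_\gamma(b)-\varphi_\gamma(a)=\int_a^b s^{\gamma-1}\,ds$ (the paper phrases (iii) via the mean value theorem, which is the same sandwiching), and for (iv) both exploit the antiderivative identity $\tfrac{d}{ds}e^{a\varphi_{1-\kappa}(s)}=as^{-\kappa}e^{a\varphi_{1-\kappa}(s)}$ together with monotonicity over unit intervals --- your explicit per-term telescoping is exactly the paper's ``sum the integrals $\int_t^{t+1}$ and evaluate'' written differently, including the same extraction of the factor $(t_2+1)^{\max\{\kappa-\beta,0\}}$ via $(t+1)^{\kappa-\beta}\le(t_2+1)^{\max\{\kappa-\beta,0\}}$.
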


\begin{proof}[Proof of \cref{lem:varphi}] (i) follows from checking $\gamma > 0$, $\gamma=0$ and $\gamma < 0$ respectively. The first set of bounds in (ii) follow by noting that $t \mapsto t^{-\beta}$ is decreasing for $\beta \geq 0$:
\begin{align*}
    \msum_{t=t_1}^{t_2} t^{-\beta} 
    \;\geq&\; 
    \mint_{t_1}^{t_2+1} t^{-\beta} dt 
    \;=\;
    \varphi_{1-\beta}(t_2+1)
    -
    \varphi_{1-\beta}(t_1)
    \;,
    \\
    \msum_{t=t_1}^{t_2} t^{-\beta} 
    \;\leq&\; 
    2
    \msum_{t=t_1}^{t_2} (t+1)^{-\beta} 
    \;\leq\;
    2
    \mint_{t_1 - 1}^{t_2} (t+1)^{-\beta} dt 
    \;=\;
    2 
    \big(
    \varphi_{1-\beta}(t_2+1)
    -
    \varphi_{1-\beta}(t_1)
    \big)
    \;.
\end{align*}
The second set of bounds follows from noting that $t \mapsto t^{-\beta}$ is increasing for $\beta < 0$: 
\begin{align*}
    \msum_{t=t_1}^{t_2} t^{-\beta} 
    \;\geq&\;
    \mfrac{1}{2} \msum_{t=t_1}^{t_2} (t+1)^{-\beta} 
    \;\geq\; 
    \mfrac{1}{2} \mint_{t_1-1}^{t_2} (t+1)^{-\beta} dt 
    \;=\;
    \mfrac{1}{2}
    \big(
    \varphi_{1-\beta}(t_2+1)
    -
    \varphi_{1-\beta}(t_1)
    \big)
    \;,
    \\
    \msum_{t=t_1}^{t_2} t^{-\beta} 
    \;\leq&\; 
    \mint_{t_1}^{t_2+1} t^{-\beta} dt 
    \;=\;
    \varphi_{1-\beta}(t_2+1)
    -
    \varphi_{1-\beta}(t_1)
    \;.
\end{align*}
For (iii), we note that for $\gamma \neq 0$,
\begin{align*}
    \varphi_{\gamma}(t_2)
    -
    \varphi_{\gamma}(t_1)
    \;=&\;
    \mfrac{t_2^\gamma - t_1^\gamma}{\gamma}\;,
\end{align*}
so by the mean value theorem,
\begin{align*}
    \minf_{t_1 \leq t \leq t_2} t^{\gamma-1}
    \;\leq\;
    \varphi_{\gamma}(t_2)
    -
    \varphi_{\gamma}(t_1) 
    \;\leq\;
    \msup_{t_1 \leq t \leq t_2} t^{\gamma-1}
    \;.
\end{align*}
The desired bounds then follow from an explicit computation of the infimum and the maximum in each of the two cases $\gamma \geq 1$ and $\gamma \leq 1$. 

\vspace{.5em}

For (iv), we first consider the case $\kappa \neq 1$ and $a > 0$. Then 
\begin{align*}
    \msum_{t=t_1}^{t_2} (t+1)^{-\beta}  \exp\big( a \,& \varphi_{1-\kappa}(t-1)  \big)
    \;=\;
    e^{-\frac{a}{1-\kappa}}
    \msum_{t=t_1}^{t_2} (t+1)^{-\beta} \exp\Big( \mfrac{a t^{1-\kappa}}{1-\kappa}  \Big)
    \\
    \;\leq&\; 
    e^{-\frac{a}{1-\kappa}}
    \mmax_{t_1 \leq t \leq t_2} \Big( \mfrac{(t+1)^\kappa}{(t+1)^\beta} \Big)
    \msum_{t=t_1}^{t_2} (t+1)^{-\kappa} \exp\Big( \mfrac{a t^{1-\kappa}}{1-\kappa} \Big)
    \\
    \;\leq&\;
    e^{-\frac{a}{1-\kappa}}
    (t_2+1)^{\max\{\kappa-\beta, 0 \}}
    \msum_{t=t_1}^{t_2} (t+1)^{-\kappa} \exp\Big( \mfrac{a t^{1-\kappa}}{1-\kappa}  \Big)
    \;.
\end{align*}
Since, for $x \geq 0$, $x \mapsto (x+1)^{-\kappa}$ is decreasing and $x \mapsto \exp( ax^{1-\kappa} / (1-\kappa))$ is increasing, we have that for $t_1 \leq t \leq t_2$ and $x \in [t,t+1]$,
\begin{align*}
    (t+1)^{-\kappa} \;\leq&\; x^{-\kappa}
    &\text{ and }&&
    \exp( a t^{1-\kappa} / (1-\kappa)) \;\leq&\; \exp( ax^{1-\kappa} / (1-\kappa))\;.
\end{align*}
This implies that
\begin{align*}
    \msum_{t=t_1}^{t_2} (t+1)^{-\beta}  \exp\big( a \,  & \varphi_{1-\kappa}(t-1) \big)
    \\
    \;\leq&\;
    (t_2+1)^{\max\{\kappa-\beta, 0 \}} e^{-\frac{a}{1-\kappa}}
    \msum_{t=t_1}^{t_2}  \mint_t^{t+1} x^{-\kappa} \exp\Big( \mfrac{a x^{1-\kappa}}{1-\kappa}  \Big) \,dx 
    \\
    \;=&\;
    (t_2+1)^{\max\{\kappa-\beta, 0 \}} e^{-\frac{a}{1-\kappa}}
    \mint_{t_1}^{t_2+1} x^{-\kappa} \exp\Big( \mfrac{a x^{1-\kappa}}{1-\kappa}   \Big) \,dx
    \\
    \;\leq&\;
    \mfrac{(t_2+1)^{\max\{\kappa-\beta,0\}}}{a} \, e^{-\frac{a}{1-\kappa}} \, e^{ \frac{a (t_2+1)^{1-\kappa}}{1-\kappa} }
    \\
    \;=&\;
    \mfrac{(t_2+1)^{\max\{\kappa-\beta,0\}}}{a} \,  \exp\big(  a \, \varphi_{1-\kappa}(t_2+1)  \big)
    \;.
\end{align*}
The main difference in the case $\kappa \neq 1$ and $a < 0$ is that we now use $ x \mapsto \exp( a (x+1)^{1-\kappa} / (1-\kappa))$ is decreasing to obtain, for $t_1 \leq t \leq t_2$ and $x \in [t,t+1]$,
\begin{align*}
    \exp( a (t+1)^{1-\kappa} / (1-\kappa) )
    \;\leq\;
    \exp( a x^{1-\kappa} / (1-\kappa) )
    \;.
\end{align*}
A similar argument then yields 
\begin{align*}
    &\; 
    \msum_{t=t_1}^{t_2} (t+1)^{-\beta}  \exp\big( a \, \varphi_{1-\kappa}(t) \big)
    \\
    \;\leq&\;
    (t_2+1)^{\max\{\kappa-\beta, 0 \}}
    e^{-\frac{a}{1-\kappa}}
    \msum_{t=t_1}^{t_2} (t+1)^{-\kappa} \exp\Big( \mfrac{a (t+1)^{1-\kappa}}{1-\kappa} \Big)
    \\
    \;\leq&\;
    (t_2+1)^{\max\{\kappa-\beta, 0 \}}
    e^{-\frac{a}{1-\kappa}}
    \mint_{t_1}^{t_2+1}
    x^{-\kappa}
    \exp\Big( \mfrac{a x^{1-\kappa}}{1-\kappa} \Big)
    dx
    \\
    \;=&\;
    \mfrac{(t_2+1)^{\max\{\kappa-\beta,0\}}}{a} \,  \big( 
        \exp\big(  a \, \varphi_{1-\kappa}(t_2+1)  \big)
        -
        \exp\big(  a \, \varphi_{1-\kappa}(t_1)  \big)
    \big)
    \\
    \;\leq&\;
    \mfrac{(t_2+1)^{\max\{\kappa-\beta,0\}}}{(-a)} \,
    \exp\big(  a \, \varphi_{1-\kappa}(t_1)  \big)
    \;.
\end{align*}
\end{proof}

We also need the following lemma, which is useful for controlling the accumulation of errors from the noise terms over iterations. 

\vspace{.5em}

\begin{lemma} \label{lem:error:accumulate} For any $a, b \geq 0$, $T, N \in \N$ and $\kappa, \beta \geq 0$ such that $ bt^{-\beta} - at^{-\kappa}  \leq 1$ for all $1 \leq t \leq T$, we have that
\begin{align*}
    \mprod_{t=1}^T ( 1 - b t^{-\beta} + a t^{-\kappa}   )^N
    \;\leq\;
    \exp\big( 
        - b N \, \varphi_{1-\beta}(T+1)
        + a N \, \varphi_{1-\kappa}(T+1) 
    \big)
    \;.
\end{align*}
Moreover, for any $\zeta \geq 0$, we have that
\begin{align*}
    \msum_{t=1}^T  t^{-\zeta} \, \Big( \msum_{j=1}^N ( 1 - b t^{-\beta} + a t^{-\kappa} ) &  \Big) \, \mprod_{s=t+1}^T ( 1 - b s^{-\beta} + a s^{-\kappa}  )^N 
    \\
    \;\leq&\;
    Q_1 
    + 
    \exp\Big( - \mfrac{b N}{2} \, \varphi_{1-\beta}(T+1)  + 4a N \,\varphi_{1-\kappa}(T+1) \Big) \, Q_2
    \;,
\end{align*}
where 
\begin{align*}
    Q_1 \;\coloneqq&\;
    \begin{cases}
        \mfrac{2^{2\zeta+1} (T+3)^{\max\{\beta-\zeta, 0\}}}{b}
            \exp\Big( \mfrac{b N}{2(1-\beta) (T+1)^\beta}
        \Big) 
        &
        \text{ if } \beta \neq 1 \,,\, b > 0\;,
        \\[.8em]
        2 N \varphi_{1-\zeta+bN/2}(T+1)
        \exp\Big( - \mfrac{b N}{2} \, \varphi_{1-\beta}(T+1) \Big)
        &
        \text{ if } \beta = 1 \text{ or } b = 0\;,
    \end{cases}
\end{align*}
and 
\begin{align*}
    Q_2 \;\coloneqq&\;
    \begin{cases}
        \mfrac{3^{\zeta} (1+a)^{N-1}}{2a}
    (T+2)^{\max\{\kappa - \zeta, 0 \}}
        &
        \text{ if }
         \kappa \neq 1 \text{ and } a > 0 
        \;,
        \\[.8em] 
        2 N (1+a)^{N-1}
        \,\varphi_{1-\zeta-2a N }(T+1) 
        &
        \text{ if }
        \kappa = 1 \text{ or } a = 0 
        \;.
    \end{cases}
\end{align*}
In the special case $\zeta = \beta = 1 < \kappa$\,, we have
\begin{align*}
    \msum_{t=1}^T t^{-\zeta} \, \Big( \msum_{j=1}^N  ( 1 - b t^{-\beta} + a t^{-\kappa} )^{j-1}  \Big) \,  \mprod_{s=t+1}^T ( 1 & - b s^{-\beta} + a s^{-\kappa} )^N
    \\
    &\;\leq\;
    \mfrac{4}{b}
    +
    \mfrac{3 N (1+a)^{N-1} \, e^{\frac{4aN}{\kappa-1}}  \, \log(T+1)}{(T+1)^{\frac{bN}{2}}}
    \;.
\end{align*}
\end{lemma}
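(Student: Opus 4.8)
The plan is to reduce all three displays to the elementary inequality $1+x\le e^x$ together with the sum--integral comparisons for $\varphi_\gamma$ already collected in \cref{lemma:phi-properties,lem:varphi}. For the product bound I would first note that the hypothesis $bt^{-\beta}-at^{-\kappa}\le 1$ forces each factor $1-bt^{-\beta}+at^{-\kappa}$ to be nonnegative, so applying $1+x\le e^x$ factor-by-factor gives $\prod_{t=1}^T(1-bt^{-\beta}+at^{-\kappa})^N\le\exp\!\big(N\sum_{t=1}^T(-bt^{-\beta}+at^{-\kappa})\big)$. Then I would lower-bound $\sum_{t=1}^T t^{-\beta}$ and upper-bound $\sum_{t=1}^T t^{-\kappa}$ by the corresponding integrals via \cref{lem:varphi}(ii), using $\varphi_\gamma(1)=0$; this yields the stated $\exp(-bN\varphi_{1-\beta}(T+1)+aN\varphi_{1-\kappa}(T+1))$ up to the universal constants that \cref{lem:varphi}(ii) produces, which get absorbed.

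For the discounted sum I would write the left-hand side as $\sum_{t=1}^T u_t P_t$, where $u_t\coloneqq t^{-\zeta}\sum_{j=1}^N(1-bt^{-\beta}+at^{-\kappa})^{j-1}$ and $P_t\coloneqq\prod_{s=t+1}^T(1-bs^{-\beta}+as^{-\kappa})^N$, and split the outer sum at $\lceil T/2\rceil$. On the ``recent'' block $t>\lceil T/2\rceil$ I would bound the inner geometric sum crudely by $N(1+a)^{N-1}$ (using $1-bt^{-\beta}+at^{-\kappa}\le 1+a$), or by $N$ when the base is at most $1$; bound $P_t$ by the product bound applied to $[t+1,T]$; factor out the $t$-independent piece $\exp(-bN\varphi_{1-\beta}(T+1)+\cdots)$; and bound the leftover $\sum_t t^{-\zeta}\exp(bN\varphi_{1-\beta}(t+1))$ using \cref{lem:varphi}(iv) with its ``$a$'' equal to $bN>0$, its ``$\kappa$'' equal to $\beta$ and its ``$\beta$'' equal to $\zeta$; this is the source of the stationary term $Q_1$. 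On the ``early'' block $t\le\lceil T/2\rceil$ the product $P_t$ already contains the whole range $[\lceil T/2\rceil,T]$, so the halving identity $\varphi_\gamma(t)-\varphi_\gamma(t/2)\ge\tfrac12 t^\gamma$ from \cref{lemma:phi-properties}(iv) produces the exponentially-small-in-$T$ prefactor $\exp(-\tfrac{bN}{2}\varphi_{1-\beta}(T+1)+4aN\varphi_{1-\kappa}(T+1))$, and the remaining $\sum_{t\le\lceil T/2\rceil}u_t$ contributes the polynomial-or-logarithmic factor $Q_2$. Adding the two blocks gives the bound. At each step one separates the sub-cases $\beta=1$ vs.\ $\beta\ne1$, $\kappa=1$ vs.\ $\kappa\ne1$, and the degenerate $b=0$ or $a=0$, since in each of these $\varphi_{1-\beta}$ or $\varphi_{1-\kappa}$ is a logarithm rather than a power and the relevant comparison in \cref{lem:varphi} changes shape --- this is exactly what the case splits in $Q_1$ and $Q_2$ record.

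For the special case $\zeta=\beta=1<\kappa$ I would specialize the above: $\varphi_{1-\beta}=\varphi_0=\log$, while $\varphi_{1-\kappa}$ is now \emph{bounded}, $\varphi_{1-\kappa}(t)\le\frac{1}{\kappa-1}$ by \cref{lemma:phi-properties}(ii) (since $1-\kappa<0$), so every $\varphi_{1-\kappa}$ factor collapses to the constant $e^{4aN/(\kappa-1)}$; the $\beta=1$ branch of $Q_1$ then telescopes to $4/b$, and the early-block contribution becomes $3N(1+a)^{N-1}e^{4aN/(\kappa-1)}\log(T+1)\,(T+1)^{-bN/2}$ after bounding $\sum_{t\le\lceil T/2\rceil}t^{-1}\le\log(T+1)$.

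I expect the main obstacle to be purely the bookkeeping rather than any conceptual difficulty: the $a,\kappa$ term enters with the ``wrong'' sign --- it grows in $T$ instead of decaying --- so in every estimate one must verify that it stays dominated by the decay coming from the $b,\beta$ term, and the $\beta=1/\kappa=1/a=0/b=0$ alternatives must be threaded through every sum--integral comparison, which is what makes the statements of $Q_1,Q_2$ (and hence the proof) look heavy.
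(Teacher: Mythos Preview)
Your treatment of the product bound is correct and matches the paper. The gap is in the discounted sum: splitting at $\lceil T/2\rceil$ cannot produce the $a$-free stationary term $Q_1$ stated in the lemma. On your ``recent'' block $t>\lceil T/2\rceil$, the tail product $P_t$ still carries the positive contribution $aN\sum_{s=t+1}^T s^{-\kappa}$; after you factor out the $T$-dependent piece and telescope with \cref{lem:varphi}(iv), what remains in front of the would-be $Q_1$ is a factor of order $\exp\!\big(2aN\,\varphi_{1-\kappa}(T{+}1)\big)$. When $\kappa<1$ (which covers $\kappa=2\beta$, $\beta\le\tfrac12$, exactly the regime needed in \cref{thm:SGDw:full}), this factor grows like $\exp(c\,T^{1-\kappa})$ and destroys the bound. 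The same contamination prevents you from recovering the clean $4/b$ in the special case, where the stated first term has no $a$-dependence at all.

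The paper's fix is to split not at a fixed fraction of $T$ but at the \emph{crossover point}
\[
t_0 \;\coloneqq\; \sup\Big\{\, t\le T \;:\; \tfrac{b}{2}\le a\,t^{-(\kappa-\beta)} \,\Big\},
\]
i.e.\ the last time the $a$-term dominates half the $b$-term. For $t>t_0$ one has $at^{-\kappa}<\tfrac{b}{2}t^{-\beta}$, so $1-bt^{-\beta}+at^{-\kappa}\le 1-\tfrac{b}{2}t^{-\beta}$ and the $a$-term is \emph{completely absorbed}; the late block then reduces to a pure-$b$ sum $S_1$, which gives the $a$-free $Q_1$ via \cref{lem:varphi}(iv). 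For $t\le t_0$ one drops the $-bt^{-\beta}$ and bounds by $1+at^{-\kappa}$; the deficit in the common prefactor $\prod_{s>t_0}(1-\tfrac{b}{2}s^{-\beta})^N$, namely $\exp(\tfrac{bN}{2}\sum_{s\le t_0}s^{-\beta})$, is then converted back to an $a$-expression using the defining inequality $\tfrac{b}{2}s^{-\beta}\le as^{-\kappa}$ on $s\le t_0$, which is how the $\exp(-\tfrac{bN}{2}\varphi_{1-\beta}(T{+}1)+4aN\varphi_{1-\kappa}(T{+}1))$ prefactor arises. Your halving identity from \cref{lemma:phi-properties}(iv) is not used; the adaptive $t_0$ does the work instead.
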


\begin{proof}[Proof of \cref{lem:error:accumulate}] By assumption, $ bt^{-\beta} - at^{-\kappa}   \leq 1$ for all $1 \leq t \leq T$. Since $0 \leq 1-x \leq e^{-x}$ for all $x \leq 1$, we have that for any $1 \leq t_1 \leq t_2 \leq T$, 
\begin{align*}
    \mprod_{t=t_1}^{t_2} ( 1 - b t^{-\beta} + a t^{-\kappa}  )^N
    \;\leq\; 
    \exp\bigg( 
        - b N \msum_{t=t_1}^{t_2} t^{-\beta} 
        + a N \msum_{t=t_1}^{t_2} t^{-\kappa} 
    \bigg)
    \;.
    \tagaligneq \label{eq:prod:to:exp}
\end{align*}
Applying this to the first quantity of interest followed by noting that $a, b \geq 0$ and using \cref{lem:varphi}(ii), we obtain the first bound that 
\begin{align*}
    \mprod_{t=1}^{T} ( 1 - b t^{-\beta} + a t^{-\kappa}  )^N
    \;\leq&\;
    \exp\bigg( 
        - b N \msum_{t=1}^{T} t^{-\beta} 
        + a N \msum_{t=1}^{T} t^{-\kappa} 
    \bigg)
    \\
    \;\leq&\;
    \exp\big( 
        - b N  \, \varphi_{1-\beta}(T+1)
        + a N  \, \varphi_{1-\kappa}(T+1) 
    \big)\;.
\end{align*}
For the second bound, we define 
\begin{align*}
    t_0 \;\coloneqq\; \sup\Big\{ t \leq T \;\Big|\; 
        \mfrac{b}{2} \leq a t^{-(\kappa - \beta)}
        \Big\}\;.
\end{align*}
Then by noting that $1 - bt^{-\beta} + at^{-\kappa}  \geq 0$ for all $1 \leq t \leq T$ again,we can bound the quantity of interest as 
\begin{align*}
    & \msum_{t=1}^T t^{-\zeta} \, \Big( \msum_{j=1}^N ( 1 - b t^{-\beta} + a t^{-\kappa}  )^{j-1}  \Big) \,  \mprod_{s=t+1}^T ( 1 - b s^{-\beta} + a s^{-\kappa}  )^N
    \\
    &=\;
    \msum_{t=t_0+1}^{T} t^{-\zeta}  \, \Big( \msum_{j=1}^N ( 1 - b t^{-\beta} + a t^{-\kappa} )^{j-1}  \Big) \,  \mprod_{s=t+1}^T ( 1 - b s^{-\beta} + a s^{-\kappa}   )^N
    \\
    &\quad
    +
    \Big(  \mprod_{s=t_0+1}^T ( 1 - b s^{-\beta} + a s^{-\kappa}   )
    \Big)
    \\
    &\hspace{2em} \times
    \Big(
    \msum_{t=1}^{t_0} t^{-\zeta} \, \Big( \msum_{j=1}^N ( 1 - b t^{-\beta} + a t^{-\kappa}   )^{j-1}  \Big) \,
    \mprod_{s=t+1}^{t_0} ( 1 - b s^{-\beta} + a s^{-\kappa}   )^N
    \Big)
    \\
    &\leq\;
    \msum_{t=t_0+1}^{T} t^{-\zeta}    
    \, \Big( \msum_{j=1}^N \Big( 1 - \mfrac{b}{2} \, t^{-\beta} \Big)^{j-1} \Big)\, 
    \mprod_{s=t+1}^T \Big( 1 - \mfrac{b}{2} \, s^{-\beta} \Big)^N
    \\
    &\quad
    +
    \Big(  \mprod_{s=t_0+1}^T \Big( 1 - \mfrac{b}{2} \, s^{-\beta}  \Big)^N
    \Big)
    \Big(
    \msum_{t=1}^{t_0} t^{-\zeta}  
    \,\Big( \msum_{j=1}^N  ( 1 + a t^{-\kappa}  )^{j-1} \Big)\,
    \mprod_{s=t+1}^{t_0} ( 1 + a s^{-\kappa}  )^N
    \Big)
    \\
    &\leq\;
    N 
    \times 
    \underbrace{
    \msum_{t=t_0+1}^{T} t^{-\zeta}    
    \, 
    \mprod_{s=t+1}^T \Big( 1 - \mfrac{b}{2} \, s^{-\beta} \Big)^N
    }_{\eqqcolon S_1}
    \\
    &\quad
    +
    N (1+a)^{N-1} \times
    \underbrace{ \Big(  \mprod_{s=t_0+1}^T \Big( 1 - \mfrac{b}{2} \, s^{-\beta}  \Big)^N
    \Big)}_{\eqqcolon S_3} 
    \times
    \underbrace{
    \Big(
    \msum_{t=1}^{t_0} t^{-\zeta}  
    \,
    \mprod_{s=t+1}^{t_0} ( 1 + a s^{-\kappa}  )^N
    \Big)
    }_{\eqqcolon S_2}
    \;.
\end{align*}
In the last line, we have used that $0 \leq 1- \frac{b}{2}t^{-\beta} \leq 1$ for $t \geq t_0+1$ and $1 + a t^{-\kappa}  \leq 1 + a$. To control the three quantities, we first note that by \eqref{eq:prod:to:exp} , we have
\begin{align*}
    S_3
    \;\leq&\; 
    \exp\Big( - \mfrac{b N}{2} \msum^T_{s=1} s^{-\beta}   \Big) 
    \exp\Big( \mfrac{b N}{2} \msum^{t_0}_{s=1} s^{-\beta}   \Big) 
    \\
    \;\overset{(a)}{\leq}&\; 
    \exp\Big( - \mfrac{b N}{2} \msum^T_{s=1} s^{-\beta}   \Big) 
    \exp\Big( 
        a N \msum^{t_0}_{s=1} s^{-\kappa} 
    \Big) 
    \\
    \;\overset{(b)}{\leq}&\; 
    \exp\Big( - \mfrac{b N}{2} \, \varphi_{1-\beta}(T+1) + 2a N \,\varphi_{1-\kappa}(T+1) \Big)
    \;.
\end{align*}
In $(a)$ above, we have noted that $\frac{b}{2} \leq as^{-(\kappa-\beta)}$ for $s \leq t_0$; in $(b)$, we have used $t_0 \leq T$ and \cref{lem:varphi}(ii) with $a,b \geq 0$. In the special case $\beta=1 < \kappa$, the above yields 
\begin{align*}
    S_3 
    \;\leq&\; 
    (T+1)^{-\frac{bN}{2}}
    \exp\Big( 2 a N \, \mfrac{1-(T+1)^{-(\kappa-1)}}{\kappa-1} \Big)
    \\
    \;\leq&\;
    (T+1)^{-\frac{bN}{2}}
    e^{\frac{2aN}{\kappa-1}} 
    \;.
\end{align*}

\vspace{.5em}

We now control $S_2$. By \eqref{eq:prod:to:exp} again, we have  
\begin{align*}
    S_2
    \;\leq&\;
    \msum_{t=1}^{t_0}
    t^{-\zeta} 
    \exp\Big( 
        a N \msum_{s=t+1}^{t_0} s^{-\kappa} 
    \Big)
    \\
    \;\leq&\;
    \msum_{t=1}^T 
    t^{-\zeta} \exp\Big( 
        a N \msum_{s=t+1}^T s^{-\kappa} 
    \Big)
    \\
    \;\overset{(c)}{\leq}&\;
    \exp\big( 2 a N \varphi_{1-\kappa}(T+1) \big)
    \times 
    \Big( 
        \msum_{t=1}^T t^{-\zeta} \exp\big( 
            - 2a N \varphi_{1-\kappa}(t+1)
        \big)
    \Big)
    \\
    \;\overset{(d)}{\leq}&\;
    3^\zeta
    \exp\big( 2 a N \varphi_{1-\kappa}(T+1) \big)
    \,
        \msum_{t=1}^T(t+2)^{-\zeta} \exp\big( 
        - 2a N \varphi_{1-\kappa}(t+1)
        \big) 
    \;.
\end{align*}
In $(c)$ above, we have applied \cref{lem:varphi}(ii); in $(d)$, we have noted that $\sup_{t \in \N} (t+2)^\beta / t^\beta = 3^\beta$. If $\kappa \neq 1$ and $a > 0$, we can apply \cref{lem:varphi}(iv) to get that 
\begin{align*}
    S_2
    \;\leq&\; 
    \mfrac{3^{\zeta}}{2a N}
    (T+2)^{\max\{\kappa - \zeta, 0 \}}
    \,
    \exp\big( 2 a N \varphi_{1-\kappa}(T+1) \big)
    \\
    \;=&\;
    \mfrac{Q_2}{N (1+a+c)^{N-1}}\,
    \exp\big( 2 a N \varphi_{1-\kappa}(T+1) \big)
    \;.
\end{align*}
If $\kappa=1$ or $a=0$, the bound from $(c)$ above reads
\begin{align*}
    S_2
    \;\leq&\;
    \exp\big( 2 a N \varphi_{1-\kappa}(T+1) \big)
    \, \msum_{t=1}^T t^{-\zeta} (t+1)^{-2aN}
    \\
    \;\leq&\;
    \exp\big( 2 a N \varphi_{1-\kappa}(T+1) \big)
    \, \msum_{t=1}^T t^{-\zeta-2aN}
    \\
    \;\leq&\;
    2
    \,\varphi_{1-\zeta-2aN}(T+1)
    \, \exp\big( 2 a N \varphi_{1-\kappa}(T+1) \big)
    \;=\; 
    \mfrac{Q_2}{N (1+a)^{N-1}} 
    \, \exp\big( 2 a N \varphi_{1-\kappa}(T+1) \big)
    \;,
\end{align*}
where we have used \cref{lem:varphi}(ii) in the last line. Now consider the special case with $\zeta = 1 < \kappa $, the bound from $(d)$ becomes 
\begin{align*}
    S_2 
    \;\leq&\;
    3
    \exp\big( 2 a N \varphi_{1-\kappa}(T+1) \big)
    \Big( 
        \msum_{t=1}^T(t+2)^{-1} \exp\big( 
        - 2a N \varphi_{1-\kappa}(t+1)
        \big) 
    \Big)
    \\
    \;\leq&\;
    3
    \exp\Big( 2 a N \, \mfrac{1-(T+1)^{-(\kappa-1)}}{\kappa-1} \Big)
        \msum_{t=1}^T(t+2)^{-1} 
    \\
    \;\leq&\;
    3
    e^{\frac{2aN}{\kappa-1} } 
    \log(T+1)
    \;.
\end{align*}

\vspace{.5em}

We are left with controlling $S_1$, which follows from a similar strategy as controlling $S_2$:
\begin{align*}
    S_1 
    \;\overset{\eqref{eq:prod:to:exp}}{\leq}&\;
    \msum_{t=t_0+1}^{T} 
    t^{-\zeta}  
    \exp \Big( - \mfrac{b N }{2} \msum_{s=t+1}^T s^{-\beta} \Big)
    \\
    \;\leq&\;
    \msum_{t=1}^{T} t^{-\zeta}
    \exp\Big( - \mfrac{b N}{2} \msum_{s=t+1}^T s^{-\beta} \Big)
    \\
    \;\overset{(a)}{\leq}&\;
    \exp\Big( - \mfrac{b N }{2} \, \varphi_{1-\beta}(T+1) \Big)
    \msum_{t=1}^{T} t^{-\zeta}
    \exp\Big( \mfrac{b N}{2} \, \varphi_{1-\beta}(t+1) \Big)
    \tagaligneq \label{eq:S1:intermediate}
    \\
    \;\leq&\;
    4^\zeta
    \exp\Big( - \mfrac{b N}{2} \, \varphi_{1-\beta}(T+1) \Big)
    \msum_{t=1}^{T} (t+3)^{-\zeta}
    \exp\Big( \mfrac{b N }{2} \, \varphi_{1-\beta}(t+1) \Big)
    \;.
\end{align*}
In $(a)$ above, we used \cref{lem:varphi}(ii). For $\beta \neq 1$ and $b \neq 0$, we can apply \cref{lem:varphi}(iv) with $\frac{b}{2} > 0$ to obtain 
\begin{align*}
    S_1 
    \;\leq&\;
    4^\zeta
    \exp\Big( - \mfrac{b N }{2} \, \varphi_{1-\beta}(T+1) \Big)
    \mfrac{(T+3)^{\max\{\beta-\zeta, 0\}}}{b N /2} 
    \exp\Big( \, \mfrac{b N}{2} \varphi_{1-\beta}(T+3)  \Big)
    \\
    \;=&\;
    \mfrac{2^{2\zeta+1} (T+3)^{\max\{\beta-\zeta, 0\}}}{b N}
    \exp\Big( \mfrac{b N}{2(1-\beta)} \Big( (T+3)^{1-\beta} - (T+1)^{1-\beta} \Big) \Big) 
    \\
    \;\overset{(b)}{\leq}&\;
    \mfrac{2^{2\zeta+1} (T+3)^{\max\{\beta-\zeta, 0\}}}{b N}
    \exp\Big( \mfrac{b N}{2(1-\beta) (T+1)^\beta}
    \Big) 
    \;=\; \mfrac{Q_1}{N}
    \;.
\end{align*}
In $(b)$, we have used \cref{lem:varphi}(iii) with $1-\beta \leq 1$. Meanwhile, if $\beta=1$ or $b=0$, we have 
\begin{align*}
    S_1
    \;\leq&\;
    \exp\Big( - \mfrac{b N }{2} \, \varphi_{1-\beta}(T+1) \Big)
    \msum_{t=1}^{T} t^{-\zeta} (t+1)^{b N/2}
    \\
    \;\leq&\;
    \exp\Big( - \mfrac{b}{2} \, \varphi_{1-\beta}(T+1) \Big)
    \msum_{t=1}^{T} t^{-\zeta+bN/2}
    \\
    \;\leq&\;
    2 \varphi_{1-\zeta+ bN/2}(T+1)
    \exp\Big( - \mfrac{bN}{2} \, \varphi_{1-\beta}(T+1) \Big)
    \;=\; \mfrac{Q_1}{N}
    \;.
\end{align*}
For the special case with $\zeta =  \beta = 1$, the bound in \eqref{eq:S1:intermediate} becomes
\begin{align*}
    S_1 
    \;\leq&\; 
    \exp\Big( - \mfrac{b N }{2} \, \varphi_0(T+1) \Big)
    \msum_{t=1}^{T} t^{-1}
    \exp\Big( \mfrac{b N}{2} \, \varphi_0(t+1) \Big)
    \\
    \;=&\;
    (T+1)^{-\frac{bN}{2}}
    \msum_{t=1}^{T} t^{-1} (t+1)^{\frac{bN}{2}}
    \\
    \;\leq&\;
    (T+1)^{-\frac{bN}{2}} \msum_{t=1}^{T} t^{-(1-\frac{bN}{2})}
    \\
    \;\overset{(c)}{\leq}&\;
    2 
    (T+1)^{-\frac{bN}{2}} 
    \,
    \varphi_{\frac{bN}{2}}(T+1) 
    \;=\;
    2 (T+1)^{-\frac{bN}{2}} 
    \,
    \mfrac{(T+1)^{bN/2} - 1}{bN/2}
    \;\leq\; 
    \mfrac{4}{bN}\;.
\end{align*}
In $(c)$, we have used \cref{lem:varphi}(ii) for both the case $1-\frac{bN}{2} \leq 0$ and $1-\frac{bN}{2} \geq 0$.

\vspace{1em}

Combining the bounds for the general cases, we obtain the first desired inequality that 
\begin{align*}
    & \msum_{t=1}^T t^{-\zeta} \, \Big( \msum_{j=1}^N ( 1 - b t^{-\beta} + a t^{-\kappa}   )^{j-1}  \Big) \,  \mprod_{s=t+1}^T ( 1 - b s^{-\beta} + a s^{-\kappa}   )^N
    \\
    &\;\leq\;
    Q_1 
    + 
    \exp\Big( - \mfrac{b}{2} \, \varphi_{1-\beta}(T+1) + u \varphi_{1-\xi}(T+3)  + 4a \,\varphi_{1-\kappa}(T+1)  \Big) \, Q_2\;.
\end{align*}
For the special case $\zeta=\beta=1< \kappa, \gamma$, combining the earlier bounds gives 
\begin{align*}
    \msum_{t=1}^T t^{-\zeta} \, \Big( \msum_{j=1}^N  ( 1 - b t^{-\beta} + a t^{-\kappa} )^{j-1}  \Big) \,  \mprod_{s=t+1}^T ( 1 & - b s^{-\beta} + a s^{-\kappa} )^N
    \\
    &\;\leq\;
    \mfrac{4}{b}
    +
    \mfrac{3 N (1+a)^{N-1} \, e^{\frac{4aN}{\kappa-1}}  \, \log(T+1)}{(T+1)^{\frac{bN}{2}}}
    \;.
\end{align*}
\end{proof}

\subsection{Contraction and integrability results} 

The next result is a standard result in convex analysis,
needed to handle projections performed in Algorithms \ref{alg:online_cd} and \ref{alg:offline_cd}.
\begin{lemma}\label{lemma:contraction}
Let $ \Psi  $ a be convex subset of  $ \mathbb{R}^p $. 
Let $ {\psi}^{\star} \in \Psi $. Then, for all $ \psi \in \mathbb{R}^p $, we have:
\begin{equation*} 
\begin{aligned}
  \left \| \mathrm{Proj}_{\Psi}(\psi) - {\psi}^{\star} \right \| &\leq \left \| \psi - {\psi}^{\star} \right \|
\end{aligned}
\end{equation*}
\end{lemma}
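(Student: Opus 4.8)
The plan is to invoke the standard variational (obtuse-angle) characterization of the Euclidean projection onto a convex set. Write $\psi' \coloneqq \mathrm{Proj}_{\Psi}(\psi)$, which is well defined and unique in our setting since $\Psi$ is closed and convex (indeed compact under Assumption \ref{asst:SGD-A1}), by the Hilbert projection theorem. First I would establish the first-order optimality condition: for every $z \in \Psi$,
\begin{equation*}
\langle \psi - \psi',\, z - \psi' \rangle \;\leq\; 0.
\end{equation*}
This follows because, by convexity of $\Psi$, the point $\psi' + t(z - \psi')$ lies in $\Psi$ for all $t \in [0,1]$, so the function $t \mapsto \|\psi - \psi' - t(z - \psi')\|^2$ is minimized over $[0,1]$ at $t = 0$; its right derivative at $0$, equal to $-2\langle \psi - \psi',\, z - \psi'\rangle$, is therefore nonnegative.

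Next I would instantiate this inequality at $z = \psi^{\star}$, which is permitted since $\psi^{\star} \in \Psi$ by hypothesis, giving $\langle \psi - \psi',\, \psi^{\star} - \psi' \rangle \leq 0$. Then I would expand the squared target distance via the decomposition $\psi - \psi^{\star} = (\psi - \psi') + (\psi' - \psi^{\star})$:
\begin{equation*}
\|\psi - \psi^{\star}\|^2 \;=\; \|\psi - \psi'\|^2 + 2\langle \psi - \psi',\, \psi' - \psi^{\star}\rangle + \|\psi' - \psi^{\star}\|^2.
\end{equation*}
The cross term is $-2\langle \psi - \psi',\, \psi^{\star} - \psi'\rangle \geq 0$ by the previous display, so all three terms on the right-hand side are nonnegative; dropping the first two yields $\|\psi - \psi^{\star}\|^2 \geq \|\psi' - \psi^{\star}\|^2$, and taking square roots gives the claim.

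There is essentially no obstacle here: the argument is a two-line computation once the optimality condition is in hand. The only point deserving a word of care is the well-definedness and uniqueness of $\mathrm{Proj}_{\Psi}$, which is guaranteed by closedness and convexity of $\Psi$ and may be taken for granted given Assumption \ref{asst:SGD-A1}; strictly speaking the statement as phrased only needs $\Psi$ convex, but for the projection operator to be meaningful one implicitly assumes it is also closed, as in all our applications.
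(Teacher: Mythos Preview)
Your proposal is correct and follows essentially the same route as the paper: both expand $\|\psi-\psi^\star\|^2$ via the decomposition $\psi-\psi^\star=(\psi-\psi')+(\psi'-\psi^\star)$ and use the obtuse-angle inequality $\langle \psi-\psi',\,\psi^\star-\psi'\rangle\le 0$ at $\psi^\star\in\Psi$ to conclude. The only cosmetic difference is that the paper cites \cite[Proposition~1.1.9]{bertsekas2009convex} for the optimality condition, whereas you derive it directly from the minimality of $t\mapsto\|\psi-\psi'-t(z-\psi')\|^2$ at $t=0$.
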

\begin{proof}
  We have:
\begin{equation*} \label{lemma:projections-are-contractive}
\begin{aligned}
  \| \psi -  {\psi}^{\star} \|^2 &= \| \psi - \mathrm{Proj}_{\Psi}(\psi) + \mathrm{Proj}_{\Psi}(\psi)-  {\psi}^{\star} \|^2 \\
			  &= \| \psi - \mathrm{Proj}_{\Psi}(\psi) \|^2 + 
2 \left \langle  \psi - \mathrm{Proj}_{\Psi}(\psi), \mathrm{Proj}_{\Psi}(\psi) -  {\psi}^{\star} \right \rangle + \| \mathrm{Proj}_{\Psi}(\psi) -  {\psi}^{\star} \|^2 \\
\end{aligned}
\end{equation*}
Since by \cite[Proposition 1.1.9]{bertsekas2009convex}, we have:
\begin{equation*} 
\begin{aligned}
\left \langle \psi - \mathrm{Proj}_{\Psi}(\psi), \psi' - \mathrm{Proj}_{\Psi}(\psi) \right \rangle \leq  0
\end{aligned}
\end{equation*}
for all $ \psi' \in \Psi $, we can use this inequality at $ \psi' = {\psi}^{\star} \in \Psi $ to obtain:
\begin{equation*} 
\begin{aligned}
  \left \| \psi - {\psi}^{\star} \right \|^2 \geq  \left \| \psi - \mathrm{Proj}_{\Psi}(\psi) \right \|^2
\end{aligned}
\end{equation*}
and the result follows by taking the square root.
\end{proof}

We now state two lemmas that guarantee an amount of integrability sufficient to our analysis.
\begin{lemma}\label{lemma:mean-existence}
	Let $ p, q \in \mathcal  P(\mathcal  X) $ such that $ \frac{ \mathrm{d}p }{ \mathrm{d}q } $ exists, and such that
	$ \chi^2(p; q)< +\infty $. Assume that $ f \in L^2(q)$ Then $ |\int_{   }^{  }f \mathrm{d}p| < +\infty$.
\end{lemma}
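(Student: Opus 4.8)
The plan is to write the integral of $f$ against $p$ as an integral against $q$ weighted by the Radon--Nikodym derivative, and then apply the Cauchy--Schwarz inequality in $L^2(q)$. Concretely, since $\frac{\mathrm{d}p}{\mathrm{d}q}$ exists, we have $\int f \,\mathrm{d}p = \int f \,\frac{\mathrm{d}p}{\mathrm{d}q}\,\mathrm{d}q$, so that
\[
    \Big| \int f \,\mathrm{d}p \Big|
    \;\leq\;
    \int |f| \,\frac{\mathrm{d}p}{\mathrm{d}q}\,\mathrm{d}q
    \;\leq\;
    \Big( \int f^2 \,\mathrm{d}q \Big)^{1/2}
    \Big( \int \Big(\tfrac{\mathrm{d}p}{\mathrm{d}q}\Big)^2 \,\mathrm{d}q \Big)^{1/2}.
\]
The first factor is finite precisely because $f \in L^2(q)$.

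For the second factor, I would expand the square defining the chi-squared divergence: $\chi^2(p;q) = \int \big(\frac{\mathrm{d}p}{\mathrm{d}q} - 1\big)^2 \mathrm{d}q = \int \big(\frac{\mathrm{d}p}{\mathrm{d}q}\big)^2 \mathrm{d}q - 2\int \frac{\mathrm{d}p}{\mathrm{d}q}\,\mathrm{d}q + 1 = \int \big(\frac{\mathrm{d}p}{\mathrm{d}q}\big)^2 \mathrm{d}q - 1$, using $\int \frac{\mathrm{d}p}{\mathrm{d}q}\,\mathrm{d}q = p(\mathcal{X}) = 1$. Hence $\int \big(\frac{\mathrm{d}p}{\mathrm{d}q}\big)^2 \mathrm{d}q = \chi^2(p;q) + 1$, which is finite by hypothesis. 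Combining the two bounds gives $|\int f\,\mathrm{d}p| \leq \|f\|_{L^2(q)} \,(\chi^2(p;q)+1)^{1/2} < +\infty$, as required.

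There is essentially no obstacle here; the only point requiring a word of care is the interchange $\int f\,\mathrm{d}p = \int f\,\frac{\mathrm{d}p}{\mathrm{d}q}\,\mathrm{d}q$, which is just the defining property of the Radon--Nikodym derivative applied to the positive and negative parts of $f$ separately (both of which are integrable against $q$ since $f \in L^2(q) \subseteq L^1(q)$ when $q$ is a probability measure), together with the Cauchy--Schwarz inequality to control the size of $\int f^\pm \frac{\mathrm{d}p}{\mathrm{d}q}\,\mathrm{d}q$. I would state this explicitly but not belabor it.
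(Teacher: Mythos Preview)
Your proof is correct and follows essentially the same Cauchy--Schwarz-in-$L^2(q)$ approach as the paper. The only cosmetic difference is that the paper pairs $f$ with the centered density $\frac{\mathrm{d}p}{\mathrm{d}q}-1$ (obtaining $|\int f\,\mathrm{d}p - \int f\,\mathrm{d}q| < \infty$ and then invoking $f \in L^1(q)$), whereas you pair $f$ directly with $\frac{\mathrm{d}p}{\mathrm{d}q}$ after computing $\int(\frac{\mathrm{d}p}{\mathrm{d}q})^2\,\mathrm{d}q = \chi^2(p;q)+1$; your version is marginally more direct.
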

\begin{proof}
	By assumption, $ f \in L^2(q)$. Moreover, $ \chi^2(p, q)< +\infty $, and thus we have
	$ \frac{\mathrm{d}p}{\mathrm{d}q} - 1 \in L^2(q) $. Thus, the inner product is finite, and 
	\begin{equation*} 
	\begin{aligned}
		\Big|
            \mint_{  }^{  } f \Big(\mfrac{\mathrm{d}p}{\mathrm{d}q} - 1 \Big) \mathrm{d}q
        \Big| 
        &= 
        \Big| \mint_{  }^{  } f \mathrm{d} p - \int_{  }^{  } f \mathrm{d} q \Big|
        \coloneqq  M < +\infty  \\
		\implies M - \left \lvert \mint_{   }^{  }f \mathrm{d}q  \right \rvert  &< \int_{  }^{  } f \mathrm{d} p < M + \left \lvert  \mint_{   }^{  }f \mathrm{d} q \right \rvert
	\end{aligned}
	\end{equation*}
\end{proof}

\begin{lemma}
	For all $ \psi \in \Psi $, for all $ m \geq  1 $, and for all $ k \geq  1 $, we have:
	\begin{equation*} 
	\begin{aligned}
	\mathbb{ E } \left \lbrack \left \| \mathrm{P}^{m}_{\psi} \phi(X_1) \right \|^k \right \rbrack < +\infty\,.
	\end{aligned}
	\end{equation*}
\end{lemma}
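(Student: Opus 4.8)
The plan is to reduce the claim, via a change of measure, to a polynomial moment of $\phi$ under the \emph{invariant} measure $p_\psi$ of the Markov kernel, where finiteness is a standard consequence of regularity of the exponential family. Concretely, I would invoke Lemma~\ref{lemma:mean-existence} with $p = p_{\psi^\star}$, $q = p_\psi$, and the scalar function $f(x) \coloneqq \| \mathrm{P}^m_\psi \phi(x) \|^k$: since $X_1 \sim p_{\psi^\star}$ and $f \geq 0$, its conclusion $\big| \int f \, \mathrm{d}p_{\psi^\star} \big| < \infty$ is exactly the assertion $\mathbb{E}[\|\mathrm{P}^m_\psi\phi(X_1)\|^k] < \infty$. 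Of the two hypotheses of Lemma~\ref{lemma:mean-existence}, the condition $\chi^2(p_{\psi^\star}; p_\psi) < \infty$ (and hence $p_{\psi^\star} \ll p_\psi$) is immediate from Assumption~\ref{asst:SGD-A4} together with compactness of $\Psi$ (Assumption~\ref{asst:SGD-A1}), so the whole argument reduces to verifying $f \in L^2(p_\psi)$.

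To check $f \in L^2(p_\psi)$, recall that $\mathrm{P}^m_\psi$ is the $m$-step transition operator of $k_\psi$, so $\mathrm{P}^m_\psi \phi(x) = \mathbb{E}[\phi(K^m_\psi(x))]$ with $K^m_\psi(x) \sim k^m_\psi(x, \argdot)$. By Jensen's inequality (the map $v \mapsto \|v\|^{2k}$ is convex since $2k \geq 1$),
\begin{equation*}
\| \mathrm{P}^m_\psi \phi(x) \|^{2k} \;=\; \big\| \mathbb{E}[\phi(K^m_\psi(x))] \big\|^{2k} \;\leq\; \mathbb{E}\big[ \| \phi(K^m_\psi(x)) \|^{2k} \big].
\end{equation*}
Integrating against $p_\psi$ and using that $p_\psi$ is invariant for $k_\psi$ (so $K^m_\psi(X^\psi) \sim p_\psi$ whenever $X^\psi \sim p_\psi$) yields
\begin{equation*}
\int \| \mathrm{P}^m_\psi \phi(x) \|^{2k} \, p_\psi(\mathrm{d}x) \;\leq\; \int \mathbb{E}\big[ \| \phi(K^m_\psi(x)) \|^{2k} \big] \, p_\psi(\mathrm{d}x) \;=\; \mathbb{E}_{X^\psi \sim p_\psi}\big[ \| \phi(X^\psi) \|^{2k} \big],
\end{equation*}
so it remains only to show that the polynomial moments of $\| \phi(X^\psi) \|$ under $p_\psi$ are finite.

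This last point is classical for regular exponential families: since $\mathcal P_\psi$ is regular (Assumption~\ref{asst:SGD-A1}), its natural parameter domain $\mathcal D$ is open, and as $\Psi$ is a compact subset of $\mathcal D$ it lies in the interior of $\mathcal D$; hence every $\psi \in \Psi$ is an interior point, so $Z(\psi + t) < \infty$ for all $t$ in a neighbourhood of $0$, i.e.\ $\mathbb{E}_{p_\psi}[ e^{t^\top \phi(X^\psi)} ] = Z(\psi + t)/Z(\psi) < \infty$ there. Equivalently, $\log Z$ is $C^\infty$ on the interior of $\mathcal D$ with derivatives equal to the cumulants of $\phi$ (see \citep[Proposition~3.1]{wainwright2008graphical}), so all polynomial — indeed exponential — moments of $\| \phi(X^\psi) \|$ are finite; in particular $\mathbb{E}_{p_\psi}[\| \phi(X^\psi) \|^{2k}] < \infty$ (if $2k$ is not an even integer, bound $\|\phi\|^{2k} \leq 1 + \|\phi\|^{2\lceil k \rceil}$). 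Chaining the three steps gives the lemma. The one genuine subtlety in the argument is the mismatch between the law $p_{\psi^\star}$ of the data point $X_1$ and the invariant law $p_\psi$ of the Markov kernel — resolved precisely by the $\chi^2$-control of Assumption~\ref{asst:SGD-A4} through Lemma~\ref{lemma:mean-existence}; the remaining steps are routine applications of Jensen's inequality and invariance of $p_\psi$.
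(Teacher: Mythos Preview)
Your proof is correct and takes essentially the same approach as the paper: both use analyticity of $\log Z$ to get finiteness of all polynomial moments of $\phi$ under $p_\psi$, a Jensen/invariance argument to show the relevant function lies in $L^2(p_\psi)$, and then Lemma~\ref{lemma:mean-existence} together with the $\chi^2$ bound of Assumption~\ref{asst:SGD-A4} to transfer finiteness to the expectation under $p_{\psi^\star}$. The only cosmetic difference is that you apply Lemma~\ref{lemma:mean-existence} directly to $f = \|\mathrm{P}^m_\psi \phi\|^k$ (having first pushed Jensen inside), whereas the paper applies it to $\mathrm{P}^m_\psi \|\phi\|^k$ and then bounds $\|\mathrm{P}^m_\psi \phi\|^k$ by this afterwards.
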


\begin{proof}
	By analycity of the log partition function $ \psi  \longmapsto \log Z(\psi) $, we have $ \int_{  }^{  } \left \| \phi \right \|^k \mathrm d p_{ \psi}<  +\infty $ for all $ \psi \in \Psi $,
	and thus, the function $ x  \longmapsto \left \| \phi \right \|^{k}(x) \in L^2(p_{\psi})$ for all $ \psi $.
	Consequently, $ P_\psi^{m} \|\phi\|^k \in L^2( p_{ \psi}) $.
	We can apply Lemma \ref{lemma:mean-existence} to $ \mathrm{P}_\psi^{m} \left \| \phi \right \|^{k}$ to obtain $ \mathbb{ E }\left \lbrack  \mathrm{P}^{m} \|\phi(X_1)\|^k \right \rbrack < +\infty  $
	for all $ k \geq  1 $ and for all $ m \geq  0 $.
	As a by-product, we obtain  $ \mathrm{P}_{\psi}^{m} \left \| \phi \right \|^{k} \in L^2( p_{ {\psi}^{\star}}) $, and thus so  $ \big \|  \mathrm{P}^{m}_{\psi} \phi \big \|^k $.
\end{proof}
The following lemma is used multiple time in our analysis.
\begin{lemma}\label{lemma:approx-contraction}
	Assume \ref{asst:SGD-A5}. Let $q$ be a positive integer. Let $f \coloneqq (f_1, \dots, f_q)$ such that $ f_k \in \{ \phi_i \}_{i=1}^{p} \cup \{ \phi_i \phi_j \}_{i,j=1}^{p}$ for $k \in [q]$. Then, for all $\psi \in \Psi$, we have
	\begin{equation*} 
	\begin{aligned}
		\left \| \mathbb{ E } \left \lbrack \mathrm{P}^{m}_{ \psi}(f - \mathbb{ E }\big \lbrack f(X^{\psi}) \big \rbrack)(X_1) \right \rbrack \right \|  \leq \alpha^{m} C_{ \chi} \left (\mathbb{ E } \left \lbrack \big \| \big ( f - \mathbb{ E } \big \lbrack f(X^{\psi}) \big \rbrack \big )(X^{\psi}) \big \|^2 \right \rbrack  \right )^{1 / 2} \left \| \psi - {\psi}^{\star} \right \|
	\end{aligned}
	\end{equation*}
\end{lemma}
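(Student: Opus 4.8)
The plan is to prove the bound coordinate by coordinate and then reassemble the Euclidean norm over the $q$ components of $f$. Fix $k \in [q]$ and set $\bar f_k \coloneqq f_k - \mean[f_k(X^\psi)]$; by analyticity of $\log Z$ (which, as in the integrability lemmas above, gives $\phi_i,\phi_i\phi_j \in L^2(p_\psi)$), $\bar f_k$ lies in the subspace $L^2_0(p_\psi)$ of $p_\psi$-centered square-integrable functions. First I would rewrite the scalar $\mean[\mathrm{P}^m_\psi \bar f_k(X_1)]$ (with $X_1 \sim p_{\psi^\star}$) via a change of measure: since $\chi^2(p_{\psi^\star},p_\psi) < +\infty$ by \ref{asst:SGD-A4} we have $p_{\psi^\star}\ll p_\psi$, so $\mean[\mathrm{P}^m_\psi \bar f_k(X_1)] = \langle \mathrm{P}^m_\psi \bar f_k,\, \mathrm{d}p_{\psi^\star}/\mathrm{d}p_\psi\rangle_{L^2(p_\psi)}$. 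Because $p_\psi$ is invariant under $k_\psi$, the operator $\mathrm{P}^m_\psi$ maps $L^2_0(p_\psi)$ into itself, so $\mathrm{P}^m_\psi\bar f_k$ is itself $p_\psi$-centered and $\langle \mathrm{P}^m_\psi\bar f_k, 1\rangle_{L^2(p_\psi)} = 0$; subtracting this term and applying Cauchy--Schwarz gives $\big|\mean[\mathrm{P}^m_\psi\bar f_k(X_1)]\big| \le \big\| \mathrm{P}^m_\psi\bar f_k\big\|_{L^2(p_\psi)}\,\big\| \mathrm{d}p_{\psi^\star}/\mathrm{d}p_\psi - 1\big\|_{L^2(p_\psi)}$.

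Next I would bound the two factors separately. The second factor is exactly $\chi(p_{\psi^\star},p_\psi)$, since $\| \mathrm{d}p_{\psi^\star}/\mathrm{d}p_\psi - 1\|_{L^2(p_\psi)}^2 = \chi^2(p_{\psi^\star},p_\psi) \le C_\chi^2\|\psi-\psi^\star\|^2$ by \ref{asst:SGD-A4}. For the first factor the target is $\| \mathrm{P}^m_\psi\bar f_k\|_{L^2(p_\psi)} \le \alpha^m\| \bar f_k\|_{L^2(p_\psi)}$: this is precisely the statement that the restricted spectral gap of \ref{asst:SGD-A5} controls the $m$-step $L^2(p_\psi)$-contraction of $k_\psi$ along $\bar f_k$, and is where one invokes that $f_k \in \{\phi_i\}\cup\{\phi_i\phi_j\}$ so that $\alpha(f_k,\psi)\le\alpha$. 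Granting this, $\big|\mean[\mathrm{P}^m_\psi\bar f_k(X_1)]\big| \le \alpha^m C_\chi\|\psi-\psi^\star\|\,\| \bar f_k\|_{L^2(p_\psi)}$. Squaring, summing over $k$, using $\sum_{k=1}^q \| \bar f_k\|_{L^2(p_\psi)}^2 = \mean\big[\|(f-\mean[f(X^\psi)])(X^\psi)\|^2\big]$, and taking a square root yields exactly the claimed inequality.

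The main obstacle is the first-factor bound $\| \mathrm{P}^m_\psi\bar f_k\|_{L^2(p_\psi)} \le \alpha^m\| \bar f_k\|_{L^2(p_\psi)}$. Assumption \ref{asst:SGD-A5} only controls the one-step ratio $\|\mathrm{P}_\psi\bar f_k\|_{L^2(p_\psi)}/\|\bar f_k\|_{L^2(p_\psi)}$ for $\bar f_k$ a centered sufficient statistic, while $\mathrm{P}_\psi\bar f_k$ need not again be of that form, so a naive iteration only delivers a single factor $\alpha$; extracting $\alpha^m$ requires using the reversibility / self-adjointness of $k_\psi$ on $L^2(p_\psi)$ (or an auxiliary $m$-step contraction lemma of the type collected in Appendix \ref{appendix:tools}), and this is the genuinely delicate point. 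The remaining technicalities --- that $\mathrm{P}^m_\psi\bar f_k \in L^2(p_\psi)$, and that the change of measure and the identity $\langle\mathrm{P}^m_\psi\bar f_k,1\rangle_{L^2(p_\psi)} = 0$ are licit --- follow from $\|\mathrm{P}_\psi\|_{L^2(p_\psi)\to L^2(p_\psi)}\le 1$, the invariance of $p_\psi$, \ref{asst:SGD-A4}, and the integrability lemmas already established above.
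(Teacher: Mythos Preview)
Your approach is essentially identical to the paper's: work componentwise, use invariance of $p_\psi$ to replace $\mathrm{d}p_{\psi^\star}/\mathrm{d}p_\psi$ by $\mathrm{d}p_{\psi^\star}/\mathrm{d}p_\psi-1$, apply Cauchy--Schwarz in $L^2(p_\psi)$, bound one factor by $\chi^2$ via \ref{asst:SGD-A4} and the other by the $m$-step contraction via \ref{asst:SGD-A5}, then reassemble the Euclidean norm. The paper writes this as a single chain of inequalities on the squared norm rather than in inner-product language, but the content is the same.

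Your flagged ``main obstacle'' --- passing from the one-step ratio $\alpha(f_k,\psi)\le\alpha$ of \ref{asst:SGD-A5} to $\|\mathrm{P}^m_\psi\bar f_k\|_{L^2(p_\psi)}\le\alpha^m\|\bar f_k\|_{L^2(p_\psi)}$ --- is a legitimate concern, and the paper's proof does not resolve it either: step $(c)$ there simply invokes \ref{asst:SGD-A5} to obtain the $\alpha^{2m}$ factor without further argument. So you have not missed anything the paper supplies; you have in fact been more careful in isolating the point where an additional hypothesis (reversibility of $k_\psi$, or a genuine $m$-step version of the restricted gap) is implicitly being used.
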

\begin{proof}
	Let us note first that 
	\begin{equation*} 
	\begin{aligned}
		\MoveEqLeft \| \mathbb{ E } \mathrm{P}^{m}_{ \psi} \left \lbrack \big (f - \mathbb{ E }\big \lbrack f(X^{\psi}) \big \rbrack \big )(X_1) \right \rbrack \|^2  \\
	&\overset{(a)}{=}  
    \msum\limits_{ i=1 }^{ q } \left (  \mint \! \mathrm{P}^{m}_{ \psi}\big (f_i - \mathbb{ E }\big \lbrack f_i(X^{\psi}) \big \rbrack \big )(x) \left (p_{ {\psi}^{\star}}(x) - p_{ \psi}(x) \right ) \text{d}x \right )^2\\ 
	&=  
    \msum\limits_{ i=1 }^{ q } \left ( \mint \! \mathrm{P}^{m}_{ \psi}\big (f_i - \mathbb{ E }\big \lbrack f_i(X^{\psi}) \big \rbrack \big )(x) \left (\mfrac{  \mathrm{d} p_{ {\psi}^{\star}} }{ \mathrm{d} p_{ \psi} }(x) - 1\right) p_{ \psi}(x) \text{d}x \right )^2\\
	&\overset{(b)}{\leq}
    \left (\mint \left (\mfrac{\mathrm{d}p_\psi^\star}{\mathrm{d}p_\psi}(x) - 1\right )^2 p_\psi(x) \mathrm{d} x \right )
    \msum\limits_{ i=1 }^{ q } \mint \! \mathrm{P}^{m}_{ \psi}\big (f_i - \mathbb{ E }\big \lbrack f_i(X^{\psi}) \big \rbrack \big )(x)^2 p_{ \psi}(x) \text{d}x   \\
	&\leq  
    \chi^2(p_{ \psi}, p_{ {\psi}^{\star}})
    \msum\limits_{ i=1 }^{ q } \big \| \mathrm{P}^{m}_{ \psi}(f_i - \mathbb{ E }\big \lbrack f_i(X^{\psi}) \big \rbrack) \big   \|_{L^2(p_{\psi})} \\
	&\overset{(c)}{\leq}
    \alpha^{2m} \chi^2(p_{ \psi}, p_{ {\psi}^{\star}}) 
    \msum\limits_{ i=1 }^{ q } \big \| f_i - \mathbb{ E }\big \lbrack f_i(X^{\psi}) \big \rbrack \big   \|_{L^2(p_{ \psi})(\mathbb R^d)} \\
	&\overset{(d)}{\leq}  \alpha^{2m} C_{ \chi}^2 \left \| \psi - {\psi}^{\star} \right \|^2  \mathbb{ E } \big \lbrack \big \| \big (f - \mathbb{ E } \big \lbrack f(X^{\psi}) \big \rbrack \big )(X^{\psi}) \big \|^2 \big \rbrack
    \;.
	\end{aligned}
	\end{equation*}
Here, we used the fact that $P^m_\psi$ admits $p_\psi$ as an invariant measure in $(a)$ \citep[Eq. (1.2.2)]{bakry2014analysis}, the Cauchy-Schwarz inequality in $(b)$
\end{proof}

\subsection{Miscellaneous}

\begin{lemma} \label{lem:replace:gradient:inequality} Let $f: \Psi \rightarrow \R^p$ be a differentiable function in the interior of $\Psi \subseteq \R^p$. For $\psi \in \Psi$, define $\sigma_{\rm min}(\psi) \coloneqq \minf_{ \theta \in \Psi, \|\theta\|=1} \theta^\top \nabla f(\psi) \theta$ and  $\sigma_{\rm max}(\psi) \coloneqq \msup_{ \theta \in \Psi, \|\theta\|=1} \theta^\top \nabla f(\psi) \theta$ with respect to the Jacobian matrix $\nabla f(\psi)$. Then for any $\psi_1, \psi_2 \in \Psi$, we have that 
\begin{align*}
    \minf_{\psi \in \Psi} \sigma_{\rm min}(\psi)
    \;\leq\; 
    (\psi_1 - \psi_2)^\top \big( f(\psi_1) - f(\psi_2) \big) 
    \;\leq\; 
    \msup_{\psi \in \Psi} \sigma_{\rm max}(\psi)
\end{align*}
\end{lemma}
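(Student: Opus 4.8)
The plan is to reduce the claim to a one–dimensional estimate by integrating the Jacobian $\nabla f$ along the straight segment joining $\psi_2$ to $\psi_1$. First I would dispose of the degenerate case $\psi_1 = \psi_2$, in which all three quantities vanish. Otherwise, set $\theta \coloneqq (\psi_1 - \psi_2)/\norm{\psi_1 - \psi_2}$, a unit vector, and $\gamma(s) \coloneqq \psi_2 + s(\psi_1 - \psi_2)$ for $s \in [0,1]$. Since $\Psi$ is convex (\cref{asst:SGD-A1}), the whole segment $\{\gamma(s) : s \in [0,1]\}$ stays inside $\Psi$, so $\nabla f$ is available along it.

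Next I would invoke the fundamental theorem of calculus for the scalar map $s \mapsto (\psi_1 - \psi_2)^\top f(\gamma(s))$, whose derivative is $(\psi_1 - \psi_2)^\top \nabla f(\gamma(s)) (\psi_1 - \psi_2)$. This gives
\begin{align*}
    (\psi_1 - \psi_2)^\top \big( f(\psi_1) - f(\psi_2) \big)
    \;=\;
    \norm{\psi_1 - \psi_2}^2 \int_0^1 \theta^\top \nabla f(\gamma(s)) \, \theta \; \mathrm{d}s \;.
\end{align*}
For each fixed $s$, the point $\gamma(s)$ lies in $\Psi$ and $\theta$ is a unit vector, so by the very definitions of $\sigma_{\rm min}$ and $\sigma_{\rm max}$ we have $\minf_{\psi \in \Psi} \sigma_{\rm min}(\psi) \le \theta^\top \nabla f(\gamma(s)) \, \theta \le \msup_{\psi \in \Psi} \sigma_{\rm max}(\psi)$. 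Integrating this sandwich over $s \in [0,1]$, where $\int_0^1 \mathrm{d}s = 1$, then yields the stated two–sided bound (the factor $\norm{\psi_1-\psi_2}^2$ being the natural normalisation carried along by the identity above).

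I do not anticipate a genuine obstacle: this is a routine mean–value–type estimate. The one point needing a word of care is that $f$ is assumed differentiable only on the \emph{interior} of $\Psi$, so if $\psi_1$ or $\psi_2$ lies on the boundary the derivative above need not exist at $s \in \{0,1\}$. I would handle this either by appealing to continuity of $f$ on $\Psi$ together with differentiability on the open segment and passing to the limits $s \to 0^+$, $s \to 1^-$ inside the integral, or — as suffices for every application of the lemma in this paper, where $f = \nabla \mathcal{L}$ — by using that $\nabla \mathcal{L}$ extends as a $C^\infty$ map to the open natural parameter domain $\mathcal{D} \supseteq \Psi$ (\citep[Proposition 3.1]{wainwright2008graphical}), so the integrand is continuous on all of $[0,1]$ and the FTC step is unconditional.
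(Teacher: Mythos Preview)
Your proposal is correct and essentially the same as the paper's: both reduce to a one-dimensional mean-value estimate along the segment from $\psi_2$ to $\psi_1$, and both carry the factor $\norm{\psi_1-\psi_2}^2$ that is implicit in the lemma's intended use. The only cosmetic difference is that the paper invokes the scalar mean value theorem to produce a single point $a\in(0,1)$ where $(\psi_1-\psi_2)^\top\nabla f(a\psi_1+(1-a)\psi_2)(\psi_1-\psi_2)$ equals the inner product, whereas you integrate over the whole segment via the fundamental theorem of calculus; your version has the minor advantage of handling the boundary/differentiability caveat more transparently.
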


\begin{proof}[Proof of \cref{lem:replace:gradient:inequality}] By the mean value theorem, there exists some $a \in (0,1)$ such that 
\begin{align*}
    (\psi_1 - \psi_2)^\top \big( f(\psi_1) - f(\psi_2) \big) 
    \;=&\;
    (\psi_1 - \psi_2)^\top \big( f( 1 \times \psi_1 + 0 \times \psi_2) - f(0 \times \psi_1 + 0 \times \psi_2) \big) 
    \\
    \;=&\;
    (\psi_1 - \psi_2)^\top \nabla f( a\psi_1 + (1-a)\psi_2  ) (\psi_1 - \psi_2)
    \\
    \;=&\;
    \| \psi_1 - \psi_2 \|^2 \, 
    \mfrac{(\psi_1 - \psi_2)^\top}{\| \psi_1 - \psi_2 \|} \,\nabla f( a\psi_1 + (1-a)\psi_2  ) \,
    \mfrac{\psi_1 - \psi_2}{\| \psi_1 - \psi_2 \|}
    \;.
\end{align*}
Plugging in the definition of $\sigma_{\rm max}$ gives the desired upper bound and similarly $\sigma_{\rm min}$ implies the lower bound. 
\end{proof}

\allowdisplaybreaks

\newcommand{\cdPpsi}{\mathrm{P}_{\psi}^{m}}
\newcommand{\cdP}[1]{\mathrm{P}_{#1}^{m}}
\newcommand{\Xpsi}{\cramped{X^{\psi}}}
\newcommand{\smallpow}[1]{\vphantom{a}^{#1}}
\newcommand{\smallsub}[1]{\vphantom{a}_{#1}}

\providecommand\given{}
\DeclarePairedDelimiter\cdnorm{\lVert}{\rVert}
\DeclarePairedDelimiter\cdabs{\lvert}{\rvert}
\DeclarePairedDelimiter\cdinner{\langle}{\rangle}
\DeclarePairedDelimiter\cdpar{(}{)}
\DeclarePairedDelimiterXPP\cdPr[1]{\mathbb{P}}{\lbrack}{\rbrack}{}{#1}
\DeclarePairedDelimiterXPP\cdE[1]{\mathbb{E}}{\lbrack}{\rbrack}{}{#1}
\DeclarePairedDelimiterXPP\cdCov[1]{\mathrm{Cov}}{\lbrack}{\rbrack}{}{#1}

\DeclarePairedDelimiterXPP\cdCondCov[1]{\mathrm{Cov}}{\lbrack}{\rbrack}{}{
\renewcommand\given{\nonscript\:\delimsize\vert\nonscript\:\mathopen{}}
#1}
\DeclarePairedDelimiterXPP\cdCondE[1]{\mathbb{E}}{\lbrack}{\rbrack}{}{
\renewcommand\given{\nonscript\:\delimsize\vert\nonscript\:\mathopen{}}
#1}

\section{Proofs for Online CD}\label{sec:proofs-online-CD}
\subsection{Auxiliary Lemmas for Online CD}

We recall the following notations:
$ \sigma_{\psi} \coloneqq \cdE[\big]{\cdnorm{\big (\phi - \cdE[\big]{\phi(\Xpsi)} \big )(\Xpsi)}^2} $,  
as well as
$ \sigma_{\star} \coloneqq \sigma_{{\psi}^{\star}}$ and $ \sigma \coloneqq \sup_{\psi \in \Psi} \sigma_{\psi} $.

We now provide two intermediary lemmas necessary to analyze the impact of variance in the CD gradient.
The strategy is similar in both of them: we change the integration from $ p_{{\psi}^{\star}} $ to $ p_{\psi} $
to obtain contraction, at the cost of an additional term scaling with $ C_{\chi}\left \| \psi - {\psi}^{\star} \right \| $.
We obtain exact constants that we choose to describe in terms of the smoothness parameters of the problem, e.g. 
the $ k^{th} $ derivatives of the log partition function $ \log Z $, which, for $ k \geq  2 $, equals
the $ k^{th} $ derivative of the negative cross-entropy model w.r.t $ p_{{\psi}^{\star}} $.

\paragraph{Second Moment convergence} 
The following lemmas guarantee the second moment of a sample from $ (K^{m}_{\psi} )_{\#} p_{{\psi}^{\star}} $
approaches the second moment of a sample from the target distribution $ p_{\psi} $.

\begin{lemma} \label{lemma:mean-convergence-second-moment} Under \ref{asst:SGD-A1}, \ref{asst:SGD-A4} and \ref{asst:SGD-A5},
  for all $ \psi \in \Psi $, we have:
\begin{equation*}
\begin{aligned}
  \cdabs*{\cdE{\cdPpsi \|\phi(X_1)\|^2} - \cdE{\|\phi(\Xpsi)\|^2}}
  \; & \; \leq  \alpha^{m} C_{\chi}  \cdnorm*{ \psi - {\psi}^{\star}} \cdnorm{\log Z}_{1, \infty} \,,
\end{aligned}
\end{equation*}
where 
\begin{multline}
 \cdnorm*{\log Z}_{1, \infty}
 \coloneqq \sup_{\psi \in \Psi}\sum_{i=1}^{p} 
  \left (
    4 \partial^{1}_{i} \log Z(\psi)^2 \partial^{2}_{i} \log Z(\psi) 
    + 2 \partial^{2}_{i} \log Z(\psi)^2 \right . \\ 
  \left . 
    \,+\, 4 \partial^{1}_{i} \log Z(\psi) \partial^{3}_{i} \log Z(\psi) 
    + \partial^{4}_{i} \log Z(\psi) 
  \right )^{1 / 2}
  < +\infty \,.
\end{multline}
\end{lemma}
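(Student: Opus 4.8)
The plan is to obtain the statement as a specialization of \cref{lemma:approx-contraction} to the quadratic statistics $\phi_i^2$, with the leading constant made explicit through a moment computation. Write $\|\phi\|^2 = \sum_{i=1}^p \phi_i^2$; since $\mathrm{P}^m_\psi$ is linear, $\mathrm{P}^m_\psi \|\phi\|^2 = \sum_{i=1}^p \mathrm{P}^m_\psi \phi_i^2$, and each $\phi_i^2 = \phi_i \phi_i$ belongs to $\{\phi_j \phi_k\}_{j,k=1}^p$, so all these quantities are finite and integrable against $p_{\psi^\star}$ by the integrability lemmas above (and $\E[\|\phi(X^\psi)\|^2] < \infty$ by analyticity of $\log Z$). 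Because $\mathrm{P}^m_\psi$ sends constant functions to themselves, for each $i$ we have $\E[\mathrm{P}^m_\psi \phi_i^2(X_1)] - \E[\phi_i(X^\psi)^2] = \E[\mathrm{P}^m_\psi(\phi_i^2 - \E[\phi_i(X^\psi)^2])(X_1)]$, which is exactly the scalar object bounded by \cref{lemma:approx-contraction} with $q=1$ and $f = \phi_i^2$. Summing over $i$ and using the triangle inequality,
\begin{equation*}
\Big| \E\big[\mathrm{P}^m_\psi\|\phi\|^2(X_1)\big] - \E\big[\|\phi(X^\psi)\|^2\big] \Big| \;\le\; \sum_{i=1}^p \Big| \E\big[\mathrm{P}^m_\psi(\phi_i^2 - \E[\phi_i(X^\psi)^2])(X_1)\big] \Big| \;\le\; \alpha^m C_\chi \|\psi - \psi^\star\| \sum_{i=1}^p \Var[\phi_i(X^\psi)^2]^{1/2}.
\end{equation*}

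It then remains to show $\sum_{i=1}^p \Var[\phi_i(X^\psi)^2]^{1/2} \le \|\log Z\|_{1,\infty}$. I would do this with the standard exponential-family cumulant identities $\partial_i^1 \log Z(\psi) = \E[\phi_i(X^\psi)]$, $\partial_i^2 \log Z(\psi) = \Var[\phi_i(X^\psi)]$, $\partial_i^3 \log Z(\psi)$ equals the third central moment of $\phi_i(X^\psi)$, and $\partial_i^4 \log Z(\psi) = \E[(\phi_i(X^\psi) - \E[\phi_i(X^\psi)])^4] - 3\,\partial_i^2 \log Z(\psi)^2$. Expanding $\Var[\phi_i^2] = \E[\phi_i^4] - \E[\phi_i^2]^2$ around the mean $\E[\phi_i]$ and collecting central moments gives, after substituting these identities, precisely $4\,\partial_i^1 \log Z(\psi)^2\,\partial_i^2 \log Z(\psi) + 2\,\partial_i^2 \log Z(\psi)^2 + 4\,\partial_i^1 \log Z(\psi)\,\partial_i^3 \log Z(\psi) + \partial_i^4 \log Z(\psi)$, which is the $i$-th summand defining $\|\log Z\|_{1,\infty}$; bounding the sum by its supremum over $\Psi$ finishes the inequality. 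Finiteness of $\|\log Z\|_{1,\infty}$ follows from \ref{asst:SGD-A1}: $\mathcal D$ is open and $\log Z$ is (real-)analytic on $\mathcal D \supseteq \Psi$, so its partial derivatives are continuous on the compact set $\Psi$ and the extreme value theorem applies; moreover each summand, being a variance, is nonnegative, so the square roots are real.

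The only step requiring actual computation is the cumulant bookkeeping that converts $\Var[\phi_i(X^\psi)^2]$ into derivatives of $\log Z$; the rest is a direct invocation of \cref{lemma:approx-contraction} together with the linearity and constant-preserving property of $\mathrm{P}^m_\psi$, so I do not anticipate a genuine obstacle here.
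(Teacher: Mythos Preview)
Your proposal is correct and follows essentially the same approach as the paper: apply \cref{lemma:approx-contraction} componentwise to $f=\phi_i^2$, recognize the resulting constant as $\Var[\phi_i(X^\psi)^2]^{1/2}$, convert this variance into derivatives of $\log Z$ via the moment--cumulant identities, then sum over $i$ and take the supremum over $\Psi$. The paper's proof is structured identically, including the same cumulant bookkeeping and the same finiteness argument from analyticity and compactness.
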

\begin{proof}
Applying Lemma \ref{lemma:approx-contraction} to each $ f_i  \coloneqq  \phi_i^2 $, we have
\begin{align*}
  \cdabs*{\cdE[\big]{\cdPpsi \phi_i(X_1)^2} - \cdE[\big]{\phi_i(\Xpsi)^2}}
  &\;=\;
    \alpha^{m} C_{\chi}  \cdnorm*{\psi - {\psi}^{\star}} 
    \cdpar[\Big]{\cdE[\big]{\cdpar*{\phi_i(\Xpsi)^2 - \cdE[\big]{\phi_i(\Xpsi)^2}}^2}}^{1/2} \\
  & \;=\; 
    \alpha^{m} C_{\chi} \cdnorm*{\psi - {\psi}^{\star}} 
    \cdpar[\Big]{\cdE[\big]{\phi_i(\Xpsi)^4} - \cdpar[\big]{\cdE[\big]{\phi_i^2(\Xpsi)}}^2}^{1/2} \\
\end{align*}
We map the two moments to derivatives of $ \log Z(\psi) $, since the $ k^{th} $ derivative of $ \log Z(\psi) $ is the $ k^{th} $ cumulant.
It can be shown, using the multivariate moment to cumulant mapping, that
\begin{align*}
  \MoveEqLeft \cdE*{\phi_i(\Xpsi)^4}\\ 
  & \;=\; \mfrac{\partial \log Z}{\partial \psi_i}^4 
    + 6 \mfrac{\partial \log Z}{\partial \psi_i}^2 \mfrac{\partial^2 \log Z}{\partial \psi_i^2} 
    + 3 \cdpar*{\mfrac{\partial^2 \log Z}{\partial \psi_i^2}}^2 
    + 4 \mfrac{\partial \log Z}{\partial \psi_i} \mfrac{\partial^3 \log Z}{\partial \psi_i^3} 
    + \mfrac{\partial^4 \log Z}{\partial \psi_i^4} \\
  & \;=\; \partial^{1}_{i} \log Z(\psi)^4 
    +6 \partial^{1}_{i} \log Z(\psi)^2 \partial^{2}_{i} \log Z(\psi) 
    + 3 \partial^{2}_{i} \log Z(\psi)^2 
    + 4 \partial^{1}_{i} \log Z(\psi) \partial^{3}_{i} \log Z(\psi) \\
  & \quad + \partial^{4}_{i} \log Z(\psi)
\end{align*}
where $ \partial^{k}_{i} \log Z(\psi) $ denotes the $ k^{th} $ derivative of $ \log Z $ with respect to $ \psi_i $.
On the other hand, 
\begin{align*} 
  \cdE*{\phi_i(\Xpsi)^2}&= \partial^{1}_{i} \log Z(\psi)^2 + \partial^{2}_{i} \log Z(\psi) \\
  \implies \cdpar*{\cdE*{\phi_i(\Xpsi)^{2}}}^2 &= 
    \partial^{1}_{i} \log Z(\psi)^4 
    + 2 \partial^{1}_{i} \log Z(\psi)^2 \partial^{2}_{i} \log Z(\psi) + \partial^{2}_{i} \log Z(\psi)^2
\end{align*}

\begin{align*} 
  \MoveEqLeft \cdE{\phi_i(\Xpsi)^{4}} -  \cdpar*{\mathbb{E} \phi_i(\Xpsi)^2}^2 \\ 
  &=  
  4 \partial^{1}_{i} \log Z(\psi)^2 \partial^{2}_{i} \log Z(\psi) 
    + 2 \partial^{2}_{i} \log Z(\psi)^2 
    + 4 \partial^{1}_{i} \log Z(\psi) \partial^{3}_{i} \log Z(\psi) \\
  &\qquad
    + \partial^{4}_{i} \log Z(\psi) 
\end{align*}
The result follows by summing over $ i $, since:

\begin{align*} 
  \cdabs*{\cdE*{\cdPpsi  \cdnorm*{\phi(X_1)}^2} - \cdE*{\|\phi(\Xpsi)\|^2}}  
  \;\leq\; \msum\limits_{i=1}^{d} \,
  \cdabs*{\cdE*{\cdPpsi \phi_i(X_1)^2} - \cdE*{\phi_i(\Xpsi)^2}}
\end{align*}

Note that $ \cdnorm{\log Z}_{1, \infty} $ is finite since $ \Psi $ is compact and $ \log Z $ is analytic.
\end{proof}

\paragraph{Squared First Moment convergence} 
The next lemma provides convergence (in squared absolute value) of the first moment of the 
$ m $-iterated Markov kernel $ k^{m}_{\psi} $. 
\begin{lemma}\label{lemma:quadratic-convergence-first-moment}
Under \ref{asst:SGD-A1}, \ref{asst:SGD-A4} and \ref{asst:SGD-A5},
for all $ \psi \in \Psi $, we have

\begin{equation*} 
  \cdabs*{
    \cdE*{\cdnorm*{\cdPpsi \phi(X_1)}^2} 
    - {\cdE[\big]{\phi(\Xpsi)}}^2  
  } \leq
  \alpha^{2m} \sigma_{\psi}^2  
    + C_{\chi}\alpha^{m / 2}  \cdnorm*{\log Z}_{2, \infty}  \cdnorm*{\psi - {\psi}^{\star}}  
\end{equation*}
where
\begin{equation*} 
   \cdnorm*{\log Z}_{2, \infty} \;\coloneqq\; \sup_{\psi \in \Psi} \msum\limits_{i=1}^{p}  
    \cdpar*{
	F(\psi) \partial^2_i \log Z(\psi)}^{1 / 4}  
	+ 2  \cdabs*{\partial^1_i \log Z(\psi)\partial^2_{i} \log Z(\psi)}^{1  / 2} 
\end{equation*}
and 
\begin{equation*} 
F(\psi) \; \coloneqq \;  15 \partial^{2}_{i} \log Z(\psi)^3 
	    + 10 \partial^{3}_{i} \log Z(\psi)^2 
	    + 15 \partial^{2}_{i} \log Z(\psi) \partial^{4}_{i} \log Z(\psi) 
	    + \partial^{6}_{i} \log Z(\psi)\,.
\end{equation*}
\end{lemma}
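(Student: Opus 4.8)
The plan is to route the comparison through the invariant measure $p_{\psi}$. Write $\bar\phi \coloneqq \mathbb{E}[\phi(X^{\psi})]$, so the subtracted quantity is $\|\bar\phi\|^2 = \sum_{i=1}^{p}(\partial^{1}_{i}\log Z(\psi))^2$, and split
\[
  \mathbb{E}_{p_{\psi^{\star}}}\bigl[\|\mathrm{P}^{m}_{\psi}\phi(X_1)\|^2\bigr] - \|\bar\phi\|^2
  \;=\;
  \underbrace{\int \|\mathrm{P}^{m}_{\psi}\phi\|^2 \,(\mathrm{d}p_{\psi^{\star}} - \mathrm{d}p_{\psi})}_{(\mathrm I)}
  \;+\;
  \underbrace{\Bigl(\int \|\mathrm{P}^{m}_{\psi}\phi\|^2 \,\mathrm{d}p_{\psi} - \|\bar\phi\|^2 \Bigr)}_{(\mathrm{II})}.
\]
Term $(\mathrm{II})$ is immediate: since $p_{\psi}$ is $\mathrm{P}^{m}_{\psi}$-invariant, $\int \mathrm{P}^{m}_{\psi}\phi_i\,\mathrm{d}p_{\psi} = \bar\phi_i$, hence $(\mathrm{II}) = \sum_{i} \|\mathrm{P}^{m}_{\psi}(\phi_i - \bar\phi_i)\|_{L^2(p_{\psi})}^2$, and the restricted spectral gap \cref{asst:SGD-A5}, used exactly as in step $(c)$ of the proof of \cref{lemma:approx-contraction}, gives $(\mathrm{II}) \le \alpha^{2m}\sum_{i} \operatorname{Var}_{p_{\psi}}(\phi_i) = \alpha^{2m}\sigma_{\psi}^2$.

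The substance is in $(\mathrm I)$. I would expand coordinatewise $(\mathrm{P}^{m}_{\psi}\phi_i)^2 = (\mathrm{P}^{m}_{\psi}\phi_i - \bar\phi_i)^2 + 2\bar\phi_i(\mathrm{P}^{m}_{\psi}\phi_i - \bar\phi_i) + \bar\phi_i^2$ and integrate against $\mathrm{d}p_{\psi^{\star}} - \mathrm{d}p_{\psi}$ termwise. The constant $\bar\phi_i^2$ contributes nothing. The linear term, after dropping its vanishing $p_{\psi}$-part (invariance), equals $2\bar\phi_i\,\mathbb{E}_{p_{\psi^{\star}}}[\mathrm{P}^{m}_{\psi}(\phi_i - \bar\phi_i)(X_1)]$, which is precisely what \cref{lemma:approx-contraction} bounds with $f = \phi_i$: at most $2|\bar\phi_i|\,\alpha^{m} C_{\chi} (\operatorname{Var}_{p_{\psi}}\phi_i)^{1/2}\|\psi - \psi^{\star}\|$. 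For the quadratic term $\int (\mathrm{P}^{m}_{\psi}\phi_i - \bar\phi_i)^2\,(\mathrm{d}p_{\psi^{\star}} - \mathrm{d}p_{\psi})$ I would write $\mathrm{d}p_{\psi^{\star}} - \mathrm{d}p_{\psi} = (\mathrm{d}p_{\psi^{\star}}/\mathrm{d}p_{\psi} - 1)\,\mathrm{d}p_{\psi}$ and apply Cauchy--Schwarz to extract $\chi^{2}(p_{\psi^{\star}},p_{\psi})^{1/2} \le C_{\chi}\|\psi - \psi^{\star}\|$ (\cref{asst:SGD-A4}), which leaves $\|\mathrm{P}^{m}_{\psi}(\phi_i - \bar\phi_i)\|_{L^4(p_{\psi})}^2$.

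The crucial point — and the main obstacle — is that this $L^4$ norm must be shown to decay in $m$, which a direct estimate does not give: one has to center by $\bar\phi_i$ first and then trade an $L^2$ contraction against an $L^6$ bound. Concretely, for $h \coloneqq \mathrm{P}^{m}_{\psi}(\phi_i - \bar\phi_i)$ I would interpolate $\|h\|_{L^4(p_{\psi})} \le \|h\|_{L^2(p_{\psi})}^{1/4}\|h\|_{L^6(p_{\psi})}^{3/4}$, use the spectral gap for $\|h\|_{L^2(p_{\psi})} \le \alpha^{m}(\operatorname{Var}_{p_{\psi}}\phi_i)^{1/2}$, and Jensen's inequality for the conditional expectation $\mathrm{P}^{m}_{\psi}$ together with invariance for $\|h\|_{L^6(p_{\psi})}^6 \le \mathbb{E}_{p_{\psi}}[\mathrm{P}^{m}_{\psi}((\phi_i - \bar\phi_i)^6)] = \mathbb{E}_{p_{\psi}}[(\phi_i - \bar\phi_i)^6]$, the sixth central moment of $\phi_i$ under $p_{\psi}$; this yields $\|h\|_{L^4(p_{\psi})}^2 \le \alpha^{m/2}(\operatorname{Var}_{p_{\psi}}\phi_i)^{1/4}\bigl(\mathbb{E}_{p_{\psi}}[(\phi_i - \bar\phi_i)^6]\bigr)^{1/4}$.

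Finally I would convert all moments into derivatives of $\log Z$ via the moment--cumulant identities: $\bar\phi_i = \partial^{1}_{i}\log Z$, $\operatorname{Var}_{p_{\psi}}\phi_i = \partial^{2}_{i}\log Z$, and $\mathbb{E}_{p_{\psi}}[(\phi_i - \bar\phi_i)^6] = 15(\partial^{2}_{i}\log Z)^3 + 10(\partial^{3}_{i}\log Z)^2 + 15\,\partial^{2}_{i}\log Z\,\partial^{4}_{i}\log Z + \partial^{6}_{i}\log Z = F(\psi)$. Summing the two contributions to $(\mathrm I)$ over $i$ and using $|\alpha|<1$ (so $\alpha^{m} \le \alpha^{m/2}$) to collapse them under a single $\alpha^{m/2}$, the bracketed sum becomes $\sum_{i}\bigl[(F(\psi)\partial^{2}_{i}\log Z)^{1/4} + 2|\partial^{1}_{i}\log Z\,\partial^{2}_{i}\log Z|^{1/2}\bigr] \le \|\log Z\|_{2,\infty}$, which is finite by analyticity of $\log Z$ and compactness of $\Psi$ (this also justifies the finiteness of all sixth moments used above). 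Hence $|(\mathrm I)| \le \alpha^{m/2} C_{\chi}\|\log Z\|_{2,\infty}\|\psi - \psi^{\star}\|$, and since $(\mathrm{II}) \ge 0$ the triangle inequality combined with the bound on $(\mathrm{II})$ gives the claimed inequality. Beyond the $L^4$-decay step, the only genuine care needed is the parity/partition bookkeeping in the sixth-moment--cumulant expansion.
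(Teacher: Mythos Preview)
Your proposal is correct and follows essentially the same route as the paper. The only cosmetic difference is the order of decomposition: you split first by integration measure $(\mathrm I)+(\mathrm{II})$ and then center; the paper centers first to get $\Delta_1+2\Delta_2$ and only then splits $\Delta_1$ into its $p_\psi$-part and change-of-measure part. Your $(\mathrm{II})$ is their $\Delta_{1,1}$, your quadratic piece of $(\mathrm I)$ is their $\Delta_{1,2}$, and your linear piece is their $2\Delta_2$; the $L^4$--$L^2$--$L^6$ interpolation, the $L^6$ contraction via Jensen plus invariance, and the cumulant identification of the sixth central moment with $F(\psi)$ are identical.
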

\begin{proof}
  We have:
\begin{align*}
  \cdE*{(\cdPpsi \phi_i(X_1))^2} & = \cdE*{
    \cdpar*{
      \cdPpsi \phi_i(X_1) 
      - \cdE*{\phi_i(\Xpsi)} 
      + \cdE*{\phi_i(\Xpsi)}  
    }^2  
  } \\
				    & = \cdE*{
    \cdpar*{
      \cdPpsi \phi_i(X_1)
      - \cdE*{\phi_i(\Xpsi)}
    }^2
  } \\ 
				    &  \quad 
    + 2 \times \cdE*{\phi_i(\Xpsi)} \, \cdE*{\cdPpsi  \cdpar*{\phi_i - \cdE*{\phi_i(\Xpsi)}}(X_1)}
    +  \cdpar*{\cdE*{\phi_i(\Xpsi)}}^2 \\
\end{align*}
Implying 
\begin{align*}
  \cdabs[\big]{\cdE[\big]{\cdpar[\big]{\cdPpsi \phi_i(X_1)} ^2} -  \cdpar[\big]{\cdE[\big]{\phi_i(\Xpsi)}}^2}
    & \leq  \underbrace{
      \cdabs*{
	\cdE*{\cdpar*{\cdPpsi \cdpar*{\phi_i - \cdE*{\phi_i(\Xpsi)}}(X_1)}^ 2}
      }
    }\limits_{\Delta_1}  \\ 
    & \quad \quad \quad + 2  \underbrace{
       \cdabs*{
	  \cdE*{\phi_i(\Xpsi)} \, 
	  \cdE*{\cdPpsi  \cdpar*{\phi_i - \cdE*{\phi_i(\Xpsi)}}(X_1)}  
       }
    }\limits_{\Delta_2}  \\
\end{align*}
where
\begin{align*}
  \Delta_1 &= \underbrace{
    \cdE*{\cdpar*{\cdPpsi  (\phi_i- \cdE*{\phi_i(\Xpsi)}  )(\Xpsi)}^2}
  }\limits_{\Delta_{1, 1}}  \\ 
	   & \quad +  \underbrace{
    \cdE*{
      \cdpar*{\cdpar*{\cdPpsi \phi_i - \cdE*{\phi_i(\Xpsi)}}^2(\Xpsi) 
      \cdpar*{\mfrac{\mathrm d p_{\psi^\star}}{\mathrm d p_\psi}(\Xpsi) - 1}}} 
    }\limits_{\Delta_{1, 2}}  \\
	   & \hspace{-1em}\overset{(a)}{\leq} \alpha^{2m} 
    \cdE*{\cdpar*{\phi_i - \cdE*{\phi_i(\Xpsi)}}(\Xpsi)^2} 
    +  C_{\chi}  \cdnorm*{\psi - {\psi}^{\star}}  \cdpar*{\cdE*{(\cdPpsi(\phi_i - \cdE*{\phi_i(\Xpsi)}))(\Xpsi)^4}}^{1 / 2} \\
	   & \hspace{-1em}\overset{(b)}\leq \alpha^{2m}
    \cdE*{\cdpar*{\phi_i - \cdE*{\phi_i(\Xpsi)}}(\Xpsi)^2}  
    +C_{\chi}  \cdnorm*{\psi - {\psi}^{\star}} \\ 
	   & \,\, 
    \times    
      \cdpar*{\cdE*{(\cdPpsi(\phi_i - \cdE*{\phi_i(\Xpsi)}))(\Xpsi)^6}}^{1 / 4} 
      \cdpar*{\cdE*{\cdpar*{\cdPpsi(\phi_i - \cdE*{\phi_i(\Xpsi)})}(\Xpsi)^2}}^{1 / 4} \\
	   & \hspace{-1em}\overset{(c)}\leq
      \alpha^{2m} 
      \cdE*{\cdpar*{\phi_i - \cdE*{\phi_i(\Xpsi)}}(\Xpsi)^2} \\ 
	   & \, +  
      \alpha^{\frac{m}{2}} C_{\chi}  \cdnorm*{\psi - {\psi}^{\star}}  
      \cdpar*{\cdE*{(\phi_i - \cdE*{\phi_i(\Xpsi)})(\Xpsi)^6}}^{1 / 4} 
      \cdpar*{\cdE*{(\phi_i - \cdE*{\phi_i(\Xpsi)})(\Xpsi)^2}}^{1 / 4} \\
\end{align*}
In $(a)$, we used the restricted spectral gap Assumption \ref{asst:SGD-A5} for $\Delta_{1, 1}$, and the Cauchy-Schwarz inequality combined with Assumption \ref{asst:SGD-A4} for $\Delta_{1, 2}$. In $(b)$, we used Cauchy-Schwarz once again, and in $(c)$ we used the fact that $ \cdPpsi $ is a contraction in $ L^6(p_{\psi}) $ and another invocation of the spectral gap assumption \ref{asst:SGD-A5}.
For $ \Delta_2 $, applying Lemma \ref{lemma:approx-contraction} to $ f \coloneqq \phi $, we have
\begin{align*}
  \Delta_2 &\leq 
    \cdE*{\phi_i(\Xpsi)} \alpha^{m}  \, C_{\chi}  \cdnorm*{\psi - {\psi}^{\star}}  
    \cdpar*{\cdE*{\cdpar*{\phi_i - \cdE*{\phi_i(\Xpsi)}} (\Xpsi)^2}}^{1 / 2} \\
	   &\leq  
    \alpha^{m} C_{\chi}  \cdnorm*{\psi - {\psi}^{\star}}
    \partial^1_i \log Z(\psi)\partial^2_{i} \log Z(\psi)^{1  / 2}  \\
\end{align*}
Putting everything together, we have:
\begin{align*}
  \cdabs*{\cdE*{\cdPpsi \phi_i^2(X_1)}  -  \cdE*{\phi_i^2(\Xpsi)}} \; \leq & \; 
  \alpha^{2m} \cdE*{\cdpar*{\phi_i - \cdE*{\phi_i(\Xpsi)}}(\Xpsi)^2} 
  + C_{\chi} \cdnorm*{\psi - {\psi}^{\star}} \times \\
      & \hspace{-10em}\Big (
	  \alpha^{\frac{m}{2}}  
	  \cdpar*{\cdE*{(\phi_i - \cdE*{\phi_i(\Xpsi)})(\Xpsi)^6}}^{1 / 4} 
	  \cdpar*{\cdE*{(\phi_i - \cdE*{\phi_i(\Xpsi)})(\Xpsi)^2}}^{1 / 4}
\\
      & \hspace{-3em} + 2 \alpha^{m} \partial^1_i \log Z(\psi)\partial^2_{i} \log Z(\psi)^{1  / 2} \Big ) 
\\
\;\leq&\; 
  \alpha^{2m} \cdE*{\cdpar*{\phi_i - \cdE*{\phi_i(\Xpsi)}}(\Xpsi)^2} 
  + C_{\chi}\alpha^{m / 2} \cdnorm*{\psi - {\psi}^{\star}}  \\
      & \hspace{-10em} \Big (
	 \cdpar*{\cdE*{(\phi_i - \cdE*{\phi_i(\Xpsi)})(\Xpsi)^6}}^{1 / 4} 
	 \cdpar*{\cdE*{(\phi_i - \cdE*{\phi_i(\Xpsi)})(\Xpsi)^2}}^{1 / 4} \\
      & + 2  \cdabs*{\partial^1_i \log Z(\psi)\partial^2_{i} \log Z(\psi)^{1  / 2}} \Big )  \\
\end{align*}
Summing over $ i $, we obtain
\begin{align*} 
  \MoveEqLeft \cdabs*{\cdE*{\| \cdPpsi \phi(X_1)\|^2}  -   \cdpar*{\cdE*{\phi(\Xpsi)}}^2} \\
  &\leq \sum\limits_{i=1}^{p} \,
    \cdabs*{\cdE*{(\cdPpsi \phi_i(X_1))^2}  -   \cdpar*{\cdE*{\phi_i(\Xpsi)}}^2} \\
  &\leq 
    \alpha^{2m} \msum\limits_{i=1}^{p}\cdE*{\cdpar*{\phi_i - \cdE*{\phi_i(\Xpsi)}}(\Xpsi)^2}  
    + C_{\chi}\alpha^{m / 2}  \cdnorm*{\log Z}_{2, \infty}  \cdnorm*{\psi - {\psi}^{\star}}  \\
  &\leq 
    \alpha^{2m} \sigma_{\psi}^2  
    + C_{\chi}\alpha^{m / 2}  \cdnorm*{\log Z}_{2, \infty}  \cdnorm*{\psi - {\psi}^{\star}}  \\
\end{align*}
where 
\begin{align*}
  \MoveEqLeft  \cdnorm*{\log Z}_{2, \infty} \\ 
  &= \msup_{\psi \in \Psi} 
    \msum\limits_{i=1}^{p} 
       \cdpar*{\cdE*{(\phi_i - \cdE*{\phi_i(\Xpsi)} )(\Xpsi)^6}}^{1 / 4} 
       \cdpar*{\cdE*{(\phi_i - \cdE*{\phi_i(\Xpsi)} )(\Xpsi)^2}}^{1 / 4}  \\
  & \quad + 2  \cdabs*{\partial^1_i \log Z(\psi)\partial^2_{i} \log Z(\psi)^{1  / 2}}	
\end{align*}
Similarly to the previous lemma, one can upper bound $ \cdE*{\big (\phi_i - \cdE*{\phi_i(\Xpsi)} \big )(\Xpsi)^6} $ using the \emph{centered} moment to cumulant formula:
\begin{align*} 
  \MoveEqLeft \cdE*{\big (\phi_i - \cdE*{\phi_i(\Xpsi)} \big )(\Xpsi)^6}  \\
  &= \! 15 \partial^{2}_{i} \log Z(\psi)^3 
    + 10 \partial^{3}_{i} \log Z(\psi)^2 
    + 15 \partial^{2}_{i} \log Z(\psi) \partial^{4}_{i} \log Z(\psi) 
    + \partial^{6}_{i} \log Z(\psi) \\
  &\eqqcolon F(\psi)
\end{align*}
To get a full description of $  \cdnorm*{\log Z}_{2, \infty} $:
\begin{align*} 
  \cdnorm*{\log Z}_{2, \infty} &= 
  \msup_{\psi \in \Psi}{
    \msum\limits_{i=1}^{p} 
      \cdpar*{F(\psi) \partial^2_i \log Z(\psi)}^{1/4}  
      + 2 \cdabs*{\partial^1_i \log Z(\psi)\partial^2_{i} \log Z(\psi)^{1/2}} 
  }\;.
\end{align*}
\end{proof}

We can now use the previous lemmas to obtain an expression on the second moment of the contrastive divergence gradient estimator, relating it
to the one of the stochastic log-likelihood gradient estimator.
\begin{lemma}\label{lemma:variance-terms}
  Under \ref{asst:SGD-A1}, \ref{asst:SGD-A4} and \ref{asst:SGD-A5},
  we have:
  \begin{equation*} 
\quad \cdE*{\cdnorm*{h_t(\psi, X_t)}^2}
					      \!\leq 2 \sigma_{\star}^2 + 2\sigma^2_{\psi} + 2 L^2 \left \| \psi - {\psi}^{\star} \right \|^2 
					      +4\cdpar[\big]{\sigma^2_{\psi} \alpha^{2m} \!+\! \alpha^{m / 2} \left \| \log Z \right \|_{3, \infty} C_{\chi} \left \| \psi - {\psi}^{\star} \right \|}
  \end{equation*}
  where 
  $ 
  \cdnorm*{\log Z}_{3, \infty} 
  \coloneqq 
  2 \max_{}\cdpar*{ \cdnorm*{\log Z}\smallsub{1, \infty},  \cdnorm*{\log Z}\smallsub{2, \infty} }
  $.
\end{lemma}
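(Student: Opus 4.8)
The plan is to split the CD gradient $h_t(\psi,X_t) = \phi(X_t) - \phi(\tilde X^m_t)$ into the unbiased stochastic MLE gradient $g_t(\psi) = -\phi(X_t) + \nabla_\psi\log Z(\psi)$ of \eqref{eq:online-mle-gradient} and a ``sampling error'' $r_t \coloneqq \mathbb{E}[\phi(X^\psi)] - \phi(\tilde X^m_t)$. Since $\nabla_\psi\log Z(\psi) = \mathbb{E}[\phi(X^\psi)]$, one has $h_t(\psi,X_t) = r_t - g_t(\psi)$, so $\mathbb{E}\norm{h_t(\psi,X_t)}^2 \le 2\,\mathbb{E}\norm{g_t(\psi)}^2 + 2\,\mathbb{E}\norm{r_t}^2$. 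For the first term, $g_t(\psi) - \mathbb{E}[g_t(\psi)] = -(\phi(X_t) - \mathbb{E}[\phi(X_t)])$ contributes variance exactly $\sigma_\star^2$, while $\mathbb{E}[g_t(\psi)] = \nabla_\psi\log Z(\psi) - \nabla_\psi\log Z({\psi}^{\star})$ has norm at most $L\norm{\psi - {\psi}^{\star}}$ because $\nabla^2_\psi\log Z = \nabla^2_\psi\mathcal L$ has operator norm at most $L$ on the convex set $\Psi$ (see \eqref{eq:mu_L}) and the segment $[\psi,{\psi}^{\star}]$ stays in $\Psi$. Hence $\mathbb{E}\norm{g_t(\psi)}^2 \le \sigma_\star^2 + L^2\norm{\psi-{\psi}^{\star}}^2$, which after doubling produces the $2\sigma_\star^2 + 2L^2\norm{\psi-{\psi}^{\star}}^2$ part of the claim.

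It then remains to show $\mathbb{E}\norm{r_t}^2 \le \sigma_\psi^2 + 2\alpha^{2m}\sigma_\psi^2 + 2\alpha^{m/2}\norm{\log Z}_{3,\infty}C_\chi\norm{\psi-{\psi}^{\star}}$. Conditioning on $X_t$ and writing $\mathbb{E}[\phi(\tilde X^m_t)\mid X_t] = \mathrm{P}^m_\psi\phi(X_t)$, the tower property and the fact that $\phi(\tilde X^m_t) - \mathrm{P}^m_\psi\phi(X_t)$ has zero conditional mean give the orthogonal split
\begin{equation*}
\mathbb{E}\norm{r_t}^2 \;=\; \mathbb{E}\big[\mathrm{tr}\,\mathrm{Cov}(\phi(\tilde X^m_t)\mid X_t)\big] \;+\; \mathbb{E}\big\|\mathrm{P}^m_\psi\phi(X_t) - \mathbb{E}[\phi(X^\psi)]\big\|^2.
\end{equation*}
For the first summand I use $\mathrm{tr}\,\mathrm{Cov}(\phi(\tilde X^m_t)\mid X_t) = \mathrm{P}^m_\psi(\norm{\phi}^2)(X_t) - \norm{\mathrm{P}^m_\psi\phi(X_t)}^2$ and then invoke Lemma \ref{lemma:mean-convergence-second-moment} to trade $\mathbb{E}[\mathrm{P}^m_\psi\norm{\phi(X_1)}^2]$ for $\mathbb{E}\norm{\phi(X^\psi)}^2$ and Lemma \ref{lemma:quadratic-convergence-first-moment} to trade $\mathbb{E}\norm{\mathrm{P}^m_\psi\phi(X_1)}^2$ for $\norm{\mathbb{E}\phi(X^\psi)}^2$, leaving $\mathbb{E}\norm{\phi(X^\psi)}^2 - \norm{\mathbb{E}\phi(X^\psi)}^2 = \sigma_\psi^2$ plus one $\alpha^{2m}\sigma_\psi^2$ and residuals of order $\big(\alpha^m\norm{\log Z}_{1,\infty} + \alpha^{m/2}\norm{\log Z}_{2,\infty}\big) C_\chi\norm{\psi-{\psi}^{\star}}$. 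For the second summand I write $\mathrm{P}^m_\psi\phi(X_t) - \mathbb{E}[\phi(X^\psi)] = \mathrm{P}^m_\psi(\phi - \mathbb{E}[\phi(X^\psi)])(X_t)$, expand the square, bound $\mathbb{E}\norm{\mathrm{P}^m_\psi\phi(X_1)}^2 - \norm{\mathbb{E}\phi(X^\psi)}^2$ again via Lemma \ref{lemma:quadratic-convergence-first-moment}, and control the leftover cross term $\mathbb{E}[\mathrm{P}^m_\psi(\phi - \mathbb{E}\phi(X^\psi))(X_1)]^\top\mathbb{E}\phi(X^\psi)$ by Cauchy--Schwarz and Lemma \ref{lemma:approx-contraction} (with $f=\phi$, whence the factor $\sigma_\psi$), producing a second $\alpha^{2m}\sigma_\psi^2$ and a further $\alpha^{m}$-order residual.

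Summing the two summands gives $\mathbb{E}\norm{r_t}^2 \le \sigma_\psi^2 + 2\alpha^{2m}\sigma_\psi^2 + C_\chi\norm{\psi-{\psi}^{\star}}$ times a combination of $\alpha^m\norm{\log Z}_{1,\infty}$, $\alpha^{m/2}\norm{\log Z}_{2,\infty}$ and $\alpha^m$ times the quantities $\norm{\mathbb{E}\phi(X^\psi)}$ and $\sigma_\psi$, which are uniformly bounded on the compact $\Psi$ by analyticity of $\log Z$; using $\alpha^m \le \alpha^{m/2}$ and the definition $\norm{\log Z}_{3,\infty} = 2\max(\norm{\log Z}_{1,\infty},\norm{\log Z}_{2,\infty})$, this residual is at most $2\alpha^{m/2}\norm{\log Z}_{3,\infty}C_\chi\norm{\psi-{\psi}^{\star}}$, and doubling yields the stated inequality. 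I expect the main obstacle to be precisely this last bookkeeping: checking that every $O(\alpha^{m/2})$-scaled stray term --- in particular those arising from the $L^2(p_\psi)\!\to\!L^2(p_{{\psi}^{\star}})$ change of measure (which introduces $\chi^2(p_{{\psi}^{\star}},p_\psi)^{1/2} \le C_\chi\norm{\psi-{\psi}^{\star}}$ together with higher $L^q(p_\psi)$ norms of the $\phi_i$) and from the cross terms with $\mathbb{E}\phi(X^\psi)$ --- is genuinely dominated by the chosen $\norm{\log Z}_{3,\infty}$, which relies on rewriting each such norm through derivatives of $\log Z$ exactly as in the proofs of Lemmas \ref{lemma:mean-convergence-second-moment} and \ref{lemma:quadratic-convergence-first-moment}.
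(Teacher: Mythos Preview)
Your proposal is correct and follows the same underlying decomposition as the paper, just packaged differently. The paper writes $h_t=\Delta_{1,1}+\Delta_{1,2}+\Delta_2+\Delta_3$ (data noise, gradient mean, Markov-chain noise, MCMC bias), expands the full square, kills the cross terms that vanish by conditional zero-mean, and bounds the two surviving cross terms $\langle\Delta_{1,1},\Delta_3\rangle$, $\langle\Delta_{1,2},\Delta_3\rangle$ by Young's inequality, reaching coefficients $(2,2,1,3)$. You instead group $-g_t=\Delta_{1,1}+\Delta_{1,2}$ and $r_t=\Delta_2+\Delta_3$, use $\|a+b\|^2\le 2\|a\|^2+2\|b\|^2$ at the top, and then exploit the orthogonality within each pair, reaching coefficients $(2,2,2,2)$. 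Both routes recover the stated inequality; yours is arguably cleaner.

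One caution on your treatment of $\mathbb{E}\|\Delta_3\|^2 = \mathbb{E}\|P^m_\psi\phi(X_t)-\mathbb{E}\phi(X^\psi)\|^2$: if you expand the square \emph{as a vector} and then bound the cross term by Cauchy--Schwarz together with Lemma~\ref{lemma:approx-contraction}, you pick up a residual $2\alpha^m C_\chi\,\sigma_\psi\,\|\mathbb{E}\phi(X^\psi)\|\,\|\psi-\psi^\star\|$, and $\sigma_\psi\,\|\mathbb{E}\phi(X^\psi)\|$ can be strictly larger than the coordinate-wise sum $\sum_i|\partial^1_i\log Z|\,(\partial^2_i\log Z)^{1/2}$ that sits inside $\|\log Z\|_{2,\infty}$, so it does not automatically fit into $\|\log Z\|_{3,\infty}$ as defined. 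The paper avoids this by observing that $\mathbb{E}\|\Delta_3\|^2$ is exactly the quantity called $\Delta_1$ inside the proof of Lemma~\ref{lemma:quadratic-convergence-first-moment}, already bounded there by $\alpha^{2m}\sigma_\psi^2+\alpha^{m/2}C_\chi\|\log Z\|_{2,\infty}\|\psi-\psi^\star\|$ with no extra cross term. Equivalently, carry out your expansion coordinate-wise (as you hint at in your last sentence): then the cross term is $2\sum_i|\mathbb{E}\phi_i(X^\psi)|\,(\mathrm{Var}\,\phi_i(X^\psi))^{1/2}$, which is already part of $\|\log Z\|_{2,\infty}$ by definition. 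With either fix the bookkeeping closes exactly as stated.
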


\begin{proof}
We rely on the following decomposition:
\begin{align*}
	h_{t}(\psi, X_t) 
	\;= &\; 
	  \underbrace{(\phi(X_t) - \cdE*{\phi(X_1)})}\limits_{\Delta_{1, 1}} +
	  \underbrace{(\cdE*{\phi(X_1)}  - \cdE[\big]{\phi(\Xpsi)})}\limits_{\Delta_{1, 2}}  \\
	    &\;
	  +\underbrace{(\cdPpsi\phi(X_t)  -\phi(K^m_{\psi}(X_t)))}\limits_{\Delta_2}  
	  + \underbrace{\cdpar*{\cdE*{\phi(\Xpsi)} - \cdPpsi \phi(X_t)}}\limits_{\Delta_3}
\end{align*}
$ \Delta_{1, 1} + \Delta_{1, 2} $ form the differentiable stochastic gradient $ g_t $ of Equation \ref{eq:online-mle-gradient}.
Note that $\Delta_{1, 1}$ is mean-zero, and $\Delta_2$ is mean-zero conditionally on $ X_t $.
Consequently, 
$ 
\cdE*{\cdinner*{\Delta_2, \Delta_3}}  
= \cdE*{\cdinner*{\Delta_2, \Delta_{1, 1}}}
= \cdE*{\cdinner*{\Delta_{1, 1}, \Delta_{1, 2}}} 
= \cdE*{\cdinner*{\Delta_{1, 1}, \Delta_2}} 
= 0
$,
and the only mixed-terms that remain to be controlled are 
$ \cdE*{\cdinner*{\Delta_{1, 1}, \Delta_3}} $ and $ \cdE*{\cdinner*{\Delta_{1, 2}, \Delta_3}} $.
We first control the unmixed terms, and the simple ones first: we have
$ \cdE*{\cdnorm*{\Delta_{1, 1}}^2} = \sigma_{\star}^2 $, as well as $ \cdE*{\cdnorm*{\Delta_{1, 2}}^2} \leq L^2  \cdnorm*{\psi - {\psi}^{\star}}^2$.
For $ \Delta_2 $, we have:
\begin{equation*} 
  \cdCondE{\cdnorm*{\Delta_2}^2 \given X_t} = \cdPpsi  \cdnorm*{\phi(X_t)}^{2} -  \cdnorm*{\cdPpsi \phi(X_t)}^2
\end{equation*}
We can invoke Lemmas \ref{lemma:quadratic-convergence-first-moment} and \ref{lemma:mean-convergence-second-moment}, which guarantee
\begin{align*}
  \cdabs*{\cdE*{\cdnorm*{\cdPpsi \phi(X_1)}^2} -  \cdnorm*{\cdE[\big]{\phi(\Xpsi)}}^2} 
    &\,\leq\, \alpha^{2m} \sigma_{\psi}^2  + C_{\chi}\alpha^{m / 2}  \cdnorm*{\log Z}_{2, \infty}  \cdnorm*{\psi - {\psi}^{\star}}  \\
  \cdabs*{\cdE*{\cdPpsi  \cdnorm*{\phi(X_1)}^2} - \cdE*{\cdnorm*{\phi(\Xpsi)}^2}}  
    &\,\leq\,  \alpha^{m} C_{\chi}   \cdnorm*{\log Z}_{1, \infty}  \cdnorm*{\psi - {\psi}^{\star}}
\end{align*}
to obtain
\begin{align*}
  \cdE*{\cdnorm*{\Delta_2}\smallpow{2}} 
  &= \cdE*{\cdnorm*{\phi(\Xpsi)}\smallpow{2}} -  \cdnorm*{\cdE*{\phi(\Xpsi)}}^2 \\ 
  & \quad 
    + \alpha^{2m} \sigma^2_{\psi} 
    + C_{\chi} \alpha^{m / 2}  \cdpar*{
	\cdnorm*{\log Z}\smallsub{1, \infty} 
	+ \cdnorm*{\log Z}\smallsub{2, \infty}
      }  \cdnorm*{\psi - {\psi}^{\star}} \\
  &= \cdE*{\cdnorm*{\phi(\Xpsi)}\smallpow{2}} -  \cdnorm*{\cdE*{\phi(\Xpsi)}}^2 
    + \alpha^{2m} \sigma^2_{\psi} \\ 
  & \quad 
    +  \alpha^{m / 2}   \cdnorm*{\log Z}_{3, \infty} C_{\chi}  \cdnorm*{\psi - {\psi}^{\star}}
\end{align*}
where $  \cdnorm*{\log Z}_{3, \infty} \coloneqq 2 \max_{}( \cdnorm*{\log Z}\smallsub{1, \infty},  \cdnorm*{\log Z}\smallsub{2, \infty} ) $.

For $ \Delta_3$, notice that $ \Delta_3 $ is precisely the term  $ \Delta_1 $ in Lemma \ref{lemma:quadratic-convergence-first-moment}, and we can thus bound it by
\begin{equation*} 
  \cdE*{\cdnorm*{\Delta_3}\smallpow{2}} \leq \sigma\smallpow{2}_{\psi} \alpha^{2m} + \alpha^{m / 2}  \cdnorm*{\log Z}_{2, \infty} C_{\chi}  \cdnorm*{\psi - {\psi}^{\star}} 
\end{equation*}
Finally, we simply bound $ 2\cdE*{\cdinner*{\Delta_{1, 1}, \Delta_3}}$ by $ \cdE*{\cdnorm*{\Delta_{1, 1}}\smallpow{2}} + \cdE*{\cdnorm*{\Delta_3}\smallpow{2}} $,
and $ 2\cdE*{\cdinner*{\Delta_{1, 2}, \Delta_3}}$ by $ \cdE*{\cdnorm*{\Delta_{1, 2}}\smallpow{2}} + \cdE*{\cdnorm*{\Delta_3}\smallpow{2}} $.
Putting everything together, we have:
\begin{align*}
  \cdE*{\cdnorm*{h_t(\psi)}\smallpow{2}}
    &= 
      \cdE*{\cdnorm*{\Delta_{1,1}}\smallpow{2}} 
      + \cdE*{\cdnorm*{\Delta_{1, 2}}\smallpow{2}} 
      + \cdE*{\cdnorm*{\Delta_2}\smallpow{2}} 
      + \cdE*{\cdnorm*{\Delta_3}\smallpow{2}} \\ 
    & \quad 
      + 2 \cdE*{\cdinner*{\Delta_{1, 1}, \Delta_3}} 
      + 2 \cdE*{\cdinner*{\Delta_{1, 2}, \Delta_3}} \\
    &\leq 
      2\cdE*{\cdnorm*{\Delta_{1, 1}}\smallpow{2}} 
      + 2\cdE*{\cdnorm*{\Delta_{1, 2}}\smallpow{2}} 
      + \cdE*{\cdnorm*{\Delta_2}\smallpow{2}} 
      + 3 \cdE*{\cdnorm*{\Delta_3}\smallpow{2}} \\
    &\leq 
      2 \sigma_{\star}^2 + 2 L^2  \cdnorm*{\psi - {\psi}^{\star}}^2 + \sigma^2_{\psi}
      + 4(\sigma^2_{\psi} \alpha^{2m} 
      + \alpha^{m / 2}  \cdnorm*{\log Z}_{3, \infty} C_{\chi}  \cdnorm*{\psi - {\psi}^{\star}})  \\
    &\leq 
      2 \sigma_{\star}^2 + 2\sigma^2_{\psi} 
      + 2 L^2  \cdnorm*{\psi - {\psi}^{\star}}^2 
      + 4(\sigma^2_{\psi} \alpha^{2m} 
      + \alpha^{m / 2}  \cdnorm*{\log Z}_{3, \infty} C_{\chi}  \cdnorm*{\psi - {\psi}^{\star}} )\,.
\end{align*}
\end{proof}

\subsection{Proof of the SGD recursion (Lemma \ref{lem:online-cd-recursion})} \label{app:sgd-style-recursion}
We are now ready to provide an SGD-style recursion for the expected squared distance to the optimum
$ \delta_t \coloneqq \cdE*{\cdnorm*{\psi_{t} - {\psi}^{\star}}^2} $
for the iterates $ \psi_t $ produced by Algorithm \ref{alg:online_cd}.
We define the following quantities:
\begin{align*}
  \sigma_{\star} = (\cdE*{\| \phi(X_1) - \cdE*{\phi(X_1)} \|^2})^{1 / 2}
  \,, \quad 
  \sigma(\psi_t) = (\cdCondE*{\| \phi(X^{\psi_t}) - \cdE*{\phi(X^{\psi_{t}})}\|^2 \given \mathcal  F_t})^{1 / 2}\,,
\end{align*}
and $\sigma_t \coloneqq \cdE*{\sigma(\psi_t)}$.

\begin{lemma-non}[Restatement of Lemma \ref{lem:online-cd-recursion}]
  Under Assumptions \ref{asst:SGD-A1}, \ref{asst:SGD-A4} and \ref{asst:SGD-A5}, for all $  t \geq  1 $, we have:
  \begin{equation*} 
	  \delta_{t} \leq 
	     \cdpar*{1 - 2 \eta_t \tilde{\mu}_{m, t-1} + 2 \eta_t^2 L^2} \delta_{t-1} 
	     + \eta_t^2 \tilde{\sigma}_{m, t-1}^2  
	     +  4 \alpha^{m / 2} \eta_t^2   \cdnorm*{\log Z}_{2, \infty} C_{\chi} \delta_{t-1}^{1 / 2}\,,
  \end{equation*}
  where $  \cdnorm*{\log Z}_{3, \infty} $  is a
  constant,
  $ \tilde{\mu}_{m, t} \coloneqq \mu - \alpha^{m} \sigma_{t} C_{\chi} $, and 
  $ \tilde{\sigma}_{m, t} \coloneqq (\sigma_{\star}^2 + \sigma_t^2 + 2\sigma^2_{t} \alpha^{2m})^{1 / 2}$.
\end{lemma-non}
\begin{proof}
  In this proof, we note $ (\mathcal F_{t})_{t \geq  0} $, the increasing family of $ \sigma $-algebras generated by the random variables
  $ (X_{t})_{t \geq  0} \sim p_{{\psi}^{\star}} $ and the Markov chain samples  $ K^m_{\psi_{t-1}}(X_t)$.
  We decompose the integrand of $ \delta_t $ as follows:
  \begin{align*}
	   \cdnorm*{\psi_{t} - {\psi}^{\star}}^2 
	  &  =  
	    \cdnorm*{\mathrm{Proj}_{\Psi}(\psi_{t-1} - \eta_t h_t(\psi_{t-1}, X_t)) - {\psi}^{\star}}^2\\
	  &  \leq  
	    \cdnorm*{\psi_{t-1} - \eta_t h_t(\psi_{t-1}, X_t) - {\psi}^{\star}}^2 \quad \text{(By Lemma \ref{lemma:contraction})}\\
	  &  =   
	    \cdnorm*{\psi_{t-1} - {\psi}^{\star}}^2  
	    - 2 \eta_t  \cdinner*{h_t(\psi_{t-1}, X_t), \psi_{t-1} - {\psi}^{\star}}  
	    + \eta_t^2  \cdnorm*{ h_t(\psi_{t-1}, X_t)  }^2 \\
  \end{align*}
  The first term is (up to an averaging operation) the previous iterate.  The middle term will ensure (provided $ m $ is large enough) contraction of the expected distance to the optimum. 
  Finally, the third term can be described by Lemma \ref{lemma:variance-terms}, and essentially behaves like the second moment of a log-likelihood stochastic gradient.
  Indeed, noting 
  \begin{equation*} 
  \overline{g}(\psi_{t-1}) \coloneqq  - \cdE*{\phi(X_1)} + \cdE*{\phi(X^{\psi_{t-1}})}\,,
  \end{equation*}
  the expectation of $ g_t $ w.r.t $ x_t $ (which is the gradient
  of the negative cross-entropy between $ p_{\psi} $ and $ p_{{\psi}^{\star}}$), we have:
  \begin{align*}
     \cdinner*{h_t(\psi_{t-1}, X_t) , \psi_{t-1} - {\psi}^{\star}}   
     &=  
       \cdinner*{\overline{g}(\psi_{t-1}) , \psi_{t-1} - {\psi}^{\star}}\\ 
     & \quad 
      + \underbrace{
	  \cdinner*{
	    (h_t(\psi_{t-1}, X_t) - \overline{g}(\psi_{t-1})) ,
	    \psi_{t-1} - {\psi}^{\star}
	  }  
	}\limits_{\Delta}\,.
  \end{align*}
  Applying Lemma \ref{lemma:approx-contraction}, we get that
  \begin{align*}
    h_t(\psi_{t-1}, X_t) - \overline{g}(\psi_{t-1}) 
      &= \phi(K_{\psi_{t-1}}^{m}(X_t) - \cdCondE[\big]{\phi(X^{\psi_{t-1}}) \given \mathcal  F_{t-1}} \\
    \implies \cdCondE*{h_t(\psi_{t-1}, X_t) - \overline{g}(\psi_{t-1}) \given \mathcal  F_{t-1}}  
      &=\cdP{\psi_{t-1}} \phi - \cdCondE[\big]{\phi(X^{\psi_{t-1}}_{1}) \given \mathcal  F_{t-1}}\;, \\
  \end{align*}
  meaning
  \begin{align*}
     \cdabs*{\cdCondE*{\Delta \given \mathcal  F_{t-1}}}  
      &\leq  \cdnorm*{\cdCondE*{\cdP{\psi_{t-1}}\cdpar*{\phi - \cdE*{\phi(\cramped{X^{\psi_{t-1}}})}}(X_{t}) \given \mathcal  F_{t-1}}} 
       \cdnorm*{\psi_{t-1} - {\psi}^{\star}} \\
    & \leq \alpha^{m} \sigma(\psi_{t-1}) C_{\chi}   \cdnorm*{\psi_{t-1} - {\psi}^{\star}}^2 \\
  \end{align*}
  On the other hand, by applying \cref{lem:replace:gradient:inequality} to $ \overline{g} $, we have:
  \begin{equation*} 
   \cdinner*{\overline{g}(\psi_{t-1}), \psi_{t-1} - {\psi}^{\star}} 
   =  \cdinner*{\overline{g}(\psi_{t-1}) - \overline{g}({\psi}^{\star}), \psi_{t-1} - {\psi}^{\star}}  
   \geq  \mu  \cdnorm*{\psi_{t-1} - {\psi}^{\star}}^2
  \end{equation*}
  Combining the above results, we obtain:
  \begin{multline}
    \cdCondE*{\cdnorm*{\psi_{t} - {\psi}^{\star}}^2 \given \mathcal  F_{t-1}} 
    \leq (1 - 2\eta_t (\mu - \alpha^{m} \sigma(\psi_{t-1}) C_{\chi})) \| \psi_{t-1} - {\psi}^{\star} \|^2 \\
    + \eta_t^2 (2 \sigma_{\star}^2 + 2\sigma^2(\psi_{t-1}) + 2 L^2  \cdnorm*{\psi_{t-1} - {\psi}^{\star}}^2  \\
    + 4(\sigma^2(\psi_{t-1}) \alpha^{2m} + \alpha^{m / 2}  \cdnorm*{\log Z}_{3, \infty} C_{\chi} 
       \cdnorm*{\psi_{t-1} - {\psi}^{\star}} ))
  \end{multline}
  And the result follows by integrating over $ \mathcal F_{t-1} $.
\end{proof}

\subsection{Proof of Online CD convergence} \label{app:online-cd-convergence}
We now prove \Cref{thm:cd-sgd}.
The recursion of Lemma \ref{lem:online-cd-recursion} is almost identifiable, up to a cross-term of second order, with the one of an SGD algorithm as presented in the setting of \cite[Theorem 1]{moulines2011non}.
To make the identification exact, we use the bound $ 4 \alpha^{m / 2} \eta_t^2  \cdnorm*{\log Z}_{3, \infty} C_{\chi} \delta_{t-1}^{1 / 2} \leq 2 \alpha^{m / 2} \eta_{t}^2 \delta_{t} + 2 \alpha^{m / 2} \eta_t^2 \left \| \log Z \right \|_{3, \infty}^2 C_{\chi}^2
$, yielding the following recursion:
  \begin{equation*} 
	  \delta_{t} \leq 
	    (1 - 2 \eta_t (\mu - \alpha^{m} \sigma C_\chi) + 2\eta_t^2({L}^2 \!+\! \alpha^{m / 2})) \delta_{t-1} 
	    + (\sigma^2(2 + 2 \alpha^{2m})  
	    \!+\! \alpha^{m / 2}  \cdnorm*{\log Z}_{3, \infty}^{2} C_{\chi}^{2})\eta_t^2
  \end{equation*}
  where we used the fact that $ \tilde{\sigma}_{m, t} \leq \sigma$. This recursion is of the same form as the one studied in \cite[Equation 6, Theorem 1]{moulines2011non} given by:
  \begin{equation*} 
  \delta_t \leq \cdpar*{1-2 \textcolor{blue}{\mu} \gamma_t+2 \textcolor{blue}{L}^2 \gamma_t^2} \delta_{t-1}
    +2 \textcolor{blue}{\sigma}^2 \gamma_t^2
  \end{equation*}
  by identifying:
  \begin{align*}
  \textcolor{blue}{\sigma}^2 
    &\leftarrow \sigma^2(2 + 2 \alpha^{2m})  + \alpha^{m / 2}  \cdnorm*{\log Z}_{3, \infty}^{2} \eqqcolon \tilde{\sigma}_m^2   \\
  \textcolor{blue}{L}^2 
    &\leftarrow (L^2 + \alpha^{m / 2}) \eqqcolon \tilde{L}_m^2\\
  \textcolor{blue}{\mu}  
    &\leftarrow \mu - \alpha^{m} \sigma C_{\chi} \eqqcolon \tilde{\mu}_m\\
  \gamma_t 
    &\leftarrow \eta_t
  \end{align*}
  We can use the same unrolling strategy as theirs 
  (the only condition required to proceed is that $ \tilde{\mu}_m < \tilde{L}_m $, which automatically holds since $ \mu < L $), and we obtain
  \begin{equation*} 
  \delta_n \leq 
  \begin{cases}
      2 \exp \cdpar*{4 \tilde{L}_m C^2 \varphi_{1-2 \beta}(n)} \exp \cdpar*{-\frac{\tilde{\mu}_m C}{4} n^{1-\beta}}
      \cdpar*{\delta_0+\frac{\tilde{\sigma}_m^2}{\tilde{L}_m^2}}
    +
      \frac{4 C \tilde{\sigma}_m^2}{\tilde{\mu}_m n^\beta}, & \text {if } 0 \leq \beta<1 \\ 
      \frac{\exp \cdpar*{2 \tilde{L}_m^2 C^2}}{n^{\tilde{\mu}_m C}}
      \cdpar*{\delta_0+\frac{\tilde{\sigma}_m^2}{\tilde{L}_m^2}}
    +
      2 \tilde{\sigma}_m^2 C^2 \frac{\varphi_{\tilde{\mu}_m C / 2-1}(n)}{n^{\tilde{\mu}_m C / 2}}, & \text {if } \beta=1.
  \end{cases}
  \end{equation*}

\qed

\subsection{Proof of online CD with averaging (Theorem \ref{thm:online-cd-averaging})}\label{app:proof-online-cd-averaging}

We first restate the theorem in its complete form.
\begin{theorem-non}[Contrastive Divergence with Polyak-Ruppert averaging]
	Let $ (\psi_t)_{t \geq  0} $ the sequence of iterates obtained by running the CD algorithm
	with a learning rate $ \eta_t = C t^{-\beta} $ for $ \beta \in (\frac{1}{2}, 1) $.
	Define $ \bar{\psi}_n \coloneqq \frac{1}{n}\sum_{i=1}^{n} \psi_i $. Then, under the same assumptions as 
	Theorem \ref{thm:cd-sgd} we have, for all $ n \geq  1$,
	\begin{equation*} 
	  \sqrt{\cdE*{\cdnorm*{\overline{\psi}_n - {\psi}^{\star}}^{2}}}
	  \;\leq\;
	  2\sqrt{\mfrac{\mathrm{Tr}(\mathcal  I(\psi)^{-1})}{n}} 
	  + \mathcal O  \cdpar*{n^{\max_{} \cdpar*{
	    - \cdpar*{\frac{1}{2} + \frac{\beta}{4}}, 
	    -\beta, 
	    \frac{\beta}{2} - 1, 
	    -\left (\frac{\beta}{2} + m\frac{\lvert   \log \alpha \rvert}{\log n} \right ) 
	  }}}
	\end{equation*}
	Where 
	$ \mathcal  I({\psi}^{\star}) \coloneqq \mathrm{Cov} \left \lbrack \phi(X_1) \right \rbrack $ 
	is the Fisher information matrix of the data distribution.  
	Additionally, if $ m > \frac{(1 - \beta) \log n}{2|\log \alpha|}$, we have
	$
	  \sqrt{\cdE*{\cdnorm*{\overline{\psi}_n - {\psi}^{\star}}^{2}}} 
	    \;\leq\;
	  2\sqrt{\mfrac{\mathrm{tr}(\mathcal  I(\psi)^{-1})}{n}} + o\left (n^{-1/2} \right)
	$.
\end{theorem-non}

Recall that we denote by $h_n$ the standard online CD gradient defined in Equation \ref{eq:generic_cd_gradient}:
\begin{equation*} 
	h_n(\psi_{n-1}, X_n) = -\phi(X_n) + \phi(K^{m}_{\psi_{n-1}}(X_n)), \quad X_n \sim p_{{\psi}^{\star}}, \quad \forall n \in \mathbb N \setminus \{0\},
\end{equation*}
as well as
\begin{equation*} 
  \overline{h}(\psi_{n-1}) \coloneqq \cdCondE*{h_n(\psi_{n-1}, X_n) \given \mathcal  F_{n-1}}  
  = \cdE*{\phi(X_n)} + \cdCondE[\big]{\phi(K^{m}_{\psi_{n-1}}(X_{n})) \given \mathcal  F_{n-1}}.
\end{equation*}
In the following, we will hide the dependence on $ X_n $ of $ h_n(\psi, X_n) $ by writing $ h_n(\psi) $.
We start by establishing some intermediate lemmas.
\begin{lemma}\label{lem:delta-sum-convergence}
  Under Assumptions \ref{asst:SGD-A1}, \ref{asst:SGD-A4}, \ref{asst:SGD-A5}, 
  the online CD iterates produced by Algorithm \ref{alg:online_cd} using 
  $ \eta_t = C t^{-\beta} $ for $ \beta \in (\frac{1}{2}, 1) $ verify
  \begin{equation*} 
    \mfrac{1}{n} \sqrt{\msum\limits_{i=1}^{n}  \cdpar*{\cdE*{\cdnorm*{\psi_{i} - {\psi}^{\star}}^{2}}}} 
    =  \mathcal O  \cdpar*{n^{-\frac{1}{2} - \frac{\beta}{2}}  }.
  \end{equation*}
\end{lemma}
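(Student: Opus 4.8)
}
The plan is to reduce the statement to the estimate $\sum_{i=1}^n \delta_i = \mathcal{O}(n^{1-\beta})$, where $\delta_i \coloneqq \mathbb{E}\|\psi_i - \psi^\star\|^2$: this is equivalent to the claim, since then $\tfrac1n\sqrt{\sum_{i=1}^n\delta_i} = \mathcal{O}\big(n^{-1}\cdot n^{(1-\beta)/2}\big) = \mathcal{O}(n^{-1/2-\beta/2})$. First I would invoke \cref{thm:cd-sgd} in its $0 \le \beta < 1$ branch — its hypotheses, in particular $\tilde\mu_m = \mu - \alpha^m\sigma C_\chi > 0$, being in force under the assumptions of \cref{thm:online-cd-averaging} — applied to each index $i \in \{1,\dots,n\}$. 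This yields a per-iterate bound of the form $\delta_i \le A_i + \tfrac{4C\tilde\sigma_m^2}{\tilde\mu_m}\, i^{-\beta}$, where the transient term $A_i$ carries the factor $\exp(-\tfrac{\tilde\mu_m C}{4}i^{1-\beta})$ modulated by $\exp(4\tilde L_m C^2\varphi_{1-2\beta}(i))$ times a constant depending only on $\delta_0, \tilde\sigma_m, \tilde L_m$.

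Next I would sum the two contributions separately. For the transient part, the point where the restriction $\beta > \tfrac12$ enters is that $1 - 2\beta < 0$, so by \cref{lemma:phi-properties}(ii) one has $\varphi_{1-2\beta}(i) \le \tfrac{1}{2\beta-1}$ uniformly in $i$; hence $A_i \le c_1\exp(-c_2 i^{1-\beta})$ for constants $c_1, c_2 > 0$ independent of $i$, which decays faster than any negative power of $i$, so $\sum_{i\ge 1}A_i$ converges and $\sum_{i=1}^n A_i = \mathcal{O}(1)$. For the stationary part, \cref{lem:varphi}(ii) with $t_1 = 1, t_2 = n$ (using $\varphi_{1-\beta}(1) = 0$) together with \cref{lemma:phi-properties}(ii) gives $\sum_{i=1}^n i^{-\beta} \le 2\varphi_{1-\beta}(n+1) \le \tfrac{2(n+1)^{1-\beta}}{1-\beta} = \mathcal{O}(n^{1-\beta})$. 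Adding the two estimates and using $1-\beta > 0$, so that the $\mathcal{O}(n^{1-\beta})$ term absorbs the $\mathcal{O}(1)$ one, gives $\sum_{i=1}^n\delta_i = \mathcal{O}(n^{1-\beta})$; taking the square root and dividing by $n$ then finishes the proof.

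I do not expect a genuine obstacle here: it is a "split into transient and stationary parts, sum each" computation layered on top of \cref{thm:cd-sgd}. The one thing that must be handled with care is the exponent bookkeeping tied to $\beta \in (\tfrac12, 1)$: the lower bound $\beta > \tfrac12$ is exactly what makes $\varphi_{1-2\beta}$ bounded — hence $A_i$ summable and harmless at the $n^{1-\beta}$ scale — while $\beta < 1$ is what makes $\sum_{i=1}^n i^{-\beta}$ grow like $n^{1-\beta}$ rather than $\log n$. I would also record that all $\mathcal{O}(\cdot)$ constants depend only on $C, \beta, m, \delta_0$ and the problem constants $\tilde\mu_m, \tilde L_m, \tilde\sigma_m$ and not on $n$, which is what is needed when this lemma is invoked inside the proof of \cref{thm:online-cd-averaging}.
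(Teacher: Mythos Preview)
Your proposal is correct and follows essentially the same route as the paper: invoke \cref{thm:cd-sgd} at each $i$, split the resulting bound into the transient exponential term and the stationary $i^{-\beta}$ term, and sum each separately to get $\sum_{i=1}^n\delta_i=\mathcal O(n^{1-\beta})$. The paper's proof is terser (it simply declares the transient sum $A_3$ finite for $\beta<1$ by reference to \cite{moulines2011non}), whereas you spell out the mechanism via the boundedness of $\varphi_{1-2\beta}$ when $\beta>\tfrac12$; both arguments are valid under the stated hypothesis $\beta\in(\tfrac12,1)$.
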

\begin{proof}
  Let us note $ \delta_n = \cdE*{\cdnorm*{\psi_n - {\psi}^{\star}}^{2}} $. 
  Summing the r.h.s of Theorem \ref{thm:cd-sgd}, we have
\begin{align*}
  \msum\limits_{i=1}^{n} \delta_i & \leq 
  \msum_{i=1}^{n}
    \mfrac{4 C \tilde{\sigma}_m^2}{\tilde{\mu}_m i^\beta} 
    + 2 \cdpar*{\delta_0+\mfrac{\tilde{\sigma}_m^2}{\tilde{L}_m^2}} 
      \underbrace{\msum_{i=1}^{n}e^{4 \tilde{L}_m C^2 \varphi_{1-2 \beta}(i)} e^{-\mfrac{\tilde{\mu}_m C}{4} n^{1-\beta}}}_{A_3} \\
	\\
  \implies \mfrac{1}{n} \sqrt{\msum\limits_{i=1}^{n} \delta_i} &\leq 
  \mfrac{1}{n} \sqrt{\mfrac{4 C \tilde{\sigma}^2_m}{\tilde{\mu}_m}\varphi_{1 - \beta}(n)} 
  + \mfrac{1}{n}\sqrt{\cdpar[\Big]{2 (\delta_0 + \mfrac{\tilde{\sigma}_m^2}{\tilde{L}_m^2})A_3}}
  = \mathcal O \cdpar*{n^{-\mfrac{1}{2} - \mfrac{\beta}{2}}}.
\end{align*}
where $ A_3 $ is finite if $ \beta < 1 $, and $ A_3 = O(n) $ otherwise \cite{moulines2011non}.
\end{proof}
\begin{lemma}\label{lem:square-root-delta-sum-convergence}
  Under Assumptions \ref{asst:SGD-A1}, \ref{asst:SGD-A4}, \ref{asst:SGD-A5}, 
  the online CD iterates produced by Algorithm \ref{alg:online_cd} using 
  $ \eta_t = C t^{-\beta} $ for $ \beta \in (\frac{1}{2}, 1) $ verify
\begin{equation*} 
\mfrac{1}{n} \sqrt{\msum\limits_{i=1}^{n}  \cdpar*{\cdE*{\cdnorm*{\psi_{i} - {\psi}^{\star}}^{2}}}^{1/2}} 
=  \mathcal O(n^{-\frac{1}{2} - \frac{\beta}{4}}).
\end{equation*}
\end{lemma}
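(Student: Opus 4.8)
The plan is to deduce this bound directly from \cref{lem:delta-sum-convergence} (and the underlying \cref{thm:cd-sgd}) via a single application of the Cauchy--Schwarz inequality. Write $\delta_i \coloneqq \mathbb{E}\|\psi_i - \psi^\star\|^2$ as before. The first step is to observe that
\begin{equation*}
\sum_{i=1}^n \delta_i^{1/2} \;\leq\; \Big( n \sum_{i=1}^n \delta_i \Big)^{1/2},
\end{equation*}
so it suffices to show $\sum_{i=1}^n \delta_i = O(n^{1-\beta})$ and then divide through by $n$ and take a square root.

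To control $\sum_{i=1}^n \delta_i$, I would invoke the $0\le\beta<1$ branch of \cref{thm:cd-sgd} (we are in the regime $\beta \in (\tfrac12,1)$). The stationary term there is $\tfrac{4C\tilde\sigma_m^2}{\tilde\mu_m} i^{-\beta}$, whose partial sums are $O(\varphi_{1-\beta}(n)) = O(n^{1-\beta})$ by \cref{lemma:phi-properties}. The transient term is $2\exp\!\big(4\tilde L_m C^2 \varphi_{1-2\beta}(i)\big)\exp\!\big(-\tfrac{\tilde\mu_m C}{4} i^{1-\beta}\big)\big(\delta_0 + \tilde\sigma_m^2/\tilde L_m^2\big)$; since $\beta > \tfrac12$ forces $1-2\beta < 0$, the factor $\varphi_{1-2\beta}(i)$ is uniformly bounded by $(2\beta-1)^{-1}$ by \cref{lemma:phi-properties}(ii), so the transient term is at most a constant times $\exp(-\tfrac{\tilde\mu_m C}{4} i^{1-\beta})$, which is summable and hence has partial sums $O(1)$. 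Therefore $\sum_{i=1}^n \delta_i = O(n^{1-\beta})$, and plugging this into the Cauchy--Schwarz bound gives $\sum_{i=1}^n \delta_i^{1/2} = O\big((n\cdot n^{1-\beta})^{1/2}\big) = O(n^{1-\beta/2})$. Dividing by $n$ and taking the square root then yields $\tfrac1n\big(\sum_{i=1}^n \delta_i^{1/2}\big)^{1/2} = O\big(n^{-1}\, n^{(1-\beta/2)/2}\big) = O(n^{-1/2-\beta/4})$, which is the claim.

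An equivalent, slightly more self-contained route avoids Cauchy--Schwarz: applying $\sqrt{a+b}\le\sqrt a+\sqrt b$ to the bound of \cref{thm:cd-sgd} shows $\delta_i^{1/2}$ is at most the (super-polynomially small, summable) square root of the transient term plus $\sqrt{4C\tilde\sigma_m^2/\tilde\mu_m}\,i^{-\beta/2}$; summing the latter over $i\le n$ gives $O(\varphi_{1-\beta/2}(n)) = O(n^{1-\beta/2})$ directly, and one concludes as before. I do not anticipate any genuine obstacle here — the argument is a routine refinement of \cref{lem:delta-sum-convergence} — the only point demanding a little care is verifying that the transient contributions are truly negligible, i.e.\ that $1-2\beta<0$ over the range $\beta\in(\tfrac12,1)$ keeps $\varphi_{1-2\beta}$ bounded, which is precisely where that hypothesis on $\beta$ is used.
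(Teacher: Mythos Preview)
Your proposal is correct. Your primary route via Cauchy--Schwarz, $\sum_i \delta_i^{1/2} \leq (n\sum_i \delta_i)^{1/2}$, is a clean reduction to \cref{lem:delta-sum-convergence} that the paper does not use; the paper instead follows exactly your alternative route, applying $\sqrt{a+b}\leq\sqrt a+\sqrt b$ termwise to the bound of \cref{thm:cd-sgd} and summing $i^{-\beta/2}$ to get $\varphi_{1-\beta/2}(n)$. The Cauchy--Schwarz argument is slightly slicker in that it treats \cref{lem:delta-sum-convergence} as a black box and avoids re-inspecting the transient term, while the paper's direct approach keeps the constants more explicit; both yield the same $O(n^{-1/2-\beta/4})$ rate with the same use of $\beta>\tfrac12$ to bound $\varphi_{1-2\beta}$.
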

\begin{proof}
  Let us note $ \delta_n = \cdE*{\cdnorm*{\psi_n - {\psi}^{\star}}^{2}} $. 
  Applying $\sqrt{x+y} \leq \sqrt{x} + \sqrt{y}$ to the r.h.s of Theorem \ref{thm:cd-sgd}, we have
\begin{align*}
	\msum\limits_{i=1}^{n} \delta_i^{1 / 2} 
	&\leq 
	  \mfrac{2C^{1 / 2} \tilde{\sigma}_m}{2 \tilde{\mu}_m^{1 / 2}}\msum\limits_{i=1}^{n} i^{-\beta / 2} 
	  + 
	    \sqrt {2 \Big(\delta_0 + \mfrac{\tilde{\sigma}_m^2}{\tilde{L}_m^2}\Big)}
	    \underbrace{
	      \msum\limits_{i=1}^{n}
		e^{2 \tilde{L}_m^2C^2 \varphi_{1 - 2 \beta}(i)}
		e^{- \frac{\tilde{\mu}_m C}{8} i^{1 - \beta}}
	    }\limits_{A_4}  \\
	\mfrac{1}{n} \sqrt{\msum\limits_{i=1}^{n} \delta_i^{1 / 2}} 
	&\leq 
	  \mfrac{1}{n} \sqrt{\mfrac{2 C^{1 / 2} \tilde{\sigma}_m}{\tilde{\mu}_m^{1 / 2}}\varphi_{1 - \beta / 2}(n)} 
	  + \mfrac{1}{n} \cdpar*{\sqrt {2  \cdpar*{\delta_0 + \mfrac{\tilde{\sigma}_m^2}{\tilde{L}_m^2}}}A_4}^{1 / 2}
	= \mathcal O \cdpar*{n^{-\frac{1}{2} - \frac{\beta}{4}}}.
\end{align*}
where $ A_4 $ is finite if $ \beta < 1 $, and $ A_4 = O(n) $ otherwise \cite{moulines2011non}.
\end{proof}

\begin{lemma}\label{lem:h_n_psi-convergence}
  Under Assumptions \ref{asst:SGD-A1}, \ref{asst:SGD-A4}, \ref{asst:SGD-A5}, the online CD iterates produced by Algorithm \ref{alg:online_cd} using 
  $ \eta_t = C t^{-\beta} $ for $ \beta \in (\frac{1}{2}, 1) $ verify
\begin{equation*} 
\begin{aligned}
\sqrt{\cdE*{\cdnorm*{\mfrac{1}{n} \msum\limits_{i=1}^{n} h_i(\psi_{i-1})}^2}} 
= \mathcal O \cdpar*{n^{\max \cdpar*{\frac{\beta}{2} - 1, - \frac{2 \beta}{2 \beta + 1}}}}.
\end{aligned}
\end{equation*}
\end{lemma}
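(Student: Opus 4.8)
The plan is to turn the update rule of Algorithm~\ref{alg:online_cd} into a telescoping identity. Writing $r_i \coloneqq \big(\psi_{i-1} - \eta_i h_i(\psi_{i-1})\big) - \mathrm{Proj}_\Psi\big(\psi_{i-1} - \eta_i h_i(\psi_{i-1})\big)$ for the projection residual, the recursion gives $\eta_i h_i(\psi_{i-1}) = \psi_{i-1} - \psi_i - r_i$; since $\psi_{i-1} \in \Psi$ we have $\|r_i\| \le \eta_i \|h_i(\psi_{i-1})\|$ and $r_i = 0$ whenever $\psi_{i-1} - \eta_i h_i(\psi_{i-1}) \in \Psi$. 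Dividing by $\eta_i$, summing, and applying summation by parts to $\sum_i \eta_i^{-1}(\psi_{i-1}-\psi_i)$ (using $\sum_{i=1}^k(\psi_{i-1}-\psi_i) = \psi_0 - \psi_k$, and noting that the coefficients of the constant vector $\psi_0 - \psi^\star$ telescope to $1/(n\eta_1)$), I obtain
\begin{equation*}
\frac1n \sum_{i=1}^n h_i(\psi_{i-1})
\;=\;
\frac{1}{n\eta_1}(\psi_0 - \psi^\star)
\;-\; \frac{1}{n\eta_n}(\psi_n - \psi^\star)
\;+\; \frac1n \sum_{i=1}^{n-1}\Big(\tfrac{1}{\eta_{i+1}} - \tfrac{1}{\eta_i}\Big)(\psi_i - \psi^\star)
\;-\; \frac1n \sum_{i=1}^n \frac{r_i}{\eta_i}\,.
\end{equation*}

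For the first three terms I would take $L^2$ norms and insert $\delta_i^{1/2} = \mathcal O(i^{-\beta/2})$, which follows from Theorem~\ref{thm:cd-sgd} for $\beta \in (\tfrac12,1)$ (the transient term there is $\exp(-\Theta(i^{1-\beta}))$ and hence negligible), together with $\eta_i = C i^{-\beta}$, so $\eta_1^{-1} = C^{-1}$, $\eta_n^{-1} = C^{-1}n^{\beta}$, and $\eta_{i+1}^{-1} - \eta_i^{-1} = C^{-1}\big((i{+}1)^\beta - i^\beta\big) \le \tfrac{\beta}{C} i^{\beta-1}$ by the mean value theorem. This gives $\mathcal O(n^{-1})$ for the initial term, $\mathcal O(n^{\beta-1}\cdot n^{-\beta/2}) = \mathcal O(n^{\beta/2-1})$ for the boundary term, and --- using Minkowski's inequality followed by $\sum_{i=1}^{n} i^{\beta/2-1} = \mathcal O(\varphi_{\beta/2}(n)) = \mathcal O(n^{\beta/2})$ (Lemma~\ref{lem:varphi}(ii), or Lemma~\ref{lemma:phi-properties}) --- $\mathcal O(n^{-1}\cdot n^{\beta/2}) = \mathcal O(n^{\beta/2-1})$ for the telescoped sum. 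This produces the $n^{\beta/2-1}$ branch of the claimed exponent ($n^{-1}$ is always dominated).

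The remaining term $\tfrac1n\sum_i \eta_i^{-1} r_i$ is where the real work lies, and it is the main obstacle. Since $\psi^\star$ lies in the interior of $\Psi$ (Assumption~\ref{asst:SGD-A1}), set $\rho \coloneqq \mathrm{dist}(\psi^\star, \Psi^{\mathrm c}) > 0$; then $\{r_i \ne 0\}$ forces $\|\psi_{i-1}-\psi^\star\| \ge \rho/2$ or $\eta_i\|h_i(\psi_{i-1})\| \ge \rho/2$, so by Markov's inequality, Lemma~\ref{lemma:variance-terms} (which bounds $\mathbb{E}\|h_i(\psi_{i-1})\|^2 = \mathcal{O}(1)$ on the compact $\Psi$) and $\delta_{i-1} = \mathcal O(i^{-\beta})$, one obtains $\mathbb{P}(r_i \ne 0) = \mathcal O(i^{-\beta})$. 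On that rare event $\|\eta_i^{-1} r_i\| \le \|h_i(\psi_{i-1})\|$, whose moments of every order are bounded uniformly in $i$ by Lemma~\ref{lemma:variance-terms} together with the integrability lemmas of \Cref{appendix:tools} and the compactness of $\Psi$. Splitting $\sum_i \eta_i^{-1} r_i$ at an index $K$ --- crudely controlling the first $K$ summands by $\sum_{i\le K}\|h_i\|_{L^2} = \mathcal O(K)$ and, for $i > K$, bounding $\mathbb{E}[\mathbbm{1}\{r_i\ne0\}\|h_i\|^2]$ by Hölder's inequality against the rare-event probability and the bounded high moments --- and then optimising over $K$ yields the $n^{-2\beta/(2\beta+1)}$ branch. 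Taking the maximum of the two branches gives the statement; one checks it is $o(n^{-1/2})$ for $\beta \in (\tfrac12,1)$, which is exactly the order needed so that this term does not pollute the leading Cramér--Rao contribution in the subsequent proof of Theorem~\ref{thm:online-cd-averaging}.

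The delicate points are (i) the exact bookkeeping in the split-and-Hölder step that makes the exponent land precisely at $\tfrac{2\beta}{2\beta+1}$ --- this is the balance between the step-size inflation $\eta_i^{-1} \asymp i^{\beta}$ and the decay $\mathbb{P}(r_i\ne0) = \mathcal O(i^{-\beta})$ of the projection-activation probability --- and (ii) verifying $\mathbb{E}\|h_i(\psi_{i-1})\|^q < \infty$ uniformly, which rests on analyticity of $\log Z$ on the compact $\Psi$ and on the $L^2$-boundedness of $\mathrm P^m_\psi \phi$ recorded in \Cref{appendix:tools}. Everything else is routine manipulation of the $\varphi_\gamma$ sums from \Cref{appendix:tools}.
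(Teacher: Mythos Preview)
Your telescoping/summation-by-parts treatment of the term $\tfrac{1}{n}\sum_i \eta_i^{-1}(\psi_{i-1}-\psi_i)$ is correct and gives the $n^{\beta/2-1}$ branch; this is essentially an explicit version of what the paper obtains by invoking \cite[Theorem~3]{moulines2011non}.

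The gap is in the projection term. Your rare-event bound $\mathbb{P}(r_i\neq 0)=\mathcal{O}(i^{-\beta})$, obtained from Markov's inequality with the \emph{second} moment $\delta_{i-1}$, is too weak. Carrying it through Minkowski and H\"older with conjugate exponents $(s,s')$ gives
\[
\sqrt{\mathbb{E}\|\eta_i^{-1}r_i\|^2}
\;\le\;
\mathbb{P}(r_i\neq 0)^{\frac{1}{2}(1-\frac{1}{s})}\,\big(\mathbb{E}\|h_i\|^{2s}\big)^{\frac{1}{2s}}
= \mathcal{O}\big(i^{-\frac{\beta}{2}(1-\frac{1}{s})}\big),
\]
so after summing and dividing by $n$ you get at best $\mathcal{O}(n^{-\beta/2})$ as $s\to\infty$. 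For $\beta\in(\tfrac12,1)$ this is \emph{not} $o(n^{-1/2})$ and does not match $n^{-2\beta/(2\beta+1)}$. The split-at-$K$ device does not rescue this: since the exponent $\gamma=\tfrac{\beta}{2}(1-\tfrac1s)<1$, the tail $\sum_{i>K}^n i^{-\gamma}=\Theta(n^{1-\gamma})$ is insensitive to $K$, so optimising over $K$ buys nothing. Your stated ``balance between $\eta_i^{-1}\asymp i^{\beta}$ and $\mathbb{P}(r_i\neq0)=\mathcal{O}(i^{-\beta})$'' is also off, because the bound $\|r_i\|\le\eta_i\|h_i\|$ already cancels the $\eta_i^{-1}$.

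What the paper does --- and what you are missing --- is a \emph{fourth} moment control. From the observation that $r_i\neq0$ forces $\psi_i\in\partial\Psi$, hence $\|\psi_i-\psi^\star\|\ge d_{\psi^\star}$, one gets $\mathbb{P}(r_i\neq0)\le d_{\psi^\star}^{-4}\,\mathbb{E}\|\psi_i-\psi^\star\|^4$. A separate recursion on $\mathbb{E}\|\psi_t-\psi^\star\|^4$ (the paper's \cref{lem:quartic-convergence}) then yields the sharper $\mathbb{P}(r_i\neq0)=\mathcal{O}(i^{-2\beta})$. Inserting this into the same H\"older step with $q=\tfrac12+\beta$ gives $(\mathbb{E}\|\psi_i-\psi^\star\|^4)^{1/(2q)}=\mathcal{O}(i^{-\beta/q})$ and hence, after summing, the claimed $\mathcal{O}(n^{-2\beta/(2\beta+1)})$. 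So the missing idea is precisely the quartic-moment recursion; without it the exponent you arrive at is strictly worse and fails the $o(n^{-1/2})$ requirement you yourself note is needed downstream.
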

\begin{proof}
The main subtlety consists in taking into account the projection step. Indeed, write $ h_t(\psi_t) $ as:
\begin{align*}
  \psi_{t} &=  \mathrm{Proj}_{\Psi}(\psi_{t-1} - \eta_t h_t(\psi_{t-1})) \\
	   & = \psi_{t-1} - \eta_t h_t(\psi_{t-1}) 
		+ \underbrace{
		     \cdpar*{\mathrm{Proj}_{\Psi}(\psi_{t-1} - \eta_t h_t(\psi_{t-1})) - (\psi_{t-1} - \eta_t h_t(\psi_{t-1}))} 
		   }_{\Delta_t}  \\
  \implies h_t 
	   & = \mfrac{1}{\eta_t}  \cdpar*{\psi_{t-1} - \psi_{t}} + \mfrac{\Delta_t}{\eta_t}  \\
  \MoveEqLeft[3.7] \implies \sqrt {\cdE*{\cdnorm*{\mfrac{1}{n} \msum_{i=1}^{n} h_i(\psi_{i-1})}^2}}  \\ 
	   &\leq
	    \sqrt {\cdE*{\cdnorm*{\mfrac{1}{n} \msum_{i=1}^{n} \mfrac{1}{\eta_i}  \cdpar*{\psi_{i-1} - \psi_i}}^2}}  
	    + \sqrt {\cdE*{\cdnorm*{\mfrac{1}{n}\msum_{i=1}^{n} \mfrac{\Delta_i}{\eta_i}}^2}} 
\end{align*}
where the last line used Minkowski's inequality. We now control the two terms separately.
The first term is well-known in the SGD literature, and can be controlled in a straighforward manner.
The second term characterizes the deviation caused by the projection step, and we control it below.
\paragraph{Controlling the second term} 
We use a similar strategy as in \cite{godichon2023non}, but without requiring the assumption on the boundedness of the gradients.
Let us use the notation $ h_t \coloneqq h_t(\psi_{t-1}) $ for brevity.
First, we have, using again Minkowski's inequality, that:
\begin{equation*} 
\sqrt {\cdE*{\cdnorm*{\mfrac{1}{n} \msum_{i=1}^{n} \mfrac{\Delta_i}{\eta_i}}^2}}  
\leq \mfrac{1}{n} \msum_{i=1}^{n} \mfrac{1}{\eta_i} \sqrt {\cdE*{\cdnorm*{{\Delta_i}}^2}}\;.
\end{equation*}
Note that $ \Delta_t $ is $ 0 $ if $ \psi_{t-1} - \eta_t h_t\in \Psi $, e.g.
\begin{equation*} 
  \Delta_t =  \cdpar*{\mathrm{Proj}_{\Psi}(\psi_{t-1} - \eta_t h_t) - (\psi_{t-1} - \eta_t h_t)} 
    \times \mathbb{I}_{\{\psi_{t-1} - \eta_t h_t \not \in \Psi\}}
\end{equation*}
Which implies, using H\"older's inequality, that
\begin{align*}
  \MoveEqLeft \cdE*{\cdnorm*{\Delta_t}^2} \\ 
  = {} & \cdE*{
    \cdnorm*{\mathrm{Proj}_{\Psi}(\psi_{t-1} - \eta_t h_t) - (\psi_{t-1} - \eta_t h_t)}^2 \times 
    \mathbb{I}_{\{\psi_{t-1} - \eta_t h_t \not \in \Psi\}}
  }  \\ 
\leq {} & \cdE*{\cdnorm*{\mathrm{Proj}_{\Psi}(\psi_{t-1} - \eta_t h_t) - (\psi_{t-1} - \eta_t h_t)}^{2p}}^{\frac{1}{p}} \times \\ 
	& {} \hspace{20em}\cdPr*{\left \{\psi_{t-1} - \eta_t h_t \not \in \Psi \right \}}^{\frac{1}{q}}
\end{align*}
For any pair $ (p, q) $ of H\"older conjugates, whose exact value will be determined later.
We investigate the two terms separately; for the first term, note that
\begin{align*}
  \MoveEqLeft  \cdnorm*{\mathrm{Proj}_{\Psi}(\psi_{t-1} - \eta_t h_t) - (\psi_{t-1} - \eta_t h_t)}^2 \\ 
  & \leq 2  \cdnorm*{\mathrm{Proj}_{\Psi}(\psi_{t-1} - \eta_t h_t) - \psi_{t-1}}^2 + 2  \cdnorm*{\eta_t h_t}^2 \\
  & \leq 2  \cdnorm*{\mathrm{Proj}_{\Psi}(\psi_{t-1} - \eta_t h_t)- \mathrm{Proj}_{\Psi}(\psi_{t-1})}^2 + 2 \eta_t^2  \cdnorm*{h_t}^2 \\ 
  & \overset{(a)}{\leq} 
    2  \cdnorm*{\psi_{t-1} - \eta_t h_t - \psi_{t-1}}^2 
    + 2 \eta_t^2  \cdnorm*{h_t}^2 \\ 
  & \leq 4 \eta_t^2  \cdnorm*{h_t}^2
\end{align*}
where in $(a)$, we used the fact that projections are $1$-Lipschitz.
Thus, we have:
\begin{align*}
   \cdpar*{\cdE*{\cdnorm*{\mathrm{Proj}_{\Psi}(\psi_{t-1} - \eta_t h_t) - (\psi_{t-1} - \eta_t h_t)}^{2p}}  }^{1 / p}
  & \leq   \cdpar*{4^{p} \eta_t^{2p} \times \cdE*{\cdnorm*{h_t}^{2p}}}^{1 / p} \\ 
  & \leq  16 \eta_t^{2}  \times  \cdpar*{\tau_{2p}^{2p} + 2 C_{\chi} r_{\Psi} \tau_{4p}^{2p}}^{1 /p}
\end{align*}
where we noted, as in Lemma \ref{lem:quartic-convergence},  
$ \tau_p \coloneqq \left (\sup_{\psi \in \Psi} \mathbb{E} \left \| \phi \right \|^{p} \right )^{1 / p} < +\infty $,
$ r_{\Psi} = \sup_{\psi, \psi' \in \Psi} \left \| \psi - \psi' \right \|$,
and we relied on the fact that
\begin{equation*} 
  \cdCondE[\Big]{\cdnorm*{h_t}^{2p} \given \mathcal  F_{t-1}} 
    \leq 
      2^{2p - 1}  
      \cdpar*{
	\tau_{2p}^{2p}  + 2 C_{\chi} r_{\Psi}\tau_{4p}^{2p}  \cdnorm*{\psi_{t-1} - {\psi}^{\star}} 
      } \\ 
    \leq 
      2^{2p - 1}  \cdpar*{\tau_{2p}^{2p}  + 2 C_{\chi} r_{\Psi}\tau_{4p}^{2p}}
\end{equation*}
For the probability term, we have:
\begin{align*}
  \cdE*{\cdnorm*{\psi_{t} - {\psi}^{\star}}\smallpow{4}} 
  & \geq  
  \cdE*{
    \cdnorm*{
      (\psi_{t} - {\psi}^{\star}) 
      \times \mathbb{I}_{\left \{\psi_{t-1} - \eta_t h_{t} \not \in \Psi \right \}}}  
  }^{4} \\  
  & \geq  
    \mathrm{d}_{{\psi}^{\star}}^{4} 
    \times \cdPr*{\left \{\psi_{t-1} - \eta_t h_{t} \not \in \Psi \right \}} \\
  \implies \cdPr*{\left \{\psi_{t-1} - \eta_t h_t(\psi_{t-1}) \not \in \Psi \right \}} 
  & \leq \mfrac{1}{\mathrm{d}_{{\psi}^{\star}}^{4}} \cdE*{\cdnorm*{\psi_{t} - {\psi}^{\star}}\smallpow{4}}
\end{align*}
where we noted 
$ \mathrm{d}_{{\psi}^{\star}} = \inf_{\psi \in \partial \Psi}  \cdnorm*{\psi - {\psi}^{\star}}$, 
which is positive by \ref{asst:SGD-A1} as $ {\psi}^{\star} \in \mathrm{int}(\Psi) $.
Combining the two bounds, we thus obtain:
\begin{align*}
  \MoveEqLeft \mfrac{1}{n} \msum_{i=1}^{n} \mfrac{1}{\eta_i} \sqrt {\cdE*{\left \| \Delta_i \right \|^2}}   \\
  & \leq \mfrac{1}{n} \msum_{i=1}^{n} 
    \mfrac{1}{\eta_i} 
    \times  \cdpar*{
      4 \eta_i  \cdpar*{\tau_{2p}^{2p}  + 2 C_{\chi} r_{\Psi}\tau_{4p}^{2p}}^{1 / 2p}} 
    \times \mfrac{1}{d_{{\psi}^{\star}}^{2 / q}} \cdpar*{
      \cdE*{\cdnorm*{\psi_{i} - {\psi}^{\star}}\smallpow{4}}  
    }^{\frac{1}{2q}} \\ 
  & = 
    \mfrac{
      4  \cdpar*{\tau_{2p}^{2p} + 2 C_{\chi} r_{\Psi}\tau_{4p}^{2p}}^{1/ 2 p}
    }{
      d_{{\psi}^{\star}}^{2 / q}
    } 
    \times \mfrac{1}{n} \msum_{i=1}^{n}  \cdpar*{\cdE*{\cdnorm*{\psi_{i} - {\psi}^{\star}}\smallpow{4}}}^{1 / 2q} \\
\end{align*} 
Recalling that
\begin{align*}
  \MoveEqLeft \mfrac{1}{n} \msum_{k=1}^n \cdpar*{\cdE*{\cdnorm*{\psi_k-{\psi}^{\star}}\smallpow{4}}}^{1 / 2q}  \\ 
& \leq 
  \mfrac{1}{n} \msum_{k=1}^n \left (
    \left (\mfrac{12 C^{3 / 2} \tilde{\tau}_1^{2}}{\tilde{\mu}_m^{1 / 2}} \right )^{1 / q} \mfrac{1}{k^{3 \beta / 2q}}
    + \cdpar*{\mfrac{26 \tilde{\tau}_1^2 C}{\tilde{\mu}_m}}^{1 / q} \mfrac{1}{k^{\mfrac{\beta}{q}}}
  \right ) \\
& \quad
  +\mfrac{1}{n} \msum_{k=1}^n \exp \cdpar*{
    -\mfrac{1}{q} \times  \cdpar*{
      \mfrac{\tilde{\mu}_m C}{16} k^{1-\beta}
      +16 \tilde{L}_1^2 C^2 \varphi_{1-2 \beta}(k)
      +24 \tilde{L}_1^4 C^4  
    }
  } \\
& \qquad
  \times
  \cdpar*{
    32 \tilde{\tau}_1^4 C^4+\mfrac{160 \tilde{\tau}_1^4 C^3}{\tilde{\mu}_m}
    +\cdE*{\cdnorm*{\psi_0-{\psi}^{\star}}\smallpow{4}}
    +\mfrac{20 C \tilde{\tau}_1^2}{\tilde{\mu}_m} \delta_0
  }^{1 / 2q} \\ 
& \leq 
  \mfrac{1}{n}   \cdpar*{
    \cdpar*{\mfrac{12 C^{3 / 2} \tilde{\tau}_1^{2}}{\tilde{\mu}_m^{1 / 2}}}^{1 / q} 
    \times \varphi_{1 - \frac{3 \beta}{2 q}}(n) + \cdpar*{\mfrac{26 \tilde{\tau}_1^2 C}{\tilde{\mu}_m}}^{1 / q} 
    \times \varphi_{1 - \frac{\beta}{q}}(n) 
  } \\ 
&  \quad
  + \mfrac{A_5}{n} \times \cdpar*{
    32 \tilde{\tau}_1^4 C^4+\mfrac{160 \tilde{\tau}_1^4 C^3}{\tilde{\mu}_m}
    +\cdE*{\cdnorm*{\psi_0-{\psi}^{\star}}\smallpow{4}}
    +\mfrac{20 C \tilde{\tau}_1^2}{\tilde{\mu}_m} \delta_0
  }^{1 / 2q}
\end{align*}
where $ \tilde{\tau}_1 $ and $ \tilde{L}_1 $ are defined in Lemma \ref{lem:quartic-convergence}, and 
\begin{equation*} 
A_5 = \msum_{k=1}^n \exp \cdpar*{-\mfrac{1}{q} 
  \times  \cdpar*{
    \mfrac{\tilde{\mu}_m C}{16} k^{1-\beta}+16 \tilde{L}_1^2 C^2 \varphi_{1-2 \beta}(k)+24 \tilde{L}_1^4 C^4 
}}\;,
\end{equation*}
which is finite if $ \beta < 1 $, and $ O(n) $ otherwise.
Consequently, we have  that
$$ 
\mfrac{1}{n} \msum_{k=1}^{n}  \cdpar*{\cdE*{\cdnorm*{\psi_{k} - {\psi}^{\star}}^{4}}}^{1/2q} 
= \mathcal O(n^{-\frac{\beta}{q}})\,.
$$
In particular, taking $ (p, q) = (\frac{2 \beta + 1}{2 \beta - 1}, \frac{1}{2} + \beta) $,
we have that
\begin{equation*} 
\cdE*{\cdnorm*{\mfrac{1}{n} \msum_{i=1}^{n} \mfrac{\Delta_i}{\eta_i}}^2} 
= \mathcal O \cdpar*{n^{- \frac{2 \beta}{2 \beta + 1}}} = o\cdpar*{n^{-1/2}}.
\end{equation*}
for all $ \beta \in  \cdpar*{\frac{1}{2}, 1  } $.
\paragraph{Controlling the first term} 
The first term was handled in the case of standard SGD \cite[Theorem 3]{moulines2011non},
and the only condition needed to reuse their steps is that $ (\psi_{t})_{t \leq n} $
satisfies an upper bound of the same form as the one \cite[Theorem 1]{moulines2011non}
derived. This is precisely the nature of our bound of $ \psi_{t} $ estblished in Theorem  \ref{thm:cd-sgd},
with $ \tilde{\mu}_m, \tilde{\sigma}_m, \tilde{L}_m$. Borrowing on their result, we have:
\begin{align*}
  \sqrt{\cdE*{\cdnorm*{\mfrac{1}{n} \msum\limits_{i=1}^{n} \mfrac{1}{\eta_i} \left (\psi_{i-1} - \psi_i \right )}^2}} 
  &\leq 
    \mfrac{4 \tilde{\sigma}_m \beta}{C^{1 / 2} n \tilde{\mu}_m} \varphi_{\beta / 2}(n)
    +\mfrac{4 \beta}{C n \tilde{\mu}_m^{1 / 2}} \Big(\delta_0+\mfrac{\tilde{\sigma}_m^2}{\tilde{L}_m^2}\Big)^{1 / 2} A_2
 \\
  &\qquad 
     + \mfrac{1}{n \tilde{\mu}_m^{1 / 2}}\Big(\mfrac{1}{C}+2 \tilde{L}_m\Big) \delta_0^{1 / 2} 
     + \mfrac{2 \tilde{L}_m}{n \tilde{\mu}_m^{1 / 2}} \mfrac{2 C^{1 / 2} \tilde{\sigma}_m}{\tilde{\mu}_m^{1 / 2}} 
      \varphi_{1-\beta}(n)^{1 / 2}
\\
  &\qquad 
    +\mfrac{4 \tilde{L}_m}{n \tilde{\mu}_m^{1 / 2}}\Big(\delta_0+\mfrac{\tilde{\sigma}_m^2}{\tilde{L}_m^2}\Big)^{1 / 2} A_2^{1 / 2}
\end{align*}
where $ \tilde{\mu}_m, \tilde{\sigma}_m $ and $ \tilde{L}_m $ are defined in Theorem \ref{thm:cd-sgd}, and
\begin{equation*} 
A_2 \;=\; \msum\limits_{k=1}^{n} e^{\frac{-\tilde{\mu}_mC}{16} k^{1 - \beta} + 16 \tilde{L}_m^4 C^4  \varphi_{1 - 2\beta}(k)}
 \qedhere
\end{equation*}
\end{proof}

\vspace{.5em}

\begin{lemma}\label{lem:quartic-convergence}
  Under Assumptions \ref{asst:SGD-A1}, \ref{asst:SGD-A4}, \ref{asst:SGD-A5},
  the online CD iterates produced by Algorithm \ref{alg:online_cd} using 
  $ \eta_t = C t^{-\beta} $ for $ \beta \in (\mfrac{1}{2}, 1) $ verify
  \begin{equation*} 
  \frac{1}{n}\sum\limits_{i=1}^{n} \sqrt{\cdE*{\cdnorm*{\psi_{i} - {\psi}^{\star}}^4}} =  \mathcal O(n^{-\beta}).
  \end{equation*}
\end{lemma}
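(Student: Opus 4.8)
The plan is to establish a fourth-moment counterpart of \cref{thm:cd-sgd} --- an explicit bound of the form $\sqrt{\mathbb{E}\|\psi_k-\psi^\star\|^4}\leq c_1 k^{-3\beta/2}+c_2 k^{-\beta}+(\text{a term decaying exponentially in }k)$, valid for every $k\geq1$ --- and then to average it over $k$. For the averaging step I would invoke \cref{lem:varphi}(ii): since $\sum_{k=1}^n k^{-\beta}\leq 2\varphi_{1-\beta}(n+1)=O(n^{1-\beta})$, the $k^{-\beta}$ term contributes exactly the claimed $O(n^{-\beta})$, while $\tfrac1n\sum_k k^{-3\beta/2}=O(n^{-3\beta/2})$ (or $O(n^{-1}\log n)$ when $3\beta/2\geq1$) and, because $\beta<1$, the transient sum is convergent so its average is $O(n^{-1})$; both of these are $o(n^{-\beta})$. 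Thus everything reduces to proving the per-iterate bound.

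The first ingredient is a crude higher-moment bound on the CD gradient. Writing $\tau_q\coloneqq(\sup_{\psi'\in\Psi}\mathbb{E}_{p_{\psi'}}\|\phi\|^q)^{1/q}$, which is finite for every $q$ by analyticity of $\log Z$ and compactness of $\Psi$ (the integrability lemmas of \cref{appendix:tools}), and $r_\Psi\coloneqq\sup_{\psi',\psi''\in\Psi}\|\psi'-\psi''\|$, I claim $\mathbb{E}[\|h_t(\psi,X_t)\|^{2p}\mid\mathcal{F}_{t-1}]\leq 2^{2p-1}\big(\tau_{2p}^{2p}+2C_\chi r_\Psi\tau_{4p}^{2p}\|\psi-\psi^\star\|\big)$ for every integer $p\geq1$. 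This follows from $h_t=\phi(X_t)-\phi(K^m_\psi(X_t))$: the $\phi(X_t)$ part contributes $\tau_{2p}^{2p}$, and $\mathbb{E}[\mathrm{P}^m_\psi\|\phi\|^{2p}(X_t)]$ is handled by changing the integrating measure from $p_{\psi^\star}$ to $p_\psi$, using invariance of $p_\psi$ under $\mathrm{P}^m_\psi$, the $L^2(p_\psi)$-contractivity of $\mathrm{P}^m_\psi$, Cauchy--Schwarz, and \cref{asst:SGD-A4} to bound the resulting correction by $\tau_{4p}^{2p}C_\chi\|\psi-\psi^\star\|\leq\tau_{4p}^{2p}C_\chi r_\Psi$ --- the same change-of-measure device used throughout \cref{appendix:tools}.

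Next I would derive the recursion on $\delta^{(4)}_t\coloneqq\mathbb{E}\|\psi_t-\psi^\star\|^4$. By \cref{lemma:contraction}, $\|\psi_t-\psi^\star\|^2\leq\|\psi_{t-1}-\eta_t h_t-\psi^\star\|^2$, and since the right-hand side is nonnegative, squaring gives $\|\psi_t-\psi^\star\|^4\leq\big(\|\psi_{t-1}-\psi^\star\|^2-2\eta_t\langle h_t,\psi_{t-1}-\psi^\star\rangle+\eta_t^2\|h_t\|^2\big)^2$; I would expand this, take $\mathbb{E}[\,\cdot\mid\mathcal{F}_{t-1}]$, and control the resulting terms by: the strong-convexity-plus-bias estimate $\langle\mathbb{E}[h_t\mid\mathcal{F}_{t-1}],\psi_{t-1}-\psi^\star\rangle\geq\tilde\mu_m\|\psi_{t-1}-\psi^\star\|^2$ for the linear term, exactly as in the proof of \cref{lem:online-cd-recursion} (via \cref{lem:replace:gradient:inequality} and \cref{lemma:approx-contraction}); Cauchy--Schwarz for $\langle h_t,\psi_{t-1}-\psi^\star\rangle^2\leq\|h_t\|^2\|\psi_{t-1}-\psi^\star\|^2$; Young's inequality for the odd term $\eta_t^3\langle h_t,\psi_{t-1}-\psi^\star\rangle\|h_t\|^2$; and \cref{lemma:variance-terms} together with the moment bound of the previous paragraph for the $\|h_t\|^2$ and $\|h_t\|^4$ terms, using $\|\psi_{t-1}-\psi^\star\|\leq r_\Psi$ and $\mathbb{E}\|\psi_{t-1}-\psi^\star\|^3\leq r_\Psi\delta_{t-1}$ with $\delta_{t-1}=\mathbb{E}\|\psi_{t-1}-\psi^\star\|^2$. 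This should yield, with suitable constants $\tilde L_1,\tilde\tau_1>0$ explicit in $L,\tau_q,C_\chi,r_\Psi,\|\log Z\|_{3,\infty},\alpha^m,\sigma$, a recursion of the form $\delta^{(4)}_t\leq(1-4\eta_t\tilde\mu_m+\tilde L_1^2\eta_t^2)\delta^{(4)}_{t-1}+c\,\eta_t^2\delta_{t-1}+\tilde\tau_1^4(\eta_t^3+\eta_t^4)+O\big((\eta_t^3+\eta_t^4)\delta_{t-1}^{1/2}\big)$. Substituting the bound on $\delta_{t-1}$ from \cref{thm:cd-sgd} (whose hypotheses --- in particular $m$ large enough that $\tilde\mu_m>0$ --- are in force throughout the averaging analysis), the right-hand side becomes, apart from the leading $\delta^{(4)}_{t-1}$ term, an explicit function of $t$ with polynomial parts of order $t^{-2\beta}$ and $t^{-3\beta}$ plus exponentially decaying transients. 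Unrolling the recursion exactly as in the proof of \cref{thm:cd-sgd} (i.e.\ the argument of \cite[Theorem~1]{moulines2011non}), evaluating the products and sums with \cref{lem:error:accumulate} and \cref{lem:varphi}, and using $\mu<L\Rightarrow\tilde\mu_m<\tilde L_1$, gives $\delta^{(4)}_k\lesssim k^{-2\beta}+k^{-3\beta}+(\text{transient})$, i.e.\ $\sqrt{\delta^{(4)}_k}\leq c_1 k^{-3\beta/2}+c_2 k^{-\beta}+(\text{transient})$; combined with the averaging step this closes the proof, and the proof also records the constants $\tilde\tau_1,\tilde L_1$ needed in \cref{lem:h_n_psi-convergence}.

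The hard part will be the recursion, and specifically resisting the temptation to simplify it too much: the lower-order $\|\psi_{t-1}-\psi^\star\|^2$ and $\|\psi_{t-1}-\psi^\star\|^3$ contributions (arising from the $\eta_t^2\|h_t\|^2\|\psi_{t-1}-\psi^\star\|^2$ cross term) must be kept as $\delta_{t-1}$ and controlled \emph{sharply} by \cref{thm:cd-sgd} at rate $t^{-\beta}$, so that the noise in the $\delta^{(4)}$-recursion is of order $\eta_t^2\delta_{t-1}=O(t^{-3\beta})$ rather than $O(\eta_t^2)=O(t^{-2\beta})$; the latter would unroll to $\sqrt{\delta^{(4)}_t}=O(t^{-\beta/2})$ and hence only the weaker $O(n^{-\beta/2})$ after averaging. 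The change-of-measure moment estimate and the bookkeeping of the $\eta_t^3,\eta_t^4$ terms in the unrolling are routine given the tools already assembled in \cref{appendix:tools}.
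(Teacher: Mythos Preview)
Your proposal is correct and follows essentially the same route as the paper. Both proofs expand $\|\psi_t-\psi^\star\|^4\leq(\|\psi_{t-1}-\psi^\star\|^2-2\eta_t\langle h_t,\psi_{t-1}-\psi^\star\rangle+\eta_t^2\|h_t\|^2)^2$, use the same strong-convexity-plus-bias estimate $\langle\mathbb{E}[h_t\mid\mathcal{F}_{t-1}],\psi_{t-1}-\psi^\star\rangle\geq\tilde\mu_m\|\psi_{t-1}-\psi^\star\|^2$ for the leading term, and bound $\mathbb{E}[\|h_t\|^k\mid\mathcal{F}_{t-1}]$ by the same change-of-measure argument (the paper uses a single $\tau=\tau_8$ rather than separate $\tau_{2p},\tau_{4p}$, but this is cosmetic). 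You are also right that the $\eta_t^2\|\psi_{t-1}-\psi^\star\|^2$ cross term must be kept as $\eta_t^2\,\delta_{t-1}$ rather than absorbed into a constant---this is the point.

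The one genuine tactical difference: instead of substituting the $\delta_{t-1}$ bound from \cref{thm:cd-sgd} and then unrolling, the paper first applies four explicit Young inequalities to eliminate the odd powers $\|\psi_{t-1}-\psi^\star\|$ and $\|\psi_{t-1}-\psi^\star\|^3$, reducing the recursion to the exact shape of \cite[Equation~32]{moulines2011non} (a coupled system in $\delta^{(4)}_{t-1}$ and $\delta_{t-1}$), so the unrolling and the final per-iterate bound can be quoted verbatim from that reference. Your plan of substituting first is equally valid and arguably more self-contained; the paper's route is a bit shorter because it offloads the bookkeeping to Moulines--Bach.
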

\begin{proof}
  We proceed as in the proof of \cite[Theorem 3]{moulines2011non}, first establishing a recurrence for  
  $\cdE[\big]{\cdnorm*{\psi_{n} - {\psi}^{\star}}^4}$, and then unrolling it. We have
  \begin{align*}
  \cdCondE*{\cdnorm*{\psi_n-{\psi}^{\star}}^4 \given \mathcal{F}_{n-1}}
  \leq & 
    \left\|\psi_{n-1}-{\psi}^{\star}\right\|^4
    +6 \eta_n^2\cdnorm*{\psi_{n-1}-{\psi}^{\star}}^2 \cdCondE*{\cdnorm*{h_n\cdpar*{\psi_{n-1}, X_n}}^2  \given \mathcal{F}_{n-1}} \\
  & +\eta_n^4 \cdCondE*{\cdnorm*{h_n \cdpar*{\psi_{n-1}, X_n}}^4 \given \mathcal{F}_{n-1}} \\
  & -4 \eta_n\cdnorm*{\psi_{n-1}-{\psi}^{\star}}^2\cdinner*{
    \psi_{n-1}-{\psi}^{\star}, 
    \cdCondE*{h_n(\psi_{n-1}, X_n) \given \mathcal  F_{n-1}}  
  } \\
  & + 4 \eta_n^3\cdnorm*{\psi_{n-1}-{\psi}^{\star}} \cdCondE*{\cdnorm*{h_n\cdpar*{\psi_{n-1}, X_n}}^3 \given \mathcal{F}_{n-1}} .
  \end{align*}
  The second and fourth terms will be controlled using results from our previous sections.
  For simplicity, we don't attempt to relate the moments of $  \cdnorm*{h_n}^4 $
  as precisely as before. Instead we use the fact that, for all $ \psi \in \Psi $,
  \begin{equation*} 
    \cdE*{\cdnorm*{h_n(\psi, X_n)}^k} \leq 
    2^{k-1}  \cdpar*{
    \cdE*{\|\phi(K^m_{\psi}(X_n))\|^{k}} + 
    \cdE*{\|\phi(X_n) \|^{k}}}\,.
  \end{equation*}
  Let us note $ \tau =  \cdpar*{\sup_{\psi \in \Psi} \cdE*{\cdnorm*{\cramped{\phi(\Xpsi)}}\vphantom{a}^{8}}}^{1 / 8} $. 
  We have, for $ k \leq  4 $, and $ \psi \in \Psi $
  \begin{align*}
  \cdE*{\|\phi(K^m_{\psi}(X_n)) \|^{k}}
  &
  \leq \mathbb{E}\|\phi(\Xpsi) \|^{k} \\ 
  & \quad 
    + C_{\chi}  \cdpar[\Big]{\cdE[\Big]{\cdpar[\Big]{
      \cdnorm[\big]{\phi(\Xpsi)}^{k} - \cdE[\Big]{\cdnorm[\big]{\phi(\Xpsi)}^{k}}
    }^{2}}}^{1 / 2}  \cdnorm[\big]{\psi - {\psi}^{\star}} \\
  &\leq
  \tau^{k} + 2 C_{\chi} \tau^{k} \cdnorm*{\psi - {\psi}^{\star}}\,.
  \end{align*}
  Here, we used the fact that $ \cdP{\psi_{n-1}} $ is a contraction, and 
  $ (\cdE*{\cdnorm*{\phi(\Xpsi)}\vphantom{a}^{k}})^{1 / k}$ is an increasing function of $ k $ for all $ \psi $.
  On the other hand, we simply have
  $\cdE*{\cdnorm*{\phi(X_n)}^{k}}  \leq \tau^{k}$. 
  Plugging this into the previous equation, applying it to $ \psi=\psi_{n-1} $, we obtain
  \begin{align*}
  \cdCondE*{\cdnorm*{\psi_n-{\psi}^{\star}}^4 \ \given \mathcal{F}_{n-1}} \leq & 
  \cdnorm*{\psi_{n-1}-{\psi}^{\star}}^4 \\
    & + 
	6 \eta_n^2\cdnorm*{\psi_{n-1}-{\psi}^{\star}}^2 
	\cdpar*{4 \tau^{2} + 4 C_{\chi} \tau^{2}  \cdnorm*{\psi_{n-1} - {\psi}^{\star}}} \\
    & + \eta_n^4 (16\tau^{4} + 16 C_{\chi} \tau^{4}  \cdnorm*{\psi_{n-1} - {\psi}^{\star}} ) \\
    & - 
      4 \eta_n\cdnorm*{\psi_{n-1}-{\psi}^{\star}}^2
      \cdinner*{\psi_{n-1}-{\psi}^{\star}, \cdCondE*{h_n  \given \mathcal  F_{n-1}}} \\
    & +
      4 \eta_n^3\cdnorm*{\psi_{n-1}-{\psi}^{\star}} 
      \cdpar*{8 \tau^{3} + 8 C_{\chi} \tau^{3}  \cdnorm*{\psi_{n-1} - {\psi}^{\star}}}
  \end{align*}
  To simplify the recursion, we use the four following inequalities:
  \begin{align*}
    \tau^{2}\eta_n^2  \cdnorm*{\psi_{n-1} - {\psi}^{\star}}^3 
      & \leq 
    \mfrac{1}{2} \cdpar*{
	\tau^{2}\eta_{n}^2 (\cdnorm*{\psi_{n-1} - {\psi}^{\star}}^2 
	+ \tau^{2}\eta_{n}^2  \cdnorm*{\psi_{n-1} - {\psi}^{\star}}^4
    } \\
    \tau^{4} \eta_n^4  \cdnorm*{\psi_{n-1} - {\psi}^{\star}} 
      & \leq
    \cdpar*{
      \tau^{4}\eta_{n}^4 
      + \mfrac{1}{4}\tau^{4}\eta_{n}^4 \cdnorm*{\psi_{n-1} - {\psi}^{\star}}^4  
    } \\
    \eta_n^{3} \tau^{3}  \cdnorm*{\psi_{n-1} - {\psi}^{\star}} 
      & \leq 
    \mfrac{1}{2} \cdpar*{
      \eta_n^2 \tau^{2}  \cdnorm*{\psi_{n-1} - {\psi}^{\star}}^2 
      + 16 \eta_n^4 \tau^{4} 
    }  \\
    \tau^{3}\eta_n^3  \cdnorm*{\psi_{n-1} - {\psi}^{\star}}^2 
      & \leq 
    \mfrac{1}{2} \cdpar*{
      \eta_{n}^4 \tau^{4} 
      + \cdnorm*{\psi_{n-1} - {\psi}^{\star}}^2 \eta_{n}^2 \tau^{2}
    } \\
  \end{align*}
  Injecting them in our recursion, we obtain:
  \begin{align*}
  \cdCondE*{\cdnorm*{\psi_n-{\psi}^{\star}}^4 \given \mathcal{F}_{n-1}}
  \leq & \cdnorm*{\psi_{n-1}-{\psi}^{\star}}^4 \\
    & + 12 \eta_n^2\cdnorm*{\psi_{n-1}-{\psi}^{\star}}^2  \tau^2  \\
    & + 12 C_{\chi} \tau^{2} \eta_{n}^2  \cdnorm*{\psi_{n-1} - {\psi}^{\star}}^2  \\
    & + 12 C_{\chi} \tau^{2} \eta_{n}^2  \cdnorm*{\psi_{n-1} - {\psi}^{\star}}^4 \\
    & + 16 \eta_n^4  \tau^{4} \\
    & + 16 C_{\chi} \eta_n^4  \tau^{4} \\
    & + 4 C_{\chi} \eta_{n}^{4} \tau^{4}  \cdnorm*{\psi_{n-1} - {\psi}^{\star}}^4 \\
    & - 4 \eta_n \tilde{\mu}_{m}\cdnorm*{\psi_{n-1}-{\psi}^{\star}}^4 \\
    & + 16 \eta_n^2 \tau^{2} \cdnorm*{\psi_{n-1}-{\psi}^{\star}}^2  \\
    & + 16 \eta_n^4 \tau^{4}   \\
    & + 16 \eta_n^{4} C_{\chi} \tau^{4}  \\
    & + 16 \eta_n^{2} C_{\chi} \tau^{2}  \cdnorm*{\psi_{n-1} - {\psi}^{\star}}^2  \;, \\
  \end{align*}
  which, after further simplifications, yields
  \begin{align*}
  \MoveEqLeft \cdCondE*{\cdnorm*{\psi_n-{\psi}^{\star}}^4 \given \mathcal{F}_{n-1}}
  \\
  & \leq 
     \cdnorm*{\psi_{n-1} - {\psi}^{\star}}^4
      (1 - 4 \eta_n \tilde{\mu}_{m} + 12 C_{\chi} \eta_n^2 \tau^{2}   + 4 C_{\chi} \tau^{4} \eta_{n}^{4}) \\ 
  & 
      + \eta_n^2  \cdnorm*{\psi_{n-1} - {\psi}^{\star}}^2 \cdpar*{28(1 + C_{\chi}) \tau^2} 
      + 32\eta_{n}^4  \cdpar*{\tau^{4}(1 + C_{\chi})} \\
  & \leq
    \cdnorm*{\psi_{n-1} - {\psi}^{\star}}^4 (1 - 4 \eta_n \tilde{\mu}_{m} 
      + 12 (1+C_{\chi}) \eta_n^2 \tau^{2}   
      + 4 (1 + C_{\chi}) \tau^{4} \eta_{n}^{4}
  ) \\ 
  & 
      + 28 \eta_n^2  \cdnorm*{\psi_{n-1} - {\psi}^{\star}}^2  \cdpar*{(1 + C_{\chi})\tau}^{2} 
      + 32\eta_{n}^4 (1 + C_{\chi})\tau)^{4} \\
  & \leq 
     \cdnorm*{\psi_{n-1} - {\psi}^{\star}}^4  \cdpar*{
      1 - 4 \eta_n \tilde{\mu}_{m} + 12 \eta_n^2 ((1+C_{\chi})\tau)^{2} + 4  \cdpar*{(1 + C_{\chi})\tau}^{4} \eta_{n}^{4}  
    } \\ 
  & 
    + 28 \eta_n^2  \cdnorm*{\psi_{n-1} - {\psi}^{\star}}^2  \cdpar*{(1 + C_{\chi})\tau}^{2} 
    + 32\eta_{n}^4 (1 + C_{\chi})\tau)^{4} \\
  & \leq  
    \cdnorm*{\psi_{n-1} - {\psi}^{\star}}^4(1 - 4 \eta_n \tilde{\mu}_{m} 
      + 12 \eta_n^2 (2(1+C_{\chi})\tau)^{2} +  16 \eta_n^2 (2(1+C_{\chi})\tau)^{3} \\
  & 
      + 4 (2(1 + C_{\chi})\tau)^{4} \eta_{n}^{4}) 
      + 20 \eta_n^2  \cdnorm*{\psi_{n-1} - {\psi}^{\star}}^2 \left (2(1 + C_{\chi})\tau \right)^{2} 
      + 16 \eta_{n}^4 (2(1 + C_{\chi})\tau)^{4} \\
  & \leq  
    \cdnorm*{\psi_{n-1} - {\psi}^{\star}}^4 (
      1 - 4 \eta_n \tilde{\mu}_{m} 
      + 12 \eta_n^2 (2(1+C_{\chi})\tau + L)^{2} 
      + 16 \eta_n^2 (2(1+C_{\chi})\tau + L)^{3} \\
  & 
      + 4 (2(1 + C_{\chi})\tau + L)^{4} \eta_{n}^{4}
      ) 
      + 20 \eta_n^2  \cdnorm*{\psi_{n-1} - {\psi}^{\star}}^2  \cdpar*{2(1 + C_{\chi})\tau}^{2} 
      + 16 \eta_{n}^4 (2(1 + C_{\chi})\tau)^{4} \\
  & \leq 
     \cdnorm*{\psi_{n-1} - {\psi}^{\star}}^4(1 - 4 \eta_n \tilde{\mu}_{m} 
       + 12 \eta_n^2 \tilde{L}_1^{2} 
       +  16 \eta_n^2 \tilde{L}_1^{3}  
       + 4 \tilde{L}_1^{4} \eta_{n}^{4}) \\ 
  &
      + 20 \eta_n^2  \cdnorm*{\psi_{n-1} - {\psi}^{\star}}^2\tilde{\tau}_1^{2} 
      + 16 \eta_{n}^4 \tilde{\tau}_1^{4} \\
  \end{align*}
  where we defined 
  $ \tilde{\tau}_1 \coloneqq 2 (1 + C_{\chi}) \tau $ 
  and 
  $  \tilde{L}_1 \coloneqq  2(1 + C_{\chi})\tau + L$.
  This recursion is of the form of the one studied in \cite[Equation 32]{moulines2011non} (note that by design, $ \tilde{L}_1 \geq  \tilde{\mu}_{m} $.)
  The steps performed to bound $ \cdE*{\cdnorm*{\psi_n - {\psi}^{\star}}^4}  $ thus follow from their derivations, and we obtain:

  \begin{align*}
  &\mfrac{1}{n} \sqrt {\msum\limits_{i=1}^{n} \cdE*{\cdnorm*{\psi_{i-1} - {\psi}^{\star}}^4}}
  \\
  \;\leq\; 
  & 
    \mfrac{C \tilde{\tau}_1^2}{2 n}
    \Big(C^{1 / 2} \varphi_{1 - 3 \beta/2}(n)+\tilde{\mu}_m^{-1/2} \varphi_{1-\beta}(n)\Big) \\
  &
    +
      \mfrac{\sqrt{20} C^{1 / 2} \tilde{\tau}_1}{2 n} A_1 \exp \cdpar*{24 \tilde{L}_1^4 C^4}
      \cdpar[\Big]{
	\delta_0
	+\mfrac{\tilde{\mu}_m \cdE*{\cdnorm*{\psi_0 - {\psi}^{\star}}\smallpow{4}}}{20 C \tilde{\tau}_1^2}
      +2 \tilde{\tau}_1^2 C^3 \tilde{\mu}_m+8 \tilde{\tau}_1^2 C^2
     }^{1 / 2} \\
  \;=\; 
  & \mathcal O \left (n^{-\beta}\right),
  \end{align*}
  where 
  \begin{equation*} 
  A_1 = \msum\limits_{k=1}^{n} e^{\frac{-\tilde{\mu}_mC}{16} k^{1 - \beta} + 16 \tilde{L}_1^4 C^4  \varphi_{1 - 2\beta}(k)}
  \end{equation*}
  and we have $ A_1<  +\infty $ if $ \beta < 1 $, and $ A_1 = O(n) $ otherwise.
\end{proof}

\begin{lemma}
For all $\psi \in \Psi$, we have:
\begin{multline}
  \cdnorm*{\cdCov*{\phi(K^m_{\psi}(X_n))}  - \cdCov*{\phi(\Xpsi)}}_{\mathrm F} \\ 
  \leq 
    \alpha^m C_\chi (\bar{\tau}^{1/2} + 2 \| \log Z\|_{4, \infty} \sigma)  \cdnorm*{\psi - \psi^\star} 
    + \alpha^{2m} C_\chi^2 \sigma^2  \cdnorm*{\psi - {\psi}^{\star}}^2.
\end{multline}
where 
$ 
\bar{\tau} \coloneqq 
\sup_{\psi \in \Psi} \cdE*{
  \cdnorm*{\phi(\Xpsi) \phi^{\top}(\Xpsi) - \mathbb{E}\lbrack \phi(\Xpsi) \phi^{\top}(\Xpsi) \rbrack}_{F}^{2}  
} < +\infty$ 
and  
$  \cdnorm*{\log Z}_{4, \infty} \coloneqq \sup_{\psi \in \Psi} 
\| \cdE*{\phi(\Xpsi)} \| \leq \sup_{\psi \in \Psi} \msum_{i=1}^{d} \partial_i^2 \log Z(\psi)^2
  $.
\end{lemma}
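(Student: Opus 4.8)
The plan is to expand each covariance as a second-moment matrix minus the outer product of a first-moment vector,
\[
\mathrm{Cov}[\phi(K^m_\psi(X_n))] \;=\; \mathbb{E}\big[\phi(K^m_\psi(X_n))\phi(K^m_\psi(X_n))^\top\big] - \mathbb{E}[\phi(K^m_\psi(X_n))]\,\mathbb{E}[\phi(K^m_\psi(X_n))]^\top,
\]
and similarly for $\mathrm{Cov}[\phi(X^\psi)]$, and then to control the two resulting differences separately. The key observation is that, since $X_n$ has law $p_{\psi^\star}$, for any integrable $g$ one has $\mathbb{E}[g(K^m_\psi(X_n))] = \mathbb{E}[\mathrm{P}_\psi^m g(X_1)]$, so that each of the two differences is exactly of the form handled by \cref{lemma:approx-contraction}.

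For the second-moment term, I would apply \cref{lemma:approx-contraction} to the $\mathbb{R}^{p^2}$-valued function $f$ whose components are the products $\phi_i\phi_j$, $i,j \in [p]$ (all admissible test functions under \cref{asst:SGD-A5}). Identifying the Euclidean norm of a vectorized matrix with its Frobenius norm, and using that $\mathrm{P}_\psi^m$ fixes constants, the lemma gives
\[
\big\| \mathbb{E}[\phi(K^m_\psi(X_n))\phi(K^m_\psi(X_n))^\top] - \mathbb{E}[\phi(X^\psi)\phi(X^\psi)^\top]\big\|_F \;\leq\; \alpha^m C_\chi\, \bar\tau^{1/2}\, \|\psi - \psi^\star\|,
\]
because $\big(\mathbb{E}\|(f - \mathbb{E}[f(X^\psi)])(X^\psi)\|^2\big)^{1/2} = \big(\mathbb{E}\|\phi(X^\psi)\phi(X^\psi)^\top - \mathbb{E}[\phi(X^\psi)\phi(X^\psi)^\top]\|_F^2\big)^{1/2} \leq \bar\tau^{1/2}$ by definition of $\bar\tau$.

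For the outer-product term, write $a \coloneqq \mathbb{E}[\phi(X^\psi)]$ and $b \coloneqq \mathbb{E}[\phi(K^m_\psi(X_n))] = \mathbb{E}[\mathrm{P}_\psi^m\phi(X_1)]$, and use the algebraic identity $aa^\top - bb^\top = a(a-b)^\top + (a-b)b^\top$, hence $\|aa^\top - bb^\top\|_F \leq \|a-b\|\,(\|a\| + \|b\|)$. Applying \cref{lemma:approx-contraction} to $f = \phi$ gives $\|a - b\| \leq \alpha^m C_\chi\, \sigma_\psi\, \|\psi - \psi^\star\| \leq \alpha^m C_\chi\, \sigma\, \|\psi - \psi^\star\|$ by \eqref{eq:sigmq}, while $\|a\| = \|\mathbb{E}[\phi(X^\psi)]\| \leq \|\log Z\|_{4,\infty}$ and, by the triangle inequality, $\|b\| \leq \|a\| + \|a-b\|$. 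Substituting these bounds yields $\|aa^\top - bb^\top\|_F \leq 2\alpha^m C_\chi\, \sigma\, \|\log Z\|_{4,\infty}\, \|\psi - \psi^\star\| + \alpha^{2m} C_\chi^2 \sigma^2 \|\psi - \psi^\star\|^2$. Adding the two bounds gives the stated inequality; finiteness of $\bar\tau$ and of $\|\log Z\|_{4,\infty}$ follows from the analyticity of $\log Z$ and compactness of $\Psi$, exactly as in \cref{lemma:mean-convergence-second-moment} and \cref{lemma:quadratic-convergence-first-moment}.

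The step I expect to require the most care is the outer-product term: a direct bound on $\|b\|$ would need a uniform moment control over the pushforward measures $(\mathrm{P}_\psi^m)_\# p_{\psi^\star}$, which we sidestep via the $a(a-b)^\top + (a-b)b^\top$ decomposition together with the bootstrap $\|b\| \leq \|a\| + \|a-b\|$ — and it is precisely this bootstrap that produces the quadratic-in-$\|\psi-\psi^\star\|$, $\alpha^{2m}$ remainder. The remaining work is bookkeeping: matching $\|\cdot\|_F$ to the Euclidean norm of a vectorization and checking that every function passed to \cref{lemma:approx-contraction} lies in $\{\phi_i\}_{i=1}^p \cup \{\phi_i\phi_j\}_{i,j=1}^p$.
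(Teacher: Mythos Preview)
Your proposal is correct and follows essentially the same route as the paper: the same split into second-moment and outer-product pieces, the same application of \cref{lemma:approx-contraction} to $(\phi_i\phi_j)_{i,j}$ and to $\phi$, and the same final bound. The only cosmetic difference is that the paper expands $bb^\top - aa^\top = (b-a)(b-a)^\top + (b-a)a^\top + a(b-a)^\top$ directly, which immediately yields $\|b-a\|^2 + 2\|a\|\|b-a\|$ without your bootstrap step $\|b\|\le\|a\|+\|a-b\|$; both lead to exactly the same inequality.
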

\begin{proof}
We have
\begin{equation*} 
  \cdCov*{\phi(K^m_{\psi}(X_n))}  = 
  \cdE*{\cdP{\psi}   \left (\phi(X_n) \phi^{\top}(X_n) \right )} - 
  \cdpar*{\cdE*{\cdP{\psi}\phi(X_n)}} \cdpar*{\cdE*{\cdP{\psi} \phi(X_n)}^{\top}} \\
\end{equation*}
Looking at the second moment first, we have
\begin{equation*} 
  \cdE*{\cdPpsi \phi(X_n) \phi(X_n)^{\top}} - \cdE*{\phi(\Xpsi) \phi(\Xpsi)^{\top}} = 
  \underbrace{
    \cdE*{\cdPpsi(\phi \phi^{\top} - \cdE*{\phi(\Xpsi) \phi(\Xpsi)^{\top}})(X_n)}
  }_{\Delta_1}.
\end{equation*}
Applying Lemma \ref{lemma:approx-contraction} to the $\mathbb R^{d^2}$-valued function $f$ given by  
$ f_{ij} \coloneqq \phi_i \phi_j  - \cdE*{\phi_i(\Xpsi) \phi_j(\Xpsi)}$, 
\begin{equation*} 
  \cdnorm*{\Delta_1}_{\mathrm F} 
  \leq  
    \cdnorm*{\psi - {\psi}^{\star}}\alpha^{m} C_{\chi} 
    \sqrt{
      \cdE*{
        \cdnorm*{
          \cdpar[\big]{\phi \phi^{\top} - \cdE*{(\phi \phi^\top)(\Xpsi)}}(\Xpsi)  
        }^2_{F}
      }
    }
  \leq
  \bar{\tau}^{1/2} \alpha^{m} C_{\chi}  \cdnorm*{\psi - {\psi}^{\star}}.
\end{equation*}
We now investigate the first moment. We have
\begin{align*}
  &\cdE*{\cdPpsi \phi(X_n)}  
    = 
      \underbrace{\cdE*{\cdPpsi \phi(X_n)} - \cdE*{\phi(\Xpsi)}}\limits_{\Delta_{2,1}} 
      \;+\; \mathbb{E}  \big \lbrack \phi(\Xpsi) \big \rbrack \\
\implies 
  & \underbrace{
    \cdpar*{\cdE*{\cdPpsi \phi(X_n)}}  \cdpar*{\cdE*{\cdPpsi \phi(X_n)}}^{\top} 
    - \cdE*{\phi(\Xpsi)} \cdE*{\phi(\Xpsi)^{\top}}  
  }\limits_{\Delta_{2}}
    = 
    \Delta_{2, 1} \Delta_{2, 1}^T \\ 
  & \quad 
    + \Delta_{2,1} \cdE*{\phi(\Xpsi)^{\top}} 
    + \cdE*{\phi(\Xpsi)} \Delta_{2,1}^{\top}\,.
\end{align*}
Thus, applying Lemma \ref{lemma:approx-contraction} on $ \Delta_{2, 1} $, we have:
\begin{align*}
  \cdnorm*{\Delta_{2}}_{\mathrm F}  
  & \leq 
    \cdnorm*{
      \Delta_{2, 1} \Delta_{2, 1}^{\top} 
      + \Delta_{2,1} \cdE*{\phi(\Xpsi)^{\top}} 
      + \cdE*{\phi(\Xpsi_n)} \Delta_{2, 1}^{\top} 
    }_{\mathrm F} \\
  & \leq  
    \cdnorm*{\Delta_{2, 1} \Delta_{2, 1}^{\top}}_{\mathrm F} 
    + 2  \cdnorm*{\Delta_{2, 1}}  \cdnorm*{\cdE*{\phi(\Xpsi)}} \\
  & \leq
    \alpha^{2m} \sigma^2 C_\chi^2  \cdnorm*{\psi - {\psi}^{\star}}^2 
    + 2  \cdnorm*{\log Z}_{4, \infty} \alpha^{m} C_{\chi} \sigma  \cdnorm*{\psi - {\psi}^{\star}},
\end{align*}
where we used that $ \| \Delta_{2, 1} \Delta_{2, 1}^{\top} \|_{\mathrm F} = \| \Delta_{2, 1} \|^2$.
We can now combine our two matrix moment bounds to obtain
\begin{align*}
 \cdnorm*{\mathrm{Cov}  \left\lbrack \phi(K^m_{\psi}(X_n))  \right\rbrack  - \mathrm{Cov} \big \lbrack \phi(\Xpsi) \big \rbrack}_{\mathrm F} = \left \| \Delta_1 + \Delta_2 \right \|_{\mathrm F} & \\ 
				    & \hspace{-25em} \leq  \alpha^m C_\chi (\bar{\tau}^{1/2} + 2 \| \log Z\|_{4, \infty} \sigma) \| \psi - \psi^\star \| + \alpha^{2m}C_\chi^2 \sigma^2 \left \| \psi - {\psi}^{\star} \right \|^2.
\end{align*}

\end{proof}

\begin{lemma}\label{lem:markov-kernel-variance-averaged-cd}
  Under Assumptions \ref{asst:SGD-A1}, \ref{asst:SGD-A4}, \ref{asst:SGD-A5} it holds that
  \begin{multline}
    \sqrt{
      \cdE*{
	\cdnorm*{\mfrac{1}{n} \msum\limits_{i=1}^{n} 
	  f''({\psi}^{\star})^{-1} (\overline{h}(\psi_{i-1}) - h_i(\psi_{i-1}))}^2
      }
    }   
    \,\leq \,
    2 \sqrt{\mfrac{\mathrm{tr}(\mathcal  I({\psi}^{\star})^{-1})}{n}}  \\
      + \mfrac{\cdnorm*{\mathcal I(\psi^\star)^{-2}}^{1/2}_{\mathrm F}}{n} \left (
	  (M + \alpha^m C_\chi (\bar{\tau}^{1/2} + 2 \| \log Z \|_{4, \infty} \sigma))^{1/2}
	  \cdpar*{\msum\limits_{i=1}^{n} \delta_{i-1}^{1 / 2}}^{1 / 2}  \right  .\\ 
      + \left . 
	\alpha^{2m} C_\chi^2 \sigma^2  \cdpar*{\msum\limits_{i=1}^{n} \delta_{i-1}}^{1 / 2} \right ) \\
  \end{multline}
  where 
  $
  M \coloneqq \sup_{\psi \in \Psi}
    \cdnorm*{\nabla_{\psi}^3 \log Z(\psi)}_{\mathrm{op}(\cdnorm*{\cdot}_{F},  \cdnorm*{\cdot}_F)}
  < +\infty
  $.
\end{lemma}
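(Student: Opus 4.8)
The plan is to exploit the martingale structure of the summands. Let $(\mathcal{F}_t)_{t\ge 0}$ be the filtration generated by $X_1,\dots,X_t$ together with the Markov-chain samples $K^m_{\psi_{s-1}}(X_s)$ for $s\le t$ (as in the proof of Lemma~\ref{lem:online-cd-recursion}), so that $\psi_{i-1}$ is $\mathcal{F}_{i-1}$-measurable and $X_i$ is independent of $\mathcal{F}_{i-1}$. Writing $\bar{h}(\psi_{i-1}) - h_i(\psi_{i-1}) = U_i - V_i$ with $U_i \coloneqq \phi(X_i) - \mathbb{E}[\phi(X_1)]$ the pure data noise and $V_i \coloneqq \phi(K^m_{\psi_{i-1}}(X_i)) - \mathbb{E}[\phi(K^m_{\psi_{i-1}}(X_i))\mid\mathcal{F}_{i-1}]$ the Markov-chain sampling noise, both $(f''(\psi^\star)^{-1}U_i)_i$ and $(f''(\psi^\star)^{-1}V_i)_i$ are martingale-difference sequences with respect to $(\mathcal{F}_i)_i$. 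By Minkowski's inequality in $L^2(\Omega;\mathbb{R}^p)$ it then suffices to bound $\big(\mathbb{E}\|\tfrac1n\sum_i f''(\psi^\star)^{-1}U_i\|^2\big)^{1/2}$ and $\big(\mathbb{E}\|\tfrac1n\sum_i f''(\psi^\star)^{-1}V_i\|^2\big)^{1/2}$ separately; doing this at the level of $L^2$ norms is what lets us sidestep the (nonzero) conditional covariance between $U_i$ and $V_i$ coming from their shared dependence on $X_i$.

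For the data term, martingale orthogonality gives $\mathbb{E}\|\tfrac1n\sum_i f''(\psi^\star)^{-1}U_i\|^2 = \tfrac1{n^2}\sum_i \mathrm{tr}\big(f''(\psi^\star)^{-2}\,\mathrm{Cov}[\phi(X_1)]\big) = \mathrm{tr}(\mathcal{I}(\psi^\star)^{-1})/n$, since $f''(\psi^\star) = \nabla^2_\psi\log Z(\psi^\star) = \mathcal{I}(\psi^\star)$. For the sampling term, orthogonality and conditioning on $\mathcal{F}_{i-1}$ give $\mathbb{E}\|\tfrac1n\sum_i f''(\psi^\star)^{-1}V_i\|^2 = \tfrac1{n^2}\sum_i \mathbb{E}\big[\mathrm{tr}\big(f''(\psi^\star)^{-2}\,\mathrm{Cov}[\phi(K^m_{\psi_{i-1}}(X_1))]\big)\big]$, where the inner covariance is over a fresh $X_1\sim p_{\psi^\star}$ at the frozen parameter $\psi_{i-1}$, i.e.\ exactly the object bounded by the preceding (covariance-difference) lemma. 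I then insert $\mathrm{Cov}[\phi(X^{\psi_{i-1}})] = \nabla^2_\psi\log Z(\psi_{i-1})$ as a reference and split in two telescoping steps, each using $|\mathrm{tr}(AB)| \le \|A\|_{\mathrm{F}}\|B\|_{\mathrm{F}}$ together with $\|f''(\psi^\star)^{-2}\|_{\mathrm{F}} = \|\mathcal{I}(\psi^\star)^{-2}\|_{\mathrm{F}}$: first, the Frobenius norm of $\mathrm{Cov}[\phi(K^m_{\psi_{i-1}}(X_1))] - \mathrm{Cov}[\phi(X^{\psi_{i-1}})]$ is controlled by the preceding lemma, contributing the $\alpha^m C_\chi(\bar{\tau}^{1/2} + 2\|\log Z\|_{4,\infty}\sigma)$ and $\alpha^{2m}C_\chi^2\sigma^2$ terms (times $\|\psi_{i-1}-\psi^\star\|$ and $\|\psi_{i-1}-\psi^\star\|^2$ respectively); second, $\mathrm{tr}\big(f''(\psi^\star)^{-2}\nabla^2_\psi\log Z(\psi_{i-1})\big) \le \mathrm{tr}(\mathcal{I}(\psi^\star)^{-1}) + \|\mathcal{I}(\psi^\star)^{-2}\|_{\mathrm{F}}\,\|\nabla^2_\psi\log Z(\psi_{i-1}) - \nabla^2_\psi\log Z(\psi^\star)\|_{\mathrm{F}}$, with the last factor $\le M\|\psi_{i-1}-\psi^\star\|$ by the mean value theorem applied to $\nabla^2_\psi\log Z$ (cf.\ Lemma~\ref{lem:replace:gradient:inequality}). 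Taking expectations with $\mathbb{E}\|\psi_{i-1}-\psi^\star\| \le \delta_{i-1}^{1/2}$ and $\mathbb{E}\|\psi_{i-1}-\psi^\star\|^2 = \delta_{i-1}$, summing, dividing by $n^2$, and applying subadditivity of $\sqrt{\cdot}$ to peel off $\mathrm{tr}(\mathcal{I}(\psi^\star)^{-1})/n$ and the two error contributions then yields the claimed bound; the prefactor $2$ in $2\sqrt{\mathrm{tr}(\mathcal{I}(\psi^\star)^{-1})/n}$ is precisely the sum of the two identical $\sqrt{\mathrm{tr}(\mathcal{I}(\psi^\star)^{-1})/n}$ contributions from the $U$- and $V$-terms.

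The main obstacle is careful bookkeeping rather than any deep step: verifying the martingale-difference property against the correct filtration (so that $X_i$ is genuinely fresh and $\psi_{i-1}$ is frozen), correctly identifying the conditional covariance of $V_i$ with the quantity bounded by the preceding lemma, and tracking which error terms scale as $\delta_{i-1}^{1/2}$ versus $\delta_{i-1}$ when passing through the square root. The remaining ingredients are routine: the identity $f''(\psi^\star) = \mathcal{I}(\psi^\star)$, trace--Cauchy--Schwarz, the mean value theorem for the smoothness bound on $\nabla^2_\psi\log Z$, and finiteness of all moments of $\phi$ (hence of the covariances involved), which follows from analyticity of $\log Z$ on the compact set $\Psi$ exactly as in the earlier lemmas.
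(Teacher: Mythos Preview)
Your proposal is correct and follows essentially the same route as the paper's proof: the same $U_i/V_i$ (in the paper, $\Delta_{1,i}/\Delta_{2,i}$) martingale-difference decomposition, Minkowski to separate them, martingale orthogonality to reduce to per-term conditional covariances, and then the same two-step telescoping of $\mathrm{Cov}[\phi(K^m_{\psi_{i-1}}(X_1))]$ through $\mathrm{Cov}[\phi(X^{\psi_{i-1}})]=\nabla^2_\psi\log Z(\psi_{i-1})$ to $\mathrm{Cov}[\phi(X_1)]$, invoking the preceding covariance-difference lemma and the Lipschitz bound on $\nabla^2_\psi\log Z$ via $M$. The factor $2$ in front of $\sqrt{\mathrm{tr}(\mathcal I(\psi^\star)^{-1})/n}$ arises in both proofs for exactly the reason you describe.
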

\begin{proof}
Let us first note that
\begin{equation*} 
\begin{aligned}
  \MoveEqLeft[0.2] f''({\psi}^{\star})^{-1}(\overline{h}(\psi_{n-1}) - h_n(\psi_{n-1})) \\ 
  & = 
    f''({\psi}^{\star})^{-1} \underbrace{(\phi(X_n) - \cdE*{\phi(X_1)})}\limits_{\Delta_{1, n}}  
    + {f}''({\psi}^{\star})^{-1} \underbrace{
      \cdpar*{\phi(K^m_{\psi_{n-1}}(X_n)) - \cdCondE*{\phi(K^m_{\psi_{n-1}}(X_n)) \given \mathcal  F_{n-1}}}
    }_{\Delta_{2, n}} \\
  & =
    f''({\psi}^{\star})^{-1}\Delta_{1, n} + f''({\psi}^{\star})^{-1}\Delta_{2, n}\,.
\end{aligned}
\end{equation*}
Noting $\Delta$ the l.h.s of Lemma \ref{lem:markov-kernel-variance-averaged-cd}, we have, summing over $[n]$, and using Minkowski's inequality,
\begin{equation*}
  \Delta 
  \;\leq\; 
    \mfrac{1}{n} \sqrt{\cdE*{\cdnorm*{\msum\limits_{i=1}^{n}   f''({\psi}^{\star})^{-1}\Delta_{1, i}}^2}}
  + 
    \mfrac{1}{n} \sqrt{\cdE*{\cdnorm*{\msum\limits_{i=1}^{n}   f''({\psi}^{\star})^{-1}\Delta_{2, i}}^2}}
\end{equation*}
Note that this step was made possible because we are looking at the square-root of the variance, 
which is unlike the recursion in Lemma \ref{lem:online-cd-recursion}. This allows to separate the terms and 
use fewer intermediaries than in the proof of Lemma \ref{lem:online-cd-recursion}.

Since both $ \Delta_{1, n} $ and $ \Delta_{2, n} $ are martingale differences with respect to the 
filtration $ \mathcal F_{n-1} $, the covariance terms vanish, and we have
\begin{align*}
  \Delta 
  & \leq 
  \mfrac{1}{n} \sqrt{\msum\limits_{i=1}^{n}  \cdE*{\cdnorm*{f''({\psi}^{\star})^{-1}\Delta_{1, i}}^2}}
  + 
  \mfrac{1}{n} \sqrt{\msum\limits_{i=1}^{n} \cdE*{\cdnorm*{f''({\psi}^{\star})^{-1} \Delta_{2, i}}^2}}
  \\
  & \leq 
  \mfrac{1}{n} \sqrt{
    \msum\limits_{i=1}^{n}
      \mathrm{tr}(f''({\psi}^{\star})^{-1} \cdE*{\Delta_{1, i} \Delta_{1, i}^{\top} f''({\psi}^{\star})^{-1} )} 
  } \\ 
  &  \quad 
  +\, \mfrac{1}{n} \sqrt{
    \msum\limits_{i=1}^{n}  \mathrm{tr}(f''({\psi}^{\star})^{-1}\cdE*{\Delta_{2, i} \Delta_{2, i}^{\top} f''({\psi}^{\star})^{-1})} 
  }  \\
  & \leq
  \sqrt{\mfrac{\mathrm{tr}(\mathcal  I({\psi}^{\star})^{-1})}{n}}
  + \mfrac{1}{n} \sqrt{
    \msum\limits_{i=1}^{n}
      \mathrm{tr}(f''({\psi}^{\star})^{-1} \cdE*{
	\cdCondCov*{\phi(K^m_{\psi_{i-1}}(X_i)) \given \mathcal  F_{i-1}}   
      } f''({\psi}^{\star})^{-1}) 
  } \\
  & \leq
  \sqrt{\mfrac{\mathrm{tr}(\mathcal  I({\psi}^{\star})^{-1})}{{n}}}
  + \mfrac{1}{n} \Big (
      \msum\limits_{i=1}^{n}  \mathrm{tr}(
	f''({\psi}^{\star})^{-1} 
	\left (
	  \mathbb{E} \left \lbrack
	    \cdCov*{\phi(X_i)} 
	    + 
	      \left (\cdCov*{\phi(X^{\psi_{i-1}}_i) \, \lvert \, \psi_{i-1}} \right . \right . \right .\\ 
  & \quad  \left .  \left . \left .
	      - \cdCov*{\phi(X_i)} \right )  
	    +  \cdpar*{
	      \cdCondCov*{\phi(K^m_{\psi_{i-1}}(X_i)) \given \mathcal  F_{i-1}} 
	      - \cdCondCov*{\phi(X^{\psi_{i-1}}) \given \mathcal  F_{i-1}} 
	   } \right \rbrack  \right )
	f''({\psi}^{\star})^{-1}) \Big )^{1 / 2} \\
  & \leq  
    2\sqrt{\mfrac{\mathrm{tr}(\mathcal  I({\psi}^{\star})^{-1})}{n}}
    + \mfrac{\sqrt{\mathrm{tr}(\mathcal I(\psi^\star)^{-2})}}{n} \Big (\msum\limits_{i=1}^{n}
      \cdE*{\cdnorm*{\cdCondCov*{\phi(X^{\psi_{i-1}}) \given \mathcal F_{i-1}} - \cdCov*{\phi(X_i)}}_{\mathrm F}} \\ 
& \quad 
+ \cdE*{
    \| \cdCondCov*{\phi(K^m_\psi(X_i)) \given \psi_{i-1}} 
    - \cdCondCov*{\phi(X^{\psi_{i-1}}) \given \mathcal F_{i-1}})\|_{\mathrm F}  
  } \Big )^{1/2} \\
& \overset{(a)}{\leq}  
  2\sqrt{\mfrac{\mathrm{tr}(\mathcal  I({\psi}^{\star})^{-1})}{n}}
  + \mfrac{\sqrt{\mathrm{tr}(\mathcal I(\psi^\star)^{-2})}}{n} \left (
    (M + \alpha^m C_\chi (\bar{\tau}^{1/2} + 2 \| \log Z \|_{4, \infty} \sigma)^{1/2}
  ) \times \right .\\ 
& \quad \quad \left .
  \sqrt{
    \msum\limits_{i=1}^{n} 
      \cdE*{\cdnorm*{\psi_{i-1} - \psi^\star}  
    }
  } 
+ \alpha^{m} C_\chi \sigma \sqrt{\msum\limits_{i=1}^{n} \cdE*{\cdnorm*{\psi_{i-1} - \psi^\star}^2}} \right ) \\
& \overset{(b)}{\leq}  
  2\sqrt{\mfrac{\mathrm{tr}(\mathcal  I({\psi}^{\star})^{-1})}{n}}
  + \mfrac{\sqrt{\mathrm{tr}(\mathcal I(\psi^\star)^{-2})}}{n} \left (
    (
      M 
      + 
	\alpha^m C_\chi (\bar{\tau}^{1/2} + 2 \| \log Z \|_{4, \infty} \sigma)
    )^{1/2}
    \sqrt{\msum\limits_{i=1}^{n} \delta_i^{1/2}} \right . \\
& \quad \quad \left .
    + \alpha^{m} C_\chi \sigma \sqrt{\msum\limits_{i=1}^{n} \delta_i} \right ) \\
\end{align*}
In $(a)$ we used Lemma \ref{lem:markov-kernel-variance-averaged-cd},
the cyclicity of the trace, $  \mathrm{tr}(A^{\top}B) \leq  \cdnorm*{AB}_{\mathrm F} \leq  \cdnorm*{A}_{\mathrm{F}}  \cdnorm*{B}_{\mathrm F} $,
and the fact that since $\cdCov*{\phi(X^{\psi_{i-1}})} = \nabla_{\psi}^{2} \mathcal  L(\psi_{i-1}) $, by analycity of $ \mathcal  L $,
there exists a constant 
$ M 
\coloneqq 
\sup_{\psi \in \Psi}
  \cdnorm*{\nabla_{}^3 \log Z(\psi)}_{\mathrm{op}(\cdnorm*{\cdot},  \cdnorm*{\cdot}_F)}
$ such that
\begin{equation*} 
  \cdnorm*{\nabla_{\psi}^2 \mathcal  L(\psi_{i-1}) - \nabla_{\psi}^2 \mathcal  L({\psi}^{\star})}_{\mathrm F} 
  \leq M  \cdnorm*{\psi_{i-1} - {\psi}^{\star}}.
\end{equation*}
In $(b)$ we used Jensen's inequality to get 
$ 
\cdE*{\cdnorm*{\psi_{i-1} - {\psi}^{\star}}}  
\leq \sqrt{\cdE*{\cdnorm*{\psi_{i-1} - {\psi}^{\star}}^2}} 
= \delta_{i-1}^{1 / 2}
$.
\end{proof}

\vspace{1em}

We are now ready to prove Theorem \ref{thm:online-cd-averaging}.
\begin{proof}[Proof of Theorem \ref{thm:online-cd-averaging}]
It holds that:
\begin{align*}
  f''({\psi}^{\star})(\psi_{n-1} - {\psi}^{\star}) 
  &= 
    f'(\psi_{n-1})  - f'({\psi}^{\star})  
    +  \cdpar*{f''({\psi}^{\star})(\psi_{n-1} - {\psi}^{\star}) -  f'(\psi_{n-1}) + f'({\psi}^{\star})} \\
  &= 
    h_n(\psi_{n-1})  - f'({\psi}^{\star})  
    +  \cdpar*{f''({\psi}^{\star})(\psi_{n-1} - {\psi}^{\star}) -  f'(\psi_{n-1}) + f'({\psi}^{\star})} \\ 
  & \quad
    + (f'(\psi_{n-1})  - \bar{h}(\psi_{n-1})) 
    + (\overline{h}(\psi_{n-1}) - h_n(\psi_{n-1})).\\
\end{align*}
Applying on both sides: (a) a summation over $i \in [n]$, (b) a multiplication by $f''(\psi^\star)^{-1}$, (c) $\sqrt{\mathbb E \lbrack \| \cdot \|^2 \rbrack}$, and using Minkowski's inequality on the r.h.s, we obtain
\begin{align*}
  \sqrt {\cdE*{\cdnorm*{\mfrac{1}{n}\msum\limits_{i=1}^{n} \psi_{i} - {\psi}^{\star}}^2}}
  & \leq 
    \underbrace{\sqrt{\cdE*{\cdnorm*{\mfrac{1}{n}\msum\limits_{i=1}^{n} f''({\psi}^{\star})^{-1} h_i(\psi_{i-1})}^2}}}\limits_{(i)}  \\
  & \hspace{-10em}
  + \underbrace{
      \sqrt{\cdE*{\cdnorm*{
	\mfrac{1}{n} \msum\limits_{i=1}^{n}
	  f''({\psi}^{\star})^{-1}(f''({\psi}^{\star})(\psi_{i-1} - {\psi}^{\star}) -  f'(\psi_{i-1})
      }^2}}
    }\limits_{(i i)}  \\
 &  \hspace{-10em} 
  + \underbrace{
      \sqrt{\cdE*{\cdnorm*{
	\mfrac{1}{n} \msum\limits_{i=1}^{n} 
	  f''({\psi}^{\star})^{-1} (f'(\psi_{i-1})  - \bar{h}(\psi_{i-1}))
      }^2}}
    }\limits_{(i i i)} \\
 & \hspace{-10em} 
 + \underbrace{
      \sqrt{\cdE*{\cdnorm*{
	\mfrac{1}{n} \msum\limits_{i=1}^{n}
	  f''({\psi}^{\star})^{-1} (\overline{h}(\psi_{i-1}) - h_i(\psi_{i-1}))  
      }^2}}
    }\limits_{(i v)}.\\
\end{align*}
$ (i) $ and  $ (ii) $ have direct analogues in the proofs of prior work \cite{moulines2011non} on the convergence of (unbiased) SGD with Polyak-Ruppert averaging, and will be bounded similarly. $ (i i i) $ captures the bias of the CD algorithm, while $ (i v) $ captures the variance.
\paragraph{Bounding $(i)$} 
Using Lemma \ref{lem:h_n_psi-convergence}, we have
$(i) = \mathcal O  \cdpar*{n^{\max  \cdpar*{\frac{\beta}{2} - 1, - \frac{2 \beta}{2 \beta + 1}}}}$.

\paragraph{Bounding $ (ii) $} 
Since $\log Z(\psi)$ is analytic, there exists some constant $M'$ such that
$$  \cdnorm*{f''(\psi^\star)(\psi_{i-1} - \psi^\star) - f'(\psi_{i-1})} \leq M'  \cdnorm*{\psi_{i-1} - \psi^\star}^2.$$
Thus, we have:
\begin{equation*} 
  (ii) 
  \leq \frac{M'}{n} \sqrt{\cdE*{\cdpar*{\msum_{i=1}^{n}  \|\psi_{i} - \psi^\star\|^2}^2}}
  \leq \frac{M'}{n} \msum_{i=1}^{n} \sqrt{\cdE*{\cdnorm*{\psi_{i} - \psi^\star}^4}} 
  = \mathcal O(n^{-\beta})
\end{equation*} 
where the second-to-last inequality used Minkowski's inequality, and the last applied Lemma \ref{lem:quartic-convergence}.

\paragraph{Bounding $(iii)$} 
By Minkowski's inequality, we have:
\begin{equation*} 
  (iii) 
  \;\leq\; 
  \mfrac{1}{n} \msum\limits_{i=1}^{n}  \sqrt{\cdE*{\cdnorm*{f'(\psi_{n-1})  - \bar{h}(\psi_{n-1})}^{2}}}
\end{equation*}
Moreover, using lemma \ref{lemma:approx-contraction}, we have:
\begin{align*}
	\MoveEqLeft  \cdnorm*{f'(\psi_{n-1})  - \bar{h}(\psi_{n-1})} \\ 
	&=  \cdnorm*{\cdCondE*{
	  \phi(X^{\psi_{n-1}}) \given \mathcal F_{n-1}} 
	  - \cdCondE*{\cdP{\psi_{n-1}}\phi(X_n) \given \mathcal F_{n-1}}
	}\\
	& \leq 
	  \alpha^{m}\sqrt{\cdCondE*{
	    \cdnorm*{\phi(X^{\psi_{n-1}}) - \cdCondE*{\phi(X^{\psi_{n-1}}) \given \mathcal F_{n-1}}}^2 
	    \given \mathcal  F_{n-1}  
	  }}
	&\leq \alpha^{m} \sigma C_{\chi}  \cdnorm*{\psi_{n-1} - {\psi}^{\star}}.
\end{align*}
We thus obtain
\begin{equation*} 
  (i i i)
  \;\leq\; 
  \mfrac{\alpha^{m} C_{\chi}}{n} \msum\limits_{i=1}^{n}
    \cdpar*{\cdE*{\cdnorm*{\psi_{i-1} - {\psi}^{\star}}\vphantom{a}^{2}}}^{1 / 2} 
  \;=\; 
  \mfrac{\alpha^{m} C_{\chi}}{n} \msum\limits_{i=1}^{n} \delta_{i-1}^{1 / 2}.
\end{equation*}
Recalling that $ \delta_i$ satisfies Theorem \ref{thm:cd-sgd}, we have that $ \delta_n^{\frac{1}{2}} = \mathcal O(n^{-\beta/2}) $, and we thus have $ \sum_{i=1}^{n} \delta_{i}^{1 / 2} = \mathcal O(n^{1 - \frac{\beta}{2}})$.
By squaring the result of Lemma \ref{lem:square-root-delta-sum-convergence}, we have $ \sum_{i=1}^{n} \delta_{i}^{1 / 2} = \mathcal O(n^{-1 - \frac{\beta}{2}})$, and thus,
we obtain that 
$
(i i i)
= \mathcal O(\alpha^{m} n^{-\beta / 2})
= \mathcal O(
    n^{-(\frac{\beta}{2} + m\frac{\lvert   \log \alpha \rvert}{\log n})}
  )
$.

\paragraph{Bounding $(iv)$} 
We have
\begin{align*}
&(iv) \\
&\,\overset{(a)}\leq \,
  2 \sqrt{\mfrac{\mathrm{tr} \cdpar*{\mathcal  I({\psi}^{\star})^{-1}}}{n}}
  + \mfrac{\cdnorm*{\mathcal I(\psi^\star)^{-2}}^{1/2}_{\mathrm F}}{n} 
    \left (
      (M + \alpha^m C_\chi (\bar{\tau} + 2 \| \log Z \|_{4, \infty} \sigma))^{1/2}
      \sqrt{\msum\limits_{i=1}^{n} \delta_{i-1}^{1 / 2}}
    \right .\\
&\quad \quad  \left . 
      + \alpha^{m} C_\chi \sigma \sqrt{\msum\limits_{i=1}^{n} \delta_{i-1}}
    \right ) \\
& \overset{(b)}\leq 
  2 \sqrt{\mfrac{\mathrm{tr} \cdpar*{\mathcal  I({\psi}^{\star})^{-1}}}{n}} 
  + \mathcal O(n^{-\frac{1}{2} - \frac{\beta}{4}}).
\end{align*}
Where in $(a)$, we used Lemma \ref{lem:markov-kernel-variance-averaged-cd} and in $(b)$, we used Lemma \ref{lem:square-root-delta-sum-convergence} and \ref{lem:delta-sum-convergence}.

\paragraph{Final bound}
Putting everything together, we have that:
\begin{equation*} 
  \sqrt{\cdE*{\cdnorm*{\overline{\psi}_n - {\psi}^{\star}}^{2}}} 
  \leq 
    2\sqrt{\frac{\mathrm{tr}(\mathcal  I({\psi}^{\star})^{-1})}{n}}
    + \mathcal O \left (
      n^{\max_{} \cdpar*{
	- \cdpar*{\frac{1}{2} + \frac{\beta}{4}}, 
	-\beta,
	\frac{\beta}{2} - 1,
	- \frac{2\beta}{2 \beta + 1},
	-  \cdpar*{\frac{\beta}{2} + m\frac{\cdabs*{\log \alpha}}{\log n}}
       }}
    \right )\,.
\end{equation*}
If, furthermore, $m > \frac{(1 - \beta) \log n}{2|\log \alpha|}$, we have
\begin{equation*} 
 \max_{} \cdpar*{
    - \cdpar*{\frac{1}{2} + \frac{\beta}{4}}, 
    -\beta,
    \frac{\beta}{2} - 1,
    - \frac{2 \beta}{2 \beta + 1},
    - \cdpar*{\frac{\beta}{2} + m\frac{\cdabs*{\log \alpha}}{\log n}} 
  } < -\frac{1}{2},
\end{equation*}
which concludes the proof.
\end{proof}

\section{\texorpdfstring{$L_2$}{L₂} approximation by auxiliary gradient updates}
\label{appendix:L2:approx}

In this section, we consider different gradient update schemes starting from some random initialization $\theta_{\rm init}$, and control the $L_2$ distance between the different updates and the deterministic target $\psi^* \in \Psi$. 

\paragraph{Notation.}
Let $\theta^{\rm init}$ be some $\Psi$-valued random initialization that is possibly correlated with $X_1, \ldots, X_n$. We capture the effect of correlation through the following quantities: For $\epsilon > 0$ and $\nu > 2$, let
\begin{align*}
    \vartheta^{\rm init}_{n,m} (\epsilon) 
    \;\coloneqq&\;
    \P\Bigg(  \mfrac{ \Big\| \sum_{i=1}^n \mean\big[ \phi\big(K^m_{\theta^{\rm init}}(X_i) \big) \,\big|\, X_i, \theta^{\rm init} \big]  
    - 
    \mean\big[ \phi\big(K^m_{\theta_{\rm init}}(X'_i)\big) \,\big|\, \theta^{\rm init}
    \big] \Big\| }{n}
    \,>\, \epsilon
    \Bigg) 
    \;,
    \\
    \text{and }
    \quad 
    \varepsilon^{\rm init}_{n,m;\nu}(\epsilon)
    \;\coloneqq&\;
    \sqrt{\epsilon^2 
    + 
    \kappa_{\nu;m}^2
    \big(\vartheta^{\rm init}_{n,m} (\epsilon) \big)^{\frac{2}{\nu-2}}}
    \;.
\end{align*}
For notational clarity, we shall use $\theta_{m,B}$ to denote parameters arising from an one-step update, where the subscripts $m, B$ represent performing the one-step update with length-$m$ Markov chains and with batch size $B$. This is to be distinguished from $\psi_t$ elsewhere in the text, which denotes the parameter from the actual multi-step CD algorithm and the subscript $t$ denotes the $t$-th CD iterate.

\vspace{.5em}

\paragraph{Gradient update schemes.} We consider five different updates. Let $X'_1$ be an i.i.d.~copy of $X_1$ drawn independently of all other random variables. The SGD-with-replacement update is given by 
\begin{align*}
    \theta^{\rm SGDw}_{m,B}
    \;\coloneqq\;
    F^{\rm SGDw}_{m,B} ( \theta^{\rm init} )\;,
    \quad \text{ where }
    F^{\rm SGDw}_{m,B}(\psi)
    \;\coloneqq\;
    \psi 
    -
    \mfrac{\eta}{B} 
    \msum_{i \in S^w} 
    \Big(   \phi(X_i) - \phi\big( K^m_{i;\psi}(X_i) \big)  \Big)
\end{align*}
and $S^w$ is a uniformly drawn size-$B$ subset of $[n]$. The SGD-without-replacement update, after renormalizing the learning rate, is given by the $N$-fold function composition
\begin{align*}
    &
    \;\theta^{\rm SGDo}_{m,B}
    \;\coloneqq\;
    F^{\rm SGDo}_{m,B;N} 
    \circ   
    \ldots 
    \circ 
    F^{\rm SGDo}_{m,B;1}
    ( \theta^{\rm init} )\;,
    \\
    &\; \hspace{3em}
    \text{ where }
    \quad 
    F^{\rm SGDo}_{m,B;j}(\psi)
    \;\coloneqq\;
    \psi 
    -
    \mfrac{\eta}{NB} 
    \msum_{i \in S^o_j} 
    \Big(   \phi(X_i) - \phi\big( K^m_{i;\psi}(X_i) \big)  \Big)
    \;\;
    \text{ for each }
    j \in [N] \;,
\end{align*}
and $S^o_j$'s are disjoint size-$B$ random subsets of $[n]$, defined by $(S^o_1, \ldots, S^o_N) = \pi([n])$ for a uniformly drawn element $\pi$ of the permutation group on $n$ objects. The full-batch gradient update is given by 
\begin{align*}
    \theta^{\rm GD}_m 
    \;\coloneqq\; 
    F^{\rm GD}_m ( \theta^{\rm init} )\;,
    \quad 
    \text{ where }
    \quad 
    F^{\rm GD}_m(\psi)
    \;\coloneqq\;
    \psi 
    -
    \mfrac{\eta}{n} 
    \msum_{i \leq n} 
    \Big(   \phi(X_i) - \phi\big( K^m_{i;\psi}(X_i) \big)  \Big)
    \;.
\end{align*}
The full-batch gradient update with an infinite-length Markov chain is given by
\begin{align*}
    \theta^{\rm GD}_\infty
    \;\coloneqq\; 
    F^{\rm GD}_\infty ( \theta^{\rm init} )\;,
    \qquad 
    \text{ where }
    \qquad
    F^{\rm GD}_\infty(\psi) \;\coloneqq\; \psi 
    -
    \mfrac{\eta}{n} 
    \msum_{i \leq n} 
    \Big(   \phi(X_i) - \phi( X^\psi_1)  \Big)
    \;.
\end{align*}
The population gradient update with an infinite-length Markov chain is given by
\begin{align*}
    \theta^{\rm pop}
    \;\coloneqq\; 
    f^{\rm pop} ( \theta^{\rm init} )\;,
    \qquad 
    \text{ where }
    \quad
    f^{\rm pop} (\psi)
    \;\coloneqq\;
    \psi 
    -
    \eta \, 
    \mean \big[  \phi(X_1) - \phi(X^\psi_1) \big]
    \;,
\end{align*}
where we use the lowercase $f$ to emphasize that $f^{\rm pop}$ is a deterministic function. 

The forthcoming results are summarized below:

\begin{figure}[h]
    \centering
    \begin{tikzpicture}
        \node[inner sep=0pt] at (0,0) {
            $
                \theta^{\rm SGDw}_{m,B}
                \;\overset{\rm \cref{lem:SGDw:GD}}{\approx}\;
                \theta^{\rm GD}_{m} 
                \;\overset{\rm \cref{lem:GD:sample:trunc:error}}{\approx}\; 
                \theta^{\rm GD}_{\infty}
                \;\overset{\rm \cref{lem:GD:sample:pop:error}}{\approx}\; 
                \theta^{\rm pop}
                \;\overset{\rm \cref{lem:GD:pop:opt:error}}{\approx}\; 
                \psi^*
                \;.
            $
        };
        \node[inner sep=0pt] at (0,-0.5){};
    \end{tikzpicture}
\end{figure}

\vspace{.5em}

\begin{lemma} \label{lem:SGDw:GD} Let $\cF_n$ be the sigma algebra generated by $\{ X_i, K^m_{i; \theta^{\rm init}}(X_i) \,|\, 1 \leq i \leq n \}$. Then 
\begin{align*}
    \mean\big[ \theta^{\rm SGDw}_{m,B} - \theta^{\rm GD}_{m} \,\big|\, \theta^{\rm init}, \cF_n \big] 
    \;=&\; 0 \qquad \text{ almost surely}
    \;.
\end{align*}
Moreover, under \ref{asst:SGD-A1} and \ref{asst:Markov:noise:X1}, we have 
\begin{align*}
    \mean \big\|  \theta^{\rm SGDw}_{m,B} - \theta^{\rm GD}_{m}  \big\|^2
    \;\leq&\;
    \mfrac{4 \eta^2 ( \sigma^2 + \kappa_{\nu;m}^2 )}{B} \, \ind_{\{ B < n\}}
    \;.
\end{align*}
\end{lemma}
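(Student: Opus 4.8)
The plan is to observe that, conditionally on the data and the Markov samples, $\theta^{\rm SGDw}_{m,B}$ differs from $\theta^{\rm GD}_m$ only by having replaced a population average with its simple-random-sampling-without-replacement estimate, and then to read both statements off standard sampling facts. Write $Y_i \coloneqq \phi(X_i) - \phi\big(K^m_{i;\theta^{\rm init}}(X_i)\big)$ and $\bar Y \coloneqq \frac1n\sum_{i=1}^n Y_i$. Since both $F^{\rm SGDw}_{m,B}$ and $F^{\rm GD}_m$ are evaluated at $\theta^{\rm init}$ with the \emph{same} learning rate $\eta$ and the \emph{same} Markov samples $K^m_{i;\theta^{\rm init}}(X_i)$,
\begin{align*}
    \theta^{\rm SGDw}_{m,B} - \theta^{\rm GD}_m
    \;=\;
    \eta\Big( \bar Y - \tfrac1B\msum_{i\in S^w} Y_i \Big)\;,
\end{align*}
with each $Y_i$ being $\cF_n$-measurable and $S^w$ drawn uniformly and independently of everything else. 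Conditioning on $(\theta^{\rm init},\cF_n)$ fixes $Y_1,\dots,Y_n$, and since $\P(i\in S^w)=B/n$ for every $i$, linearity of conditional expectation gives $\mean\big[\frac1B\sum_{i\in S^w}Y_i \mid \theta^{\rm init},\cF_n\big] = \bar Y$; hence $\mean\big[\theta^{\rm SGDw}_{m,B}-\theta^{\rm GD}_m \mid \theta^{\rm init},\cF_n\big]=0$ almost surely, which is the first claim.

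For the $L_2$ bound, note first that if $B=n$ then $S^w=[n]$, the two updates coincide, and the difference is identically $0$, which accounts for the indicator; so assume $B<n$. By the tower rule and the first claim, $\mean\|\theta^{\rm SGDw}_{m,B}-\theta^{\rm GD}_m\|^2 = \eta^2\,\mean\big[\mathrm{tr}\,\mathrm{Cov}\big(\tfrac1B\sum_{i\in S^w}Y_i \mid \theta^{\rm init},\cF_n\big)\big]$, and conditionally the right-hand side is the variance of a sample mean under simple random sampling without replacement from $\{Y_1,\dots,Y_n\}$. Computing $\mathrm{Cov}(\ind_{i\in S^w},\ind_{j\in S^w})$ explicitly yields the classical identity $\mathrm{tr}\,\mathrm{Cov}\big(\tfrac1B\sum_{i\in S^w}Y_i \mid \theta^{\rm init},\cF_n\big) = \tfrac{n-B}{Bn(n-1)}\sum_i\|Y_i-\bar Y\|^2 \le \tfrac1{Bn}\sum_i\|Y_i-\bar Y\|^2$, using $\tfrac{n-B}{n-1}\le1$ since $B\ge1$. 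It therefore remains to show $\mean\big[\tfrac1n\sum_i\|Y_i-\bar Y\|^2\big]\le 4(\sigma^2+\kappa_{\nu;m}^2)$.

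To prove this last estimate I would split $Y_i-\bar Y = \big(\phi(X_i)-\tfrac1n\sum_j\phi(X_j)\big) - \big(\phi(K^m_{i;\theta^{\rm init}}(X_i))-\tfrac1n\sum_j\phi(K^m_{j;\theta^{\rm init}}(X_j))\big)$ and apply $\|a-b\|^2\le2\|a\|^2+2\|b\|^2$. The data part contributes $\le 2\sigma^2$ in expectation because the $X_i$ are i.i.d.\ from $p_{\psi^\star}$ and the empirical mean is the pathwise least-squares centre, so it may be replaced by $\mean[\phi(X_1)]$ (cf.\ \cref{asst:SGD-A1} and \eqref{eq:sigmq}). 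For the Markov part, conditioning on $(\theta^{\rm init},X_1,\dots,X_n)$ makes the samples $\phi(K^m_{i;\theta^{\rm init}}(X_i))$ independent with conditional means $(\mathrm{P}^m_{\theta^{\rm init}}\phi)(X_i)$ and conditional variances $\le\kappa_{\nu;m}^2$ (by \cref{asst:Markov:noise,asst:Markov:noise:X1} and Jensen, since $\nu>2$); subtracting the data-measurable centre $\tfrac1n\sum_j(\mathrm{P}^m_{\theta^{\rm init}}\phi)(X_j)$ splits this empirical variance into the conditional-variance term ($\le2\kappa_{\nu;m}^2$) plus the between-sample spread of the conditional means $(\mathrm{P}^m_{\theta^{\rm init}}\phi)(X_i)$, which is controlled uniformly over $\Psi$ by compactness/analyticity of $\log Z$ (\cref{asst:SGD-A1}) and the contraction of $\mathrm{P}^m_{\theta^{\rm init}}$ (\cref{asst:SGD-A5}, as in \cref{lemma:approx-contraction}); keeping track of the factors of $2$ yields the constant $4$. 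The routine part here is the sampling-variance identity and the conditional-mean computation; the work — and the main obstacle — is concentrated in this final estimate, the subtlety being that $\theta^{\rm init}$ may be correlated with the training data, so the Markov-noise bounds cannot be invoked for the marginal law of $\phi(K^m_{i;\theta^{\rm init}}(X_i))$. The remedy is exactly the conditioning on $(\theta^{\rm init},X_{1:n})$ used above, which restores conditional independence of the Markov samples and lets the pathwise least-squares property of the empirical mean insert the right (conditional-mean) centre before the residual spread is bounded uniformly.
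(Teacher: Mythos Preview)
Your treatment of the conditional mean and of the SRSWOR variance is correct and matches the paper's; the paper derives the same conditional-variance expression by generating $S^w$ through a random partition of $[n]$, but your direct use of the sampling identity is equivalent and arguably cleaner.

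The gap is in the final step. To bound the Markov half of $\mean\big[\tfrac1n\sum_i\|Y_i-\bar Y\|^2\big]$ you condition on $(\theta^{\rm init},X_{1:n})$, split into a conditional-variance part and the between-sample spread of the conditional means $(\mathrm P^m_{\theta^{\rm init}}\phi)(X_i)$, and then invoke \ref{asst:Markov:noise} for the former and \ref{asst:SGD-A5} (via \cref{lemma:approx-contraction}) for the latter. Neither of those assumptions is among the lemma's hypotheses, which are only \ref{asst:SGD-A1} and \ref{asst:Markov:noise:X1}. In particular, \ref{asst:Markov:noise:X1} bounds the centred $\nu$-th moment of $\phi(K^m_\psi(X_1))$ with $X_1$ \emph{averaged} over $p_{\psi^\star}$, not pointwise in $x$, so it does not by itself give the pointwise conditional-variance bound you use; and the spectral-gap argument for the between-spread is only sketched and not carried through to the stated constant.

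The paper avoids this detour. Since the empirical mean minimises $c\mapsto\sum_i\|Y_i-c\|^2$, one has $\tfrac1n\sum_i\|Y_i-\bar Y\|^2\le\tfrac1n\sum_i\|Y_i-\mean[Y_1]\|^2$, whose expectation is $\Tr\,\Cov[Y_1]$ by exchangeability. Then $\Tr\,\Cov[Y_1]\le 2\,\Tr\,\Cov[\phi(X_1)]+2\,\Tr\,\Cov\big[\phi\big(K^m_{1;\theta^{\rm init}}(X_1)\big)\big]$, and the paper bounds these two marginal covariances by $\sigma^2$ and $\kappa_{\nu;m}^2$ directly from \ref{asst:SGD-A1} and \ref{asst:Markov:noise:X1}. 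No conditional decomposition of the Markov piece, no spectral gap, and the correlation between $\theta^{\rm init}$ and the data that you rightly flag is absorbed at the level of the \emph{marginal} covariance rather than fought term by term.
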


\begin{proof}[Proof of \cref{lem:SGDw:GD}] Write $A \coloneqq (A_1, \ldots, A_n)$, where 
\begin{align*}
    A_i 
    \;\coloneqq\; 
    \big( \phi(X_i) - \phi\big( K^m_{i; \theta^{\rm init}}(X_i) \big) \big)
    -
    \mean \big[ \phi(X_1) - \phi\big( K^m_{i; \theta^{\rm init}}(X_1) \big) \big] \;.
\end{align*}
Since $S^w$ is uniformly drawn from all size-$B$ subsets of $[n]$ and independently of all other variables, we have that almost surely
\begin{align*}
    \mean\big[ \theta^{\rm SGDw}_{m,B} - \theta^{\rm GD}_{m} \,\big|\, \theta^{\rm init}, \cF_n \big]
    \;=\;
    \mean\Big[ 
        \mfrac{\eta }{B} \msum_{i \in S^w} A_i
        - 
        \mfrac{\eta }{n} \msum_{i \leq n}  A_i
    \,\Big|\,  A \Big]
    \;=\; 0\;.
\end{align*}
To prove the remaining bound, we note that the above relation implies $\theta^{\rm SGDw}_{m,B} - \theta^{\rm GD}_{m}$ is zero-mean. By the law of total variance, we have 
\begin{align*}
    &\mean \big\|  \theta^{\rm SGDw}_{m,B} - \theta^{\rm GD}_{m}  \big\|^2
    \;=\;
    \Tr \, \Cov \big[ \theta^{\rm SGDw}_{m,B} - \theta^{\rm GD}_{m} \big]
    \;=\;
    \Tr \, \mean \, \Cov \big[ \theta^{\rm SGDw}_{m,B} - \theta^{\rm GD}_{m} \,\big|\, \theta^{\rm init}, \cF_n  \big]
    \\
    &\;=\;
    \eta^2 \, 
    \Tr \, \mean \bigg[ \,
        \mean\Big[  
        \Big(  \mfrac{1 }{B} \msum_{i \in S^w} A_i \Big)\Big(  \mfrac{1 }{B} \msum_{i \in S^w} A_i \Big)^\top \,\Big|\, A \Big]
        -
        \Big( \mfrac{1 }{n} \msum_{i \leq n}  A_i \Big)\Big( \mfrac{1 }{n} \msum_{i \leq n}  A_i \Big)^\top 
        \,
    \bigg]
    \;.
\end{align*}
To compute the covariance, recall that $S^w$ is a uniformly drawn size-$B$ subset of $[n]$ with $B=n/N$. Let $\cP_N([n])$ be the collection of all partitions of $[n]$ into $N$ size-$B$ subsets. We can generate $S^w$ by the following two-step process:
\begin{proplist}
    \item Uniformly draw a partition $P' = (P'_1, \ldots, P'_N)$ from $\cP_N([n])$;
    \item Uniformly sample an index $K$ from $[N]$ and set $S^w = P'_K$.
\end{proplist}
Then we have, almost surely
\begin{align*}
    &\;\mean\Big[  
        \Big(  \mfrac{1 }{B} \msum_{i \in S^w} A_i \Big)\Big(  \mfrac{1 }{B} \msum_{i \in S^w} A_i \Big)^\top \,\Big|\, A \Big]
        -
        \Big( \mfrac{1 }{n} \msum_{i \leq n}  A_i \Big)\Big( \mfrac{1 }{n} \msum_{i \leq n}  A_i \Big)^\top 
    \\
    &\;=\;
    \mfrac{1}{|\cP_N([n])|} \msum_{P' \in \cP_N([n])} 
    \Big( 
        \mfrac{1}{N} \msum_{k \leq N} 
        \mfrac{1}{B^2} \msum_{i,j \in P'_k} A_i A_j^\top
        -
        \mfrac{1}{n^2} \msum_{i, j \leq n}  A_i A_j^\top
        \,
    \Big)
    \\
    &\;=\;
    \mfrac{1}{|\cP_N([n])|} \msum_{P' \in \cP_N([n])} 
    \Big( 
        \mfrac{1}{N B^2} \msum_{k \leq N} \msum_{i,j \in P'_k} A_i A_j^\top
        -
        \mfrac{1}{N^2 B^2} \msum_{k, l \leq N} \msum_{i \in P'_k, j \in P'_l}  A_i A_j^\top
        \,
    \Big)
    \\
    &\;=\;
    \mfrac{1}{|\cP_N([n])|} \msum_{P' \in \cP_N([n])} 
    \Big( 
        \mfrac{N-1}{N^2 B^2} \msum_{k \leq N} \msum_{i,j \in P'_k} A_i A_j^\top
        -
        \mfrac{1}{N^2 B^2} \msum_{k \neq l} \msum_{i \in P'_k, j \in P'_l}  A_i A_j^\top
        \,
    \Big)\,.
\end{align*}
By noting that $A_i$'s are exchangeable, we obtain 
\begin{align*}
    \mean \big\|  \theta^{\rm SGDw}_{m,B} -  & \theta^{\rm GD}_{m}  \big\|^2
    \;=\; 
    \eta^2
    \Tr \, \mean \bigg[ \, 
        \mfrac{N-1}{N B} A_1 A_1^\top + \mfrac{(N-1)(B-1)}{N B} A_1 A_2^\top 
        -
        \mfrac{N-1}{N} A_1 A_2^\top
    \,\bigg]
    \\
    \;=&\;
    \eta^2
    \Tr \, \mean \bigg[ \, 
        \mfrac{N-1}{N B} A_1 A_1^\top
        -
        \mfrac{N-1}{N B} A_1 A_2^\top
    \,\bigg]
    \\
    \;=&\;
    \mfrac{\eta^2 (N-1)}{NB} 
    \big( \mean \| A_1 \|^2 - \mean \langle A_1 , A_2 \rangle \big)
    \\
    \;\overset{(a)}{\leq}&\;
    \mfrac{2 \eta^2}{B} \mean \| A_1 \|^2
    \\
    \;=&\; 
    \mfrac{2 \eta^2}{B} \mean \Big\| 
        \big( \phi(X_1) - \mean[\phi(X_1)] \big) 
        - 
        \Big( \phi\big( K^m_{i; \theta^{\rm init}}(X_1) \big)
                -
              \mean\Big[ \phi\big( K^m_{i; \theta^{\rm init}}(X_1) \big) \Big] 
        \Big)
    \Big\|^2
    \\
    \;\leq&\; 
    \mfrac{4 \eta^2}{B} \Big( \Tr \, \Cov[ \phi(X_1)] + \Tr \, \Cov\Big[ \phi\big( K^m_{i; \theta^{\rm init}}(X_1) \big) \Big]  \Big) 
    \\
    \;\overset{(b)}{\leq}&\; 
    \mfrac{4 \eta^2 ( \sigma^2 + \kappa_{\nu;m}^2 )}{B}
    \;.
\end{align*}
In $(a)$, we have used a Cauchy-Schwarz inequality; in $(b)$, we have used \ref{asst:SGD-A1} and \ref{asst:Markov:noise:X1}. Finally we note that if $B=n$, $\theta^{\rm SGDw}_{m,B} = \theta^{\rm GD}_{m}$ almost surely, which implies the desired bound.
\end{proof}

\vspace{.5em}

\begin{lemma} \label{lem:GD:sample:trunc:error} Denote $\cA_n$ as the sigma algebra generated by $\theta^{\rm init}, X_1, \ldots, X_n$. Under \ref{asst:SGD-A1}, \ref{asst:SGD-A4}, \ref{asst:SGD-A5} and \ref{asst:Markov:noise:X1}, we have that for any $\epsilon > 0$ and $\nu > 2$,
    \begin{align*}
        &\;
        \mean \, \big\| \mean\big[ \theta^{\rm GD}_{m}  -  \theta^{\rm GD}_{\infty} \,\big|\, \cA_n \big]  \big\|^2 
        \;\leq\;
        \eta^2 
            \Big( 
                \alpha^{m} \sigma C_\chi  \, \sqrt{\mean \| \theta^{\rm init} -      \psi^* \|^2 } \, + \, \varepsilon^{\rm init}_{n,m;\nu}(\epsilon)
            \Big)^2
        \;,
        \\
        &\;
        \mean \, \| \theta^{\rm GD}_{m} -  \theta^{\rm GD}_{\infty}   \|^2 
        \;\leq\;
        \eta^2 
        \Big(   
            \Big( 
                \alpha^{m} \sigma C_\chi  \, \sqrt{\mean \| \theta^{\rm init} -      \psi^* \|^2 } \, + \, \varepsilon^{\rm init}_{n,m;\nu}(\epsilon)
            \Big)^2
            +
            \mfrac{\kappa_{\nu;m}^2 + \sigma^2}{n}
        \Big)
        \;.
    \end{align*}
\end{lemma}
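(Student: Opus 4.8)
Adopting the convention $k^{\infty}_\psi(x,\argdot) = p_\psi(\argdot)$ of \cref{sec:appendix:notations}, the two updates have identical form, so
\begin{align*}
    \theta^{\rm GD}_{m} - \theta^{\rm GD}_{\infty}
    \;=\;
    \mfrac{\eta}{n} \msum_{i \leq n} \big( \phi(K^m_{i;\theta^{\rm init}}(X_i)) - \phi(K^\infty_{i;\theta^{\rm init}}(X_i)) \big)\;.
\end{align*}
The plan is to control both quantities in the statement via the law of total variance with respect to $\cA_n = \sigma(\theta^{\rm init}, X_1, \dots, X_n)$:
\begin{align*}
    \mean \| \theta^{\rm GD}_{m} - \theta^{\rm GD}_{\infty} \|^2
    \;=\;
    \mean \big\| \mean[ \theta^{\rm GD}_{m} - \theta^{\rm GD}_{\infty} \mid \cA_n ] \big\|^2
    \;+\;
    \mean \big[ \Tr \Cov( \theta^{\rm GD}_{m} - \theta^{\rm GD}_{\infty} \mid \cA_n ) \big]\;,
\end{align*}
so that the first bound in the lemma is exactly the first summand and the second bound is the whole expression.

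\textbf{Conditional mean.} Given $\cA_n$, the initialization and the data are fixed, so $\mean[ \phi(K^m_{i;\theta^{\rm init}}(X_i)) \mid \cA_n ] = \mathrm P^m_{\theta^{\rm init}} \phi(X_i)$ while $\mean[ \phi(K^\infty_{i;\theta^{\rm init}}(X_i)) \mid \cA_n ] = \mean[\phi(X^\psi)]\big|_{\psi = \theta^{\rm init}}$. I would then write, with $X'_1$ the i.i.d.\ copy of $X_1$,
\begin{align*}
    \eta^{-1} \mean[ \theta^{\rm GD}_{m} - \theta^{\rm GD}_{\infty} \mid \cA_n ]
    \;=\;
    \underbrace{ \mfrac1n \msum_{i \leq n} \mathrm P^m_{\theta^{\rm init}} \phi(X_i) \;-\; \mean\big[ \mathrm P^m_{\theta^{\rm init}} \phi(X'_1) \,\big|\, \theta^{\rm init} \big] }_{=:\,\Delta_n}
    \;+\;
    \mean\big[ \mathrm P^m_\psi \big( \phi - \mean[\phi(X^\psi)] \big)(X'_1) \big]\Big|_{\psi = \theta^{\rm init}}\;.
\end{align*}
Here $\| \Delta_n \|$ is precisely the random variable appearing inside $\vartheta^{\rm init}_{n,m}(\epsilon)$. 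Splitting $\mean\|\Delta_n\|^2$ at the level $\epsilon$ and applying H\"older's inequality with exponents $\nu/2$ and $\nu/(\nu-2)$ gives $\mean\|\Delta_n\|^2 \leq \epsilon^2 + (\mean\|\Delta_n\|^\nu)^{2/\nu}\, \vartheta^{\rm init}_{n,m}(\epsilon)^{(\nu-2)/\nu}$; by Jensen's inequality, exchangeability of the summands, and \cref{asst:Markov:noise:X1} one has $\mean\|\Delta_n\|^\nu \leq \kappa_{\nu;m}^\nu$, so $\mean\|\Delta_n\|^2 \leq (\varepsilon^{\rm init}_{n,m;\nu}(\epsilon))^2$ by the definition of $\varepsilon^{\rm init}$. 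The remaining term is bounded pointwise in $\psi$ by \cref{lemma:approx-contraction} applied to $f = \phi$ (using \cref{asst:SGD-A1,asst:SGD-A4,asst:SGD-A5}): its norm is at most $\alpha^m C_\chi ( \mean \| (\phi - \mean[\phi(X^\psi)])(X^\psi) \|^2 )^{1/2} \| \psi - \psi^* \| \leq \alpha^m \sigma C_\chi \| \psi - \psi^* \|$. Minkowski's inequality applied to $\mean[\,\cdot \mid \cA_n] = \eta( \Delta_n + \cdots )$, followed by squaring, yields the first claimed bound.

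\textbf{Conditional variance.} Given $\cA_n$, the samples $K^m_{i;\theta^{\rm init}}(X_i)$ and $K^\infty_{i;\theta^{\rm init}}(X_i)$, $i \leq n$, are mutually independent by construction, so the conditional covariance of the sum is the sum of those of the summands, and within each summand the two terms are independent. Hence
\begin{align*}
    \Tr \Cov( \theta^{\rm GD}_{m} - \theta^{\rm GD}_{\infty} \mid \cA_n )
    \;=\;
    \mfrac{\eta^2}{n^2} \msum_{i \leq n} \Big( \Tr \Cov\big[ \phi(K^m_{i;\theta^{\rm init}}(X_i)) \mid \cA_n \big] \;+\; \Tr \Cov\big[ \phi(X^{\theta^{\rm init}}_i) \mid \cA_n \big] \Big)\;.
\end{align*}
The Markov-chain term is at most $\kappa_{\nu;m}^2$ by \cref{asst:Markov:noise} together with $(\mean\|\cdot\|^2)^{1/2} \leq (\mean\|\cdot\|^\nu)^{1/\nu}$ (valid since $X_i$ is $\cA_n$-measurable), and the $p_\psi$-term is at most $\sigma^2$ by the definition of $\sigma$; summing over $i$ gives $\eta^2 (\kappa_{\nu;m}^2 + \sigma^2)/n$. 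Adding the squared conditional-mean bound yields the second claim.

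\textbf{Main obstacle.} Everything except one step is routine bookkeeping. The delicate point is identifying $\|\Delta_n\|$ with the random variable inside $\vartheta^{\rm init}_{n,m}$ and running the tail/moment split: this is exactly where \cref{asst:Markov:noise:X1}, rather than only \cref{asst:Markov:noise}, is needed, since $X_i$ is correlated with $\theta^{\rm init}$ and one needs a $\nu$-th moment control of $\phi(K^m_\psi(X_1))$ at the random parameter. One must also be careful to invoke \cref{lemma:approx-contraction} pointwise at the random argument $\psi = \theta^{\rm init}$, bounding the norm first and taking expectations afterwards.
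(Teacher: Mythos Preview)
Your proposal is correct and follows essentially the same route as the paper's proof: the same bias--variance split conditional on $\cA_n$, the same decomposition of the conditional mean into the data-correlation piece $\Delta_n$ (the paper's $\Delta_1$) handled by the $\epsilon$/H\"older tail split plus \ref{asst:Markov:noise:X1}, and the Markov-chain bias piece handled by the spectral-gap--$\chi^2$ argument; the conditional variance is dispatched identically via conditional independence and \ref{asst:Markov:noise}/\ref{asst:SGD-A1}. The only cosmetic difference is that you invoke \cref{lemma:approx-contraction} for the bias term, whereas the paper reproves that inequality inline (its steps $(a)$--$(g)$ for $\Delta_2$).
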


\begin{proof}[Proof of \cref{lem:GD:sample:trunc:error}] The main challenge arises from the possible correlation between $\theta^{\rm init}$ and $X_1, \ldots, X_n$. First note that for any $\epsilon > 0$, $\nu > 2$ and a real-valued random variable $Y$, by H\"older's inequality, we have 
\begin{align*}
    \mean[Y^2] 
    \;=&\; 
    \mean\big[ Y^2  \, \ind_{\{ Y \leq \epsilon \}} + Y^2 \, \ind_{\{Y > \epsilon \}} \big]
    \\
    \;\leq&\; \epsilon^2 + \mean[ Y^2  \ind_{\{Y > \epsilon \}}  ]
    \;\leq\;
    \epsilon^2 + (\mean[Y^\nu])^{2/\nu} \P(Y > \epsilon)^{(\nu-2)/\nu} \;.
    \tagaligneq \label{eq:bound:moment:by:prob}
\end{align*}
Also note the useful inequality that for two real-valued random vectors (possibly correlated) $V_1, V_2$, we have 
\begin{align*}
    \mean[ \| V_1+V_2\|^2 ] 
    \;\leq\; 
    \mean[ (\| V_1 \| +\| V_2\|)^2 ] 
    \;\leq&\; 
    \mean \|V_1\|^2 + 2 \sqrt{ (\mean \|V_1\|^2)(\mean \| V_2\|^2)} + \mean \|V_2\|^2
    \\
    \;=&\; 
    \big( \sqrt{\mean \|V_1\|^2} + \sqrt{\mean \|V_2\|^2} \big)^2\;.
    \tagaligneq \label{eq:simple:triangle:ineq}
\end{align*}
Now to control the first quantity of interest, by using a triangle inequality, we have
\begin{align*}
        \mean \, \big\| \mean\big[  \theta^{\rm GD}_{m}  - \theta^{\rm GD}_{\infty} \,\big|\, \cA_n \big]  \big\|^2 
        \;=&\;
        \mean\Big\| \, \mean\Big[ \mfrac{\eta}{n} \msum_{i \leq n} 
        \big( 
            \phi\big( K^m_{i;\theta^{\rm init}}(X_i) \big)
            - 
            \phi\big( X^{\theta^{\rm init}}_i \big)
        \big)  \,\Big|\, \cA_n \Big] \Big\|^2 
        \\
        \;\overset{\eqref{eq:simple:triangle:ineq}}{\leq}&\;
        \eta^2 \, \Big( \sqrt{\mean[\Delta_1^2]} + \sqrt{\mean[\Delta_2^2]} \Big)^2
        \;,
\end{align*}
where
\begin{align*}
        \Delta_1 
        \;\coloneqq&\;
            \Big\| \, \mean\Big[ \mfrac{1}{n} \msum_{i \leq n} 
            \big( 
                \phi\big( K^m_{i;\theta^{\rm init}}(X_i) \big)
                - 
                \mean\big[ \phi\big( K^m_{i;\theta^{\rm init}}(X'_1) \big) \,\big|\, \theta^{\rm init} \big]
            \big)  \,\Big|\, \cA_n \Big] \Big\|
        \\
        \;=&\; 
            \Big\| \,\mfrac{1}{n} \msum_{i \leq n} 
            \big( 
                \mean\big[ 
                    \phi\big( K^m_{i;\theta^{\rm init}}(X_i) \big)
                \,\big|\, \theta^{\rm init}, X_i \big]
                - 
                \mean\big[ \phi\big( K^m_{i;\theta^{\rm init}}(X'_1) \big) \,\big|\, \theta^{\rm init} \big] 
            \big) \Big\|
        \;,
        \\
        \Delta_2 
        \;\coloneqq&\;
            \Big\| \, \mean\Big[ \mfrac{1}{n} \msum_{i \leq n} 
            \big( 
                \mean\big[ \phi\big( K^m_{i;\theta^{\rm init}}(X'_1) \big) \,\big|\, \theta^{\rm init} \big]
                -
                \phi\big( X^{\theta^{\rm init}}_i \big)
            \big)  \,\Big|\, \cA_n \Big] \Big\| 
        \\
        \;=&\;
        \Big\| \, 
                \mean\big[ \phi\big( K^m_{1;\theta^{\rm init}}(X'_1) \big) 
                -
                \phi\big( X^{\theta^{\rm init}}_1 \big)
                \,\big|\, \theta^{\rm init} \big]
        \Big\|
        \;,
\end{align*}    
and $X'_1$ is an i.i.d.~copy of $X_1$ and in particular independent of $\theta^{\rm init}$. $\Delta_1$ is controlled via \eqref{eq:bound:moment:by:prob}:
\begin{align*}
    \mean[\Delta_1^2] 
    \;\leq&\;
    \epsilon^2 
    +
    (\mean[\Delta_1^\nu])^{2/\nu} \, \P(\Delta_1 > \epsilon)^{\nu/(\nu-2)} 
    \\
    \;\overset{(a)}{\leq}&\;
    \epsilon^2 
    +
    \Big( 
        \mean\Big\| \phi(K^m_{1; \theta^{\rm init}}(X_1)) - \mean\Big[  \phi(K^m_{1; \theta^{\rm init}}(X'_1)) \,\Big|\, \theta^{\rm init} \Big] \Big\|^\nu
    \Big)^{2/\nu}
    \\
    &\hspace{1em}
    \times 
    \P\bigg( 
        \,\mfrac{\big\| \sum_{i \leq n} 
        \big( 
            \mean\big[ 
                \phi\big( K^m_{i;\theta^{\rm init}}(X_i) \big)
            \,\big|\, \theta^{\rm init}, X_i \big]
            - 
            \mean\big[ \phi\big( K^m_{i;\theta^{\rm init}}(X'_1) \big) \,\big|\, \theta^{\rm init} \big] 
        \big) \big\|}{n}
        > \epsilon
    \bigg)^{\frac{\nu-2}{\nu}}
    \\
    \;\overset{(b)}{\leq}&\;
    \epsilon^2 
    + 
    \kappa_{\nu;m}^2
    \big(\vartheta^{\rm init}_{n,m} (\epsilon) \big)^{\frac{\nu-2}{\nu}}
    \\
    \;=&\;
    \big( \varepsilon^{\rm init}_{n,m;\nu}(\epsilon) \big)^2
    \;.
    \tagaligneq \label{eq:GD:sample:trunc:correlate}
\end{align*}
In $(a)$, we have plugged in the definition of $\Delta_1$ and applied a Jensen's inequality with respect to the empirical average; in $(b)$, we have  used \ref{asst:Markov:noise:X1} to bound the $\nu$-th moment term as 
\begin{align*}
    &\;
    \Big( 
        \mean\Big\| \phi(K^m_{1; \theta^{\rm init}}(X_1)) - \mean\Big[  \phi(K^m_{1; \theta^{\rm init}}(X'_1)) \,\Big|\, \theta^{\rm init} \Big] \Big\|^\nu
    \Big)^{1/\nu}
    \\
    \;=&\;
    \Big( 
        \mean \, \Big[ \mean\Big[ \Big\| \phi(K^m_{1; \theta^{\rm init}}(X_1)) - \mean\Big[  \phi(K^m_{1; \theta^{\rm init}}(X'_1)) \,\Big|\, \theta^{\rm init} \Big] \Big\|^\nu \,\Big|\, \theta^{\rm init} \Big] 
        \, \Big]
    \Big)^{1/\nu}
    \\
    \;\leq&\;
    \msup_{\psi \in \Psi}
    \Big( 
         \mean\Big[ \Big\| \phi(K^m_{1; \psi}(X_1)) - \mean\big[  \phi(K^m_{1; \psi}(X'_1)) \, \big] \Big\|^\nu 
        \, \Big]
    \Big)^{1/\nu}
    \\
    \;=&\;
    \msup_{\psi \in \Psi}
    \Big( 
        \mean\Big[ \Big\| \phi(K^m_{1; \psi}(X_1)) - \mean\big[  \phi(K^m_{1; \psi}(X_1)) \, \big] \Big\|^\nu 
        \, \Big]
    \Big)^{1/\nu}
    \;\leq\; \kappa_{\nu;m}
\end{align*}
and recalled the definitions of $\vartheta^{\rm init}_{n,m}$ and $\varepsilon^{\rm init}_{n,m;\nu}$. On the other hand,
\begin{align*}
    \mean[ &\Delta^2_2]  
    \;=\;
    \mean\Big\| \,
    \mint_{\R^d}
        \phi(x) 
        (K^m_{1;\theta^{\rm init}} p_{\psi^*})(x)
    dx 
    -
    \mean\big[ \phi(X^{\theta^{\rm init}}_1)  \big| \theta^{\rm init} \big] 
    \Big\|^2
    \\
    \;\overset{(a)}{=}&\;
    \mean\Big\| \,
    \mint_{\R^d}
        (K^m_{1;\theta^{\rm init}} \phi)(x) \, p_{\psi^*}(x)
    dx 
    -
    \mean_{p_{\theta^{\rm init}}}[ \, \phi \, ]
    \Big\|^2
    \\
    \;\overset{(b)}{=}&\; 
    \mean\Big\| 
    \mint_{\R^d}
        \big(K^m_{1;\theta^{\rm init}} \big( \phi -  \mean_{p_{\theta^{\rm init}}}[ \, \phi \, ] \big) \big)(x) \times p_{\psi^*}(x)
    dx 
    \Big\|^2
    \\
    \;\overset{(c)}{=}&\;
    \mean\Big\| 
    \mint_{\R^d}
        \big(K^m_{1;\theta^{\rm init}} \big( \phi -  \mean_{p_{\theta^{\rm init}}}[ \, \phi \, ] \big) \big)(x) \times ( p_{\psi^*}(x) - p_{\theta^{\rm init}}(x) )
    dx 
    \Big\|^2
    \\
    \;\overset{(d)}{=}&\;
    \msum_{l=1}^d
    \mean
        \Big(
        \mint_{\R^d}
            \big(K^m_{t1;\theta^{\rm init}} \big( \phi -  \mean_{p_{\theta^{\rm init}}}[ \, \phi \, ] \big) \big)(x)^\top e_l 
            \times p_{\theta^{\rm init}}(x) \times  \mfrac{ p_{\psi^*}(x) - p_{\theta^{\rm init}}(x) }{p_{\theta^{\rm init}(x)}}
            \,
        dx 
        \Big)^2
    \\
    \;\overset{(e)}{\leq}&\;
    \msum_{l=1}^d 
    \mean \Big[
        \Big(
        \mint_{\R^d}
             \big( \big(K^m_{t1;\theta^{\rm init}} \big( \phi -  \mean_{p_{\theta^{\rm init}}}[ \, \phi \, ] \big) \big)(x)^\top e_l \big)^2 p_{\theta^{\rm init}}(x) 
        dx 
    \\
    &\hspace{6em} \times
        \mint_{\R^d}
        \Big(\mfrac{ p_{\psi^*}(x) - p_{\theta^{\rm init}}(x) }{p_{\theta^{\rm init}(x)}}\Big)^2
        p_{\theta^{\rm init}}(x) 
        dx 
    \Big)^2 \Big]
    \\
    \;\overset{(f)}{\leq}&\;
    \msum_{l=1}^d
    \mean
    \Big( 
        \alpha^{2m} 
        \, 
        \Tr \, \Cov\big[ \phi(X^{\theta^{\rm init}}_1) \,\big|\, \theta^{\rm init} \big]
        \,
        \chi^2( p_{\psi^*}, p_{\theta^{\rm init}})
    \Big)^2
    \\
    \;\overset{(g)}{\leq}&\;
    \alpha^{2m} \sigma^2 C_\chi^2  \, \mean \| \theta^{\rm init} - \psi^* \|^2
    \;.
\end{align*}
In $(a)$, we have used that $(K f)(x) = \int K(x,y) f(y) dy$; in $(b)$, we have used that the Markov operator leaves the constant function invariant; in $(c)$, we used that $K^m_{1;\theta^{\rm init}}$ leaves $p_{\theta^{\rm init}}$ invariant; in $(d)$, we denoted $(e_l)_{l \leq d}$ as the standard basis vectors of $\R^d$ and multiplied and divided by $p_{\theta^{\rm init}}(x)$; in $(e)$, we have used a Cauchy-Schwarz inequality; in $(f)$, we have used the definition of the spectral gap $\alpha$ in \ref{asst:SGD-A5}; in $(g)$, we have used \ref{asst:SGD-A1} and \ref{asst:SGD-A4}. Combining the bounds gives the first inequality that 
\begin{align*}
    \mean \, \big\| \mean\big[  \theta^{\rm GD}_{m}  - & \theta^{\rm GD}_{\infty} \,\big|\, \cA_n \big]  \big\|^2 
    \;\leq\;
    \eta^2
    \Big( \alpha^{m} \sigma C_\chi  \, \sqrt{\mean \| \theta^{\rm init} - \psi^* \|^2 } \, + \, \varepsilon^{\rm init}_{n,m;\nu}(\epsilon)
    \Big)^2\;.
\end{align*}

\vspace{.5em}

Now we handle the second quantity by conditioning on $\theta^{\rm init}$ and perform a bias-variance decomposition:
\begin{align*}
    \mean \, \|  \theta^{\rm GD}_{m}  -  \theta^{\rm GD}_{\infty}   \|^2 
    \;=&\;
    \eta^2 
    \,
    \mean \Big[ \, \mean\Big[ 
    \Big\| \mfrac{1}{n} \msum_{i \leq n} 
    \big( 
        \phi\big( K^m_{i;\theta^{\rm init}}(X_i) \big)
        - 
        \phi( X^{\theta^{\rm init}}_i) 
    \big)
    \Big\|^2 \,\Big|\, \cA_n \Big] \, \Big]
    \\
    \;=&\;
    \eta^2 (  Q_B + Q_V )
    \;,
\end{align*}
where 
\begin{align*}
    Q_B
    \coloneqq&\;
    \mean\Big\| \mean\Big[ \mfrac{1}{n} \msum_{i \leq n} 
    \big( 
        \phi\big( K^m_{i;\theta^{\rm init}}(X_i) \big)
        - 
        \phi( X^{\theta^{\rm init}}_i)
    \big)  \,\Big|\, \cA_n \Big] \Big\|^2 
    \;,
    \\
    Q_V
    \coloneqq&\;
    \mean \Big[
        \Tr  
        \Big( 
        \Cov 
        \Big[ \mfrac{1}{n} \msum_{i \leq n}  \phi\big( K^m_{i;\theta^{\rm init}}(X_i) \big)  
        \Big|  
        \cA_n \Big]
        +
        \Cov 
        \Big[ \mfrac{1}{n} \msum_{i \leq n}  \phi( X^{\theta^{\rm init}}_i) 
        \Big| 
        \theta^{\rm init} \Big]
        \Big)
    \Big] 
    \;.
\end{align*}
Note that the covariance terms separate because $X^{\theta^{\rm init}}_i$ is independent of $K^m_{i;\theta^{\rm init}}(X_i)$ conditioning on $\theta^{\rm init}$. $\eta^2 Q_B$ is exactly the quantity controlled above, so it suffices to bound the variance term $Q_V$. By explicitly computing the second covariance term while noting that $X^{\theta^{\rm init}}_1, \ldots, X^{\theta^{\rm init}}_n$ are conditionally i.i.d.~given $\theta^{\rm init}$ and $K^m_{i;\theta^{\rm init}}(X_i)$'s are conditionally independent across $1 \leq i \leq n$ given $\cA_n$, we have
    \begin{align*}
        Q_V
        \;=&\;
        \mfrac{\sum_{i \leq n} \, \mean[\Tr \, \Cov[ \phi(K^m_{i;\theta^{\rm init}}(X_i)) \,|\, \theta^{\rm init}, X_i ]]}{n^2}
        +
        \mfrac{\mean\big[ \Tr\,\Cov[ \phi( X^{\theta^{\rm init}}_1)  | \theta^{\rm init} ] \big]}{n}
        \\
        \;\overset{(a)}{=}&\;
        \mfrac{\mean[\Tr \, \Cov[ \phi(K^m_{1;\theta^{\rm init}}(X_1)) \,|\, \theta^{\rm init}, X_1 ]]}{n}
        +
        \mfrac{\mean\big[ \Tr\,\Cov[ \phi( X^{\theta^{\rm init}}_1)  | \theta^{\rm init} ] \big]}{n}
        \\
        \;\leq&\;
        \mfrac{\kappa_{\nu;m}^2 + \sigma^2}{n}
        \;,
    \end{align*}
    where we have used \ref{asst:Markov:noise}, \ref{asst:Markov:noise:X1} and \ref{asst:SGD-A1} in the last line. Combining the bounds, we obtain that 
    \begin{align*}
        \mean \, \| \theta^{\rm GD}_{m} -  \theta^{\rm GD}_{\infty}   \|^2 
        \;=&\;
        \eta^2 (  Q_B + Q_V )
        \\
        \;\leq&\;
        \eta^2 
        \Big(   
            \Big( 
                \alpha^{m} \sigma C_\chi  \, \sqrt{\mean \| \theta^{\rm init} -      \psi^* \|^2 } \, + \, \varepsilon^{\rm init}_{n,m;\nu}(\epsilon)
            \Big)^2
            +
            \mfrac{\kappa_{\nu;m}^2 + \sigma^2}{n}
        \Big) 
        \;. 
    \end{align*}
\end{proof}

\vspace{.5em}

\begin{lemma} \label{lem:GD:sample:pop:error} Under \ref{asst:SGD-A1}, 
$
    \mean \, \|  \theta^{\rm GD}_{\infty}  - \theta^{\rm pop} \|^2 
    \,\leq\,
    \frac{4 \eta^2 \sigma^2}{n}\;
$.
\end{lemma}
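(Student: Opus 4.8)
The plan is to expand $\theta^{\rm GD}_\infty - \theta^{\rm pop}$ directly from the definitions of $F^{\rm GD}_\infty$ and $f^{\rm pop}$. Both updates start from the same random $\theta^{\rm init}$ and differ only in the gradient term, so the initialization cancels; writing $\bar\phi_n \coloneqq \frac1n\sum_{i\le n}\phi(X_i)$ and $\bar\phi^{\theta^{\rm init}}_n \coloneqq \frac1n\sum_{i\le n}\phi(X^{\theta^{\rm init}}_i)$, one is left with
\begin{align*}
  \theta^{\rm GD}_\infty - \theta^{\rm pop}
  \;=\;
  -\eta\big( \bar\phi_n - \mean[\phi(X_1)] \big)
  \;+\;
  \eta\big( \bar\phi^{\theta^{\rm init}}_n - \mean[\phi(X^{\theta^{\rm init}}_1) \mid \theta^{\rm init}] \big)\;,
\end{align*}
where I use that, conditionally on $\theta^{\rm init}$, the fresh samples $X^{\theta^{\rm init}}_1,\dots,X^{\theta^{\rm init}}_n$ are i.i.d.\ from $p_{\theta^{\rm init}}$ and independent of $X_1,\dots,X_n$, so the population gradient term of $f^{\rm pop}$ evaluated at $\theta^{\rm init}$ equals $\mean[\phi(X^{\theta^{\rm init}}_1)\mid\theta^{\rm init}]$ and the second bracket is mean zero given $\theta^{\rm init}$. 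This presents $\theta^{\rm GD}_\infty - \theta^{\rm pop}$ as a sum of two mean-zero fluctuation terms, and applying the $L^2$ triangle inequality \eqref{eq:simple:triangle:ineq} reduces the claim to controlling each in $L^2$ separately.

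For the data term, the $X_i$ are i.i.d.\ from $p_{\psi^*}$, so $\mean\| \bar\phi_n - \mean[\phi(X_1)] \|^2 = \frac1n\Tr\big(\nabla^2_\psi\mathcal L(\psi^*)\big) \le \frac{\sigma^2}{n}$, using that $\nabla^2_\psi\mathcal L(\psi^*)$ is the covariance of $\phi$ under $p_{\psi^*}$, that $\psi^*\in\Psi$, and the identity $\sigma^2 = \sup_{\psi\in\Psi}\Tr(\nabla^2_\psi\mathcal L(\psi))$ noted after \eqref{eq:sigmq} (finiteness of all quantities involved is guaranteed by \ref{asst:SGD-A1}). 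For the fresh-sample term I would condition on $\theta^{\rm init}$: since the $X^{\theta^{\rm init}}_i$ are then conditionally i.i.d.\ from $p_{\theta^{\rm init}}$, the same computation gives
\begin{align*}
  \mean\big[ \big\| \bar\phi^{\theta^{\rm init}}_n - \mean[\phi(X^{\theta^{\rm init}}_1)\mid\theta^{\rm init}] \big\|^2 \;\big|\; \theta^{\rm init} \big]
  \;=\;
  \tfrac1n\Tr\big( \nabla^2_\psi\mathcal L(\theta^{\rm init}) \big)
  \;\le\;
  \tfrac{\sigma^2}{n}\;,
\end{align*}
again because $\theta^{\rm init}\in\Psi$; taking expectations preserves the bound. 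Plugging the two estimates into \eqref{eq:simple:triangle:ineq} yields $\mean\| \theta^{\rm GD}_\infty - \theta^{\rm pop} \|^2 \le \eta^2\big( \sqrt{\sigma^2/n} + \sqrt{\sigma^2/n} \big)^2 = 4\eta^2\sigma^2/n$.

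The only real obstacle is the correlation between $\theta^{\rm init}$ and the data $X_1,\dots,X_n$, which prevents treating $\theta^{\rm GD}_\infty - \theta^{\rm pop}$ as a single i.i.d.\ average with $O(n^{-1})$ variance. The resolution — and the reason the bound is uniform in $\theta^{\rm init}$ — is that the relevant fresh samples $X^{\theta^{\rm init}}_i$ are drawn independently of the data given $\theta^{\rm init}$, so conditioning on $\theta^{\rm init}$ turns the fresh-sample contribution into an ordinary mean-zero i.i.d.\ average with the exponential-family covariance $\nabla^2_\psi\mathcal L(\theta^{\rm init})$, which is bounded by $\sigma^2$ uniformly over the compact parameter set $\Psi$, while the data contribution is a genuine i.i.d.\ average unaffected by $\theta^{\rm init}$. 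After this observation the remaining steps are the routine variance computations above.
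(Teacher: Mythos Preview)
The proposal is correct and follows essentially the same approach as the paper: both expand the difference, split it into the data fluctuation $\bar\phi_n - \mean[\phi(X_1)]$ and the fresh-sample fluctuation $\bar\phi^{\theta^{\rm init}}_n - \mean[\phi(X^{\theta^{\rm init}}_1)\mid\theta^{\rm init}]$, apply the $L^2$ triangle inequality \eqref{eq:simple:triangle:ineq}, and bound each term by $\sigma^2/n$ via the exponential-family identity $\Cov_{p_\psi}[\phi]=\nabla^2_\psi\log Z(\psi)$ together with the uniform bound $\sigma^2=\sup_{\psi\in\Psi}\Tr(\nabla^2_\psi\log Z(\psi))$ from \ref{asst:SGD-A1}. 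Your added remark explaining why the correlation between $\theta^{\rm init}$ and the data is harmless here (the fresh samples are conditionally independent of the data given $\theta^{\rm init}$) is a helpful clarification but not a departure from the paper's argument.
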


\begin{proof}[Proof of \cref{lem:GD:sample:pop:error}] Since both $F^{\rm GD}_\infty$ and $f^{\rm pop}$ involve infinite-length Markov chains, the initializations do not matter and we can decouple the stochasticity of $X_i$ and $K_{i;\theta^{\rm init}}$. In particular,
    \begin{align*}
        \mean \, \|  \theta^{\rm GD}_{\infty}  - \theta^{\rm pop} \|^2 
        \;=&\;
        \eta^2
        \,
        \mean \, 
        \Big\| 
            \mfrac{1}{n} 
            \msum_{i \leq n} 
            \big(   \phi(X_i) - \phi\big( X^{\theta^{\rm init}}_i \big) \big)
            -
            \mean\big[ \phi(X_1) - \phi\big( X^{\theta^{\rm init}}_1 \big) \big]
        \Big\|^2
        \\
        \;\leq&\;
        \eta^2 \big( Q'_1 + 2 \sqrt{Q'_1 Q'_2} + Q'_2 \big)\;,
    \end{align*}
    where
    \begin{align*}
        Q'_1
        \;\coloneqq&\; 
        \mean \, 
        \Big\| 
            \mfrac{1}{n} 
            \msum_{i \leq n} 
            \big(   \phi(X_i) - \mean[\phi(X_1)] \big) 
        \Big\|^2
        \;=\;
        \mfrac{\Tr \, \Cov[ \phi(X_1) ]}{n}
        \;=\;
        \mfrac{\Tr \, \nabla^2_{\theta} \log Z(\psi^*)}{n}
        \;\leq\;
        \mfrac{\sigma^2}{n}\;,
        \\
        Q'_2
        \;\coloneqq&\; 
        \mean \, 
        \Big\| 
            \mfrac{1}{n} 
            \msum_{i \leq n} 
            \big(   \phi(X^{\theta^{\rm init}}_i) - \mean\big[\phi\big(X^{\theta^{\rm init}}_1 \big) \big] \big) 
        \Big\|^2
        \\
        \;=&\; 
        \mfrac{ \mean\big[ \Tr \,  \Cov[ \phi(X^{\theta^{\rm init}}_1) | \theta^{\rm init} ] \big] }{n}
        \;=\;
        \mfrac{ \mean\big[ \Tr \,  \nabla^2_{\theta} \log Z(\theta^{\rm init})  \big] }{n}
        \;\leq\; \mfrac{\sigma^2}{n}\;.
    \end{align*}
    In the computations above, we have used the relation $\nabla^2_{\theta} \log Z(\theta) = \Cov_{X \sim p_\theta}[\phi(X)]$ and the assumption $\sup_{ \theta \in \Psi } \mathrm{tr}(\nabla_{ \theta}^2 \log Z(\theta)) = \sigma^2 $ from \ref{asst:SGD-A1}. This implies the desired bound. 
\end{proof}

\vspace{.5em}

\begin{lemma} \label{lem:GD:pop:opt:error} Under \ref{asst:SGD-A1},
    $
        \mean \, \| \theta^{\rm pop}  - \psi^* \|^2  \leq \big( 1 - 2 \mu \eta + L^2 \eta^2 \big) \, \mean\, \| \theta^{\rm init} - \psi^* \|^2 \;.
    $
\end{lemma}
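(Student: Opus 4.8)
The plan is to expand the squared norm using the definition of $f^{\rm pop}$ and exploit the strong convexity and smoothness constants $\mu$ and $L$ from \eqref{eq:mu_L}. Recall that $\theta^{\rm pop} = \theta^{\rm init} - \eta \, \mean[\phi(X_1) - \phi(X_1^\psi)]\big|_{\psi = \theta^{\rm init}}$, and that the population gradient of the cross-entropy $\overline g(\psi) \coloneqq -\mean[\phi(X_1)] + \mean[\phi(X^\psi)]$ satisfies $\overline g(\psi^*) = 0$ (since $\psi^*$ is the true parameter) and is exactly the conditional mean appearing in the update. So the update reads $\theta^{\rm pop} = \theta^{\rm init} - \eta\, \overline g(\theta^{\rm init}) = \theta^{\rm init} - \eta(\overline g(\theta^{\rm init}) - \overline g(\psi^*))$.

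First I would write, conditioning on $\theta^{\rm init}$,
\begin{align*}
    \| \theta^{\rm pop} - \psi^* \|^2
    \;=\;
    \| \theta^{\rm init} - \psi^* \|^2
    - 2 \eta \, \langle \overline g(\theta^{\rm init}) - \overline g(\psi^*), \theta^{\rm init} - \psi^* \rangle
    + \eta^2 \| \overline g(\theta^{\rm init}) - \overline g(\psi^*) \|^2
    \;.
\end{align*}
For the inner-product term, I would apply \cref{lem:replace:gradient:inequality} to the map $\overline g$, whose Jacobian is the Hessian $\nabla^2_\psi \mathcal L$; this gives $\langle \overline g(\theta^{\rm init}) - \overline g(\psi^*), \theta^{\rm init} - \psi^* \rangle \geq \mu \, \| \theta^{\rm init} - \psi^* \|^2$ by definition of $\mu$. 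For the last term, I would use the $L$-smoothness: since $\nabla^2_\psi \mathcal L(\psi) \preceq L \bI$ uniformly on $\Psi$, the mean value theorem (again via the Jacobian of $\overline g$) yields $\| \overline g(\theta^{\rm init}) - \overline g(\psi^*) \| \leq L \| \theta^{\rm init} - \psi^* \|$, hence $\| \overline g(\theta^{\rm init}) - \overline g(\psi^*) \|^2 \leq L^2 \| \theta^{\rm init} - \psi^* \|^2$. Substituting both bounds gives $\| \theta^{\rm pop} - \psi^* \|^2 \leq (1 - 2\mu\eta + L^2\eta^2) \| \theta^{\rm init} - \psi^* \|^2$ pointwise, and taking expectations finishes the proof.

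There is essentially no obstacle here: this is the textbook contraction estimate for one step of (population, i.e.\ exact) gradient descent on a $\mu$-strongly-convex, $L$-smooth objective, and all the needed ingredients — strong convexity of $\mathcal L$ from Assumption \ref{asst:SGD-A1} via minimality, the uniform bounds $\mu, L$ from \eqref{eq:mu_L}, and the mean-value-type inequality \cref{lem:replace:gradient:inequality} — are already in place. The only minor point to state cleanly is that $\theta^{\rm init}$ being random (and possibly correlated with the data) causes no trouble, since the bound holds deterministically for every realization of $\theta^{\rm init}$ before we take the outer expectation; the randomness of $\theta^{\rm init}$ only enters through $\mean\|\theta^{\rm init} - \psi^*\|^2$.
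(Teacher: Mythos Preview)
Your proposal is correct and follows essentially the same approach as the paper: write $\theta^{\rm pop}-\psi^*=(\theta^{\rm init}-\psi^*)-\eta(\overline g(\theta^{\rm init})-\overline g(\psi^*))$, expand the square, apply \cref{lem:replace:gradient:inequality} for the $\mu$-lower bound on the cross term and $L$-Lipschitzness of $\nabla_\psi\log Z$ for the squared gradient term. The only cosmetic difference is that the paper takes expectations immediately while you derive the pointwise inequality first and then average.
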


\begin{proof}[Proof of \cref{lem:GD:pop:opt:error}] Recall that 
\begin{align*}
    f^{\rm pop}(\theta') 
    \;=\; 
    \theta 
    -
    \eta \, 
    \mean \big[  \phi(X_1) - \phi(X^{\theta'}_1) \big]
    \;=\;
    \theta 
    - 
    \eta \, 
    \big( \nabla_\psi \log Z(\psi^*) - \nabla_\psi \log Z(\theta') \big)
    \;.
\end{align*}    
By construction, $f^{\rm pop}$ is deterministic and $f^{\rm pop}(\psi^*) = \psi^*$. By plugging in the recursions and expanding the square, we get that
\begin{align*}
    \mean \, \big\| &\theta^{\rm pop}  - \psi^* \big\|^2 
    \;=\;
    \mean \, \big\| f^{\rm pop}( \theta^{\rm init} )  - f^{\rm pop}( \psi^* ) \big\|^2 
    \\
    \;=&\;
    \mean \, \big\| ( \theta^{\rm init}  - \psi^*) - \eta (\nabla_\psi \log Z(\theta^{\rm init} ) - \nabla_\psi \log Z(\psi^*) )\big\|^2 
    \\
    \;=&\;
    \mean \, \big\| \theta^{\rm init}  - \psi^* \big\|^2 
    - 
    2 \eta \,
    \mean\big[ \big\langle \theta^{\rm init}  - \psi^* \,,\, 
    \nabla_\psi \log Z( \theta^{\rm init} ) - \nabla_\psi \log Z(\psi^*) 
    \big\rangle \big] 
    \\
    &\; 
    +
    \eta^2 \,
    \mean \, \big\| \nabla_\psi \log Z( \theta^{\rm init}  ) - \nabla_\psi \log Z(\psi^*)  \big\|^2
    \\
    \;\leq&\;
    \mean \, \big\| \theta^{\rm init} - \psi^* \big\|^2 
    -
    2 \mu \eta  \,
    \mean \, \big\| \theta^{\rm init}  - \psi^* \big\|^2 
    +
    L^2 \eta^2 \, \mean \, \big\| \theta^{\rm init}  - \psi^* \big\|^2 
    \;.
\end{align*}
In the last line, we have recalled $\inf_{ \psi \in \Psi } \lambda_{\min_{  }}(\nabla_{ \psi }^2 \log Z(\psi)) = \mu$ and $\sup_{ \theta \in \Psi } \lambda_{\mathrm{\max_{  }}}(\nabla_{ \psi }^2 \log Z(\psi)) = L$ by \ref{asst:SGD-A1} and applied \cref{lem:replace:gradient:inequality}. Combining the coefficients gives the desired statement.
\end{proof}

\section{Proofs for offline SGD} \label{appendix:proofs:offline}

We prove \cref{thm:SGDw:full} (which directly implies \cref{thm:SGDw}) and \cref{thm:SGDo} in this section. The key ingredient of both proofs is \cref{lem:SGDw:recursion} below, which provides an iterative error bound for the SGD-with-replacement scheme by combining different approximation bounds in \cref{appendix:L2:approx}. Throughout this section, we denote $\delta^{\rm SGDw}_{t,j} \coloneqq \mean \, \big\| \psi^{\rm SGDw}_{t,j}  - \psi^* \big\|^2$.
 
\vspace{.5em}

\begin{lemma} \label{lem:SGDw:recursion} Under \ref{asst:SGD-A1}, \ref{asst:SGD-A4}, \ref{asst:SGD-A5}, \ref{asst:Markov:noise} and \ref{asst:Markov:noise:X1}, we have that for $1 \leq j \leq N-1$,
\begin{align*}
    \sqrt{\delta^{\rm SGDw}_{t,j}}
    \;\leq\;
    \Big(
        1
        -
        \eta_t 
        \Big( 
            \mu 
            - 
            \alpha^m \sigma C_\chi 
            -
            \mfrac{L^2}{2} \eta_t
        \Big)
    \Big)
    \,
    \sqrt{ \delta^{\rm SGDw}_{t, j-1} }
    \,+\,
    \eta_t \Big(   \varepsilon^{\rm SGDw}_{n,m,t;\nu}(\epsilon)  + \mfrac{5 \sigma + 5\kappa_{\nu;m}}{\sqrt{B}} \Big)
    \;,
\end{align*}
where $  1
-
\eta_t 
\big( 
    \mu 
    - 
    \alpha^m \sigma C_\chi 
    -
    \frac{L^2}{2} \eta_t
\big) > 0$\,.
\end{lemma}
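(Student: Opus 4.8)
The plan is to recognise one minibatch step of offline CD--SGDw as a single application of the one--step map $F^{\rm SGDw}_{m,B}$ of \cref{appendix:L2:approx}, initialised at $\theta^{\rm init}=\psi^{\rm SGDw}_{t,j-1}$, and then to pass through the auxiliary updates $\theta^{\rm SGDw}_{m,B}\to\theta^{\rm GD}_m\to\theta^{\rm GD}_\infty\to\theta^{\rm pop}\to\psi^*$ using Minkowski's inequality in $L_2$. For $1\le j\le N-1$ the $j$-th update of epoch $t$ is $\psi^{\rm SGDw}_{t,j}=\mathrm{Proj}_\Psi\big(F^{\rm SGDw}_{m,B}(\psi^{\rm SGDw}_{t,j-1})\big)$ with learning rate $\eta=\eta_t$ and random subset $S^w=B_{t,j}$; since $\psi^*\in\Psi$, \cref{lemma:contraction} lets us discard the projection,
\begin{align*}
    \sqrt{\delta^{\rm SGDw}_{t,j}}
    \;=\;
    \sqrt{\mean\big\|\mathrm{Proj}_\Psi(\theta^{\rm SGDw}_{m,B})-\psi^*\big\|^2}
    \;\le\;
    \sqrt{\mean\big\|\theta^{\rm SGDw}_{m,B}-\psi^*\big\|^2}
    \;,
\end{align*}
where all the auxiliary updates are built from $\theta^{\rm init}=\psi^{\rm SGDw}_{t,j-1}$. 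Writing $\theta^{\rm SGDw}_{m,B}-\psi^*$ as the telescoping sum of $\theta^{\rm SGDw}_{m,B}-\theta^{\rm GD}_m$, $\theta^{\rm GD}_m-\theta^{\rm GD}_\infty$, $\theta^{\rm GD}_\infty-\theta^{\rm pop}$ and $\theta^{\rm pop}-\psi^*$ and applying Minkowski leaves four $L_2$ norms to estimate.

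\textbf{Bounding the four pieces.} \cref{lem:SGDw:GD} gives $\mean\|\theta^{\rm SGDw}_{m,B}-\theta^{\rm GD}_m\|^2\le 4\eta_t^2(\sigma^2+\kappa_{\nu;m}^2)/B$, hence a contribution at most $2\eta_t(\sigma+\kappa_{\nu;m})/\sqrt B$ (using $\sqrt{a^2+b^2}\le a+b$). Applying \cref{lem:GD:sample:trunc:error} with $\theta^{\rm init}=\psi^{\rm SGDw}_{t,j-1}$ (a $\Psi$-valued random variable, possibly correlated with the data), then using $\sqrt{a^2+b^2}\le a+b$ and $\sqrt{(\kappa_{\nu;m}^2+\sigma^2)/n}\le(\kappa_{\nu;m}+\sigma)/\sqrt B$ (valid since $B\le n$), yields
\begin{align*}
    \sqrt{\mean\|\theta^{\rm GD}_m-\theta^{\rm GD}_\infty\|^2}
    \;\le\;
    \eta_t\,\alpha^m\sigma C_\chi\,\sqrt{\delta^{\rm SGDw}_{t,j-1}}
    \,+\,
    \eta_t\,\varepsilon^{\rm SGDw}_{n,m,t;\nu}(\epsilon)
    \,+\,
    \eta_t\,\tfrac{\kappa_{\nu;m}+\sigma}{\sqrt B}
    \;,
\end{align*}
where the correlation tail $\vartheta^{\rm init}_{n,m}(\epsilon)=\P(\Delta(\psi^{\rm SGDw}_{t,j-1})>\epsilon)$ of the initialisation is majorised by the uniform-in-time quantity $\vartheta^{\rm SGDw}_{n,m,t}(\epsilon)$, so that $\varepsilon^{\rm init}_{n,m;\nu}(\epsilon)\le\varepsilon^{\rm SGDw}_{n,m,t;\nu}(\epsilon)$. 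Next, \cref{lem:GD:sample:pop:error} contributes at most $2\eta_t\sigma/\sqrt n\le 2\eta_t\sigma/\sqrt B$, and \cref{lem:GD:pop:opt:error} gives $\mean\|\theta^{\rm pop}-\psi^*\|^2\le(1-2\mu\eta_t+L^2\eta_t^2)\,\delta^{\rm SGDw}_{t,j-1}$; since $\mu\le L$ (they are extreme eigenvalues of the same Hessian family, cf.\ \eqref{eq:mu_L}) the bracket is nonnegative, so $\sqrt{1-2\mu\eta_t+L^2\eta_t^2}\le1-\mu\eta_t+\tfrac{L^2}{2}\eta_t^2$ by the tangent-line bound $\sqrt{1+u}\le1+u/2$.

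\textbf{Assembling.} Summing the four estimates, the multiplier of $\sqrt{\delta^{\rm SGDw}_{t,j-1}}$ is $\big(1-\mu\eta_t+\tfrac{L^2}{2}\eta_t^2\big)+\eta_t\alpha^m\sigma C_\chi=1-\eta_t\big(\mu-\alpha^m\sigma C_\chi-\tfrac{L^2}{2}\eta_t\big)$, and the additive term is $\eta_t\varepsilon^{\rm SGDw}_{n,m,t;\nu}(\epsilon)$ plus $\eta_t/\sqrt B$ times $2(\sigma+\kappa_{\nu;m})+(\sigma+\kappa_{\nu;m})+2\sigma=5\sigma+3\kappa_{\nu;m}\le5\sigma+5\kappa_{\nu;m}$, i.e.\ exactly the claimed bound. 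The strict positivity of $1-\eta_t(\mu-\alpha^m\sigma C_\chi-\tfrac{L^2}{2}\eta_t)$ is automatic: viewed as a quadratic $\tfrac{L^2}{2}\eta_t^2-(\mu-\alpha^m\sigma C_\chi)\eta_t+1$ in $\eta_t$, its discriminant is $(\mu-\alpha^m\sigma C_\chi)^2-2L^2\le\mu^2-2L^2<0$.

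\textbf{Main obstacle.} The analytic content is entirely contained in the four approximation lemmas of \cref{appendix:L2:approx}, so this lemma is largely an assembly step; the one genuinely delicate point is the correlation bookkeeping in the $\theta^{\rm GD}_m$-to-$\theta^{\rm GD}_\infty$ piece, where the initialisation $\psi^{\rm SGDw}_{t,j-1}$ is itself a data-dependent random variable --- this is precisely the data-reuse correlation that distinguishes offline from online CD --- and one must verify that the tail quantity it produces is correctly dominated by the uniform-in-time $\varepsilon^{\rm SGDw}_{n,m,t;\nu}(\epsilon)$, including the harmless epoch-index alignment between $\psi^{\rm SGDw}_{t,j-1}$ and the iterates appearing in the definition of $\vartheta^{\rm SGDw}_{n,m,t}$. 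The remaining constant-tracking to land exactly on $5\sigma+5\kappa_{\nu;m}$ is routine.
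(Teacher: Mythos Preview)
Your proof is correct, and it is actually a cleaner route than the paper's own argument. The paper identifies the same auxiliary chain $\theta^{\rm SGDw}_{m,B}\to\theta^{\rm GD}_m\to\theta^{\rm GD}_\infty\to\theta^{\rm pop}\to\psi^*$ but, instead of applying Minkowski in $L_2$ directly, it expands $\mean\|\theta^{\rm SGDw}_{m,B}-\psi^*\|^2$ into all ten terms (four squares plus six cross terms), uses conditional-mean-zero identities such as $\mean[\theta^{\rm SGDw}_{m,B}-\theta^{\rm GD}_m\mid\theta^{\rm init},\cF_n]=0$ to kill some cross terms, bounds the surviving ones by Cauchy--Schwarz, then regroups everything as a quadratic $(A_1)\,\delta^{\rm SGDw}_{t,j-1}+(A_2)\,\sqrt{\delta^{\rm SGDw}_{t,j-1}}+(A_3)$ which is shown to be dominated by a perfect square; only then does it take a square root and apply $\sqrt{1-x}\le 1-x/2$. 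The paper even remarks that this expansion ``gives us a tighter bound than directly applying a triangle inequality,'' but after the simplifications it performs on $(A_1),(A_2),(A_3)$ the final inequality coincides exactly with what Minkowski gives --- i.e.\ with your bound. So the potential sharpening the paper sets up is never cashed in at the level of this lemma, and your direct Minkowski argument, together with your discriminant check $(\mu-\alpha^m\sigma C_\chi)^2-2L^2<0$ for positivity of the contraction factor (the paper instead completes the square), reaches the stated conclusion with less bookkeeping.
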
 

\begin{proof}[Proof of \cref{lem:SGDw:recursion}] We first remark that in view of \cref{lemma:contraction}, the projection step in \cref{alg:offline_cd} does not increase $\delta^{\rm SGD_w}_{t,j}$, so it suffices to bound $\delta^{\rm SGDw}_{t,j}$ as if projection is not performed on $\psi^{\rm sGDw}_{t,j}$. To apply the results from \cref{appendix:L2:approx}, we identify $\theta^{\rm init} = \psi^{\rm SGDw}_{t,j-1}$ and $\eta = \eta_t$, which allows us to write $\psi^{\rm SGDw}_{t,j} = \theta^{\rm SGDw}_{m,B}$. This also implies $\mean \| \theta^{\rm init} -  \psi^* \|^2 = \delta^{\rm SGDw}_{t, j-1}$ and $\varepsilon^{\rm init}_{n,m;\nu}(\epsilon) \leq \varepsilon^{\rm SGDw}_{\nu;n,m,t}(\epsilon)$. By adding and subtracting the auxiliary gradient updates followed by expanding the square, we obtain 
\begin{align*}
    \delta^{\rm SGDw}_{t,j}
    \;=&\;
    \mean \, \big\| 
        \theta^{\rm SGDw}_{m,B} - \theta^{\rm GD}_{m}
        +
        \theta^{\rm GD}_{m} - \theta^{\rm GD}_{\infty}
        +
        \theta^{\rm GD}_{\infty} - \theta^{\rm pop}
        +
        \theta^{\rm pop} - \psi^* 
    \big\|^2
    \\
    \;\overset{(a)}{=}&\;
    \mean \big\| 
        \theta^{\rm SGDw}_{m,B} - \theta^{\rm GD}_{m} 
    \big\|^2
    +
    \mean \big\| 
        \theta^{\rm GD}_{m} - \theta^{\rm GD}_{\infty}
    \big\|^2
    +
    \mean \big\| 
        \theta^{\rm GD}_{\infty} - \theta^{\rm pop}
    \big\|^2
    +
    \mean \big\| 
        \theta^{\rm pop} - \psi^* 
    \big\|^2
    \\
    &\;
    + 
    2 \,\mean\, \big\langle 
        \theta^{\rm SGDw}_{m,B} - \theta^{\rm GD}_{m} 
        \,,\, 
        \theta^{\rm GD}_{m} - \theta^{\rm GD}_{\infty}
    \big\rangle
    +
    2  \,\mean\, \big\langle 
        \theta^{\rm SGDw}_{m,B} - \theta^{\rm GD}_{m} 
        \,,\, 
        \theta^{\rm GD}_{\infty} - \theta^{\rm pop}
    \big\rangle
    \\
    &\;
    +
    2  \,\mean\,\big\langle 
        \mean\big[ \theta^{\rm SGDw}_{m,B} - \theta^{\rm GD}_{m} \,\big|\, \theta^{\rm init}, \cF_n \big] 
        \,,\, 
        \theta^{\rm pop} - \psi^* 
    \big\rangle
    +
    2  \,\mean\,\big\langle 
        \theta^{\rm GD}_{m} - \theta^{\rm GD}_{\infty}
        \,,\, 
        \theta^{\rm GD}_{\infty} - \theta^{\rm pop}
    \big\rangle
    \\
    &\;
    +
    2  \,\mean\,\big\langle 
        \mean\big[ \theta^{\rm GD}_{m} - \theta^{\rm GD}_{\infty} \,\big|\, \cA_n \big]
        \,,\, 
        \theta^{\rm pop} - \psi^* 
    \big\rangle
    +
    2  \,\mean\,\big\langle 
        \theta^{\rm GD}_{\infty} - \theta^{\rm pop}
        \,,\, 
        \theta^{\rm pop} - \psi^* 
    \big\rangle
    \\
    \;\overset{(b)}{\leq}&\;
    \mfrac{4 \eta_t^2 (\sigma^2 + \kappa_{\nu;m}^2)}{\sqrt{B}} \ind_{\{B < n \}}
    + 
    \eta_t^2 
        \Big(   
            \Big( 
                \alpha^{m} \sigma C_\chi  \, \sqrt{\delta^{\rm SGDw}_{t, j-1} } \, + \, \varepsilon^{\rm SGDw}_{n,m,t;\nu}(\epsilon)
            \Big)^2
            +
            \mfrac{\kappa_{\nu;m}^2 + \sigma^2}{n}
        \Big)
    \\
    &\;
    +
    \mfrac{4 \eta^2_t \sigma^2}{n}
    +
    (1 - 2 \mu \eta_t + L^2 \eta^2_t) \delta^{\rm SGDw}_{t, j-1}
    \\
    &\;
    +
    2  \mfrac{2 \eta_t \sqrt{\sigma^2 + \kappa_{\nu;m}^2}}{\sqrt{B}} \, \ind_{\{B < n \}} \eta_t 
    \Big(   
            \alpha^{m} \sigma C_\chi  \, \sqrt{\delta^{\rm SGDw}_{t, j-1} } \, + \, \varepsilon^{\rm SGDw}_{n,m,t;\nu}(\epsilon)
        +
        \mfrac{\sqrt{\kappa_{\nu;m}^2 + \sigma^2}}{\sqrt{n}}
    \Big) 
    \\
    &\;
    + 
    2  \mfrac{2 \eta_t \sqrt{\sigma^2 + \kappa_{\nu;m}^2}}{\sqrt{B}} \, \ind_{\{B < n \}}  \mfrac{2 \eta_t \sigma}{\sqrt{n}} 
    +
    0 
    \\
    &\;
    + 
    2  \eta_t
    \Big(   
        \alpha^{m} \sigma C_\chi  \, \sqrt{\delta^{\rm SGDw}_{t, j-1} } \, + \, \varepsilon^{\rm SGDw}_{n,m,t;\nu}(\epsilon)
        +
        \mfrac{\sqrt{\kappa_{\nu;m}^2 + \sigma^2}}{\sqrt{n}}
    \Big)
    \mfrac{2 \eta_t \sigma}{\sqrt{n} }
    \\
    &\;
    + 
    2 \eta_t 
        \Big( 
            \alpha^{m} \sigma C_\chi  \, \sqrt{\delta^{\rm SGDw}_{t, j-1} } \, + \, \varepsilon^{\rm SGDw}_{n,m,t;\nu}(\epsilon)
        \Big)
        \sqrt{ (1 - 2 \mu \eta_t + L^2 \eta^2_t) \delta^{\rm SGDw}_{t, j-1} }
    \\
    &\;
    + 2  \mfrac{2 \eta_t \sigma}{\sqrt{n}}  \sqrt{ (1 - 2 \mu \eta_t + L^2 \eta^2_t) \delta^{\rm SGDw}_{t, j-1} }
    \\
    \;\overset{(c)}{\leq}&\;
    \Big( 1 - 2 \mu \eta_t + L \eta_t^2 + \eta_t^2 \alpha^{2m} \sigma^2 C^2_\chi + 2 \eta_t \alpha^m \sigma C_\chi \sqrt{1 - 2 \mu \eta_t + L^2 \eta_t^2} \, \Big) \times \delta^{\rm SGDw}_{t, j-1} 
    \\
    &\; 
    +
    2 \bigg( \eta_t^2 \alpha^m \sigma C_\chi \varepsilon^{\rm SGDw}_{n,m,t;\nu}(\epsilon) 
            + 
            \mfrac{2 \eta_t^2 (\sqrt{\sigma^2 + \kappa_{\nu;m}^2} + \sigma) \, \alpha^m \sigma C_\chi}{\sqrt{B}} 
    \\
    &\hspace{3em}
        +
        \eta_t \varepsilon^{\rm SGDw}_{n,m,t;\nu}(\epsilon)  \sqrt{1-2\mu \eta_t + L_2 \eta_t^2}
        + 
        \mfrac{2 \eta_t \sigma  \sqrt{1-2\mu \eta_t + L^2 \eta_t^2} }{\sqrt{n}}\, \bigg) 
    \times \sqrt{ \delta^{\rm SGDw}_{t, j-1} }
    \\
    &\;
    +
    \eta_t^2 
    \Big( 
        \mfrac{4 (\sigma^2 + \kappa_{\nu;m}^2)}{B} 
        +
        \big( \varepsilon^{\rm SGDw}_{n,m,t;\nu}(\epsilon)  \big)^2
        +
        \mfrac{5\kappa_{\nu;m}^2 + 9 \sigma^2}{B}
        +
        \mfrac{4 \sqrt{\sigma^2 + \kappa_{\nu;m}^2} \, \varepsilon^{\rm SGDw}_{n,m,t;\nu}(\epsilon)}{\sqrt{B}} 
    \\
    &\hspace{3em} 
        +
        \mfrac{ 8 \sigma \sqrt{\sigma^2 + \kappa_{\nu;m}^2}}{B} 
        +
        \mfrac{4 \sigma \varepsilon^{\rm SGDw}_{n,m,t;\nu}(\epsilon)}{\sqrt{B}}
    \Big)
    \\
    \;\eqqcolon&\; (A_1) \times \delta^{\rm SGDw}_{t, j-1}  + (A_2) \times \sqrt{ \delta^{\rm SGDw}_{t, j-1} } + (A_3)
    \;.
\end{align*}
In $(a)$, we have expanded the square, used $\cF_n$ defined in \cref{lem:SGDw:GD} and $\cA_n$ defined in \cref{lem:GD:sample:trunc:error}, and noted that $\theta^{\rm pop} - \psi^*$ is almost surely constant given $\theta^{\rm init}$; in $(b)$, we have applied Lemmas \ref{lem:SGDw:GD}, \ref{lem:GD:sample:trunc:error}, \ref{lem:GD:sample:pop:error} and \ref{lem:GD:pop:opt:error} under \ref{asst:SGD-A1}, \ref{asst:SGD-A4}, \ref{asst:SGD-A5},  \ref{asst:Markov:noise} and \ref{asst:Markov:noise:X1}, and used $\sqrt{a+b} \leq \sqrt{a} + \sqrt{b}$ for $a,b \geq 0$; in $(c)$, we have grouped the terms by powers of $\delta^{\rm SGDw}_{t, j-1}$, bounded the indicator function from above by $1$ and used $B \leq n$ to replace $n$ in the denominator by $B$. While the computation above is complicated, we remark that the key steps are taking the conditional expectations for the cross-terms in $(b)$, which gives us a tighter bound than directly applying a triangle inequality of the form $\mean \| Y_1 + Y_2\|^2 \leq ( \sqrt{\mean \| Y_1 \|^2} + \sqrt{\mean \| Y_2\|^2} )$. To further simplify the bounds, we seek to bound each coefficient by a square, which yields 
\begin{align*}
    (A_1) 
    \;=&\;
    \Big(
            \eta_t \alpha^{m} \sigma C_\chi 
            +
            \sqrt{1-2\mu\eta_t+L\eta_t^2} \,
    \Big)^2 
    \;,
    \\
    (A_2)
    \;\leq&\;
    2
    \Big(  \eta_t \alpha^{m} \sigma C_\chi 
    +
    \sqrt{1-2\mu\eta_t+L\eta_t^2}  \,\Big)
    \times 
    \,
    \eta_t \Big( \varepsilon^{\rm SGDw}_{n,m,t;\nu}(\epsilon) + \mfrac{4 \sigma + 2\kappa_{\nu;m}}{\sqrt{B}} \Big)
    \;,
    \\
    (A_3)
    \;\leq&\;
    \eta^2_t \Big( 
        \big( \varepsilon^{\rm SGDw}_{n,m,t;\nu}(\epsilon)  \big)^2
        + 
         \mfrac{8 \sigma^2+ 4\kappa_{\nu;m}^2}{\sqrt{B}} \varepsilon^{\rm SGDw}_{n,m,t;\nu}(\epsilon)     
        +
        \mfrac{21 \sigma^2 + 17 \kappa_{\nu;m}^2}{B}
    \Big)
    \\
    \;\leq&\;
    \eta_t^2 \Big(   \varepsilon^{\rm SGDw}_{n,m,t;\nu}(\epsilon)  + \mfrac{5 \sigma + 5\kappa_{\nu;m}}{\sqrt{B}} \Big)^2\;.
\end{align*}
This implies 
\begin{align*}
    \delta^{\rm SGDw}_{t, j}
    \;\leq&\;
    (A_1) \times \delta^{\rm SGDw}_{t, j-1}  + (A_2) \times \sqrt{ \delta^{\rm SGDw}_{t, j-1} } + (A_3)
    \\
    \;\leq&\;
    \Big( 
        \Big(
            \eta_t \alpha^{m} \sigma C_\chi 
            +
            \sqrt{1-2\mu\eta_t+L\eta_t^2}
        \Big) 
            \sqrt{ \delta^{\rm SGDw}_{t, j-1} }  
        +
        \eta_t \Big(   \varepsilon^{\rm SGDw}_{n,m,t;\nu}(\epsilon)  + \mfrac{5 \sigma + 5\kappa_{\nu;m}}{\sqrt{B}} \Big)
    \Big)^2\;.
\end{align*}
Now note that since $\mu \leq L$ by the definitions in \ref{asst:SGD-A1}, $2 \mu \eta_t - L^2 \eta^2_t \leq 2 L \eta_t - L^2 \eta_t^2 \leq 1$. By using $\sqrt{1-x} \leq 1 - \frac{x}{2}$ for all $x \leq 1$, we get that 
\begin{align*}
    \sqrt{ 1 - 2 \mu \eta_t + L^2 \eta^2_t }
    \;\leq\; 
    1 - \mu \eta_t + \mfrac{L^2}{2} \eta^2_t\;.
\end{align*}
Substituting this into the earlier bound and taking a square-root, we obtain the desired bound that 
\begin{align*}
    \sqrt{\delta^{\rm SGDw}_{t, j}}
    \;\leq\;
    \Big(
        1
        -
        \eta_t 
        \Big( 
            \mu 
            - 
            \alpha^m \sigma C_\chi 
            -
            \mfrac{L^2}{2} \eta_t
        \Big)
    \Big)
    \,
    \sqrt{ \delta^{\rm SGDw}_{t, j-1} }
    \,+\,
    \eta_t \Big(   \varepsilon^{\rm SGDw}_{n,m,t;\nu}(\epsilon)  + \mfrac{5 \sigma + 5\kappa_{\nu;m}}{\sqrt{B}} \Big)
    \;.
\end{align*}
Moreover, since $L \geq \mu$ by definition from \ref{asst:SGD-A1}, we have 
\begin{align*}
    1
    -
    \eta_t 
    \Big( 
        \mu 
        - 
        \alpha^m \sigma C_\chi 
        -
        \mfrac{L^2}{2} \eta_t
    \Big)
    \;=&\;
    \Big( \mfrac{L}{\sqrt{2}} \eta_t - 1 \Big)^2
    + (\sqrt{2}\, L - \mu) \eta_t
    +
    \alpha^m \sigma C_\chi \eta_t
    \;>\; 0\;,
\end{align*}
which finishes the proof.
\end{proof}

\subsection{Proof of Theorem \ref{thm:SGDw:full}} \label{appendix:proof:SGDw}

Under \ref{asst:SGD-A1}, \ref{asst:SGD-A4}, \ref{asst:SGD-A5}, \ref{asst:Markov:noise} and \ref{asst:Markov:noise:X1}, \cref{lem:SGDw:recursion} implies
\begin{align*}
    \sqrt{\delta^{\rm SGDw}_{t, j}}
    \;\leq&\;
    \Big(
        1
        -
        \tilde \mu_m C t^{-\beta}
        +
        \mfrac{L^2 C^2}{2} t^{-2\beta}
    \Big)
    \,
    \sqrt{ \delta^{\rm SGDw}_{t, j-1} }
    \,+\,
    C t^{-\beta}
    \,
    \sigma^{\rm SGDw}_{n,T} 
\end{align*}
for $1 \leq t \leq T$ and $1 \leq j \leq N$, where we have used $\tilde \mu_m = \mu - \alpha^m \sigma C_\chi$, $\eta_t = C t^{-\beta}$ and
\begin{align*}
    \varepsilon^{\rm SGDw}_{n,m,t;\nu}(\epsilon)  + \mfrac{5 \sigma + 5\kappa_{\nu;m}}{\sqrt{B}}
    \;\leq\;
    \varepsilon^{\rm SGDw}_{n,m,T;\nu}(\epsilon)  + \mfrac{5 \sigma + 5\kappa_{\nu;m}}{\sqrt{B}}
    \;=\; \sigma^{\rm SGDw}_{n,T} \;.
\end{align*}
By an induction on $j=1,\ldots, N$, we have
\begin{align*}
    \sqrt{\delta^{\rm SGDw}_{t, N}}
    \;\leq&\;
    \Big(
        1
        -
        \tilde \mu_m C t^{-\beta}
        +
        \mfrac{L^2 C^2}{2} t^{-2\beta}
    \Big)^N
    \sqrt{\delta^{\rm SGDw}_{t, 0}}
    \\
    &\;\;\,+\,
    C \sigma^{\rm SGDw}_{n,T} \, t^{-\beta} \, 
    \msum_{j=1}^N \Big(
        1
        -
        \tilde \mu_m C t^{-\beta}
        +
        \mfrac{L^2 C^2}{2} t^{-2\beta}
    \Big)^{j-1}
    \;.
\end{align*}
By noting that $\delta^{\rm SGDw}_{t, 0} = \delta^{\rm SGDw}_{t-1, N}$ almost surely for $t \geq 2$ and using another induction on $t=1,\ldots, T$, 
\begin{align*}
    \sqrt{\delta^{\rm SGDw}_{T, N}}
    \,\leq&\;
    Q_0 \,
    \sqrt{\delta^{\rm SGDw}_{0,0}}
    \,+\,
    C \sigma^{\rm SGDw}_{n,T} 
    \msum_{t=1}^T 
    Q_t \, A_t
    \,
    t^{-\beta}
    \;, \tagaligneq \label{eq:SGDw:before:final}
\end{align*}
where
\begin{align*}
    Q_t 
    \coloneqq
    \mprod_{s=t+1}^T 
    \Big(
        1
        -
        \tilde \mu_m C s^{-\beta}
        +
        \mfrac{L^2 C^2}{2} s^{-2\beta}
    \Big)^N
    \;\text{ and }\;
    A_t 
    \coloneqq
    \msum_{j=1}^N \Big(
        1
        -
        \tilde \mu_m C t^{-\beta}
        +
        \mfrac{L^2 C^2}{2} t^{-2\beta}
    \Big)^{j-1}
    \;.
\end{align*}
Note that by \cref{lem:SGDw:recursion}, we also have that $ 1
-
\tilde \mu_m C t^{-\beta}
+
\mfrac{L^2 C^2}{2} t^{-2\beta} > 0$.
This allows us to apply \cref{lem:error:accumulate} to obtain
\begin{align*}
    \kappa_0 
    \;\leq\; 
    \exp\Big( 
        1 
        - 
        N \tilde \mu_m C \varphi_{1-\beta}(T+1)
        +
        \mfrac{ N L^2 C^2}{2} \varphi_{1-2\beta}(T+1)
    \Big)
    \;=\;
    E_1^{T,N}
    \;.
\end{align*}
To control $\sum_{t=1}^T Q_t A_t t^{-\beta}$, recall that $\beta \in [0,1]$, $\tilde \mu_m > 0$, $\frac{L^2 C^2}{2} > 0$, and that
\begin{align*}
    E_2^{T,N} \;=&\; \exp\Big( 
        - \mfrac{N \tilde \mu_m C}{2} \varphi_{1-\beta}(T+1)
        + 2 N L^2 C^2 \varphi_{1-2\beta}(T+1)
     \Big)\;.
\end{align*}
We can now apply \cref{lem:error:accumulate}: If $\beta \not\in \{\frac{1}{2},1\}$, then
\begin{align*}
    \msum_{t=1}^T Q_t \, A_t  \, t^{-\beta}
    \;\leq&\;
    \mfrac{2^{2\beta+1}}{\tilde \mu_m C} e^{\frac{\tilde \mu_m C}{2(1-\beta)} \frac{N}{(T+1)^\beta}}
    +
    \mfrac{3^\beta (1+\tilde \mu_m C)^{N-1} (T+2)^{\beta} }{L^2C^2}  \,  E_2^{T,N}
     \;.
\end{align*}
If $\beta = \frac{1}{2}$, i.e.~$2\beta=1$, we have
\begin{align*}
    \msum_{t=1}^T Q_t \, A_t  \, t^{-\beta}
    \;\leq&\;
    \mfrac{4}{\tilde \mu_m C} e^{\frac{\tilde \mu_m C  N}{(T+1)^{1/2}}}
    +
    2 N ( 1 + \tilde \mu_m C)^{N-1} 
    \varphi_{\frac{1}{2}-L^2C^2 N}(T+1)
    \, E_2^{T,N}
     \;.
\end{align*}
If $\beta=1$, we get that 
\begin{align*}
    \msum_{t=1}^T Q_t \, A_t  \, t^{-\beta}
    \;\leq&\;
    \mfrac{4}{\tilde \mu_m C}
    +
    \mfrac{3 N \big(1+\frac{L^2C^2}{2}\big)^{N-1} \, e^{2 L^2 C^2 N}  \, \log(T+1)}{(T+1)^{(\tilde \mu_m C N)/2}}
\end{align*}
Substituting the bounds into \eqref{eq:SGDw:before:final}, we get the desired bound that $\sqrt{\delta^{\rm SGDw}_{T, N}}$ is upper bounded by
\begin{align*}
    \begin{cases}
        E_1^{T,N}
        \sqrt{\delta^{\rm SGDw}_{0, 0}}
        \,+\,
        C \sigma^{\rm SGDw}_{n,T}
        \Big( 
            \mfrac{4 e^{\frac{\tilde \mu_m C  N}{(T+1)^{1/2}}}}{\tilde \mu_m C}
            +
            2 N ( 1 + \tilde \mu_m C)^{N-1} 
            \varphi_{\frac{1}{2}-L^2C^2 N}(T+1)
            \, E_2^{T,N}
        \Big)
        \\
        \hspace{34.4em}
        \text{ for } \beta = \mfrac{1}{2} 
        \;,
        \\[.8em]
        E_1^{T,N}
        \sqrt{\delta^{\rm SGDw}_{0, 0}}
        +
        C \sigma^{\rm SGDw}_{n,T}
        \Big(
        \mfrac{4}{\tilde \mu_m C}
        +
        \mfrac{3 N \big(1+\frac{L^2C^2}{2}\big)^{N-1} \, e^{2 L^2 C^2 N}  \, \log(T+1)}{(T+1)^{(\tilde \mu_m C N)/2}}
        \Big)
        \hspace{4em}
        \text{ for } \beta = 1
        \;,
        \\[.8em]
        E_1^{T,N}
        \sqrt{\delta^{\rm SGDw}_{0, 0}}
        \,+\,
        C \sigma^{\rm SGDw}_{n,T} 
        \Big( 
            \mfrac{2^{2\beta+1}}{\tilde \mu_m C} 
            e^{\frac{\tilde \mu_m C}{2(1-\beta)} \frac{N}{(T+1)^\beta}}
            +
            \mfrac{3^\beta (1+\tilde \mu_m C)^{N-1} (T+2)^{\beta} }{L^2C^2}  \,  E_2^{T,N}
        \Big)
        \hspace{.2em}
        \text{ otherwise}
        \;.
    \end{cases}
\end{align*}
\qed 

\subsection{Proof of Theorem \ref{thm:SGDo}} \label{appendix:proof:SGDo}

Recall that $\delta^{\rm SGDo}_{t,j} \coloneqq \mean \, \big\| \psi^{\rm SGDo}_{t,j}  - \psi^* \big\|^2$.
To apply the results from \cref{appendix:L2:approx} to $\psi^{\rm SGDo}_{t,j}$, we condition on $S^o_t$, which in particular fixes $S^o_{t,j}$, the last size-$B$ subset of $[n]$ chosen. We then identify $\theta^{\rm init} = \psi^{\rm SGDo}_{t,j-1}$, $\eta = \eta_t$ and the dataset used as $\cD^o_{t,j} \coloneqq (X_i \,:\, i \in S^o_{t,j})$, which allows us to identify $\psi^{\rm SGDo}_{t,j} = \theta^{\rm GD}_m$, the full-batch gradient descent update using $\cD^o_{t,j}$. Meanwhile, we note that almost surely $\theta^{\rm GD}_m = \theta^{\rm SGDw}_{m,B}$, the SGD-with-replacement iterate that uses the full dataset $\cD^o_{t,j}$. Observe that the proof of \cref{lem:SGDw:recursion} holds with $\delta^{\rm SGDo}_{t, j-1}$ replaced by any random initialization $\theta^{\rm init}$ possibly correlated with $X_1, \ldots, X_n$, which allows us to obtain
\begin{align*}
    \sqrt{ \delta^{\rm SGDo}_{t,j} }
    \;\leq\; 
    \Big(
        1
        -
        \eta_t 
        \Big( 
            \mu 
            - 
            \alpha^m \sigma C_\chi 
            -
            \mfrac{L^2}{2} \eta_t
        \Big)
    \Big)
    \,
    \sqrt{ \delta^{\rm SGDo}_{t, j-1} }
    \,+\,
    \eta_t \Big(   \varepsilon^{\rm SGDw}_{n,m,t;\nu}(\epsilon)  + \mfrac{5 \sigma + 5\kappa_{\nu;m}}{\sqrt{B}} \Big)\;.
\end{align*}
Since the error recursion for $\delta^{\rm SGDo}_{t,j}$ is identical to that of $\delta^{\rm SGDw}_{t,j}$ in \cref{lem:SGDw:recursion}, the proof of \cref{thm:SGDw} follows directly, thereby yielding an identical result for $\delta^{\rm SGDo}_{T,N}$ as with $\delta^{\rm SGDw}_{T,N}$ in \cref{thm:SGDw}.
\qed

\section{Proofs for tail probability bounds in offline SGD} \label{appendix:proofs:SGDw:tail}

We present the proofs for results in \cref{appendix:tail:results} that control the tail probability terms $\vartheta^{\rm SGDw}_{\nu;n,m,T}$ and $\varepsilon^{\rm SGDw}_{\nu;n,m,T}$.

\begin{proof}[Proof of \cref{lem:tail:prob:subexp:full}] For $\delta > 0$, let $N_\delta$ be the $\delta$-covering number of $\Psi$, which satisfies $N_\delta \leq \big( r_\Psi (1+2/\delta) \big)^p$ (Example 5.8, \cite{wainwright2019high}). Note also that by the Jensen's inequality applied to $\mean[\argdot| X_1]$ and Assumption \ref{asst:subexp}, there exist some $\sigma_m, \zeta_m > 0$ such that, for any $z \in \R^p$ with $\|z \| \leq \zeta_m$,
\begin{align*}
    \mean[ e^{z^\top(\mean[\phi(K^m_{\psi^*}(X_1)) | X_1] - \mean[\phi(K^m_{\psi^*}(X_1))] ) } ]
    \;\leq&\;
    \mean[ e^{z^\top(\phi(K^m_{\psi^*}(X_1)) - \mean[\phi(K^m_{\psi^*}(X_1))] ) } ]
    \;\leq\; 
    e^{\sigma_m^2 \| z \|^2 /2} \;.
\end{align*}
Meanwhile under \ref{asst:Jiang}, recycling the proof of Lemma 3.1 of \cite{jiang2018convergence} shows that if $C_m \delta / \sqrt{p} < \sigma^2_m \zeta_m$, 
    \begin{align*}
        \P\Big( \sup_{\psi \in \Psi} \Big\| \mfrac{1}{n} \msum_{i=1}^n \mean\big[ \phi\big(K^m_{\psi}(X_i) \big) \big| X_i \big]  
        - 
        \mean\big[  \phi(K^m_{\psi}( & X_i)) 
        \big] \Big\| 
        >  3 C_m \delta
        \Big) 
        \;\leq\;
        2 N_\delta p \exp \Big( - \mfrac{n C^2_m \delta^2}{2 p \sigma_m^2} \Big)
        \\
        \;\leq&\;
        2 (r_\Psi)^p 
        \exp \Big( p \log(1 + 2 \delta^{-1} ) - \mfrac{n C^2_m \delta^2}{2 p \sigma_m^2} \Big)
        \;.
    \end{align*}
    Since the probability above is an upper bound to $\vartheta^{\rm SGDw}_{n,m,T} (3 C_m \delta)$, we get that if $C_m \delta / \sqrt{p} < \sigma^2_m \zeta_m$, 
    \begin{align*}
        \Big(\varepsilon^{\rm SGDw}_{\nu;n,m,T}(3 & C_m \delta) \Big)^2
        \;=\;
        9 C_m^2 \delta^2
        + 
        \kappa_{\nu;m}^2
        \Big(\vartheta^{\rm SGDw}_{n,m,T}\Big( \mfrac{C_m \delta}{\sqrt{p}} \Big) \Big)^{\frac{\nu-2}{\nu}}
        \\
        \;\leq&\;
        9 C_m^2 \delta^2
        + 
        \kappa_{\nu;m}^2
        2^{\frac{2(\nu-2)}{\nu}}
        (r_\Psi)^{\frac{(\nu-2) p}{\nu}}
        \exp \Big( \mfrac{(\nu-2)p}{\nu} \log(1 + 2 \delta^{-1} ) - \mfrac{n (\nu-2) C^2_m \delta^2}{2 \nu p \sigma_m^2} \Big)
        \;. 
    \end{align*}
    Recall that by assumption, $\frac{\log n}{n} < \frac{\sigma^2_m \zeta^2_m}{p + \nu-2}$. We now choose 
    \begin{align*}
        \delta 
        \;=\; 
        \mfrac{\sigma_m}{C_m} \sqrt{ p  \Big(p + \mfrac{2 \nu}{\nu-2} \Big) \times \mfrac{\log n}{n}}
        \;=\;
        \mfrac{\sigma_m}{C_m} \sqrt{ p \times \mfrac{(\nu - 2) p + 2 \nu}{\nu-2} \times \mfrac{\log n}{n}}
        \;,
    \end{align*}
    which implies 
    \begin{align*}
        &\;
        \inf_{\epsilon > 0}\Big(\varepsilon^{\rm SGDw}_{\nu;n,m,T}(\epsilon)\Big)^2
        \;\leq\;
        \Big(\varepsilon^{\rm SGDw}_{\nu;n,m,T}(3 C_m \delta) \Big)^2
        \\
        &\;\leq\;
        \mfrac{9\sigma_m^2 p ( (\nu-2)p + 2 \nu) \log n }{(\nu - 2) n}
        \,+\,
        \kappa_{\nu;m}^2
        2^{\frac{\nu-2}{\nu}}
        (r_\Psi)^{\frac{(\nu-2) p}{\nu}}
        (1 + 2 \delta^{-1})^{\frac{(\nu-2)p}{\nu}}
        \exp\Big(
            - \mfrac{(\nu -2) p + 2 \nu}{2 \nu } \log n
        \Big)
        \\
        &\;\leq\;
        \mfrac{9\sigma_m^2 p ( (\nu-2)p +2 \nu) \log n }{(\nu - 2) n}
        \\
        &\qquad \,+\,
        \kappa_{\nu;m}^2
        2^{\frac{\nu-2}{\nu}}
        (r_\Psi)^{\frac{(\nu-2) p}{\nu}}
        \Big( 1 + \mfrac{2 C_m (\nu-2)^{1/2}}{\sigma_m p^{1/2} ((\nu-2)p + 2\nu)^{1/2}} \mfrac{\sqrt{n}}{\sqrt{\log n}} \Big)^{\frac{(\nu -2)p}{\nu}}
        n^{- \frac{(\nu -2) p + 2 \nu}{2 \nu }}
        \\
        &\;\leq\;
        \mfrac{9\sigma_m^2 p ( (\nu-2)p +2 \nu) \log n }{(\nu - 2) n}
        \\
        &\qquad \,+\,
        \kappa_{\nu;m}^2
        2^{\frac{\nu-2}{\nu}}
        (r_\Psi)^{\frac{(\nu-2) p}{\nu}}
        \Big( 1 + \mfrac{2 C_m (\nu-2)^{1/2}}{\sigma_m p^{1/2} ((\nu-2)p + 2\nu)^{1/2}} \Big)^{\frac{(\nu -2)p}{\nu}}
        n^{\frac{(\nu-2)p}{2\nu} - \frac{(\nu -2) p + 2\nu}{2 \nu }}
        \\
        &\;\leq\;
        \bigg( 
        \mfrac{9\sigma_m^2 p ( (\nu-2)p +2 \nu) }{\nu - 2}
        +  
        \kappa_{\nu;m}^2
        2^{\frac{(\nu-2)}{\nu}}
        (r_\Psi)^{\frac{(\nu -2)p}{\nu}} 
        \Big( 1 + \mfrac{2 C_m (\nu-2)^{1/2}}{\sigma_m p^{1/2} ((\nu-2)p + 2\nu)^{1/2}} \Big)^{\frac{(\nu -2)p}{\nu}}
        \bigg) 
        \, \mfrac{\log n}{n}
        \;.
    \end{align*}
Taking a squareroot across and using $\sqrt{a+b} \leq \sqrt{a} + \sqrt{b}$ for $a,b >0$ gives the desired bound. The limiting result follows by substituting this bound into \cref{thm:SGDw:full}.
\end{proof}

\vspace{.5em}

\begin{proof}[Proof of \cref{lem:tail:prob:subopt:full}] Let $\delta > 0$ and $N_\delta$ be defined as in the proof of \cref{lem:tail:prob:subexp:full}, with $N_\delta \leq \big( r_\Psi (1+2/\delta) \big)^p$. Let $(\psi_l)_{l=1}^{N_\delta}$ be the centers of the covering $\delta$-balls. The covering-ball argument of the proof of Lemma 3.1 of \cite{jiang2018convergence} shows that 
\begin{align*}
    \vartheta^{\rm SGDw}_{n,m,T}(3 C_m \delta)
    \;\leq&\;\P\Big( \, \sup_{\psi \in \Psi} \Big\| \mfrac{1}{n} \msum_{i=1}^n \mean\big[ \phi\big(K^m_{\psi}(X_i) \big) \big| X_i \big]  
    - 
    \mean\big[  \phi(K^m_{\psi}( X_i)) 
    \big] \Big\| 
    >  3 C_m \delta
    \Big) 
    \\
    \;\leq&\;
    \msum_{l=1}^{N_\delta}
    \P \Big( \Big\| \mfrac{1}{n} \msum_{i=1}^n \mean\big[ \phi\big(K^m_{\psi_l}(X_i) \big) \big| X_i \big]  
    - 
    \mean\big[  \phi(K^m_{\psi_l}( X_i)) 
    \big] \Big\| 
    \geq C_m \delta
    \Big) 
    \;.
\end{align*}
By a Markov's inequality followed by the Burkholder's inequality applied to an average of i.i.d.~summands (see e.g.~\cite{dharmadhikari1968bounds} for $\nu > 2$), there exists a constant $C_\nu > 0$ depending only on $\nu$ such that 
\begin{align*}
    \vartheta^{\rm SGDw}_{n,m,T}(3 C_m \delta)
    \;\leq&\; 
    \msum_{l=1}^{N_\delta}
    \mfrac{\mean \Big\| \msum_{i=1}^n \mean\big[ \phi\big(K^m_{\psi_l}(X_i) \big) \big| X_i \big]  
    - 
    \mean\big[  \phi(K^m_{\psi_l}( X_i)) 
    \big] \Big\|^\nu }
    {n^\nu C_m^\nu \delta^\nu}
    \\
    \;\leq&\;
    \msum_{l=1}^{N_\delta}
    \mfrac{\mean \big\| \mean\big[ \phi\big(K^m_{\psi_l}(X_1) \big) \big| X_i \big]  
    - 
    \mean\big[  \phi(K^m_{\psi_l}( X_1)) 
    \big] \big\|^\nu }
    {n^{\nu/2} C_m^\nu \delta^\nu}
    \\
    \;\leq&\;
    \mfrac{N_\delta  \kappa_{\nu;m}^\nu}{n^{\nu/2} C_m^\nu \delta^\nu}
    \;\leq\;
    \mfrac{( r_\Psi)^p (1+2\delta^{-1})^p \kappa_{\nu;m}^\nu}{n^{\nu/2} C_m^\nu \delta^\nu}
    \;,
\end{align*}
where we have used \cref{asst:Markov:noise} in the last line. This implies  
\begin{align*}
    \Big(\varepsilon^{\rm SGDw}_{\nu;n,m,T}(3C_m \delta) \Big)^2
    \;=&\;
    9 C_m^2 \delta^2
    + 
    \kappa_{\nu;m}^2
    \Big(\vartheta^{\rm SGDw}_{n,m,T}(3C_m \delta) \Big)^{(\nu-2)/\nu}
    \\
    \;\leq&\;
    9 C_m^2 \delta^2
    + 
    \kappa_{\nu;m}^\nu
    (r_\Psi)^{\frac{(\nu-2) p}{\nu}}
    C_m^{-(\nu-2)}
    \times 
    \mfrac{ (1+2\delta^{-1})^{\frac{(\nu-2) p}{\nu}}}{n^{\frac{\nu-2}{2}} \delta^{\nu-2}}
    \;. 
\end{align*}
Choosing $\delta = n^{ - \frac{(\nu-2)\nu}{2(\nu^2+(\nu-2)p)} } \leq 1$, we get that 
\begin{align*}
    \inf_{\epsilon > 0}\Big(\varepsilon^{\rm SGDw}_{\nu;n,m,T} & \, (\epsilon)\Big)^2
    \;\leq\;
    \Big(\varepsilon^{\rm SGDw}_{\nu;n,m,T}(3 C_m \delta) \Big)^2
    \\
    \;\leq&\;
    9 C_m^2  n^{  - \frac{(\nu-2)\nu}{\nu^2+(\nu-2)p} }
    + 
    \kappa_{\nu;m}^\nu
    (r_\Psi)^{\frac{(\nu -2) p}{\nu}}
    C_m^{- (\nu-2)}
    3^{\frac{(\nu -2) p}{\nu}}
    \,  n^{  - \frac{(\nu-2)\nu}{\nu^2+(\nu-2)p} }
    \\
    \;=&\;
    \big(
    9 C_m^2 
    + 
   \kappa_{\nu;m}^\nu
    (r_\Psi)^{\frac{(\nu -2) p}{\nu}}
    C_m^{- (\nu-2)}
    3^{\frac{(\nu-2) p}{\nu}}
    \big) 
    \,
      n^{  - \frac{(\nu-2)\nu}{\nu^2+(\nu-2)p} }
    \;.
\end{align*}
Taking a squareroot across and using $\sqrt{a+b} \leq \sqrt{a} + \sqrt{b}$ for $a,b >0$ gives the desired bound. The limiting result follows by substituting this bound into \cref{thm:SGDw:full}.
\end{proof}

\vspace{.5em}

\begin{proof}[Proof of \cref{lem:tail:prob:ergodicity}] By a Markov's inequality and a Jensen's inequality with respect to the empirical average, we have 
\begin{align*}
    \vartheta^{\rm SGDw}_{n,m,T} (\epsilon) 
    \;\leq&\;
    \sup_{t \in [T], j \in [N]} 
    \mfrac{  \sum_{i=1}^n \mean \Big\| \mean\Big[ \phi\Big(K^m_{\psi^{\rm SGDw}_{t-1, j}}(X_i) \Big) \Big| X_i, \psi^{\rm SGDw}_{t-1,j} \Big]  
    - 
    \mean\Big[ \phi\Big(K^m_{\psi^{\rm SGDw}_{t-1,j}}(X'_1)\Big)  \Big| \psi^{\rm SGDw}_{t-1,j} 
    \Big] \Big\| }{n \epsilon}
    \\
    \;\eqqcolon&\;
    \sup_{t \in [T], j \in [N]}  \mfrac{\sum_{i=1}^n A_{tji}}{n \epsilon}
    \;.
\end{align*}
Meanwhile by a triangle inequality and the ergodicity assumption, we have 
\begin{align*}
    A_{tji}
    \;\leq&\; 
    \mean \Big\| \mean\Big[ \phi\Big(K^m_{\psi^{\rm SGDw}_{t-1, j}}(X_i) \Big) \Big| X_i, \psi^{\rm SGDw}_{t-1,j} \Big]  
    - 
    \mean\Big[ \phi\Big(X^{\psi^{\rm SGDw}_{t-1,j}}_1\Big)  \Big| \psi^{\rm SGDw}_{t-1,j} 
    \Big] \Big\| 
    \\
    &\;
    +
    \mean \Big\| \mean\Big[ \phi\Big(X^{\psi^{\rm SGDw}_{t-1,j}}_1\Big)  \Big| \psi^{\rm SGDw}_{t-1,j} 
    \Big] 
    - 
    \mean\Big[ \phi\Big(K^m_{\psi^{\rm SGDw}_{t-1,j}}(X'_1)\Big)  \Big| \psi^{\rm SGDw}_{t-1,j} 
    \Big] \Big\| 
    \\
    \;\leq&\; 2 \tilde C_K \tilde \alpha^m \;.
\end{align*}
This implies $\vartheta^{\rm SGDw}_{n,m,T} (\epsilon) \leq 2 \tilde C_K \tilde \alpha^m \epsilon^{-1}$, and therefore 
\begin{align*}
    \big(\varepsilon^{\rm SGDw}_{\nu;n,m,T}(\epsilon)\big)^2
    \;\leq\;
    \epsilon^2 
    + 
    2^{\frac{\nu-2}{\nu}}
    \kappa_{\nu;m}^2
    (\tilde C_K)^{\frac{\nu-2 }{\nu}}
    \tilde \alpha^{\frac{(\nu-2) m}{\nu}} 
    \epsilon^{-\frac{\nu -2 }{\nu}}
    \;.
\end{align*}
Choosing $\epsilon = \tilde \alpha^{(\nu-2) m / (3\nu-2) }$ gives 
\begin{align*}
    \minf_{\epsilon > 0} \big(\varepsilon^{\rm SGDw}_{\nu;n,m,T}(\epsilon)\big)^2 
    \;\leq&\; 
    \big( 1
    +
    2^{\frac{\nu-2}{\nu}}
    \kappa_{\nu;m}^2
    (\tilde C_K)^{\frac{\nu - 2 }{\nu}}
    \big) \tilde \alpha^{\frac{2(\nu - 2) m}{3\nu - 2}}\;.
\end{align*}
Taking a squareroot across and using $\sqrt{a+b} \leq \sqrt{a} + \sqrt{b}$ for $a,b >0$ gives the desired bound. The limiting result follows by substituting this bound into \cref{thm:SGDw:full}.
\end{proof}

\newpage
\section*{NeurIPS Paper Checklist}

\begin{enumerate}

\item {\bf Claims}
    \item[] Question: Do the main claims made in the abstract and introduction accurately reflect the paper's contributions and scope?
    \item[] Answer: \answerYes{}
    \item[] Justification: Our abstract summarizes the theoretical results of our paper. We cleary mention that while CD \emph{can} achieve these rates,
            assumptions are needed to guarantee this. We also mention the model class studied in this paper, namely (unnormalized) exponential family distributions.
    \item[] Guidelines:
    \begin{itemize}
        \item The answer NA means that the abstract and introduction do not include the claims made in the paper.
        \item The abstract and/or introduction should clearly state the claims made, including the contributions made in the paper and important assumptions and limitations. A No or NA answer to this question will not be perceived well by the reviewers. 
        \item The claims made should match theoretical and experimental results, and reflect how much the results can be expected to generalize to other settings. 
        \item It is fine to include aspirational goals as motivation as long as it is clear that these goals are not attained by the paper. 
    \end{itemize}

\item {\bf Limitations}
    \item[] Question: Does the paper discuss the limitations of the work performed by the authors?
    \item[] Answer: \answerYes{}
    \item[] Justification: We mention limitations multiple times in the main body of our work: after introducing the assumptions in Section \ref{sec:preliminaries},
            we mention that spectral gaps may not be verified for heavy tail distributions, or act numerically unfavorably for multimodal distributions,
            and that constants may be large in practice.
            In the discussion (Section \ref{sec:discussion}), we reflect back on these limitations, and we mention the limitations of our considered model
            class (unnormalized exponential families).
    \item[] Guidelines:
    \begin{itemize}
        \item The answer NA means that the paper has no limitation while the answer No means that the paper has limitations, but those are not discussed in the paper. 
        \item The authors are encouraged to create a separate "Limitations" section in their paper.
        \item The paper should point out any strong assumptions and how robust the results are to violations of these assumptions (e.g., independence assumptions, noiseless settings, model well-specification, asymptotic approximations only holding locally). The authors should reflect on how these assumptions might be violated in practice and what the implications would be.
        \item The authors should reflect on the scope of the claims made, e.g., if the approach was only tested on a few datasets or with a few runs. In general, empirical results often depend on implicit assumptions, which should be articulated.
        \item The authors should reflect on the factors that influence the performance of the approach. For example, a facial recognition algorithm may perform poorly when image resolution is low or images are taken in low lighting. Or a speech-to-text system might not be used reliably to provide closed captions for online lectures because it fails to handle technical jargon.
        \item The authors should discuss the computational efficiency of the proposed algorithms and how they scale with dataset size.
        \item If applicable, the authors should discuss possible limitations of their approach to address problems of privacy and fairness.
        \item While the authors might fear that complete honesty about limitations might be used by reviewers as grounds for rejection, a worse outcome might be that reviewers discover limitations that aren't acknowledged in the paper. The authors should use their best judgment and recognize that individual actions in favor of transparency play an important role in developing norms that preserve the integrity of the community. Reviewers will be specifically instructed to not penalize honesty concerning limitations.
    \end{itemize}

\item {\bf Theory Assumptions and Proofs}
    \item[] Question: For each theoretical result, does the paper provide the full set of assumptions and a complete (and correct) proof?
    \item[] Answer: \answerYes{}
    \item[] Justification: All of our theoretical results are formally proved in the supplementary material.
        In the main paper, we provide some proof sketches (discussion of the tail term in Section \ref{sec:offline:SGD},
        paragraphs after theorems in section \ref{sec:online_cd}) to provide high level intuitions behind the proofs.
        All of the theoretical results provided by our work are proved in the supplementary material,
        usually by invoking (and referencing) more atomic lemmas, also proved in the supplementary material.
        All of our theorems and lemmas are stated with the assumptions they require. 
    \item[] Guidelines:
    \begin{itemize}
        \item The answer NA means that the paper does not include theoretical results. 
        \item All the theorems, formulas, and proofs in the paper should be numbered and cross-referenced.
        \item All assumptions should be clearly stated or referenced in the statement of any theorems.
        \item The proofs can either appear in the main paper or the supplemental material, but if they appear in the supplemental material, the authors are encouraged to provide a short proof sketch to provide intuition. 
        \item Inversely, any informal proof provided in the core of the paper should be complemented by formal proofs provided in appendix or supplemental material.
        \item Theorems and Lemmas that the proof relies upon should be properly referenced. 
    \end{itemize}

    \item {\bf Experimental Result Reproducibility}
    \item[] Question: Does the paper fully disclose all the information needed to reproduce the main experimental results of the paper to the extent that it affects the main claims and/or conclusions of the paper (regardless of whether the code and data are provided or not)?
    \item[] Answer: \answerNA{}
    \item[] Justification: The paper does not include experiments.
    \item[] Guidelines:
    \begin{itemize}
        \item The answer NA means that the paper does not include experiments.
        \item If the paper includes experiments, a No answer to this question will not be perceived well by the reviewers: Making the paper reproducible is important, regardless of whether the code and data are provided or not.
        \item If the contribution is a dataset and/or model, the authors should describe the steps taken to make their results reproducible or verifiable. 
        \item Depending on the contribution, reproducibility can be accomplished in various ways. For example, if the contribution is a novel architecture, describing the architecture fully might suffice, or if the contribution is a specific model and empirical evaluation, it may be necessary to either make it possible for others to replicate the model with the same dataset, or provide access to the model. In general. releasing code and data is often one good way to accomplish this, but reproducibility can also be provided via detailed instructions for how to replicate the results, access to a hosted model (e.g., in the case of a large language model), releasing of a model checkpoint, or other means that are appropriate to the research performed.
        \item While NeurIPS does not require releasing code, the conference does require all submissions to provide some reasonable avenue for reproducibility, which may depend on the nature of the contribution. For example
        \begin{enumerate}
            \item If the contribution is primarily a new algorithm, the paper should make it clear how to reproduce that algorithm.
            \item If the contribution is primarily a new model architecture, the paper should describe the architecture clearly and fully.
            \item If the contribution is a new model (e.g., a large language model), then there should either be a way to access this model for reproducing the results or a way to reproduce the model (e.g., with an open-source dataset or instructions for how to construct the dataset).
            \item We recognize that reproducibility may be tricky in some cases, in which case authors are welcome to describe the particular way they provide for reproducibility. In the case of closed-source models, it may be that access to the model is limited in some way (e.g., to registered users), but it should be possible for other researchers to have some path to reproducing or verifying the results.
        \end{enumerate}
    \end{itemize}

\item {\bf Open access to data and code}
    \item[] Question: Does the paper provide open access to the data and code, with sufficient instructions to faithfully reproduce the main experimental results, as described in supplemental material?
    \item[] Answer: \answerNA{}
    \item[] Justification: The paper does not include experiments, and thus no code or data.
    \item[] Guidelines:
    \begin{itemize}
        \item The answer NA means that paper does not include experiments requiring code.
        \item Please see the NeurIPS code and data submission guidelines (\url{https://nips.cc/public/guides/CodeSubmissionPolicy}) for more details.
        \item While we encourage the release of code and data, we understand that this might not be possible, so “No” is an acceptable answer. Papers cannot be rejected simply for not including code, unless this is central to the contribution (e.g., for a new open-source benchmark).
        \item The instructions should contain the exact command and environment needed to run to reproduce the results. See the NeurIPS code and data submission guidelines (\url{https://nips.cc/public/guides/CodeSubmissionPolicy}) for more details.
        \item The authors should provide instructions on data access and preparation, including how to access the raw data, preprocessed data, intermediate data, and generated data, etc.
        \item The authors should provide scripts to reproduce all experimental results for the new proposed method and baselines. If only a subset of experiments are reproducible, they should state which ones are omitted from the script and why.
        \item At submission time, to preserve anonymity, the authors should release anonymized versions (if applicable).
        \item Providing as much information as possible in supplemental material (appended to the paper) is recommended, but including URLs to data and code is permitted.
    \end{itemize}

\item {\bf Experimental Setting/Details}
    \item[] Question: Does the paper specify all the training and test details (e.g., data splits, hyperparameters, how they were chosen, type of optimizer, etc.) necessary to understand the results?
    \item[] Answer: \answerNA{}
    \item[] Justification: The paper does not include experiments.
    \item[] Guidelines:
    \begin{itemize}
        \item The answer NA means that the paper does not include experiments.
        \item The experimental setting should be presented in the core of the paper to a level of detail that is necessary to appreciate the results and make sense of them.
        \item The full details can be provided either with the code, in appendix, or as supplemental material.
    \end{itemize}

\item {\bf Experiment Statistical Significance}
    \item[] Question: Does the paper report error bars suitably and correctly defined or other appropriate information about the statistical significance of the experiments?
    \item[] Answer: \answerNA{}
    \item[] Justification: The paper does not include experiments.
    \item[] Guidelines:
    \begin{itemize}
        \item The answer NA means that the paper does not include experiments.
        \item The authors should answer "Yes" if the results are accompanied by error bars, confidence intervals, or statistical significance tests, at least for the experiments that support the main claims of the paper.
        \item The factors of variability that the error bars are capturing should be clearly stated (for example, train/test split, initialization, random drawing of some parameter, or overall run with given experimental conditions).
        \item The method for calculating the error bars should be explained (closed form formula, call to a library function, bootstrap, etc.)
        \item The assumptions made should be given (e.g., Normally distributed errors).
        \item It should be clear whether the error bar is the standard deviation or the standard error of the mean.
        \item It is OK to report 1-sigma error bars, but one should state it. The authors should preferably report a 2-sigma error bar than state that they have a 96\% CI, if the hypothesis of Normality of errors is not verified.
        \item For asymmetric distributions, the authors should be careful not to show in tables or figures symmetric error bars that would yield results that are out of range (e.g. negative error rates).
        \item If error bars are reported in tables or plots, The authors should explain in the text how they were calculated and reference the corresponding figures or tables in the text.
    \end{itemize}

\item {\bf Experiments Compute Resources}
    \item[] Question: For each experiment, does the paper provide sufficient information on the computer resources (type of compute workers, memory, time of execution) needed to reproduce the experiments?
    \item[] Answer: \answerNA{}
    \item[] Justification: The paper does not include experiments.
    \item[] Guidelines:
    \begin{itemize}
        \item The answer NA means that the paper does not include experiments.
        \item The paper should indicate the type of compute workers CPU or GPU, internal cluster, or cloud provider, including relevant memory and storage.
        \item The paper should provide the amount of compute required for each of the individual experimental runs as well as estimate the total compute. 
        \item The paper should disclose whether the full research project required more compute than the experiments reported in the paper (e.g., preliminary or failed experiments that didn't make it into the paper). 
    \end{itemize}
    
\item {\bf Code Of Ethics}
    \item[] Question: Does the research conducted in the paper conform, in every respect, with the NeurIPS Code of Ethics \url{https://neurips.cc/public/EthicsGuidelines}?
    \item[] Answer: \answerYes{}
    \item[] Justification: As a theory paper, we believe that our work does not breach the code of conduct.
        With this work, we aim to advance the understanding of statistically efficient algorithms, 
        a domain which has the potential to improve the overall \emph{computational} efficiency of machine learning algorithms.
    \item[] Guidelines:
    \begin{itemize}
        \item The answer NA means that the authors have not reviewed the NeurIPS Code of Ethics.
        \item If the authors answer No, they should explain the special circumstances that require a deviation from the Code of Ethics.
        \item The authors should make sure to preserve anonymity (e.g., if there is a special consideration due to laws or regulations in their jurisdiction).
    \end{itemize}

\item {\bf Broader Impacts}
    \item[] Question: Does the paper discuss both potential positive societal impacts and negative societal impacts of the work performed?
    \item[] Answer: \answerNA{} 
    \item[] Justification: We believe that our work will not have the negative impacts mentioned 
        (e.g., disinformation, surveillance, fairness, privacy, security considerations). We believe that advancing the domain of statistical
        effiency has the potential to improve the overall computational efficiency of machine learning algorithms, which, all else left equal,
        could constitute a positive societal impact.
    \item[] Guidelines:
    \begin{itemize}
        \item The answer NA means that there is no societal impact of the work performed.
        \item If the authors answer NA or No, they should explain why their work has no societal impact or why the paper does not address societal impact.
        \item Examples of negative societal impacts include potential malicious or unintended uses (e.g., disinformation, generating fake profiles, surveillance), fairness considerations (e.g., deployment of technologies that could make decisions that unfairly impact specific groups), privacy considerations, and security considerations.
        \item The conference expects that many papers will be foundational research and not tied to particular applications, let alone deployments. However, if there is a direct path to any negative applications, the authors should point it out. For example, it is legitimate to point out that an improvement in the quality of generative models could be used to generate deepfakes for disinformation. On the other hand, it is not needed to point out that a generic algorithm for optimizing neural networks could enable people to train models that generate Deepfakes faster.
        \item The authors should consider possible harms that could arise when the technology is being used as intended and functioning correctly, harms that could arise when the technology is being used as intended but gives incorrect results, and harms following from (intentional or unintentional) misuse of the technology.
        \item If there are negative societal impacts, the authors could also discuss possible mitigation strategies (e.g., gated release of models, providing defenses in addition to attacks, mechanisms for monitoring misuse, mechanisms to monitor how a system learns from feedback over time, improving the efficiency and accessibility of ML).
    \end{itemize}
    
\item {\bf Safeguards}
    \item[] Question: Does the paper describe safeguards that have been put in place for responsible release of data or models that have a high risk for misuse (e.g., pretrained language models, image generators, or scraped datasets)?
    \item[] Answer: \answerNA{} 
    \item[] Justification: Since our work does not include experiments nor data, and only analyzes existing models, we do not believe that our work requires any specific guidelines.
    \item[] Guidelines:
    \begin{itemize}
        \item The answer NA means that the paper poses no such risks.
        \item Released models that have a high risk for misuse or dual-use should be released with necessary safeguards to allow for controlled use of the model, for example by requiring that users adhere to usage guidelines or restrictions to access the model or implementing safety filters. 
        \item Datasets that have been scraped from the Internet could pose safety risks. The authors should describe how they avoided releasing unsafe images.
        \item We recognize that providing effective safeguards is challenging, and many papers do not require this, but we encourage authors to take this into account and make a best faith effort.
    \end{itemize}

\item {\bf Licenses for existing assets}
    \item[] Question: Are the creators or original owners of assets (e.g., code, data, models), used in the paper, properly credited and are the license and terms of use explicitly mentioned and properly respected?
    \item[] Answer: \answerNA{}
    \item[] Justification: The paper does not use existing assets.
    \item[] Guidelines:
    \begin{itemize}
        \item The answer NA means that the paper does not use existing assets.
        \item The authors should cite the original paper that produced the code package or dataset.
        \item The authors should state which version of the asset is used and, if possible, include a URL.
        \item The name of the license (e.g., CC-BY 4.0) should be included for each asset.
        \item For scraped data from a particular source (e.g., website), the copyright and terms of service of that source should be provided.
        \item If assets are released, the license, copyright information, and terms of use in the package should be provided. For popular datasets, \url{paperswithcode.com/datasets} has curated licenses for some datasets. Their licensing guide can help determine the license of a dataset.
        \item For existing datasets that are re-packaged, both the original license and the license of the derived asset (if it has changed) should be provided.
        \item If this information is not available online, the authors are encouraged to reach out to the asset's creators.
    \end{itemize}

\item {\bf New Assets}
    \item[] Question: Are new assets introduced in the paper well documented and is the documentation provided alongside the assets?
    \item[] Answer: \answerNA{}
    \item[] Justification: Our work does not release any assets.
    \item[] Guidelines:
    \begin{itemize}
        \item The answer NA means that the paper does not release new assets.
        \item Researchers should communicate the details of the dataset/code/model as part of their submissions via structured templates. This includes details about training, license, limitations, etc. 
        \item The paper should discuss whether and how consent was obtained from people whose asset is used.
        \item At submission time, remember to anonymize your assets (if applicable). You can either create an anonymized URL or include an anonymized zip file.
    \end{itemize}

\item {\bf Crowdsourcing and Research with Human Subjects}
    \item[] Question: For crowdsourcing experiments and research with human subjects, does the paper include the full text of instructions given to participants and screenshots, if applicable, as well as details about compensation (if any)? 
    \item[] Answer: \answerNA{} 
    \item[] Justification: The paper does not involve crowdsourcing nor research with human subjects.
    \item[] Guidelines:
    \begin{itemize}
        \item The answer NA means that the paper does not involve crowdsourcing nor research with human subjects.
        \item Including this information in the supplemental material is fine, but if the main contribution of the paper involves human subjects, then as much detail as possible should be included in the main paper. 
        \item According to the NeurIPS Code of Ethics, workers involved in data collection, curation, or other labor should be paid at least the minimum wage in the country of the data collector. 
    \end{itemize}

\item {\bf Institutional Review Board (IRB) Approvals or Equivalent for Research with Human Subjects}
    \item[] Question: Does the paper describe potential risks incurred by study participants, whether such risks were disclosed to the subjects, and whether Institutional Review Board (IRB) approvals (or an equivalent approval/review based on the requirements of your country or institution) were obtained?
    \item[] Answer: \answerNA{} 
    \item[] Justification: The paper does not involve crowdsourcing nor research with human subjects.
    \item[] Guidelines:
    \begin{itemize}
        \item The answer NA means that the paper does not involve crowdsourcing nor research with human subjects.
        \item Depending on the country in which research is conducted, IRB approval (or equivalent) may be required for any human subjects research. If you obtained IRB approval, you should clearly state this in the paper. 
        \item We recognize that the procedures for this may vary significantly between institutions and locations, and we expect authors to adhere to the NeurIPS Code of Ethics and the guidelines for their institution. 
        \item For initial submissions, do not include any information that would break anonymity (if applicable), such as the institution conducting the review.
    \end{itemize}

\end{enumerate}

\end{document}